\pdfoutput=1
\documentclass[11pt]{article} 
\usepackage[utf8]{inputenc}
\usepackage{amsmath, amsfonts, amsthm, amssymb, graphicx}
\usepackage{mathtools,verbatim}
\usepackage{bbm}
\usepackage{bm}
\usepackage{mathrsfs}  
\usepackage{xr}
\usepackage{thm-restate}
\usepackage{xfrac}
\usepackage{breqn}  
\usepackage{etoolbox}
\usepackage{enumitem}
\usepackage{natbib}
\usepackage{pdfpages,tikz}
\usetikzlibrary{bayesnet}
\usepackage[noend]{algpseudocode}
\usepackage[linesnumbered,ruled]{algorithm2e}
\usepackage{cleveref}

\usepackage{fancyhdr, lastpage}

\newtoggle{nips}

\usepackage[vmargin=1.00in,hmargin=1.00in,centering,letterpaper]{geometry}

\newcommand{\calA}{\mathcal{A}}

\newcommand{\calD}{\mathcal{D}}
\newcommand{\calE}{\mathcal{E}}
\newcommand{\calF}{\mathcal{F}}

\newcommand{\calM}{\mathcal{M}}

\newcommand{\calO}{\mathcal{O}}

\newcommand{\calS}{\mathcal{S}}

\newcommand{\calZ}{\mathcal{Z}}


\newcommand{\matQ}{\mathbf{Q}}

\newcommand{\matV}{\mathbf{V}}

\newcommand{\matZ}{\mathbf{Z}}




\newcommand{\maxs}[1]{{\color{magenta}{#1}}}
\newcommand\numberthis{\addtocounter{equation}{1}\tag{\theequation}}
\newcommand{\rhatk}{\rhat_k}
\newcommand{\boundfut}{\mathbf{B}^{\mathrm{fut}}}
\newcommand{\boundfutk}{\boundfut_k}
\newcommand{\Mtil}{\widetilde{M}}
\newcommand{\Hsample}{H_{\mathrm{sample}}}

\newcommand{\fmax}{f_{\max}}
\newcommand{\Lfactortil}{\widetilde{\Lfactor}}
\newcommand{\boundlead}{\mathbf{B}^{\mathrm{lead}}}

\newcommand{\secemph}[1]{\emph{#1}}
\newcommand{\gapinf}{\gap_{\infty}}

\newcommand{\vecsig}{\vec{\sigma}}

\newcommand{\Expsig}{\Exp^{\vecsig}}
\newcommand{\Delst}{\Delta^{\star}}
\newcommand{\Actst}{\actions^{\star}}
\newcommand{\boldsig}{\boldsymbol{\sigma}}

\newcommand{\regret}{\mathrm{Regret}}

\DeclarePairedDelimiter\ceil{\lceil}{\rceil}
\DeclarePairedDelimiter\floor{\lfloor}{\rfloor}

{
 \theoremstyle{plain}
      
}
\newtheorem{nono-theorem}{Theorem}[]
\theoremstyle{plain}
\newtheorem{thm}{Theorem}[section]
\newtheorem{claim}[thm]{Claim}
\newtheorem{lem}[thm]{Lemma}
\newtheorem{cor}[thm]{Corollary}
\newtheorem{fact}[thm]{Fact}

\newtheorem{prop}[thm]{Proposition}
\theoremstyle{definition}
\newtheorem{defn}{Definition}[section]

\newtheorem{rem}{Remark}[section]

\renewcommand{\Pr}{\mathbb{P}}
\newcommand{\Exp}{\mathbb{E}}
\newcommand{\Var}{\mathrm{Var}}

\newcommand{\unifsim}{\overset{\mathrm{unif}}{\sim}}

\newcommand{\KL}{\mathrm{KL}}
\newcommand{\kl}{\mathrm{kl}}

\newcommand{\Exppi}{\Exp^{\pi}}
\newcommand{\Prpi}{\Pr^{\pi}}
\newcommand{\cb}{\mathsf{CB}}
\newcommand{\cbre}{\cb^{\mathrm{R}}}
\newcommand{\cbpr}{\cb^{\mathrm{P}}}

\newcommand{\qpik}{\trueq^{\pik}}
\newcommand{\qpikh}{\trueq_h^{\pik}}

\newcommand{\geover}[1]{\overset{(#1)}{\ge}}
\newcommand{\eqover}[1]{\overset{(#1)}{=}}

\newcommand{\xah}{(x,a,h)}


\newcommand{\vecVarsub}{\vecVar^{\subopt}}
\newcommand{\vecVaropt}{\vecVar^{\opt}}

\newcommand{\nbar}{\overline{n}}



\newcommand{\N}{\mathbb{N}}

\newcommand{\R}{\mathbb{R}}
\newcommand{\I}{\mathbb{I}}


\newcommand{\qst}{\mathbf{Q}^{\star}}

\DeclareMathOperator{\BigOm}{\mathcal{O}}
\newcommand{\BigOh}[1]{\BigOm\left({#1}\right)}
\DeclareMathOperator{\BigOmtil}{\widetilde{\mathcal{O}}}
\newcommand{\BigOhTil}[1]{\BigOmtil\left({#1}\right)}

\newcommand{\threshop}[2]{\operatorname{clip}\left[#2\,|\,#1\right]}
\newcommand{\clip}[2]{\operatorname{clip}\left[#2|#1\right]}

\newcommand{\underbracer}[2]{\underbrace{#2}_{#1}}

\newcommand{\overeq}[1]{\overset{(#1)}{=}}


\newcommand{\pist}{\pi^{\star}}
\newcommand{\pisth}{\pist_h}
\newcommand{\pikh}{\pi_{k,h}}

\newcommand{\pikt}{\pi_{k,t}}
\newcommand{\pik}{\pi_{k}}

\newcommand{\vst}{\truev^{\star}}

\newcommand{\cond}{\kappa}

\newcommand{\gapeff}{\gap_{\mathrm{eff}}}

\newcommand{\states}{\mathcal{S}}
\newcommand{\actions}{\mathcal{A}}
\newcommand{\vecVar}{\overrightarrow{\Var}}

\newcommand{\stopt}{\boldsymbol{\tau}}

\newcommand{\truev}{\mathbf{V}}

\newcommand{\trueq}{\mathbf{Q}}

\newcommand{\vbar}{\overline{\mathbf{V}}}

\newcommand{\bonus}{\mathbf{b}}

\DeclareMathOperator*{\argmax}{arg\,max}

\newcommand{\Alg}{\mathsf{Alg}}

\newcommand{\rhat}{\widehat{r}}

\newcommand{\weight}{\boldsymbol{\omega}}
\newcommand{\weightkh}{\weight_{k,h}}

\newcommand{\wkh}{\weightkh}

\newcommand{\xtat}{(x_t,a_t)}

\newcommand{\samp}{\mathrm{samp}}
\newcommand{\errconst}{\mathbf{c}}

\newcommand{\Zopt}{\calZ_{\termfont{opt}}}
\newcommand{\Zsub}{\calZ_{\termfont{sub}}}
\newcommand{\constM}{C_{\calM}}
\newcommand{\constbarM}{\overline{C}_{\calM}}
\newcommand{\poly}{\mathrm{poly}}

\newcommand{\opt}{\termfont{opt}}
\newcommand{\subopt}{\termfont{sub}}

\newcommand{\Varoptxa}{\Var_{x,a}^{\opt}}
\newcommand{\Varsubxa}{\Var_{x,a}^{\subopt}}

\newcommand{\matZk}{\matZ_k}


\newcommand{\gapclip}{\fullclip{\gap}}

\newcommand{\gapdbar}{\underline{\overline{\gap}}}

\newcommand{\gapcliph}{\gapclip_h}
\newcommand{\gapclipmin}{\gapclip_{\min}}

\newcommand{\termmin}{\mathtt{min}}

\newcommand{\gapmin}{\gap_{\termmin}}
\newcommand{\gap}{\mathtt{gap}}
\newcommand{\gaph}{\gap_h}
\newcommand{\gaphxa}{\gaph(x,a)}

\newcommand{\HeffT}{\Heff_{T}}

\newcommand{\Varterm}{\mathtt{Var}}

\newcommand{\Varxa}{\Varterm^{\star}_{x,a}}
\newcommand{\Varxah}{\Varterm^{\star}_{h,x,a}}
\newcommand{\Varpikxah}{\Varterm^{\pik}_{h,x,a}}
\newcommand{\Varpixah}{\Varterm^{\pi}_{h,x,a}}
\newcommand{\Gterm}{\calG}
\newcommand{\Heff}{\overline{H}}
\newcommand{\Varkxah}{\Varterm^{(k)}_{h,x,a}}

\newcommand{\Varxat}{\Varterm^{\star}_{t,x,a}}
\newcommand{\termfont}[1]{\mathtt{#1}}

\newcommand{\eventsample}{\calE^{\samp}}
\newcommand{\sampletime}{\stopt}

\newcommand{\wk}{\weight_k}

\newcommand{\knot}{_{k_0}}
\newcommand{\nbarK}{\nbar_K}

\newcommand{\polyfactor}{\bm{\mathsf{poly}}}

\newcommand{\Lfactor}{\mathbf{L}}

\newcommand{\xa}{(x,a)}
\newcommand{\xhah}{(x_h,a_h)}

\newcommand{\vsthpl}{\truev^{\star}_{h+1}}
\newcommand{\vsth}{\truev^{\star}_{h}}
\newcommand{\qsth}{\qst_{h}}

\newcommand{\Ekh}{\surplus_{k,h}}
\newcommand{\Ekhpl}{\surplus_{k,h+1}}
\newcommand{\xast}{(x,a^{\star})}

\newcommand{\Ekt}{\mathbf{E}_{k,t}}

\newcommand{\Bern}{\mathrm{Bernoulli}}

\newcommand{\fullclip}[1]{\check{#1}}

\newcommand{\ahat}{\widehat{a}}

\newcommand{\nbark}{\nbar_k}

\newcommand{\nk}{n_k}

\newcommand{\Nmax}{N_{\max}}

\newcommand{\nK}{n_{K}}

\newcommand{\pluseq}{\mathrel{+}=}

\newcommand{\phat}{\widehat{p}}
\newcommand{\phatk}{\phat_k}

\newcommand{\p}{p}

\newcommand{\rbar}{\overline{r}}

\newcommand{\qup}{\overline{\matQ}}
\newcommand{\qupkh}{\qup_{k,h}}

\newcommand{\kt}{_{k,t}}
\newcommand{\kh}{_{k,h}}
\newcommand{\khpl}{_{k,h+1}}

\newcommand{\surplus}{\mathbf{E}}

\newcommand{\Eclip}{\fullclip{\mathbf{E}}}

\newcommand{\Ehclip}{\halfclip{\surplus}}

\newcommand{\Ebar}{\check{\mathbf{E}}}

\newcommand{\vup}{\overline{\matV}}
\newcommand{\vupkh}{\vup_{k,h}}

\newcommand{\Varmax}{\overline{\Varterm}\,}

\newcommand{\strongeuler}{\mathsf{StrongEuler}}
\newcommand{\Euler}{\mathsf{EULER}}

\newcommand{\const}{\mathsf{C}}

\newcommand{\rsum}{\mathsf{rsum}}
\newcommand{\rsumsq}{\mathsf{rsumsq}}

\newcommand{\vupkhpl}{\vup_{k,h+1}}

\newcommand{\vupktpl}{\vup_{k,t+1}}

\newcommand{\vlow}{\underline{\matV}}
\newcommand{\vlowkh}{\vlow_{k,h}}

\newcommand{\vlowkhpl}{\vlow_{k,h+1}}
\newcommand{\vlowktpl}{\vlow_{k,t+1}}


\newcommand{\bsfa}{\boldsymbol{\alpha}}
\newcommand{\bsfaxah}{\boldsymbol{\alpha}_{x,a,h}}

\newcommand{\bsfb}{\boldsymbol{\beta}}

\newcommand{\astar}{a^{\star}}

\newcommand{\rew}{\mathrm{rw}}

\newcommand{\transxah}{\alpha_{x,a,h}}

\newcommand{\epsclip}{\epsilon_{\mathrm{clip}}}

\newcommand{\vpik}{\truev^{\pik}}

\newcommand{\vpihpl}{\truev^{\pi}_{h+1}}

\newcommand{\vpikh}{\truev_h^{\pik}}
\newcommand{\vpikhpl}{\truev_{h+1}^{\pik}}

\newcommand{\calG}{\mathcal{G}}
\newcommand{\Qfunc}{\mathbf{Q}}
\newcommand{\Vfunc}{\mathbf{V}}

\newcommand{\Exppik}{\Exp^{\pik}}
\newcommand{\Ebarkh}{\Ebar_{k,h}}

\newcommand{\halfclip}[1]{\ddot{#1}}
\newcommand{\qhclip}{\halfclip{\Qfunc}}
\newcommand{\qhclippik}{\qhclip^{\pik}}

\newcommand{\vhclip}{\halfclip{\Vfunc}}

\newcommand{\vhclippik}{\vhclip^{\pik}}

\newcommand{\delvhclip}{\partial \vhclip}


\newcommand{\subk}{_k}
\newcommand{\euler}{\mathsf{EULER}}

\newcommand{\goodprob}{\calA^{\mathrm{prob}}}
\newcommand{\goodreward}{\calA^{\rew}}
\newcommand{\goodvarval}{\calA^{\mathrm{var},\mathrm{val}}}
\newcommand{\goodvarrew}{\calA^{\mathrm{var},\rew}}
\newcommand{\goodvarprob}{\calA^{\mathrm{var},\mathrm{prob}}}
\newcommand{\goodval}{\calA^{\mathrm{val}}}

\newcommand{\goodconcentration}{\calA^{\mathrm{conc}}}

\newcommand{\bonusprob}{\bonus^{\mathrm{prob}}}
\newcommand{\bonusrew}{\bonus^{\rew}}
\newcommand{\bonusstrong}{\bonus^{\mathrm{str}}}
\newcommand{\Varhat}{\widehat{\Var}}
\newcommand{\Varpxa}{\Var_{p\xa}}

\newcommand{\Varphatxa}{\Var_{\phat\xa}}

\togglefalse{nips}
\setlength{\headsep}{.10in}
\setlength{\headheight}{15pt}
\cfoot{}
\lfoot{}
\rfoot{\sc Page\ \thepage\ of\ \protect\pageref{LastPage}}

\newcommand{\nipsvspace}[1]{\iftoggle{nips}{\vspace{1}}{}}

\title{Non-Asymptotic Gap-Dependent Regret Bounds for Tabular MDPs}
\author{Max Simchowitz\\
UC Berkeley \\
msimchow@berkeley.edu 
\and 
Kevin Jamieson\\
University of Washington \\
jamieson@cs.washington.edu}

\begin{document}
\maketitle

\begin{abstract}
This paper establishes that optimistic algorithms attain gap-dependent and non-asymptotic logarithmic regret for episodic MDPs. In contrast to prior work, our bounds do not suffer a dependence on diameter-like quantities or ergodicity, and smoothly interpolate between the gap dependent logarithmic-regret, and the $\widetilde{\mathcal{O}}(\sqrt{HSAT})$-minimax rate. The key technique in our analysis is a novel ``clipped'' regret decomposition which applies to a broad family of recent optimistic algorithms for episodic MDPs. 
\end{abstract}


\section{Introduction}

Reinforcement learning (RL) is a powerful paradigm for modeling a learning agent's interactions with an unknown environment, in an attempt to accumulate as much reward as possible. 
Because of its flexibility, RL can encode such a vast array of different problem settings - many of which are entirely intractable. Therefore, it is crucial to understand what conditions enable an RL agent to effectively learn about its environment, and to account for the success of RL methods in practice. 

In this paper, we consider tabular Markov decision processes (MDPs), a canonical RL setting where the agent seeks to learn a \emph{policy} mapping discrete states $x \in \states$ to one of finitely many actions $a \in \actions$, in an attempt to maximize cumulative reward over an episode horizon $H$. We shall study the \emph{regret} setting, where the learner plays a policy $\pi_k$ for a sequence of episodes $k= 1,\dots,K$, and suffers a regret proportional to the average sub-optimality of the policies $\pi_1,\dots,\pi_K$.


In recent years, the vast majority of literature has focused on obtaining \emph{minimax} regret bounds that match the worst-case dependence on the number states $|\states|$, actions $|\actions|$, and horizon length $H$; namely, a cumulative regret of $\sqrt{H |\states||\actions| T}$, where $T = KH$ denotes the total number of rounds of the game~\citep{azar2017minimax}. While these bounds are succinct and easy to interpret, they paint an overly pessimistic account of the complexity of these problems, and do not elucidate the favorable structural properties of which a learning agent can hope to take advantage.

The earlier literature, on the other hand, establishes a considerable more favorable regret of the form $ C\log T$, where $C$ is an instance-dependent constant given in terms of the \emph{sub-optimality gaps} associated with each action at a given state, defined as
\begin{align} \label{eqn:first_gap_def}
\gapinf \xa = \truev^{\pist}(x) - \trueq^{\pist}(x,a),
\end{align}
where $\truev^{\pist}$ and $\trueq^{\pist}$ denote the value and $Q$ functions for an optimal policy $\pist$, and the subscript-$\infty$ denotes these bounds hold for a non-episodic, infinite horizon setting. Depending on the constant $C$, the regret $C \log T$ can yield a major improvement over the $\sqrt{T}$ minimax scaling. Unfortunately, these analyses are asymptotic in nature, and only take effect after a large number of rounds, depending on other potentially-large, highly-conservative, or difficult-to-verify problem-dependent quantities such as hitting times or measures of uniform ergodicity \cite{jaksch2010near,tewari2008optimistic,ok2018exploration}. 

To fully account for the empirical performance of RL algorithms, we
seek regret bounds which take advantage of favorable problem instances, but apply in \emph{finite time} and for practically realistic numbers of rounds $T$.




\iftoggle{nips}
{

	\textbf{Contributions: }
}
{
	\subsection{Contributions} 

}
As a first step in this direction, \cite{zanette2019tighter} introduced a novel algorithm called $\Euler$, which enjoys reduced dependence on the episode horizon $H$ for favorable instances, while maintaining the same worst-case dependence for other parameters in their analysis as in \cite{azar2017minimax}.

In this paper, we take the next step by demonstrating that a common class of algorithms for solving MDPs, based on the \emph{optimism} principle, attains  gap-dependent, problem-specific bounds similar to those previously found only in the asymptotic regime. For concreteness, we specialize our analysis to a minor modification of the $\Euler$ algorithm we call $\strongeuler$; as we explain in Section~\ref{sec:optimism}, our analysis extends more broadly to other optimistic algorithms as well. We show that 
\iftoggle{nips}{\begin{itemize}[topsep=0pt, partopsep=0pt, labelindent=0pt, labelwidth=0pt, leftmargin=10pt]}{\begin{itemize}}
	\item For any episodic MDP $\calM$, $\strongeuler$ enjoys a high probability regret bound of $\constM \log (1/\delta)$ for all rounds $T \ge 1$, where the constant $\constM$ depends on the sub-optimality gaps between actions at different states, as well as the horizon length, and contains an additive almost-gap-independent term that scales as $AS^2\poly(H)$ (Corollary~\ref{cor:log_regret}). 
\end{itemize}
Unlike previous gap-dependent regret bounds, 
\iftoggle{nips}{\begin{itemize}[topsep=0pt, partopsep=0pt, labelindent=0pt, labelwidth=0pt, leftmargin=10pt]}{\begin{itemize}}
		\iftoggle{nips}{\vspace{-1mm}}{}
	\item The constant $\constM$ does not suffer worst-case dependencies on other problem dependent quantities such as mixing times, hitting times or measures of ergodicity. However, the constant $\constM$ \emph{does} take advantage of \emph{benign} problem instances (Definition~\ref{def:benign_instances}).
	%
	\item The regret bound of $\constM \log (1/\delta)$ is valid for any total number of rounds $T \ge 1$. Selecting $\delta = 1/T$, this implies a \emph{non-asymptotic} expected regret bound of $\constM \log T$\footnote{By this, we mean that for any \emph{fixed} $T \ge 1$, one can attain $\constM\log T$ regret. Extending the bound to anytime regret is left to future work}.
	%
	\item The regret of $\strongeuler$ interpolates between instance-dependent regret $\constM \log T$ and minimax regret $\widetilde{\calO}(\sqrt{H|\states||\actions|T})$, the latter of which may be sharper for smaller $T$ (Theorem~\ref{thm:main_regret_bound}). Following~\cite{zanette2019tighter}, this dependence on $H$ may also be refined for benign instances. 
\end{itemize}
\iftoggle{nips}
{}
{
	
}
Lastly, while the $\strongeuler$ algorithm affords sharper regret bounds than past algorithms, our analysis techniques extend more generally to other optimism based algorithms:
\iftoggle{nips}
{}
{
	
}
\iftoggle{nips}{\begin{itemize}[topsep=0pt, partopsep=0pt, labelindent=0pt, labelwidth=0pt, leftmargin=10pt]}{\begin{itemize}}
	\item We introduce a novel ``clipped'' regret decomposition (Proposition~\ref{prop:surplus_clipping_simple}) which applies to a broad family of optimistic algorithms, including the algorithms analyzed in \citep{ zanette2019tighter,dann2018policy,dann2017unifying,jin2018q,azar2017minimax}. 
	\item Following our analysis of $\strongeuler$, the  clipped regret decomposition can establish analogous gap-dependent $\log T$-regret bounds for many of the algorithms mentioned above.
\end{itemize}
\iftoggle{nips}
{}
{
	
}
\iftoggle{nips}{\textbf{What is $\constM$?}}{\paragraph{What is $\constM$?}} In many settings, we show that $\constM$ is dominated by an analogue to the sum over the reciprocals of the gaps defined in \eqref{eqn:first_gap_def}. This is known to be optimal for non-dynamic MDP settings like contextual bandits, and we prove a lower bound (Proposition~\ref{prop:info_th_lower_bound}) which shows that this is unimprovable for general MDPs as well. Furthermore, building on~\cite{zanette2019tighter}, we show this adapts to problems with additional structure, yielding\iftoggle{nips}{, e.g., }{, for example,} a horizon $H$-free bound for contextual bandits. 

However, our gap-dependent bound also suffers from a certain dependence on the \emph{smallest nonzero gap} $\gapmin$ (see Definition~\ref{def:gaps}), which may dominate in some settings. We prove a lower bound (Theorem~\ref{thm:lower_bound_informal}) which shows that optimistic algorithms in the recent literature -  including $\strongeuler$ - necessarily suffer a similar term in their regret. We believe this insight will motivate new algorithms for which this dependence can be removed, leading to new design principles and actionable insights for practitioners. Finally, our regret bound incurs an (almost) gap-independent burn-in term, which is standard for optimistic algorithms, and which we believe is an exciting direction of research to remove. 

Altogether, we believe that the results in our paper serve as a preliminary but significant step to attaining sharp, instance-dependent, and \emph{non-asymptotic} bounds for tabular MDPs, and hope that our analysis will guide the design of future algorithms that attain these bounds.

\subsection{Related Work\label{sec:related}}

Like the multi-armed bandit setting, regret bounds for MDP algorithms have been characterized both in \emph{gap-independent} forms that rely solely on $S:=|\states|,A:=|\actions|,H,T$, and in \emph{gap-dependent} forms which take into account the gaps~\eqref{eqn:first_gap_def}, as well as other instance-specific properties of the rewards and transition probabilities. 
  
\iftoggle{nips}{\textbf{Finite Sample Bounds, Gap-Independent Bounds: }}{\paragraph{Finite Sample Bounds, Gap-Independent Bounds: }} A number of notable recent works give undiscounted regret bounds for finite-horizon, tabular MDPs, nearly all of them relying on the principle of optimism which we describe in Section~\ref{sec:optimism} \citep{dann2015sample,azar2017minimax,dann2017unifying,jin2018q,zanette2019tighter}. Many of the more recent works \cite{azar2017minimax,zanette2019tighter,dann2018policy} attain a regret of $\sqrt{HSAT}$, matching the known lower bound of $\sqrt{HSAT}$ established in \cite{osband2016lower,jaksch2010near,dann2015sample}. As mentioned above, the $\Euler$ algorithm of \cite{zanette2019tighter} attains the minimax rates and simultaneously enjoys a reduced dependence on $H$ in benign problem instances, such as the contextual bandits setting where the transition probabilities do not depend on the current state or learners actions, or when the total cumulative rewards over any roll-out are bounded by $1$ in magnitude.  

\iftoggle{nips}{\textbf{Diameter Dependent Bounds:}}{\paragraph{Diameter Dependent Bounds:}} In the setting of infinite horizon MDPs with discounted regret, many previous works have established logarithmic regret bounds of the form $C(\calM) \log T$, where $C(\calM)$ is a constant depending on the underlying MDP. Notably, \cite{jaksch2010near} give an algorithm which attains a $\widetilde{\calO}(\sqrt{D^2 S^2 A T})$ gap-independent regret, and an $\widetilde{\calO}( \frac{D^2 S^2 A}{\gap_*} \log(T) )$ gap-dependent regret bound, where $\gap_*$ is the difference between the mean infinite-horizon reward of $\pi_*$ and the next-best stationary policy, and where $D$ denotes the maximum expected traversal time between any two states $x,x'$, under the policy which attains the minimal traversal time between those two states. We note that if $\gapinf \xa$ denotes the sub-optimality of any action $a$ at state $x$ as in~\eqref{eqn:first_gap_def}, then $\gap_* \leq \min_{x,a} \gapinf\xa$. The bounds in this work, on the other hand, depend on an average over inverse gaps, rather than a worst case. Moreover, the diameter $D$ can be quite large when there exist difficult-to-access states. We stress that the bound due to \cite{jaksch2010near} is non-asympotic, but the bound in terms of $\gap_*$ dependences other worst-case quantities measuring ergodicity.

\iftoggle{nips}{\textbf{Asymptotic Bounds:} }{\paragraph{Asymptotic Bounds:} } Prior to \cite{jaksch2010near}, and building on the bounds of \cite{burnetas1997optimal}, \cite{tewari2008optimistic} presented bounds in terms of a diameter-related quantity $\bar{D} \geq D$, which captures the minimal hitting time between states when restricted to optimal policies.  \cite{tewari2008optimistic}  prove that their algorithm enjoys a regret\footnote{\cite{tewari2008optimistic} actually presents a bound of the form $\frac{\bar{D}^2 SA}{\min_{(s,a) \in \text{\sf CRIT}} \gapinf \xa} \log(T)$ but it is straightforward to extract the claimed form from the proof.} 
of $\sum_{(s,a) \in \text{\sf CRIT}} \frac{\bar{D}^2}{\gapinf\xa} \log(T)$ asymptotically in $T$ where $\text{\sf CRIT}$ contains those sub-optimal state-action pairs $\xa$ such that $a$ can be made to the the unique, optimal action at $x$ by replacing $p(s'|s,a)$ with some other vector on the $\states$-simplex. 
Recently, \cite{ok2018exploration} present per-instance lower bounds for both structured and unstructured MDPs, which apply to any algorithm which enjoys sub-linear regret on any problem instance, and an algorithm which matches these bounds asymptotically. This bound replaces $\bar{D}^2$ with $\bar{H}^2$, where $\bar{H}$ denotes the range of the bias functions, an analogue of $H$ for the non-episodic setting \cite{bartlett2009regal}. 
We further stress that whereas the logarithmic regret bounds of \cite{jaksch2010near} hold for finite time with polynomial dependence on the problem parameters, the number of episodes needed for the bounds of \cite{burnetas1997optimal,tewari2008optimistic,ok2018exploration} to hold may be exponentially large, and depend on additional, pessimistic problem-dependent quantities (e.g. a uniform hitting time in \iftoggle{nips}{Proposition 29 in~\cite{tewari2007reinforcement}}{\citet[Proposition 29]{tewari2007reinforcement}}).


\iftoggle{nips}{\textbf{Novelty of this work:}}{\paragraph{Novelty of this work:}} The major contribution of our work is showing problem-dependent $\log(T)$ regret bounds which i) attain a refined dependence on the gaps, as in \cite{tewari2008optimistic}, ii) apply in finite time after a burn-in time only polynomial in $S$, $A$, $H$ and the gaps, iii) depend only on $H$ and not on the diameter $D$ (and thus, are not adversely affected by difficult to access states), and iv) smoothly interpolate between $\log T$ regret and the minimax $\sqrt{HSAT}$ rate attained by \cite{azar2017minimax} et seq.


\iftoggle{nips}{\vspace{-1.5mm}}{}

\subsection{Problem Setting, Notation, and Organization}
\iftoggle{nips}{\vspace{-1.5mm}}{}
\iftoggle{nips}{\secemph{Episodic MDP:}}{\paragraph{Episodic MDP:}} A \emph{stationary}, episodic MDP is a tuple $\calM := (\states,\actions, H,r,p,p_0,R)$, where for each $x \in \states, a \in \actions$ we have that $R(x,a) \in [0,1]$ is a random reward with expectation $r(x,a)$, $p: \states \times \actions \to \Delta^{\states}$ denotes transition probabilities, $p_0 \in \Delta^{\states}$ is an initial distribution over states, and $H$ is the horizon, or length of the episode.
A policy $\pi $ is a sequence of mappings $\pi_h : \states \to \actions$. 
For our given MDP $\calM$, we let $\Exppi$ and $\Prpi$ denote the expectation and probability operator with respect to the law of sequence $(x_1,a_1),\dots,(x_H,a_H)$, where $x_1 \sim p_0$, $a_h = \pi_h(x_h)$, $x_{h+1} \sim p(x_h,a_h)$. 
We define the \emph{value} of $\pi$ as \iftoggle{nips}
{
	$\truev_0^{\pi} := \Exppi\left[\sum_{h=1}^H r(x_h,a_h)\right]$
}
{
	\begin{align*}
	\truev_0^{\pi} := \Exppi\left[\sum_{h=1}^H r(x_h,a_h)\right],
	\end{align*}
}
and for $h \in [H]$ and $x \in \states$,\iftoggle{nips}
{
	$\truev_h^{\pi}(x) := \Exppi\left[\sum_{h' \ge h}^H r(x_{h'},a_{a'}) \mid x_h = x\right]$, }
{
	\begin{align*}
	\truev_h^{\pi}(x) := \Exppi\left[\sum_{h' \ge h}^H r(x_{h'},a_{a'}) \mid x_h = x\right],
	\end{align*}
}
which we identify with a vector in $\R^\states$.
We define the associated Q-function $\trueq^{\pi}: \states \times \actions \to \R$,
\iftoggle{nips}
{
	$\trueq_h^{\pi}(x,a) := r(x,a) + p(x,a)^\top \truev_{h+1}^{\pi}$,
}
{
	\begin{align*}
	\trueq_h^{\pi}(x,a) := r(x,a) + p(x,a)^\top \truev_{h+1}^{\pi},
	\end{align*}
}
so that $\trueq_h^{\pi}(x,\pi_h(x)) = \truev_h^{\pi}(x)$. We denote the \emph{set} of optimal policies
\iftoggle{nips}
{
	$\pist := \argmax_{\pi}\truev^{\pi}_0$,
}
{
	\begin{align*}
	\pist := \argmax_{\pi}\truev^{\pi}_0,
	\end{align*}
}
and let $\pist_h(x) := \{a: \pi_h(x) = a, \pi \in \pist\}$ denote the set of optimal actions. Lastly, given any optimal $\pi \in \pist$, we introduce the shorthand $\vst_h = \truev_h^\pi$ and $\qst_h = \trueq_h^\pi$, where we note that  even when $\pi$ is not unique, $\vst_h$ and $\qst_h$ do not depend on the choice of optimal policy. 

\iftoggle{nips}{\secemph{Episodic Regret: }}{\paragraph{Episodic Regret:}} We consider a game that proceeds in rounds $k = 1,\dots,K$, where at each state an algorithm $\Alg$ selects a policy $\pik$, and observes a roll out $(x_1,a_1),\dots,(x_H,a_H) \sim \Pr^{\pik}$. The goal is to minimize the cumulative simple regret, defined as 
\iftoggle{nips}
{
	$\regret_K := \sum_{k=1}^K \vst_0 - \vpik_0$. 
}
{
	\begin{align*}
\regret_K := \sum_{k=1}^K \vst_0 - \vpik_0.
\end{align*}
}

\secemph{Notation and Organization:} For $n \in \mathbb{N}$, we define $[n] = \{1,\dots,n\}$.
For two expressions $f,g$ that are functions of any problem-dependent variables of $\calM$, we say $f \lesssim g$ ($f \gtrsim g$, respectively) if there exists a universal constant $c >0$ independent of $\calM$ such that $f \leq c g$ ($f \geq c g$, respectively). $\lessapprox$ will denote an informal, approximate inequality. Section~\ref{sec:main_results} presents our main results, and Section~\ref{sec:optimism} sketches the proof and highlights the novelty of our techniques. \iftoggle{nips}{All references to the appendix refer to the appendix of the supplement. }{} All formal proofs, and many rigorous statement of results, are deferred to the appendix, whose organization and notation are described at length in Appendix~\ref{sec:appendix_notation}. 

\iftoggle{nips}{\vspace{-1.5mm}}{}
\subsection{Optimistic Algorithms\label{sec:optimistic_algs}} 
\iftoggle{nips}{\vspace{-1.5mm}}{}
Lastly, we introduce \emph{optimistic algorithms} which select a policy which is optimal for an over-estimated, or \emph{optimistic}, estimate of the true $Q$-function, $\qst$.

\begin{defn}[Optimistic Algorithm]\label{def:optimistic_alg} We say that an algorithm $\Alg$ satisifes \emph{optimism} if, for each round $k \in [K]$ and stage $h \in [H]$, it constructs an \emph{optimistic} $Q$-function $\qupkh(x,a)$ and policy $\pik = (\pikh)$ satisfying 
\iftoggle{nips}
{
	$\forall x,a:\, \qup_{k,H+1}\xa = 0, \,\qupkh(x,a)  \ge \qsth(x,a),\, \, \text{ and }\, \pikh(x) \in \argmax_{a}\qupkh(x,a)$.
}
{
	\begin{align*}
	\forall x,a:\, \qup_{k,H+1}\xa = 0, \,\qupkh(x,a)  \ge \qsth(x,a),\, \, \text{ and }\, \pikh(x) \in \argmax_{a}\qupkh(x,a).
	\end{align*}
}
The associated \emph{optimistic value function} is $\vupkh(x) :=\qupkh(x,\pikh(x))$. 
\end{defn} 
We shall colloquially refer to an algorithm as \emph{optimistic} if it satsifies optimism with high probability. Optimism has become the dominant approach for learning finite-horizon MDPs, and all recent low-regret algorithms are optimistic \citep{dann2017unifying,dann2018policy,azar2017minimax,zanette2019tighter,jin2018q}. 
In \emph{model-based} algorithms, the overestimates $\qupkh$ are constructed recursively as $\qupkh\xa = \rhatk\xa + \phatk\xa^\top \vupkhpl + \bonus\kh\xa$, where $\rhatk\xa$ and $\phatk\xa$ are empirical estimates of the mean rewards and transition probabilities, and $\bonus\kh\xa \ge 0$ is a confidence \emph{bonus} designed to ensure that $\qupkh\xa \ge \qst\xa$. Letting $\nk\xa$ denote the total number of times a given state-action pair is visited, a simple bonus $\bonus\kh\xa \eqsim \sqrt{\frac{H\log(SAHK/\delta)}{\nk\xa}}$ suffices to induce optimism, yielding the UCBVI-CH algorithm of \citep{azar2017minimax}. This leads to an episodic regret bound of $\sqrt{H^2SAT}$, a factor of $\sqrt{H}$ greater than the minimax rate. More refined bonuses based on the ``Bernstein trick'' achieve the optimal $H$-dependence \citep{azar2017minimax}, and the $\Euler$ algorithm of \cite{zanette2019tighter} adopts further refinements to replace worst-case $H$ dependence with more adaptive quantities. \iftoggle{nips}{}{

}The $\strongeuler$ algorithm considered in this work applies similarly adaptive bonuses, but our analysis extends to all aforementioned bonus configurations. We remark that there are also \emph{model-free} optimistic algorithms based on Q-learning (see, e.g.~\cite{jin2018q}) that construct overestimates in a slightly different fashion. While our main technical contribution, the clipped regret decomposition (Proposition~\ref{prop:surplus_clipping_simple}), applies to \emph{all} optimistic algorithms, our subsequent analysis is tailored to model-based approaches, and may not extend straightforwardly to Q-learning methods. 

\iftoggle{nips}{\vspace{-2mm}}{}
\section{Main Results\label{sec:main_results}}
\iftoggle{nips}{\vspace{-1.5mm}}{}


\iftoggle{nips}{\textbf{Logarithmic Regret for Optimistic Algorithms:}}{\paragraph{Logarithmic Regret for Optimistic Algorithms:}} We now state regret bounds that describe the performance of $\strongeuler$, an instance of the model-based, optimistic algorithms described above.  $\strongeuler$ is based on carefully selected bonuses from~\cite{zanette2019tighter}, and formally instantiated in Algorithm~\ref{alg:strong_euler} in Appendix~\ref{sec:alg:strong_euler}. We emphasize that other optimistic algorithms enjoy similar regret bounds, but we restrict our analysis to $\strongeuler$ to attain the sharpest $H$-dependence.
The key quantities at play are the \emph{suboptimality-gaps} between the Q-functions: 
\begin{defn}[Suboptimality Gaps]\label{def:gaps} For $h \in [H]$, define the stage-dependent suboptimality gap 
\iftoggle{nips}
{
	$\gaph\xa := \vsth(x) - \qsth(x,a)$, 
}
{
	\begin{align*}
\gaph\xa := \vsth(x) - \qsth(x,a)\,,
\end{align*} 
}
as well as the minimal stage-independent gap $\gap\xa := \min_h \gaph(x,a)$, and the minimal gap $\gapmin := \min_{x,a,h} \{\gaph\xa: \gaph\xa > 0\}.$
\end{defn}
Note that any optimal $\astar\in \pisth(x)$ satisfies the Bellman equation $ \qsth(x,\astar) = \max_{a}\qsth(x,a)$$ = \vsth(x)$, and thus $\gaph(x,\astar) = 0$ \iftoggle{nips}{iff}{if and only if}  $\astar \in \pisth(x)$. Following \cite{zanette2019tighter}, we consider two illustrative benign problem settings \iftoggle{nips}{which afford}{under which we obtain} an improved dependence on the horizon $H$:
\begin{defn}[Benign Settings]\label{def:benign_instances} We say that an MDP $\calM$ is a \emph{contextual bandit instance} if $p(x'|x,a)$ does not depend on $x$ or $a$. An MDP $\calM$ has $\Gterm$-\emph{bounded rewards} if, for any policy $\pi$, $\sum_{h=1}^H R(x_h,a_h) \le \Gterm$ holds with probability 1 over trajectories $((x_h,a_h)) \sim \Pr^{\pi}$.
\end{defn}

\iftoggle{nips}
{	
	Lastly, we define $\Zopt$ as the set of pairs $\xa$ for which $a$ is optimal at $x$ for some stage $h \in [H]$: $\Zopt := \{(x,a) : \exists h \in [H] \text{ with } a \in \pisth(x)\}$ and its complement $\Zsub:= \states \times \actions - \Zopt$.
}
{
	Lastly, we define $\Zopt$ as the set of pairs $\xa$ for which $a$ is optimal at $x$ for some stage $h \in [H]$, and its complement $\Zsub$:
	\begin{align*}
	\Zopt := \{(x,a) : \exists h \in [H] \text{ with } a \in \pisth(x)\} \text{ and } \Zsub := \states \times \actions - \Zopt.
	\end{align*}
}
Note that typically $|\Zopt| \lesssim H|\states|$ or even $|\Zopt| \lesssim |\states|$ (see Remark~\ref{rem:Z_opt} in the appendix). We now state our first result, which gives a gap-dependent regret bound that scales as $\log(1/\delta)$ with probability at least $1 - \delta$. 
The result is a consequence of a more general result stated as Theorem~\ref{thm:main_regret_bound}, itself a simplified version of more precise bounds stated in Appendix~\ref{sec:main_reg_ht}.
\begin{cor}\label{cor:log_regret} Fix $\delta \in (0,1/2)$, and let $A = |\actions|$, $S= |\states|$, $M = (SAH)^2$. Then with probability at least $1-\delta$, $\strongeuler$ run with confidence parameter $\delta$ enjoys the following regret bound for all $K \ge 1$:
\begin{align*}
\regret_K &\lesssim \left(\sum_{(x,a)\in \Zsub} \frac{H^3 }{\gap\xa} \log\frac{M T}{\delta}\right)+ \frac{H^3 |\Zopt| }{\gapmin} \log\frac{M T}{\delta}   \\
&\quad+  H^4 S A (S \vee H) \log\frac{MH}{\gapmin} \log \frac{MT}{\delta}\numberthis\label{eq:log_regret_bound}.
\end{align*}
Moreover, if $\calM$ is either a contextual bandits instance, or has $\Gterm$-bounded rewards for $\Gterm \lesssim 1$,  then the factors of $H^3$on the first line can be sharped to $H$. In addition, if $\calM$ is a contextual bandits instance, the factor of $H^3$ in the first term (summing over $\xa \in \Zsub$) can be sharped to $1$. 
\end{cor}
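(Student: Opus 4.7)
The plan is to derive Corollary~\ref{cor:log_regret} from Theorem~\ref{thm:main_regret_bound} by reducing the general variance and Bellman-error expressions that appear there to the gap-dependent quantities in~\eqref{eq:log_regret_bound}. The overall strategy has three movements: (i) reduce regret to a sum of clipped surpluses via Proposition~\ref{prop:surplus_clipping_simple}; (ii) bound the raw surpluses using the Bernstein-style bonuses of $\strongeuler$; and (iii) perform an elementary $\sum_{n\le N}\min(a/\sqrt{n},b)\lesssim a^2/b$ summation so that clipping by the gap converts $\sqrt{T}$-type sums into $\log T$-type sums.

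First I would start from $\regret_K \le \sum_{k=1}^K (\vup_{k,0} - \vpik_0)$, which is immediate from the optimism of $\strongeuler$ (Definition~\ref{def:optimistic_alg}) on the high-probability event that its confidence bounds are valid. Applying Proposition~\ref{prop:surplus_clipping_simple} with clipping threshold proportional to $\gaph(x,a)/H$ rewrites this sum as $\sum_{k,h}\Exppik\clip[\Ekh(x_h,a_h)\,|\,\epscliph(x_h,a_h)]$ plus a lower-order remainder; this step is the only place where the gaps enter, and it is the crux of the argument.

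Next I would bound the raw surpluses. The bonuses of $\strongeuler$ yield $\Ekh(x,a)\lesssim \sqrt{H\,\Var_{p(x,a)}(\vsthpl)\log(MT/\delta)/\nk(x,a)}$ plus $1/\nk(x,a)$ lower-order terms, with the crude bound $\Var\le H^2$. For $(x,a)\in\Zsub$, clipping at $\gap(x,a)/H$ and summing $\min(\sqrt{H^3\log(MT/\delta)/n},\,\gap(x,a)/H)$ over visits gives $H^3\log(MT/\delta)/\gap(x,a)$ per pair, producing the first term of~\eqref{eq:log_regret_bound}. For $(x,a)\in\Zopt$ the gap vanishes at some stage, so the clip is ineffective there, and the contribution must instead be routed through a ``mistake'' event controlled by $\gapmin$, producing the $H^3|\Zopt|/\gapmin$ term. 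The additive $H^4 SA(S\vee H)\log(MH/\gapmin)\log(MT/\delta)$ burn-in absorbs the lower-order $1/\nk$ Bernstein corrections together with an $\epsilon$-net overhead over dyadic clipping scales, where $\log(MH/\gapmin)$ bounds the number of scales needed.

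For the benign refinements the $H^3$ factors improve by sharpening the crude $\Var\le H^2$ bound. In the contextual-bandit case $p(x,a)$ does not depend on $a$, so $p(x,a)^\top\vsthpl$ is constant in $a$ and the only surplus contribution comes from reward estimation, killing $H^2$ in $\Zsub$ and an additional $H$ factor from the sum over $h$. For $\Gterm$-bounded rewards the law of total variance yields $\sum_h \Exppi\Var_{p(x_h,a_h)}(\vsthpl)\lesssim \Gterm^2$ along any trajectory, and a Cauchy--Schwarz rearrangement \`a la~\cite{zanette2019tighter} replaces $H^3$ by $H$. The main obstacle is the interaction in step (ii) between clipping and the Bernstein variance: one must verify that the clipped tail, together with the residual lower-order Bernstein corrections, is absorbed by the $\poly(S,A,H)$ burn-in and does not reintroduce $\sqrt{T}$ scaling.
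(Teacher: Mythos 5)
Your movements (i)--(iii) match the paper's skeleton: clip the surpluses (Theorem~\ref{thm:surplus_clipping}/Proposition~\ref{prop:surplus_clipping_simple}), bound them by Bernstein-type terms (Proposition~\ref{prop:surplus}), and integrate $\min(a/\sqrt{n},\cdot)$ over the idealized counts $\nbark\xa$ (Lemmas~\ref{lem:integral_conversion} and~\ref{lem:integral_computations}). But there is a genuine gap in how you handle $\Zopt$. You write that for $(x,a)\in\Zopt$ "the clip is ineffective there" and that the contribution "must instead be routed through a `mistake' event controlled by $\gapmin$." This misreads the clipping threshold: Proposition~\ref{prop:surplus_clipping_simple} clips at $\gapcliph\xa = \frac{\gapmin}{2H}\vee\frac{\gaph\xa}{4H}$, so \emph{every} surplus --- including those at optimal pairs --- is clipped at level at least $\gapmin/(2H)$. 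That floor is the entire point of the half-clipping step (Lemma~\ref{lem:half_clip}), which shows $\vst_0-\vpik_0\le 2(\vhclippik_0-\vpik_0)$ so that one may subtract $\gapmin/(2H)$ from all surpluses at the cost of a factor of $2$. The $H^3|\Zopt|/\gapmin$ term then falls out of the same integration as the first term, with $\epsilon=\gapmin/(2H)$ and $\Varxa\le H^2$.

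Without that floor your argument breaks: the unclipped lead terms at optimal pairs sum to $\sqrt{H^3 T\log(\cdot)}$, a $\sqrt{T}$ contribution, and the "mistake event" you gesture at is exactly the hitting-time-style argument the paper is built to avoid --- deciding when the algorithm stops erring at $(x,h)$ requires uniform control of value functions downstream, which is where diameter/ergodicity dependence creeps into prior work. You would need to supply the half-clipping lemma (or an equivalent) to close this. Two smaller points: the $\log(MH/\gapmin)$ factor in the burn-in does not come from a dyadic net over clipping scales but from integrating the $\BigOhTil{1/\nk\xa}$ future-surplus terms up to the count at which they drop below $\gapclipmin/(8SAH)$, after distributing the clip over the $SAH$ future terms via Lemma~\ref{lem:clipping_lemma}; and in the contextual-bandit refinement the extra factor of $H$ in the first term is saved by the sharper clip $\gaph\xa/4$ available under strong optimism, not by the transition term vanishing outright (the paper only shows $\Varxah\lesssim 1$ there).
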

\iftoggle{nips}{\vspace{-1.5mm}}{}
Setting $\delta = 1/T$ and noting that $\sum_{k=1}^K \vst_0 -  \vpik_0 \le KH = T$ with probability $1$ (recall $R(x,a) \in [0,1]$), we see that the expected regret $\Exp[\sum_{k=1}^K \vst_0 -  \vpik_0]$ can be bounded by replacing $1/\delta$ with $T$ in right hand side of the inequality~\eqref{eq:log_regret_bound}; this yields an expected regret that scales as $\log T$. 

\iftoggle{nips}{\secemph{Three regret terms: }}{\paragraph{Three regret terms: }} The first term in Corollary~\ref{cor:log_regret} reflects the sum over sub-optimal state-action pairs, which a lower bound (Proposition~\ref{prop:info_th_lower_bound}) shows is unimprovable in general. In the infinite horizon setting, \cite{ok2018exploration} gives an algorithm whose regret is asymptotically bounded by an analogue of this term.
The third term characterizes the burn-in time suffered by nearly all model-based finite-time analyses and is the number of rounds necessary before standard concentration of measure arguments kick in. 
The second term is less familiar and is addressed in Section~\ref{sec:whygapmin} below.

\iftoggle{nips}{\secemph{$H$ dependence:}}{\paragraph{$H$ dependence:}}Comparing to known results from the infinite-horizon setting, one expects the optimal dependence of the first term on the horizon to be $H^2$. However, we cannot rule out that the optimal dependence is $H^3$ for the following three reasons: (i) the infinite-horizon analogues $D,\bar{D},\bar{H}$ (Section~\ref{sec:related}) are not directly comparable to the horizon $H$; (ii) in the episodic setting, we have a potentially different value function $\vsth$ for each $h \in [H]$, whereas the value functions of the infinite horizon setting are constant across time; (iii) the $H^3$ may be unavoidable for non-asymptotic (in $T$) bounds, even if $H^2$ is the optimal asymptotic dependence after sufficient burn-in (possibly depending on diameter-like quantities). Resolving the optimal $H$ dependence is left as future work. We also note that for contextual bandits, we incur \emph{no} $H$ dependence on the first term;  and thus the first term coincides with the known asymptotically optimal (in $T$), instance-specific regret \citep{garivier2018explore}. 

\iftoggle{nips}{\secemph{Guarantees for other optimistic algorithms: }}{\paragraph{Guarantees for other optimistic algorithms: }} To make the exposition concrete, we only provide regret bounds for the $\strongeuler$ algorithm. However, the ``gap-clipping'' trick (Proposition~\ref{prop:surplus_clipping_simple}) and subsequent analysis template described in Section~\ref{sec:clip_trick} can be applied to obtain similar bounds for other recent optimistic algorithms, as in~\citep{azar2017minimax,dann2017unifying,zanette2019tighter,dann2018policy}.\footnote{To achieve logarithmic regret, some of these algorithms require a minor modification to their confidence intervals; otherwise, the gap-dependent regret scales as $\log^2 T$. See Appendix~\ref{sec:alg:strong_euler} for details. }

\iftoggle{nips}{\vspace{-3mm}}{}
\subsection{Sub-optimality Gap Lower Bound}
\iftoggle{nips}{\vspace{-1.5mm}}{}

Our first lower bound shows that when the total number of rounds $T=KH$ is large, the first term of Corollary~\ref{cor:log_regret} is unavoidable in terms of regret. Specifically, for every possible choice of gaps, there exists an instance whose regret scales on the order of the first term in~\eqref{eq:log_regret_bound}. 

Following standard convention in the literature, the lower bound is stated for algorithms which have sublinear worst case regret. Namely,  we say than an algorithm $\Alg$ is $\alpha$-\emph{uniformly good} if, for any MDP instance $\calM$, there exists a constant $C_{\calM} > 0$ such that $\Exp^{\calM}[\regret_K] \le C_{\calM} K^{\alpha}$ for all $K$.\footnote{We may assume as well that $\Alg$ is allowed to take the number of episodes $K$ as a parameter.} 
\begin{prop}[Regret Lower Bound]\label{prop:info_th_lower_bound} Let $S \ge 2$, and $A \ge 2$, and let $\{\Delta_{x,a}\}_{x,a \in [S] \times [A]} \subset (0,H/8)$ denote a set of gaps.  Then, for any $H \ge 1$, there exists an MDP $\calM$ with states $\calS = [S+2]$, actions $\calA = [A]$, and $H$ stages, such that, 
\begin{align*}
\gap_1(x,a) &= \Delta_{x,a}, \qquad \forall x \in [S], a\in \calA
\iftoggle{nips}{}{\footnote{We can also show a regret lower bound of $\gtrsim \sum_{x,a: \gap_h\xa > 0}\frac{(H-h)^2}{\gap_h\xa} \log K$ (see Equation~\eqref{eq:general_stage_gap} in the Appendix)}}\\
\gap_h\xa &\ge 1/2, \qquad \forall x \in \{S+1,S+2\}, a \in \calA - \{1\}, 
\end{align*}
 and any $\alpha$-uniformly good algorithm satisfies
\begin{align*}
	\lim_{K \to \infty} \frac{\Exp^\calM[\regret_K]}{\log T} \gtrsim (1-\alpha)\sum_{x,a:\gap_1(x,a) > 0}\frac{H^2}{\gap_1(x,a)} 
\end{align*}
\end{prop}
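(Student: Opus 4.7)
The plan is to construct an MDP in which stage~1 is the only exploration-sensitive stage: each state $x \in [S]$ behaves as an independent multi-armed bandit whose actions route to a ``good'' absorbing state $S+1$ or a ``bad'' absorbing state $S+2$; then invoke a classical Lai-Robbins / Burnetas-Katehakis change-of-measure argument to lower bound the number of plays of each suboptimal pair.

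\textbf{Construction.} Take $p_0$ uniform on $[S]$, and for each $x$ let $a^\star(x) := \argmin_a \Delta_{x,a}$ (with $\Delta_{x,a^\star} = 0$; otherwise the claim is vacuous). Stage~1 gives zero reward, with transition probability $q_{x,a} := \tfrac{1}{2} - 2\Delta_{x,a}/(H-1)$ to $S+1$ (and $1 - q_{x,a}$ to $S+2$); these are valid since $\Delta_{x,a} < H/8$. At $S+1$, action $1$ self-loops with reward $1$, other actions transit to $S+2$ with reward $0$; at $S+2$, action $1$ self-loops with reward $\tfrac{1}{2}$, others self-loop with reward $0$. A direct calculation gives $\vsth(S+1) = H-h+1$ and $\vsth(S+2) = (H-h+1)/2$, so $\gap_h(x,a) \ge \tfrac{1}{2}$ for $x \in \{S+1, S+2\}$, $a \ne 1$, and
\begin{equation*}
\gap_1(x,a) \,=\, (q_{x,a^\star(x)} - q_{x,a}) \cdot \tfrac{H-1}{2} \,=\, \Delta_{x,a}, \quad x \in [S].
\end{equation*}

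\textbf{Change of measure.} Fix $(x_0, a_0)$ with $\Delta_{x_0, a_0} > 0$, and let $\calM'$ be identical to $\calM$ except $q'_{x_0, a_0} := q_{x_0, a^\star(x_0)} + \eta$ for small $\eta > 0$, making $a_0$ uniquely optimal at $x_0$ under $\calM'$. Let $N_K(x,a)$ count the episodes $k \le K$ in which the algorithm selects $a$ at $x$ in stage~1. The chain rule for KL gives
\begin{equation*}
\KL(\Pr^\calM_K \,\|\, \Pr^{\calM'}_K) \,=\, \Exp^\calM[N_K(x_0, a_0)] \cdot \kl(q_{x_0, a_0},\, q'_{x_0, a_0}),
\end{equation*}
and the standard Lai-Robbins / Bretagnolle-Huber data-processing manipulation, invoking $\alpha$-uniform goodness on both $\calM$ and $\calM'$, yields
\begin{equation*}
\liminf_{K \to \infty} \frac{\Exp^\calM[N_K(x_0, a_0)]}{\log K} \,\ge\, \frac{1 - \alpha}{\kl(q_{x_0, a_0},\, q'_{x_0, a_0})}.
\end{equation*}
Letting $\eta \to 0$, $\kl(q_{x_0, a_0}, q'_{x_0, a_0}) \asymp (q_{x_0, a^\star} - q_{x_0, a_0})^2 \asymp \Delta_{x_0, a_0}^2 / H^2$, and hence $\Exp^\calM[N_K(x_0, a_0)] \gtrsim (1-\alpha) H^2 \log K / \Delta_{x_0, a_0}^2$.

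\textbf{Aggregation.} The performance-difference identity $\vst_0 - \vpik_0 = \Exp^{\pik}[\sum_h \gap_h(x_h, a_h)]$ implies $\Exp^\calM[\regret_K] \ge \sum_{x \in [S],\, a:\, \Delta_{x,a} > 0} \Delta_{x,a} \cdot \Exp^\calM[N_K(x,a)]$. Substituting the per-pair bound and using $\log T = \log(KH) \asymp \log K$ yields the claim. The main subtlety is the order of limits in the change-of-measure step ($K \to \infty$ first, then $\eta \to 0$), which is the classical Lai-Robbins manipulation; verifying $\alpha$-uniform goodness of each $\calM'$ is immediate since $\calM'$ differs from $\calM$ at only a single stage-1 transition by an arbitrarily small perturbation, and the gap structure elsewhere is preserved.
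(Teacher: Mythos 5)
Your proposal is correct and follows essentially the same route as the paper: the same two-absorbing-state construction with uniform initial distribution and stage-1 transition probabilities encoding the gaps, the same single-pair change of measure with KL of order $\Delta_{x,a}^2/H^2$, and the same aggregation via the performance-difference identity; the paper just instantiates the ``standard Lai--Robbins manipulation'' explicitly with the statistic $Z_K = S\,\nbarK(x,a)/K$ and the inequality $\kl(x,y)\ge(1-x)\log\frac{1}{1-y}-\log 2$, using a fixed perturbation rather than $\eta\to 0$. The only omission is the degenerate case $H=1$, where the transition-based construction breaks down and one instead uses Bernoulli rewards $R(x,a)\sim\Bern(\tfrac34-\Delta_{x,a})$ directly.
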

The above proposition is proven in Appendix~\ref{sec:inf_th_lb}, using a construction based on~\cite{dann2015sample}.
For simplicity, we stated an asymptotic lower bound. We remark that if the constant $C_{\calM}$ is $\poly(|\states|,|\actions|,H)$, then one can show that the above asymptotic bound holds as soon as $K \ge (|\states||\actions| H/\gap_*)^{\BigOh{1/(1-\alpha)}}$, where $\gap_* := \{\min \gap_1(x,a) : \gap_1(x,a) > 0\}$.  More refined non-asymptotic regret bounds can be obtained by following~\cite{garivier2018explore}.

\iftoggle{nips}{\vspace{-3mm}}{}
\subsection{Why the dependence on $\gapmin$?\label{sec:whygapmin}}
\iftoggle{nips}{\vspace{-1.5mm}}
Without the second term, Corollary~\ref{cor:log_regret} would only suffer one factor of $1/\gapmin$ due to the sum over state-actions pairs $\xa \in \Zsub$ (when the minimum is achieved by a single pair). 
However, as remarked above, $|\Zopt|$ typically scales like $|\states|$ and therefore the second term scales like $|\states|/\gapmin$, with a dependence on $1/\gapmin$ that is at least a factor of $|\states|$ more than we would expect.
Here, we show that $|\states|/\gapmin$ is unavoidable for the sorts of optimistic algorithms that we typically see in the literature; a rigorous proof is deferred to Appendix~\ref{sec:min-gap-lower-bound}.
\begin{thm}[Informal Lower Bound]\label{thm:lower_bound_informal} 
Fix $\delta \in (0,1/8)$. 
For universal constants $c_1,c_2,c_3,c_4$, if $\epsilon \in (0,c_1)$, and $S$ satisfies $c_2 \log(\epsilon^{-1}/\delta) \leq S \leq c_3 \epsilon^{-1} / \log(\epsilon^{-1}/\delta)$, there exists an MDP with $|\states| = S$, $|\actions| = 2$ and horizon $H = 2$, such that exactly one state has a sub-optimality gap of $\gapmin = \epsilon$ and all other states have a minimum sub-optimality gap $\gap_h\xa \ge 1/2$. 
For this MDP, $\sum_{h,x,a:\gaph\xa > 0}\frac{1}{\gaphxa} \lesssim S + \frac{1}{\gapmin}$ but all existing optimistic algorithms for finite-horizon MDPs which are $\delta$-correct suffer a regret of at least $\frac{S}{\gapmin}\log (1/\delta) \gtrsim \sum_{h,x,a:\gaph\xa > 0}\frac{\log(1/\delta)}{\gaphxa} + \frac{S \log(1/\delta)}{\gapmin}$ with probability at least $1- c_4 \delta$.
\end{thm}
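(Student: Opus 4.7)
My plan is to construct an adversarial episodic MDP on which the greedy-UCB selection rule defining any optimistic algorithm (Definition~\ref{def:optimistic_alg}) provably forces $\Omega(S\log(1/\delta)/\gapmin)$ regret. I would adapt the lower-bound construction of~\cite{dann2015sample} to the $H=2$ setting, with $|\actions|=2$ and $|\states|=S$. The MDP contains one distinguished state $x^{\star}$ whose two actions differ in value by exactly $\epsilon = \gapmin$; every other state has its actions separated by at least $1/2$. Crucially, the transition structure will couple the \emph{optimistic} Q-value at each of the $S$ ``gateway'' state-action pairs in stage~$1$ to the optimistic value $\vup(x^\star)$. Calibrating the rewards and the transition kernel so that this coupling is tight enough to keep $\qup(\cdot,\text{sub})\ge \qup(\cdot,\text{opt})$ at every gateway so long as $\vup(x^\star) - \vst(x^\star)\gtrsim \epsilon$---while the \emph{true} gap at each gateway remains at least $1/2$---is the key structural feature of the construction.

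Given such a construction, the analysis proceeds in two parts. First, I would use an information-theoretic restriction on any $\delta$-correct confidence bound: by a Bernoulli-concentration argument (as in the standard $\log(1/\delta)$-style lower bounds for bandits), if $\vup(x^\star)\ge\vst(x^\star)$ holds with probability at least $1-\delta$ for every round, then the width of the UCB at $x^\star$ is $\Omega(\sqrt{\log(1/\delta)/n(x^\star)})$; thus, as long as the visit count $n(x^\star)\lesssim \log(1/\delta)/\epsilon^2$, the inflation $\vup(x^\star)-\vst(x^\star)$ is at least $\epsilon$. Second, I would use the coupling in the construction to argue that each additional sample collected at $x^\star$ requires one play of a suboptimal gateway action, each of which contributes $\Omega(\gapmin)$ regret. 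Summing these contributions over the $S$ gateway state-action pairs yields the stated $\Omega(S\log(1/\delta)/\gapmin)$ lower bound; the translation of the expected-regret bound to a $(1-c_4\delta)$-probability bound follows from a Markov-type argument conditioned on the event that optimism holds.

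The main obstacle lies in getting the counting argument to produce the correct multiplicative $S\cdot\log(1/\delta)/\gapmin$ scaling rather than the naive additive $S\log(1/\delta) + \log(1/\delta)/\gapmin$ that a careless coupling would give. In particular, the gateway Q-value upper bound is a convex combination of $\vup$-values at the $S$ stage-2 states, so the $\epsilon$-inflation at $x^\star$ is naively diluted by $1/S$ when averaged out. Overcoming this requires the \emph{optimistic} transition kernel used by algorithms such as $\strongeuler$ and $\Euler$ to place disproportionate weight on $x^\star$: this is exactly what the construction must exploit, leveraging the fact that these algorithms perform their own optimistic planning over a confidence set of transition kernels. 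Once the MDP is designed so that every UCB evaluation at the gateway is effectively $\vup(x^\star)$ plus small corrections shrinking independently at each gateway, the counting argument closes and delivers the claimed $\Omega(S\log(1/\delta)/\gapmin)$ lower bound simultaneously for every existing optimistic algorithm in the literature.
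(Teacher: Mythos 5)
There is a genuine gap here: your construction is inverted relative to the one that actually produces the $S/\gapmin$ scaling, and the inversion is fatal. In the paper's instance the pair with gap exactly $\gapmin=\epsilon$ sits at the \emph{initial} state ($H=2$, start at $x=0$, action $-1$ has gap $\epsilon$), and the $S$ states with gap $\ge 1/2$ sit \emph{downstream} of that action, each reached uniformly at random. The quantity that stays inflated is therefore $\qup_{k,1}(0,-1)$, which is a weighted average of $S$ \emph{independent} optimistic over-estimates $\vup_{k,2}(x')-\tfrac12$, each built from only about $n_k(0,-1)/S$ samples and hence of width $\Theta\bigl(\sqrt{S\log(1/\delta)/n_k(0,-1)}\bigr)$. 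The $1/S$ dilution you worry about is cancelled exactly by this $\sqrt{S}$ widening of each term, together with a Chernoff bound showing a constant fraction of the $S$ over-estimates are simultaneously positive (the rewards have a symmetric $\Omega(1)$-variance component precisely to guarantee this). No optimistic reweighting of the transition kernel is used or needed --- the algorithm class in the paper plans with the \emph{empirical} kernel plus additive bonuses, so a proof relying on an optimistic kernel would not even cover $\strongeuler$ or UCBVI. The upshot is that the algorithm plays $(0,-1)$ about $S\log(1/\delta)/\epsilon^2$ times, each play costing regret exactly $\epsilon$, giving $S\log(1/\delta)/\epsilon$.

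Your scheme --- $S$ stage-one gateways of true gap $\ge 1/2$, all coupled to a single downstream state $x^\star$ of gap $\epsilon$ --- cannot deliver this. To keep a gateway with true gap $1/2$ choosing its suboptimal action, the total optimism there (its own bonus plus the inflation $\vup(x^\star)-\vst(x^\star)$) must exceed $1/2$, and this can persist for only $O(\log(1/\delta))$ plays per gateway no matter how small $\epsilon$ is; the inflation at $x^\star$ enters additively, so the factor $1/\gapmin$ never appears. Your accounting also charges each forced gateway play $\Omega(\gapmin)$ regret, but a gateway with true gap $\ge 1/2$ costs $\Omega(1)$ per suboptimal play; and if your coupling really did force $\log(1/\delta)/\epsilon^2$ suboptimal gateway plays, the resulting regret $\Omega(\log(1/\delta)/\epsilon^2)$ would exceed the paper's own upper bound (roughly $S/\gapmin$ times logarithmic and $H$ factors) whenever $S\ll 1/\epsilon$, which is permitted by the hypotheses --- so the construction as described cannot exist. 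The missing idea is that the $S$ factor must come from over-exploration of an $S$-state subtree reachable \emph{only through} the $\gapmin$-gap action, via the aggregation of $S$ independent, sample-starved confidence widths, not from a single bottleneck state feeding $S$ large-gap gateways.
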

The particular instance described in Appendix~\ref{sec:min-gap-lower-bound} that witnesses this lower bound is instructive because it demonstrates a case where optimism results in \emph{over}-exploration.




\iftoggle{nips}{\vspace{-3mm}}{}
\subsection{Interpolating with Minimax Regret for Small $T$} 
\iftoggle{nips}{\vspace{-1.5mm}}

We remark that while the logarithmic regret in Corollary~\ref{cor:log_regret} is non-asymptotic, the expression can be loose for a number of rounds $T$ that is small relative to the sum of the inverse gaps. 
Our more general result interpolates between the $\log T$ gap-dependent and $\sqrt{T}$ gap-independent regret regimes.

\begin{thm}[Main Regret Bound for $\strongeuler$]\label{thm:main_regret_bound} Fix $\delta \in (0,1/2)$, and let $A = |\actions|$, $S= |\states|$, $M = (SAH)^2$. Futher, define for all $\epsilon > 0$ the set  $\Zsub(\epsilon):= \{(x,a) \in \Zsub : \gap\xa < \epsilon\}$. Then with probability at least $1-\delta$, $\strongeuler$ run with confidence parameter $\delta$ enjoys the following regret bound for all $K \ge 2$:
	\begin{align*}
	\regret_K &\lesssim \min_{\epsilon > 0} \Big\{ \sqrt{ |\Zsub(\epsilon)| H\,  \, T (\log T)\log \tfrac{M T}{\delta}}\} + \sum_{(x,a)\in \Zsub  \setminus \Zsub(\epsilon)} \frac{H^3 }{\gap\xa} \log\left(\tfrac{MT }{\delta}\right)\Big\}\\
	&\qquad+  \min\left\{\sqrt{|\Zopt|\,H \,T (\log T) \log  \tfrac{M T}{\delta}},\,\, |\Zopt| \, \frac{H^3}{\gapmin} \log\left(\tfrac{M T}{\delta}\right)\right\}\\
	&\qquad+  H^4 S A (S \vee H) \min \log \tfrac{MT}{\delta} \left\{  \log \tfrac{MT}{\delta}, \, \log\tfrac{MH}{\gapmin} \right\}\\
	&\lesssim \sqrt{HSAT \log(T) \log(\tfrac{MT}{\delta})} + H^4 S A (S \vee H) \log^2 \tfrac{TM}{\delta}\,,
	\end{align*}
where the second inequality follows from the first with $\max\{\max_\epsilon |\Zsub(\epsilon)|,|\Zopt|\} \leq SA$. 
Moreover, if $\calM$ is an instance of contextual bandits, then the factors of $H$ under the square roots can be refined to a $1$, and if $\calM$ has $\lesssim 1$-bounded rewards, then these same factors of $H$ can be replaced by a $1/H$. In both settings, logarithmic terms can be refined as in Corollary~\ref{cor:log_regret}.

\end{thm}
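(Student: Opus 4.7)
\textbf{Proof plan for Theorem~\ref{thm:main_regret_bound}.}

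The plan is to start from the clipped regret decomposition (Proposition~\ref{prop:surplus_clipping_simple}), which, after invoking optimism, bounds $\regret_K$ by a sum over $(k,h,x,a)$ of a visitation weight $\weight_{k,h}\xa := \Prk[(x_h,a_h)=\xa]$ times a \emph{clipped} surplus $\clip{\epsclip\xa}{\surplus_{k,h}\xa}$, where the clipping threshold $\epsclip\xa$ is a constant multiple of the gap $\gap\xa/H$. For $\strongeuler$, the surplus is dominated by its Bernstein-style leading term $\surplus_{k,h}\xa \lesssim \sqrt{\Varxah/\nk\xa} \,\log(M/\delta)$ plus lower-order terms of magnitude $H^2 \log(M/\delta)/\nk\xa$ (these yield the burn-in). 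Summed over $k$, the $\weight_{k,h}\cdot\surplus_{k,h}$ contribution at a fixed $\xa$ behaves like a partial sum of $\sqrt{\Var/n_k}$ scaled by visitation frequency, and is what we must control.

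I would then proceed by a \emph{two-regime} argument for each of the three groups of state-action pairs that produce the three terms in the bound. Fix $\epsilon > 0$ and split $\Zsub = \Zsub(\epsilon) \sqcup (\Zsub \setminus \Zsub(\epsilon))$.
\begin{itemize}
\item For $\xa \in \Zsub \setminus \Zsub(\epsilon)$ (gaps $\geq \epsilon$), the clipping threshold $\gap\xa/H$ is active: whenever $\surplus_{k,h}\xa$ exceeds $\gap\xa/H$ it is killed by the $\clip{\cdot}{\cdot}$ operator after $\nk\xa \gtrsim H^2 \Varxah \log(M/\delta)/\gap\xa^2$ visits. Summed over the $K$ episodes, the total contribution per pair is $\lesssim H^3 \log(MT/\delta)/\gap\xa$, using the variance bound $\Varxah \lesssim H^2$ (or $\lesssim 1$ in benign settings).
\item For $\xa \in \Zsub(\epsilon)$ (small gaps), do \emph{not} clip. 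Instead, bound the sum directly via Cauchy-Schwarz: $\sum_{k} \weight_{k,h}\xa \sqrt{\Varxah/\nk\xa} \lesssim \sqrt{\Varxah \sum_k \weight_{k,h}\xa} \le \sqrt{\Varxah \cdot T_{\xa}}$, where $T_\xa \le T$ is the expected total visits to $\xa$. Summing over $\xa \in \Zsub(\epsilon)$ and $h \in [H]$ and invoking the law-of-total-variance device of~\citep{azar2017minimax,zanette2019tighter} to replace $\sum_{x,a,h} \Varxah$ by $\lesssim H$ (or $1$ in bounded-rewards/bandit cases) times $T$, yields $\sqrt{|\Zsub(\epsilon)|\,H\,T\log(MT/\delta)}$ after an additional Cauchy-Schwarz across $\Zsub(\epsilon)$.
\item For $\xa \in \Zopt$, the same dichotomy applies but with clipping threshold $\gapmin/H$, giving either the $\sqrt{|\Zopt| H T \log(MT/\delta)}$ bound or $|\Zopt| H^3 \log(MT/\delta)/\gapmin$.
\end{itemize}

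Taking the minimum over $\epsilon$ combines these to produce the first two lines of the claimed bound. The third line (the burn-in) is assembled by summing the lower-order pieces of the surplus: these contribute $\lesssim H^2 \log(M/\delta)\sum_{x,a,h}\sum_k \weight_{k,h}\xa/\nk\xa$, which by a standard pigeonhole/harmonic-sum argument (as in \citep{azar2017minimax}) is bounded by $H^4 SA(S\vee H)\log(MH/\gapmin)\log(MT/\delta)$; the second $\min$ in $\log$ reflects the trivial bound $\log(MT/\delta)$ available when $1/\gapmin$ is too large. The coarse $\sqrt{HSAT\log T\log(MT/\delta)}$ bound on the second line follows from taking $\epsilon = \infty$ so that $|\Zsub(\epsilon)| \le SA$ and $|\Zopt| \le SA$. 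Finally, in the contextual bandits and $\Gterm$-bounded reward regimes, the refinement of the leading-$H$ factors is inherited from the corresponding variance refinements in \cite{zanette2019tighter}: in contextual bandits the visitation weights factor as a product of marginals, killing the $H$ under the square root entirely, and under $\Gterm \lesssim 1$-bounded rewards the total variance summed over a trajectory is $\lesssim 1$ rather than $\lesssim H$.

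The main obstacles I anticipate are (i) making the clipping decomposition interact cleanly with the Bernstein bonuses, which involve empirical variance terms $\Varhat$ whose concentration must be handled on a good event whose failure contributes only to the burn-in; and (ii) carefully invoking the law of total variance recursively through the horizon so that $\sum_{h} \Varxah$ collapses to $H^2$ (or smaller in the benign settings) \emph{along trajectories}, rather than the naive $H^3$ one would get from bounding each term separately. Both obstacles are essentially the technical core carried over from the $\Euler$ analysis and should go through after the clipped decomposition is in place.
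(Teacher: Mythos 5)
Your proposal follows essentially the same route as the paper's proof: the clipped surplus decomposition, a two-regime treatment in which the clipping is kept for large-gap pairs (yielding the $\log T$ terms via an integration lemma) and discarded for $\Zsub(\epsilon)$ and $\Zopt$ in favor of Cauchy--Schwarz plus the law-of-total-variance bound $\sum_k \Exp^{\pik}[\sum_h \Varterm^{\pik}_{h,x_h,a_h}] \lesssim T\Gterm^2/H$ (yielding the $\sqrt{T}$ terms with the correct $H$ factor), and a harmonic-sum bound on the lower-order future terms for the burn-in. This matches the paper's argument in its proof of Theorem~\ref{thm:main_regret_bound_ht}, so the plan is sound.
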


By the same argument as above, Theorem~\ref{thm:main_regret_bound} with $\delta=1/T$ implies an expected regret scaling like gap-dependent $\log T$ or worst-case $\sqrt{HSAT}$. In Appendix~\ref{sec:main_reg_ht}, we state a more refined bound given in terms of the reward bound $\calG$, and the maximal variance of any state-action pair (Theorem~\ref{thm:main_regret_bound_ht}).

\iftoggle{nips}{\vspace{-.1in}}{}
\section{Gap-Dependent bounds via `clipping' \label{sec:optimism} }
\iftoggle{nips}{\vspace{-.1in}}{}
In this section, we (i) introduce the key properties of optimistic algorithms, (ii) explain existing approaches to the analysis of such algorithms, and (iii) introduce the ``clipping trick'', and sketch how this technique yields gap-dependent, non-asymptotic bounds. 
\begin{defn}[Optimistic Surplus] Given an optimistic algorithm $\Alg$, we define the (optimistic) \emph{surplus}
\iftoggle{nips}
{
	$\Ekh\xa := \qupkh(x,a) -  r(x,a) - p(x,a)^\top \vupkhpl$.
}
{
	\begin{align*}
	\Ekh\xa := \qupkh(x,a) - \left( r(x,a) - p(x,a)^\top \vupkhpl \right).
	\end{align*}
}
\iftoggle{nips}{}{We further say that }$\Alg$ is \emph{strongly optimistic} if $\Ekh(x,a) \ge 0$ for all $k \ge 1$, and $(x,a,h) \in \states \times \actions \times [H]$, which implies that $\Alg$ is also optimistic.
\end{defn}
While the nomenclature ``suplus'' is unique to our work, surplus-like terms arise in many prior regret analyses \cite{dann2017unifying,zanette2019tighter}. The notion of \emph{strong optimism} is novel to this work, and facilitates a sharper $H$-dependence in  contextual bandit setting of Definition~\ref{def:benign_instances}; intuitively, strong optimism means that the Q-function $\qupkh$ at stage $h$ over-estimates $\qsth$ more than $\qup_{k,h+1}$ does $\qst_{k,h+1}$.


\iftoggle{nips}{\secemph{The Regret Decomposition for Optimistic Algorithms:}}{\paragraph{The Regret Decomposition for Optimistic Algorithms:}}
%
Under optimism alone, we can see that for any $h$ and any $\astar \in \pist(x)$,
\begin{align*}
\vupkh(x) = \max_{a} \qupkh(x,a) \ge \qupkh(x,\astar) \ge \qsth(x,\astar)  = \vsth(x),
\end{align*} 
and therefore, we can bound the sub-optimality of $\pik$ as $\vst_0 - \vpik_0 \le \vup_{k,0}  - \vpik_0 $. 

We can decompose the regret further by introducing the following notation: we let $\wkh\xa := \Pr^{\pik}[\xhah = \xa]$ denote the probability of visiting $x$ and playing $a$ at time $h$ in episode $k$. 
\iftoggle{nips}{}
{We note that since $\pik(x)$ is a deterministic function, $\wkh\xa$ is supported on only one action $a$ for each state $x$ and stage $h$.}
A standard regret decomposition (see e.g. \iftoggle{nips}{Lemma E.15~\cite{dann2017unifying}}{\citet[Lemma E.15]{dann2017unifying}}) then shows that for a trajectory $(x_h,a_h)_{h=1}^H$,
\iftoggle{nips}
{
	$\vup_{k,0}  - \vpik_0 = \Exp^{\pik}[\sum_{h=1}^H \Ekh(x_h,a_h) ] = \sum_{h=1}^H \sum_{x,a} \wkh \xa \Ekh(x,a)$,
}
{
	\begin{align*}
	\vup_{k,0}  - \vpik_0 = \Exp^{\pik}[\sum_{h=1}^H \Ekh(x_h,a_h) ] = \sum_{h=1}^H \sum_{x,a} \wkh \xa \Ekh(x,a),
	\end{align*}
}
yielding a regret bound of
\begin{align*}
\sum_{k=1}^K\vst_0 - \vpik_0 \le \sum_{k=1}^K\vup_{k,0}  - \vpik_0 \le  \sum_{k=1}^K\sum_{h=1}^H \sum_{x,a} \wkh \xa \Ekh(x,a). 
\end{align*}

\newcommand{\wj}{\weight_j}
\newcommand{\wjh}{\weight_{j,h}}

\iftoggle{nips}{\secemph{Existing Analysis of MDPs: }}{\paragraph{Existing Analysis of MDPs: }} We begin by sketching the flavor of minimax analyses. Introducing the notation 
\iftoggle{nips}
{
	$\nk\xa := \{\# \text{times } (x,a) \text{ is visited before episode } k \}$,
}
{
	\begin{align}
\nk\xa := \{\# \text{times } (x,a) \text{ is visited before episode } k \}, 
\end{align}
}
existing analyses carefully manipulate the surpluses $\Ekh\xa$ to show that \iftoggle{nips}{$\sum_{h=1}^H \sum_{x,a} \wkh \xa \Ekh(x,a) \lesssim \sum_{h=1}^H \sum_{x,a} \wkh\xa \frac{\constbarM}{\sqrt{\nk\xa}} + \text{ lower order terms}$,}{\begin{align*}\sum_{h=1}^H \sum_{x,a} \wkh \xa \Ekh(x,a) \lesssim \sum_{h=1}^H \sum_{x,a} \wkh\xa \frac{\constbarM}{\sqrt{\nk\xa}} + \text{ lower order terms}m\end{align*}} where typically $\constbarM = \poly(H,\log(T/\delta)$. 
Finally, they replace $\nk\xa$ with an ``idealized analogue'', $\nbark\xa := \sum_{j=1}^{k}\sum_{h=1}^H \wjh\xa := \sum_{j=1}^{k}\wj\xa$, where we introduce $\wj\xa := \sum_{h=1}^H\wjh\xa$ denote the expected number of visits of $\xa$ at episode $j$. Letting $\{\calF_k\}$ denote the filtration capturing all events up to the end episode $k$, we see that $\Exp[\nbar_{k}\xa - \nbar_{k-1} | \calF_{k-1}] = \wk\xa$, and thus by standard concentration arguments (see Lemma~\ref{lem:sample}, or \iftoggle{nips}{Lemma 6 in~\cite{dann2018policy}}{\citet[Lemma 6]{dann2018policy}}), $\nbar_{k}\xa$ and $\nk\xa$ are within a constant factor of each other for all $k$ such that $\nbark\xa$ is sufficiently large. Hence, by replacing $\nk\xa$ with $\nbark\xa$, we have (up to lower order terms)
\begin{align}
\sum_{k=1}^K\vst_0 - \vpik_0~\lesssim~ \sum_{x,a} \sum_{k=1}^K \wk\xa \frac{\constbarM}{\sqrt{\nbark\xa}} + \text{ lower order terms}  \label{eqn:minimax_regret_intuition}.
\end{align}
A $\sqrt{SAK\,\mathrm{poly}(H)}$ bound is typically concluded using a careful application of Cauchy-Schwartz, and an integration-type lemma (e.g., Lemma~\ref{lem:integration}). An analysis of this flavor is used in Appendix~\ref{sec:proof_of_interpolated}.

On the other hand, one can \emph{exactly} establish the identity \iftoggle{nips}{$\vst_0 - \vpik_0 = \sum_{x,a}\sum_{h=1}^H \wkh\xa\gaph\xa$. }{\begin{align*}\vst_0 - \vpik_0 = \sum_{x,a}\sum_{h=1}^H \wkh\xa\gaph\xa. \end{align*}} Then one can achieve a gap dependent bound as soon as one can show that the algorithm ceases to select suboptimal actions $a$ at $(x,h)$ after sufficiently large $T$. Crucially, determining if action $a$ is (sub)optimal at $(x,h)$ requires precise knowledge about the value function at other states in the MDP at future stages $h' > h$. This difficulty is why previous gap-dependent analyses appeal to diameter or ergodicity assumptions, which ensure sufficient uniform exploration of the MDP to reason about the value function at subsequent stages.

\subsection{The Clipping Trick  \label{sec:clip_trick}}
We now introduce the ``clipping trick'', a technique which merges both the minimax analysis in terms of the surpluses $\Ekh\xa$, and the gap-dependent strategy, which attempts to control how many times a given suboptimal action is selected. 
Core to our analysis, define the \emph{clipping} operator
\begin{align*}
\threshop{\epsilon}{x} = x \I\{ x \geq \epsilon\},
\end{align*}
for all $x,\epsilon >0$. We can now state our first main technical result, which states that the sub-optimality $\vst_0 - \vpik_0$ can be controlled by a sum over surpluses which have been \emph{clipped} to zero whenever they are sufficiently small.
\begin{prop}\label{prop:surplus_clipping_simple} Let $\gapcliph(x,a) :=  \frac{\gapmin}{2H} \vee \frac{\gaph\xa}{4H}$. Then, if $\pik$ is induced by an optimistic algorithm with surpluses $\Ekh\xa$,
\begin{align*}
\vst_0 - \vpik_0 &\le 2e \sum_{h=1}^H\sum_{x,a} \wkh(x,a)\threshop{\gapcliph\xa}{\Ekh\xa}.
\end{align*}
If the algorithm is \emph{strongly optimistic}, and $\calM$ is a contextual bandits instance, we can replace $\gapcliph(x,a)$ with $\gapcliph(x,a) :=  \frac{\gapmin}{2H} \vee \frac{\gaph\xa}{4}$.
\end{prop}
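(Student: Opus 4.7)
I plan to combine an optimism-based pointwise identity with a clipping case analysis and an exponentially-weighted potential function that compounds the $(1+1/H)$ per-stage loss over $H$ stages. Let $Y_h(x) := \vsth(x) - \vpikh(x) \geq 0$ (so $\vst_0 - \vpik_0 = \Exp^{\pik}[Y_1(x_1)]$) and $\Phi_h(x) := \vupkh(x) - \vsth(x) \geq 0$ denote the optimism over-estimate. Since $\pikh(x) \in \arg\max_a \qupkh(x, a)$ forces $\qupkh(x, \pikh(x)) = \vupkh(x)$, a direct Bellman manipulation yields, for $a = \pikh(x)$, the pointwise identity $\Phi_h(x) + \gaph(x, a) = \Ekh(x, a) + p(x, a)^\top \Phi_{h+1}$, which combined with $\Phi_h \geq 0$ gives the baseline inequality $\gaph \leq \Ekh + p \cdot \Phi_{h+1}$. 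A case analysis then upgrades this to the master per-stage bound $\gaph(x, \pikh(x)) \leq (1 + 1/H) \bigl(\threshop{\gapcliph(x,\pikh(x))}{\Ekh(x,\pikh(x))} + p(x,\pikh(x))^\top \Phi_{h+1}\bigr)$: in ``Case A'' $(\Ekh \geq \gapcliph)$ we have $\threshop{\gapcliph}{\Ekh} = \Ekh$ and the bound is immediate, while in ``Case B'' $(\Ekh < \gapcliph,\ \threshop{\gapcliph}{\Ekh} = 0)$ it holds trivially when $\pikh(x)$ is optimal and, when $\pikh(x)$ is suboptimal, follows from $\gaph \geq \gapmin$ forcing $\gapcliph \leq \gaph/(2H)$, so the baseline yields $\gaph(1 - 1/(2H)) \leq p \cdot \Phi_{h+1}$.

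To unroll this into a regret bound, I plan to introduce the weighted potential $M_h(x) := \beta_h Y_h(x) + \alpha_h \Phi_h(x)$ with $\alpha_h := (1 + 1/H)^{H-h+1}$ and $\beta_h := (1 - 1/(2H))\alpha_h$. Substituting the recursions $Y_h = \gaph + p \cdot Y_{h+1}$ and $\Phi_h = \Ekh - \gaph + p \cdot \Phi_{h+1}$ produces $M_h = \alpha_h \Ekh - \alpha_h \gaph/(2H) + (1 + 1/H) p(x,\pikh)^\top M_{h+1}$, and bounding $\Ekh \leq \threshop{\gapcliph}{\Ekh} + \gapmin/(2H) + \gaph/(4H)$ (with the $\gaph/(4H)$ slack cancelled by the negative $-\alpha_h \gaph/(2H)$ term since $1/(4H) < 1/(2H)$) gives the clean recursion
\[
M_h \leq \alpha_h \threshop{\gapcliph}{\Ekh}(x,\pikh) + \alpha_h \gapmin/(2H) + (1 + 1/H) p(x,\pikh)^\top M_{h+1}.
\]
Telescoping over $h = 1, \ldots, H$ compounds the per-stage factor to $(1 + 1/H)^{h-1} \alpha_h = (1 + 1/H)^H \leq e$, so taking expectation yields $\Exp^{\pik}[M_1(x_1)] \leq e \sum_h \Exp^{\pik}[\threshop{\gapcliph}{\Ekh}] + e \cdot \gapmin/2$, and using $M_1 \geq \beta_1 Y_1$ with $\beta_1 \in [1, e]$ produces $Y_0 \leq e \sum_h \Exp^{\pik}[\threshop{\gapcliph}{\Ekh}] + e \cdot \gapmin/2$.

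The main obstacle is absorbing the residual additive $e \gapmin/2$ into the target multiplicative $2e \sum \Exp^{\pik}[\threshop{\gapcliph}{\Ekh}]$. The key observation is that $\sum_h \Exp^{\pik}[\threshop{\gapcliph}{\Ekh}] = 0$ forces $Y_0 = 0$: propagating the baseline inequality through the $\Phi$ recursion under this constraint (with every surplus falling in Case B) shows that the sub-optimal contributions to $\Phi_{h+1}(x')$ are each at most $-\gapmin/2$ while the optimal contributions sum to at most $(H-h)\cdot \gapmin/(2H) \leq \gapmin/2$, so $\Phi_{h+1}(x') \leq \gapmin/2$ at every reachable $x'$, which via the baseline prevents any sub-optimal action (requiring $p \cdot \Phi_{h+1} > \gapmin(1 - 1/(2H)) > \gapmin/2$) from being played with positive probability. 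A quantitative version of this pigeonhole, combined with the observation that $\gapcliph \geq \gapmin/(2H)$ forces $\threshop{\gapcliph}{\Ekh} \geq \gapmin/(2H)$ whenever nonzero, lets the residual $e \gapmin/2$ be absorbed into $e \sum \Exp^{\pik}[\threshop{\gapcliph}{\Ekh}]$ with a constant-factor slack, yielding the stated $2e$. For the contextual bandit plus strong-optimism refinement, the independence of $p(x, a) = p_0$ from $(x, a)$ removes the cross-state coupling in $p \cdot \Phi_{h+1}$ and eliminates the $1/H$ propagation loss per stage, while strong optimism ($\Ekh \geq 0$) simplifies the Case B bookkeeping; together these permit the looser threshold $\gaph/4$ in place of $\gaph/(4H)$.
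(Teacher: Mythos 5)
Your potential-function route is a genuinely different organization from the paper's proof (which first ``half-clips'' all surpluses at $\gapmin/(2H)$ and proves $\vst_0-\vpik_0\le 2(\vhclippik_0-\vpik_0)$ via Lemma~\ref{lem:half_clip}, and only then clips suboptimal actions at $\gaph\xa/(4H)$ inside a $(1+1/H)$-per-stage recursion). The pieces you carry out explicitly are sound: the identity $\Phi_h(x)+\gaph(x,a)=\Ekh(x,a)+p(x,a)^\top\Phi_{h+1}$, the two-case analysis, and the $(1+1/H)^H\le e$ compounding all check out. However, there is a genuine gap at the decisive step, namely absorbing the residual $e\,\gapmin/2$. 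Your recursion pays $+\gapmin/(2H)$ at \emph{every} stage $h$ (since $\gapcliph\xa\ge\gapmin/(2H)$ always), including stages where $\pikh(x_h)$ is optimal and $\gaph=0$, which is exactly where the compensating term $-\alpha_h\gaph(x,a)/(2H)$ vanishes; summing over $h$ therefore leaves an unconditional additive $e\,\gapmin/2$. The absorption you sketch --- that each nonzero clipped surplus is at least $\gapmin/(2H)$ --- does not imply $\sum_{h}\sum_{x,a}\wkh\xa\threshop{\gapcliph\xa}{\Ekh\xa}\gtrsim\gapmin$ when the sum is positive, because the visitation weights $\wkh\xa$ multiplying those terms can be arbitrarily small: a single nonzero clipped surplus at a state reached with probability $10^{-100}$ makes the sum positive but negligible, while your intermediate bound still charges the full $e\,\gapmin/2$. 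The qualitative observation that $\sum=0$ forces $Y_0=0$ provides no quantitative coupling, so as written the argument proves only $\vst_0-\vpik_0\le e\sum_{h,x,a}\wkh\xa\threshop{\gapcliph\xa}{\Ekh\xa}+e\,\gapmin/2$, which is strictly weaker than the proposition.

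The missing idea is an event-level charging argument rather than a per-stage one. The paper's Lemma~\ref{lem:half_clip} decomposes both $\vst_0-\vpik_0$ and the half-clipped excess over the events $\calA_h$ that $h$ is the \emph{first} stage at which $\pikh(x_h)\notin\pisth(x_h)$, and observes that on $\calA_h$ the regret decomposition already contains the term $\gap(x_h,\pikh(x_h))\ge\gapmin$, against which the entire cumulative subtraction $H\cdot\gapmin/(2H)=\gapmin/2$ can be charged at the cost of a factor of $2$; if no suboptimal action is ever prescribed there is nothing to absorb. Your potential $M_h$ would have to be modified so that the $\gapmin/(2H)$ penalty is paid only on trajectories that eventually act suboptimally, which is essentially a reformulation of that lemma --- this is the crux of the proposition and cannot be waved through. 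Separately, your contextual-bandit remark is directionally right but omits the mechanism: the improved threshold $\gaph\xa/4$ comes from comparing the surplus at $a$ with that at an optimal $a^\star$, where $(p(x,a)-p(x,a^\star))^\top(\vup_{k,h+1}-\vst_{h+1})=0$ for contextual bandits and strong optimism lets one discard $-\Ekh(x,a^\star)\le 0$, yielding $\gaph\xa\le\Ekh\xa$ with no future term at all (Lemma~\ref{lem:fundamental_gap_bound} with $\bsfa=0$).
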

The above proposition is a consequence of a more general bound, Theorem~\ref{thm:surplus_clipping}, given in \iftoggle{nips}{Appendix}{Section}~\ref{sec:proof_of_main}. 
Unlike gap-dependent bounds that appeal to hitting-time arguments, we \emph{do not} reason about when a suboptimal action $a$ will cease to be taken. Indeed,  an algorithm may still choose a suboptimal action $a$ \emph{even if} the surplus $\Ekh\xa$ is small, because \emph{future} surpluses may be large. Instead, we argue in two parts:
\begin{enumerate}
	\item A sub-optimal action $a \notin \pisth(x)$ is taken only if 
	$\qupkh\xa \geq \qsth\xast$ for some $a^\star \in \pisth(x)$, or equivalently in terms of the surplus, only if $\Ekh\xa + p(x,a)^\top(\vupkhpl - \vst_{k,h+1}) > \gaph\xa$.
	Thus if $\Alg$ selects a suboptimal action, then this is because either the \emph{current} surplus $\Ekh\xa$ is larger than $\Omega(\frac{\gaph\xa}{H})$, or the expectation over \emph{future} surpluses, captured by $p(x,a)^\top(\vupkhpl - \vst_{k,h+1})$ is larger than  $(1-\BigOh{\frac{1}{H}})\gaph\xa$. Intuitively, the first case occurs when $\xa$ has not been visited enough times, and the second when the future state/action pairs have not experienced sufficient visitation. In the first case, we can clip the surplus at $\Omega(\frac{\gaph\xa}{H})$; in the second, $\Ekh\xa + p(x,a)^\top(\vupkhpl - \vst_{k,h+1}) \le (1 + \BigOh{\frac{1}{H}})p(x,a)^\top(\vupkhpl - \vst_{k,h+1})$, and push the the contribution of $\Ekh\xa$ into the contribution of future surpluses. This incurs a factor of at most $(1 + \BigOh{\frac{1}{H}})^H \lesssim 1$, avoiding an exponential dependence on $H$. 
	\item Clipping surpluses for pairs $\xa$ for optimal $a \in \pisth(x)$ requires more care. We introduce ``half-clipped'' surpluses $\Ehclip\kh(x,a) := \threshop{\frac{\gapmin}{2H}}{\Ekh(x,a)}$ where \emph{all} actions are clipped at $\gapmin/2H$, and recursively define value functions $\vhclippik_h(\cdot)$ corresponding to these clipped surpluses (see Definition~\ref{defn:halfclip}). We then show that, for $\vhclippik_0 := \Exp_{x \sim p_0}\left[\vhclip_1(x)\right]$, we have (Lemma~\ref{lem:half_clip})
	\begin{align*}
	\vst_0 - \vpik_0 \le 2 (\vhclippik_0  - \vpik_0).
	\end{align*}
	This argument is based on carefully analyzing when $\pikh$ first recommends a suboptimal action $\pikh(x) \notin \pist(x)$, and showing that when this occurs, $\vst_0 - \vpik_0$ is roughly lower bounded by $\frac{\gapmin}{H}$ times the probability of visiting a state $x$ where $\pikh(x)$ plays suboptimally. We can then subtract off $\frac{\gapmin}{2H}$ from all the surplus terms at the expense of at most halving the suboptimality, and using the fact $\Ekh - \frac{\gapmin}{2H} \le \clip{\frac{\gapmin}{2H}}{\Ekh}$ concludes the bound. This step is crucial, because it allows us to clip the surpluses even at pairs $(x,a)$ where $a \in \pisth(x)$ is the optimal action. We note that in the formal proof of Proposition~\ref{prop:surplus_clipping_simple}, this half-clipping precedes the clipping of suboptimal actions described above.

\end{enumerate}

Unfortunately, the first step involving the half-clipping is rather coarse, and leads to  $S/\gapmin$ term in the final regret bound. As argued in~\Cref{thm:lower_bound_informal}, this is unavoidable for existing optimistic algorithms, and suggests that~\Cref{prop:surplus_clipping_simple} cannot be significantly improved in general.

\subsection{Analysis of $\strongeuler$}  

Recall that $\strongeuler$ is precisely described by Definition~\ref{def:optimistic_alg} up to our particular choice of confidence intervals defined  (see~\Cref{alg:strong_euler} in Appendix~\ref{sec:alg:strong_euler}). We now state a surplus bound (proved in Appendix~\ref{sec:prop_surplus_proof}) that holds for these particular choice of confidence intervals, and which ensures that the strong optimism criterion of Definition~\ref{def:optimistic_alg} is satisfied:
\begin{prop}[Surplus Bound for Strong Euler (Informal)]\label{prop:surplus_informal} Let $M = SAH$, and define the variances $\Varxah := \Var[R\xa] + \Var_{x'\sim p\xa}[\vsthpl(x')]$. Then,  with probability at least $1 - \delta/2$, the following holds for all $(x,a) \in \states \times \actions$, $h \in [H]$ and $k \ge 1$, 
	\begin{align*}
	0 \le \Ekh(x,a) &\lesssim \underbracer{\boundlead\kh\xa}{\sqrt{\frac{\Varxah \log(M\nk\xa/\delta)}{\nk\xa}}} + \text{ lower order terms}.
	\end{align*}
\end{prop}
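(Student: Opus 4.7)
The plan is to prove both the nonnegativity $\Ekh\xa \ge 0$ (strong optimism) and the matching Bernstein upper bound by carefully analyzing how the $\strongeuler$ bonus $\bonus\kh$ compensates for the empirical estimation error. Since $\strongeuler$ is model-based, we have $\qupkh\xa = \rhatk\xa + \phatk\xa^\top \vupkhpl + \bonus\kh\xa$, and the definition of the surplus rearranges to
\begin{align*}
\Ekh\xa \;=\; \underbrace{\bigl(\rhatk\xa - r\xa\bigr)}_{\text{(I)}} \;+\; \underbrace{\bigl(\phatk\xa - p\xa\bigr)^\top \vupkhpl}_{\text{(II)}} \;+\; \bonus\kh\xa.
\end{align*}
The bonus $\bonus\kh$ is engineered so that it upper bounds $|\text{(I)}| + |\text{(II)}|$ (giving strong optimism), yet is itself of the same order as the leading Bernstein deviation, so that $\Ekh$ is also bounded above by twice the bonus plus lower-order corrections.

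For step (I), I would apply empirical Bernstein to the $\nk\xa$ i.i.d.\ reward samples at $\xa$; coupled with a peeling argument over dyadic scales of $\nk\xa \in [1,KH]$ and a union bound over $(x,a,h,k)$, this yields $|\text{(I)}| \lesssim \sqrt{\Varhat[R\xa]\log(M\nk\xa/\delta)/\nk\xa} + \log(M\nk\xa/\delta)/\nk\xa$. For (II), decompose into a value-function-at-$\vst$ piece and a remainder:
\begin{align*}
(\phatk\xa - p\xa)^\top \vupkhpl \;=\; (\phatk\xa - p\xa)^\top \vsthpl \;+\; (\phatk\xa - p\xa)^\top (\vupkhpl - \vsthpl).
\end{align*}
The first piece is bounded by Bernstein as $\lesssim \sqrt{\Var_{x'\sim p\xa}[\vsthpl(x')]\log(M\nk\xa/\delta)/\nk\xa} + H\log(M\nk\xa/\delta)/\nk\xa$, so that together with (I) the leading term matches $\sqrt{\Varxah \log(M\nk\xa/\delta)/\nk\xa}$. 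The second piece is controlled via the $L_1$ deviation of $\phatk\xa$ and the fact that $\vupkhpl \ge \vsthpl$ by optimism, which can itself be bounded recursively using the surplus identity $\vupkh - \vsth \le \sum_{h' \ge h} \Exp^{\pik}[\Ekh](\cdot)$, yielding a contribution of order $\poly(H,S)\log(M/\delta)/\nk\xa$.

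Strong optimism ($\Ekh\xa \ge 0$) then follows by construction: $\bonus\kh\xa$ is chosen to be the sum of the empirical Bernstein confidence widths evaluated at $\vupkhpl$ (i.e., using $\Varhat_{\phatk\xa}[\vupkhpl]$), plus the lower-order correction for (II)'s second piece, so the total confidence width dominates the absolute value of (I)+(II). For the upper bound, I would transfer the empirical variance of $\vupkhpl$ inside $\bonus\kh$ back to the true variance $\Varxah$: first Bernstein gives $\Varhat_{\phatk\xa}[\vupkhpl] \lesssim \Var_{p\xa}[\vupkhpl] + \text{l.o.t.}$, and then optimism-plus-boundedness yields $\Var_{p\xa}[\vupkhpl] \le \Var_{p\xa}[\vsthpl] + 2H\,p\xa^\top(\vupkhpl - \vsthpl)$, with the rightmost term absorbed into the lower-order terms via the recursive surplus bound.

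The main obstacle is the self-referential nature of the argument: the bonus depends on $\vupkhpl$, which in turn depends on the bonuses at stages $h+1,\dots,H$, so a direct Bernstein application is not valid (the value being summed is not independent of the samples). The standard fix, which I would adopt, is a two-step bootstrapping: first prove a crude $\Ekh \lesssim H\sqrt{\log(M/\delta)/\nk\xa}$ using Hoeffding-type bounds (valid because the bonus is deterministic given the empirical MDP), and then use this to control $\|\vupkhpl - \vsthpl\|_\infty$ so that the Bernstein transfer in the previous paragraph is legitimate. Combining these with the peeling/union bound over $\nk\xa, (x,a,h,k)$ delivers the bound claimed in Proposition~\ref{prop:surplus_informal}, with all other contributions absorbed into the ``lower order terms'' summand scaling like $\poly(H,S)\log(M\nk\xa/\delta)/\nk\xa$ and the l.o.t.\ Bernstein correction $\log(M\nk\xa/\delta)/\nk\xa$.
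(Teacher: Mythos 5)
Your overall route matches the paper's: the same decomposition of $\Ekh\xa$ into reward error, transition error against $\vsthpl$, and the cross term $(\phatk\xa-p\xa)^\top(\vupkhpl-\vsthpl)$; the same empirical-to-true variance transfer; the same anytime union bound to get $\log(M\nk\xa/\delta)$ rather than $\log(MT/\delta)$; and the same recursive control of $\vupkhpl-\vsthpl$ via the surplus identity. But there is one concrete claim that does not go through: you assert that the cross term, and hence all lower-order contributions, can be absorbed into a quantity of the form $\poly(H,S)\log(M\nk\xa/\delta)/\nk\xa$, i.e.\ a function of the \emph{current} count $\nk\xa$ alone, justified by using the crude Hoeffding bound to control $\|\vupkhpl - \vsthpl\|_\infty$. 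This fails: $\vupkhpl(x')-\vsthpl(x')$ is a sum of future surpluses along rollouts from $x'$, so it is governed by the counts $\nk(x_t,a_t)$ at \emph{downstream} state-action pairs, not by $\nk\xa$. If some reachable downstream pair has been visited few times, $\|\vupkhpl-\vsthpl\|_\infty$ is $\Theta(H)$ or worse no matter how large $\nk\xa$ is, and then the cross term $\sqrt{S\log(\cdot)/\nk\xa}\cdot\|\vupkhpl-\vsthpl\|$ is of the same order as the lead term up to $\sqrt{S}H$ factors, not lower order. The correct resolution (which is what the paper does) is to split the product via AM--GM into $SH\log(M\nk\xa/\delta)/\nk\xa$ plus $\|\vupkhpl-\vsthpl\|_{2,p}^2/H$, and then keep the latter as an explicit expectation $\Exppik[\sum_{t=h}^H \boundfut_k(x_t,a_t)\mid\xhah=\xa]$ over future counts; this structure is not cosmetic, since it is precisely what later produces the $\BigOhTil{S^2AH^4}$ burn-in after clipping.

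A secondary omission: you say strong optimism holds ``by construction'' because the bonus dominates the confidence widths, including ``the lower-order correction for (II)'s second piece.'' But that correction involves $\vsthpl$, which the algorithm cannot evaluate. The paper's $\strongeuler$ resolves this by also maintaining a lower confidence value function $\vlowkhpl\le\vpikhpl\le\vsthpl\le\vupkhpl$ and using the computable proxy $\|\vupkhpl-\vlowkhpl\|_{2,\phatk\xa}$ inside $\bonusprob$ and $\bonusstrong$ (together with the monotonicity fact that $\|V_1-V_2\|_{2,p}\le\|\vupkhpl-\vlowkhpl\|_{2,p}$ for any sandwiched $V_1\le V_2$). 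Without some such device your bonus is either uncomputable or insufficient to certify $\Ekh\xa\ge 0$.
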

We emphasize that \Cref{prop:surplus_informal}, and its formal analogue Proposition~\ref{prop:surplus} in Appendix~\ref{sec:proof_prelim}, are the \emph{only} part of the analysis that relies upon the particular form of the $\strongeuler$ confidence intervals; to analyze other model-based optimistic algorithms, one would simply establish an analogue of this proposition, and continue the analysis in much the same fashion. While Q-learning \cite{jin2018q} also satisfies optimism, it induces a more intricate surplus structure, which may require a different analysis.

Recalling the clipping from Proposition~\ref{prop:surplus_clipping_simple}, we begin the gap-dependent bound with $\sum_{k=1}^K \vst_0 - \vpik_0 \lesssim \sum_{x,a,k,h} \wkh\xa \threshop{\gapcliph\xa}{\Ekh(x,a)}$.  Neglecting lower order terms, \Cref{prop:surplus_informal} ensures that this is approximately less than $\sum_{x,a,k,h} \wkh\xa \threshop{\gapcliph\xa}{\boundlead\kh(x,a)}$. Introduce the minimal (over $h$) clipping-gaps $\gapclip\xa := \min_h \gapclip\xa \ge \tfrac{\gap\xa \vee \gapmin}{4H}$ and maximal variances $\Varxa := \max_h \Varxah$. We can then render $\boundlead\kh\xa \le f(\nk\xa)$, where $f(u)\lesssim \threshop{\gapclip\xa }{\sqrt{\frac{1}{u}\Varxa \log(Mu/\delta)}}$.  Recalling the approximation $\nk\xa \approx \nbark\xa$ described above, we have, to first order,
\begin{align*}
\sum_{k=1}^K \vst_0 - \vpik_0 &\lesssim \sum_{x,a,k,h} \wkh\xa \threshop{\gapcliph\xa}{\boundlead\kh(x,a)} \\
&\lesssim \sum_{x,a,k} \wk\xa f(n_k\xa) \lesssim \sum_{x,a,k} \wk\xa f(\nbark\xa),
\end{align*}
where we recall the expected visitations $\wk\xa := \sum_{h=1}^H\wkh\xa$. Since $\nbark\xa := \sum_{j = 1}^{k} \wj\xa$, we can regard the above as an \emph{integral} of the function $f(u)$ (see Lemma~\ref{lem:integration}), with respect to the \emph{density} $\wk\xa$. Evaluating this integral (Lemma~\ref{lem:integral_computations}) yields (up to lower order terms)
\begin{align*}
\sum_{k=1}^K \vst_0 - \vpik_0 \lessapprox \sum_{x,a} \frac{H\Varxa \log \tfrac{MT}{\delta}}{\min_h \gapcliph\xa } \lessapprox \sum_{x,a} \frac{H\Varxa \log\tfrac{M T }{\delta}}{\gap\xa \vee \gapmin}.
\end{align*}
Finally, bounding $\Varxa \le H^2$ and splitting the bound into the states $\Zsub:= \{\xa :\gap\xa > 0\}$ and $\Zopt := \{\xa: \gap\xa = 0\}$ recovers the first two terms in Corollary~\ref{cor:log_regret}. In benign instances (Definition~\ref{def:benign_instances}) , we can bound $ \Varxah \lesssim 1$, improving the $H$-dependence. In contextual bandits, we save an addition $H$ factor via $\gapcliph\xa \gtrsim (\gapmin/H) \vee \gap\xa$. The interpolation with the minimax rate in Theorem~\ref{thm:main_regret_bound} is decribed in greater detail in Appendix~\ref{sec:proof_of_interpolated}.

\section{Conclusion}

In this paper, we proposed a new approach for providing logarithmic, gap dependent bounds for tabular MDPs in the episodic, non-generative setting. Our approach extends to any of the \emph{model-based}, optimistic algorithms in the present literature. Extending these bounds to model-free approaches based on Q-learning (e.g., \cite{jin2018q}), and resolving the optimal horizon dependence are left for future work.

While we found that our models \emph{nearly} matched information-theoretic lower bounds analogous, we also demonstrated that existing optimistic algorithms (both model-based and model-free) necessarily incur an additional dependence on $S/\gapmin$ for worst-case instances. It would be interesting to understand if one can circumvent this limitation by augmenting the principle of optimistism with new algorithmic ideas. 

Lastly, it would be exciting to extend logarithmic bounds to settings where taking advantage of the suboptimality gaps is \emph{indispensable} for attaining non-trivial regret guarantees. For example, we hope our techniques might enable sublinear regret in ``infinite arm'' settings, where $\calA$ is a countably infinite set, and actions $a \in \calA$ are drawn from an (unknown) reservoir distribution. It would also be interesting to extend our tools to adaptive discretization when the state-action pairs embed into a metric space \citep{song2019efficient}, and to the function approximation settings \citep{jin2018q}.

\clearpage
\bibliographystyle{plainnat}
\bibliography{main}
\clearpage

\clearpage
\appendix
\tableofcontents

\newpage
\section{Notation and Organization\label{sec:appendix_notation}}
\textbf{Organization: } This section describes the organization of the appendix, and clarifying our notation. The remainder of the appendix is divided into three parts: 

Part~\ref{part:main_bound} presents more detailed statements of the regret upper bounds obtained by $\strongeuler$, and their complete proofs. Section~\ref{sec:proof_prelim} introduces Corollary~\ref{cor:precise_log_bound} and Theorem~\ref{thm:main_regret_bound_ht}, refining Corollary~\ref{cor:log_regret} and Theorem~\ref{thm:main_regret_bound}, from the main text. The section continues to prove both results. In addition, we introduce  Theorem~\ref{thm:surplus_clipping}, which refines the clipped regret decomposition Proposition~\ref{prop:surplus_clipping_simple}. The proofs in this section rely on numerous technical lemmas, whose proofs are defered to Section~\ref{sec:general_technical}. Finally, this section states Proposition~\ref{prop:surplus}, which ensures that $\strongeuler$ is optimistic and provides a precise bound on the surpluses $\Ekh\xa$, described informally in Proposition~\ref{prop:surplus_informal}.

In Part~\ref{part:strongeuler}, we present the $\strongeuler$ algorithm  and its guarantees. Section~\ref{sec:alg:strong_euler} describes how $\strongeuler$ instantiates the \emph{model-based} examples of optimistic algorithms described in \Cref{sec:optimistic_algs}; the algorithm and choice of confidence bonuses are specified in pseudocode. In Section~\ref{sec:prop_surplus_proof}, we prove the surplus bound Proposition~\ref{prop:surplus}, and verify that $\strongeuler$ is strongly optimistic 

Lastly, Part~\ref{part:lb} contains the proofs of our lower bounds. Section~\ref{sec:min-gap-lower-bound} proves the $\Omega(S/\gapmin)$ lower bound described in Theorem~\ref{thm:lower_bound_informal}, and rigorously describes the class of algorithms to which it applies. Finally, Section~\ref{sec:inf_th_lb} proves the information theoretic lower bound, Proposition~\ref{prop:info_th_lower_bound}.

\textbf{Notational Rationale:} Unfortunately, the regret analysis of tabular MDPs requires significant notational overhead. Here we take a moment to highlight some notational conventions that we shall use throughout. The superscript $(\cdot)^{\star}$ denotes ``optimal'' quantities, i.e. the optimal policy $\pist$, the optimal value $\vst$, and variances of the optimal policy $\Varxah$. The accents $\overline{(\cdot)}$ will be used to denote upper bounds on quantities, e.q. an optimistic Q-function $\qup\kh$ is an upper bound on $\qst\kh$, and $\Varmax$ is an upper bound on the variance, and so on. $\underline{(\cdot)}$ will denote lower bounds on quantities. For example, $\strongeuler$ will maintain lower bounds on the values $\vlow\kh \le \vst$. The accent $\check{(\cdot)}$ will pertain to clipped quantities; e.g. $\gapclip$ is the gap-value at which surpluses are clipped. Many quantities, like $\gaph\xa$ (gaps) and $\Varxah$ (variances) depend on the triples $(x,a,h)$. The quantities $\gap\xa$ and $\Varxa$ with $h$ suppresed to denote worse-case bounds on these term over $h \in [H]$; e.g. $\gap\xa := \min_{h\in [H]}\gaph\xa$ and $\Varxa := \max_{h\in [H]}\Varxah$. 

\addcontentsline{toc}{section}{Notation Table}
\begin{table}[]
\flushleft
\begin{tabular}{| l | l |}
\hline
\textbf{General Notation} \\
\hline
$\lesssim$ denotes inequality up to a universal constant.\\
$f \eqsim g$ denotes $f \lesssim g \lesssim f$. 
$\log_+(x) := \log \max\{x,1\}$.\\
$\I$ denotes an indicator function\\
$\calM = (\states,\actions,[H],p_0,p,R)$ denotes an MDP\\
$H$ denote the horizon,\\
$\actions$  and $\states$ denotes the space of actions and states \\
$A := |\actions|$ and $S := |\states|$\\
$h \in [H]$, $a \in \actions$, $x \in \states$ are used for stages, actions, and states \\
$R(x,a) \in [0,1]$ denotes the R.V. with reward distribution at $\xa$.\\
$r(x,a) := \Exp[R(x,a)]$ denotes expected reward\\
$p_0(x)$ denotes initial distribution of $x_1$\\
$p(x'|x,a)$ denotes transition probability.\\
$M = SAH$\\ 
$K$ denotes number of episodes, indexed with $k \in [K]$ \\
$T = KH$ denotes total length of game.
\\
\hline
\end{tabular}
\end{table}

\begin{table}[]
\flushleft
\begin{tabular}{| l | l |}
\hline
\textbf{Policies, Value Functions, Q-functions} \\
\hline
$\pi = (\pi_h)_{h=1}^H$ denotes a policy with $\pi_h:\states \to \actions$\\
$\truev_0^{\pi}$ denotes the value of $\pi$\\
$\truev_h^{\pi}(x)$ denotes the value of $\pi$ at $h \in [H]$ and $x \in \states$ \\
$\trueq_h^{\pi}(x,a)$ denotes Q-function of $\pi$\\
$\vst_0,\vst_h(x),\qst(x,a)$ denote optimal value, value function, Q-function\\
$\pist_h(x)$ denotes the set of optimal actions at $h \in [H]$, $x \in \states$. \\
$\vupkh(x)$/$\qupkh(x,a)$ denotes optimistic value/Q function \\
$\pikh(x) = \argmax_{a} \qupkh(x,a)$ denotes optimistic policy \\
\hline
\end{tabular}
\end{table}

\begin{table}[]
\flushleft
\begin{tabular}{| l | l |}
\hline
\textbf{Problem Dependent Quantities} \\
\hline
$\gaph(x,a) := \vsth(x) - \qsth(x,a)$. 
\\
$\gap(x,a) := \min_h \gaph\xa$\\
$\gapmin := \min_{x,a}\{\gaph\xa : \gaph\xa > 0\}$\\
$\bsfaxah \in [0,1]$ denotes transition suboptimality (Definition~\ref{def:transition_suboptimality})\\ 
\hline
$\Varxah := \Var[R(x,a)] + \Var_{x' \sim p(x,a)}[\vsthpl(x)]$
\\
$\Varxa := \max_h\Varxa$
\\
$\Varmax := \max_{x,a,h}\Varxah$.\\
\hline
$\Gterm \le H$: upper bound on $\sum_{h=1}^H R(x,\pi_h(x))$ (Definition~\ref{def:benign_instances})
\\
$\HeffT := \min\{\Varmax, \Gterm^2/H\}$\\
\hline
\end{tabular}
\end{table}

\begin{table}[]
\flushleft
\begin{tabular}{| l | l |}
\hline
\textbf{Quantities for Analysis} \\
\hline
$\Lfactor(u) := \sqrt{ 2\log(10 M^2\max\{u,1\}^2/\delta)}$\\
$\wkh\xa := \Pr^{\pik}[(x_h,a_h) = \xa]$ denotes the surplus \\
$\wk\xa :=\sum_{h=1}^H \wkh\xa$ denotes the surplus \\
$\nk\xa $ denotes the number of times $\xa$ is observed up to time $k-1$\\
$\nbark\xa := \sum_{t=1}^k \weight_t\xa$. \\
$\sampletime(x,a)$ denotes time after which $\nbark\xa$ is sufficiently large\\
$\goodconcentration$ (good concentration event) \\
$\eventsample$ (good sampling event, Lemma~\ref{lem:sample}) \\
$\Hsample \lesssim H\log \frac{M}{\delta}$ number of sampes for $\eventsample$ to apply \\
\hline
$\gapcliph(x,a) :=  \frac{\gapmin}{2H} \vee \frac{\gaph\xa}{4(H\bsfa_{x,a,h} \vee 1)}$ (clipped gap)\\
$\gapclipmin := \min_{x,a,h}\gapcliph\xa$ (clipped gap)\\
\hline
$\Varpixah :=  \Var[R\xa] + \Var_{x'\sim p\xa}[\vpihpl(x')]$\\ 
$\Varkxah = \min\left\{\Varxah, \Varpikxah\right\}$\\
\hline
$\Ekh\xa := \qupkh(x,a) - r(x,a) - p(x,a)^\top\vupkhpl$ denotes the surplus \\
$\Ekh\xa \lesssim \boundlead\kh\xa  + \Exppik\left[\sum_{t=h}^H \boundfut_k(x_t,a_t) \mid \xhah = \xa\right]$ \quad (surplus bound, Proposition~\ref{prop:surplus})\\
$\boundlead\kh\xa := H \wedge \sqrt{\frac{\Varkxah \log\left(\frac{M\nk\xa}{\delta}\right)}{\nk\xa}}$ (lead bound on surplus)\\
$\boundfut_k\xa = H^3 \wedge H^3\left(\sqrt{\frac{S\log\left(\frac{M\nk\xa}{\delta}\right)}{\nk\xa}} + \frac{S\log\left(\frac{M\nk\xa}{\delta}\right)}{\nk\xa} \right)^2$ (bound on future surpluses)\\
\hline
\end{tabular}
\end{table}

\newpage

\newpage

\newpage
\part{Precise Results and Analysis\label{part:main_bound}}

\section{Precise Statement and Rigorous Proof Sketch of Main Regret Bounds \label{sec:proof_of_main}}
In this section, we present a precise statements and formal proofs of the upper bounds, Corollary~\ref{cor:log_regret} and Theorem~\ref{thm:main_regret_bound}, from the main text. These bounds both makes the improvements in the benign instances of Definition~\ref{def:benign_instances}, and takes advantage of other possibly-favorable instance-specific quantities.  The remainder of the section is organized as follows. In Section~\ref{sec:main_reg_ht}, we introduce the relevant problem-dependent quantitites in terms of which we state our more refined bounds. We then state Corollary~\ref{cor:precise_log_bound}, a more precise analogue of the $\log$-regret bounds in Corollary~\ref{cor:log_regret}, followed by Theorem~\ref{thm:main_regret_bound_ht}, which refines the regret bound Theorem~\ref{thm:main_regret_bound} in interpolating between the $\log T$ and $\sqrt{T}$ regimes. 

Next, in Section~\ref{sec:proof_prelim}, we set up the preliminaries for the proof of our upper bound, including (a) Theorem~\ref{thm:surplus_clipping}, the granular clipping bound strengthening Proposition~\ref{prop:surplus_clipping_simple}, (b) Proposition~\ref{prop:surplus}, which upper bounds the surpluses $\Ekh\xa$ for $\strongeuler$, and (c) Lemma~\ref{lem:clipped_regret_with_future} which combines the two into a useful form.

Then, in Section~\ref{sec:proof_prelim}, we present a rigorous proof of Corollary~\ref{cor:precise_log_bound} based on integration tools developed in Section~\ref{sec:general_technical}. Finally, we modify the arguments slightly to obtain the interpolation in Theorem~\ref{thm:main_regret_bound_ht}.  The proof of Proposition~\ref{prop:surplus_clipping_simple} is given in Section~\ref{sec:prop_surplus_proof}, Theorem~\ref{thm:surplus_clipping} is given in Section~\ref{sec:clipping_proof}, and the remainder of technical results in the present section are established in Section~\ref{sec:general_technical}.

We emphasize that the tools in this section provide a general recipe for establishing similar regret bounds for the existing model-based optimistic algorithms in the literature. We have attempted to present our tools in a modular fashion in hope that they can be borrowed to automate the proofs of similar guarantees in related settings.

\subsection{More Precise Statement of Regret Bound Theorem~\ref{thm:main_regret_bound}\label{sec:main_reg_ht}}
	We shall begin by stating a more precise version of Theorem~\ref{thm:main_regret_bound},  Following \cite{zanette2019tighter}, we begin by defnining the variances of the value optimal functions::
	\begin{defn}[Variance Terms] We define the variance of a triple $\xah$ as 
	\begin{align*}
	\Varxah := \Var[R(x,a)] + \Var_{x'\sim p(x,a)}[\vsthpl(x')],
	\end{align*}
	and the statewise maximal variances as $\Varxa := \max_h \Varxah$, and the maximal variance as $\Varmax := \max_{x,a,h} \Varxah$.
	\end{defn}
	\begin{rem}[Typical Bounds on the variance ]
	While $\Varmax \le H^2$ for general MDPs (see e.g.~\citep{azar2017minimax}), we have  $\Varmax$ is smaller for the benign instances in Definition~\ref{def:benign_instances}. We briefly summarize this discussion from \cite{zanette2019tighter}: If $\calM$ has $\calG$ bounded rewards, then $\vsthpl(x) \le \calG$ for any $x$, and thus $\Varxah \le 1 + \calG^2$, which is $\lesssim 1$ if $\calG \lesssim 1$. For contextual bandits, $p = p(x,a)$ does not depend on $x,a$, and $\vsthpl(x) = (\max_{a} R(x,a)) + (\Exp_{x' \sim p}\vst_{h+2}(x'))$, where the second term does not dependent  Hence, $\Var_{x' \sim p}[\vsthpl(x')] \le \Var[(\max_{a} R(x,a))] \le 1$, and thus $\Varxah \le 2$. 
	\end{rem}


	We can then define an associated ``effective horizon'', which replaces $H$ with a possibly smaller problem dependent quantity: 

	\begin{defn}[Effective Horizon]\label{def:eff_horizon} Suppose that $\calM$ has $\calG$-bounded rewards, as in Definition~\ref{def:benign_instances}) We define the \emph{effective horizon} as
		\begin{align*}
	\HeffT  := \min\left\{\Varmax, \frac{\Gterm^2}{H}\log  T \right\}\,.
	\end{align*}
	Since any horizon-$H$ MDP has $H$-bounded rewards, $\HeffT$ always satisfies $\HeffT \le \min\left\{H^2, H\log T \right\}$. 
	\end{defn}
	We note that the bound $\Varmax \lesssim 1$ for contextual bandits implies $\HeffT   \lesssim 1$, whereas if $\calM$ has $\Gterm$-bounded rewards with $\Gterm \lesssim 1$, $\HeffT  \lesssim 1 \wedge \frac{1}{H}\log T$. 

	Lastly, we shall introduce one more condition we call \emph{transition suboptimality}, which is a notion of distributional closeness that enables the improved clipping and sharper regret bounds for the special case of contextual bandits (Definition~\ref{def:benign_instances}):

	\begin{defn}[Transition Sub-optimality]\label{def:transition_suboptimality} Given $\bsfa \in [0,1]$, we say that a tuple $(x,a,h)$ is $\bsfa$-transition suboptimal if there exists an $\astar \in \pisth(x)$ such that
	\begin{align*}
	p(x'|x,a) - p(x'|x,\astar) &\le~ \bsfa p(x'|x,a) \quad \forall x' \in \states.
	\end{align*}
	\end{defn}
	Intuitively, the condition states that the transition distributions $p(x,a)$ and $p(x,\astar)$ are close in a pointwise, multiplicative sense. This is motivated by the contextual bandit setting of Definition~\ref{def:benign_instances}, where each $(x,a,h)$ is exactly $0$-transition suboptimal. For arbitrary MDPs, the bound $p(x'|x,\astar) \ge 0$ implies that every triple $(x,a,h)$ is at most $1$-transition suboptimal.\footnote{The condition can be relaxed somewhat to only needing to hold for a set $\calS$ for which $p(x' \in \calS \mid x,a)$ is close to $1$; for simplicity, consider the unrelaxed notion as defined as above.} 

	With these definitions in place, we can state the more precise analogue of Corollary~\ref{cor:log_regret} as follows:
	\begin{cor}[Logarithmic Regret Bound for $\strongeuler$]\label{cor:precise_log_bound} Fix $\delta \in (0,1/2)$, and let $A = |\actions|$, $S= |\states|$, $M = (SAH)^2$. Then with probability at least $1-\delta$, $\strongeuler$ run with confidence parameter $\delta$ enjoys the following regret bound for all $K \ge 2$:
	\begin{align*}
		\regret_K &\lesssim  \sum_{(x,a) \in \Zsub} \frac{\Varxa (1 \vee \bsfa H)}{\gap\xa} \log(\tfrac{M}{\delta}(\tfrac{H}{\gap\xa} \wedge T))+  |\Zopt|\frac{\Varmax H}{\gapmin\xa} \log(\tfrac{M}{\delta}(\tfrac{H}{\gap\xa} \wedge T)) \\
		&\qquad+  H^4 S A (S \vee H)\log \tfrac{TM}{\delta} \min \left\{ \log\tfrac{HM}{\gapmin},  \log  \tfrac{TM}{\delta}\right\}\\
		\end{align*}
		 In particular, if $\calM$ is an instance of contextual bandits, then $\Varmax$ can be replaced by $1$, $\HeffT$ can be replaced by $1$ and $\max\{\bsfa H , 1\} = 1$. If $\calM$ has $\calG \lesssim 1$ bounded rewards, then $\Varmax$ can be replaced by $1$ in the above bound,
	\end{cor}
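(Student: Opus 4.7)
The plan is to combine the three main tools that have been assembled: the clipped regret decomposition (Proposition~\ref{prop:surplus_clipping_simple}, refined to Theorem~\ref{thm:surplus_clipping} for the $\bsfa$-dependent clipping), the surplus upper bound for $\strongeuler$ (Proposition~\ref{prop:surplus}), and the sampling concentration lemma that gives $\nk\xa \eqsim \nbark\xa$ (via the event $\eventsample$). The workflow follows the sketch given in the ``Analysis of $\strongeuler$'' subsection, but with sharper book-keeping to obtain the refined $\Varxa$, $\bsfa$, and logarithmic dependencies stated in the corollary. Throughout I would condition on the intersection of the good optimism/concentration event (which holds with probability at least $1 - \delta$) and the sampling event $\eventsample$; failure of the latter only contributes through the burn-in window of size $\Hsample \lesssim H\log(M/\delta)$.

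First I would apply Theorem~\ref{thm:surplus_clipping} to get, for each $k$,
\[
\vst_0 - \vpik_0 \;\lesssim\; \sum_{h,x,a} \wkh\xa\, \clip{\gapcliph\xa}{\Ekh\xa}, \qquad \gapcliph\xa \;\eqsim\; \frac{\gapmin \vee \gap\xa}{H(1 \vee \bsfa_{x,a,h} H)},
\]
where the $(1 \vee \bsfa H)$ refinement captures the contextual-bandit improvement. Substituting the surplus bound Proposition~\ref{prop:surplus} splits the clipped surplus into a leading piece $\boundlead\kh\xa \lesssim H \wedge \sqrt{\Varkxah \log(M\nk\xa/\delta)/\nk\xa}$ and a future-bonus piece $\Exppik[\sum_{t \ge h}\boundfut_k(x_t,a_t)\mid \xhah=\xa]$ arising from Bellman error propagation. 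The future-bonus piece decays like $H^3 S/\nk\xa$, and Lemma~\ref{lem:clipped_regret_with_future} guarantees that, after clipping, its total contribution is absorbed into the burn-in $H^4 SA(S\vee H)\log(\cdot)$ term with at most one extra logarithmic factor; this is where the third line of the bound comes from.

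The main calculation is the leading piece. Replacing $\nk\xa$ with $\tfrac{1}{2}\nbark\xa$ on $\eventsample$ (paying $\Hsample \cdot H$ into the burn-in for the window $\nbark\xa \le \Hsample$), and using $\nbark\xa = \sum_{j\le k}\wj\xa$, the double sum $\sum_{k,h}\wkh\xa\, \clip{\gapcliph\xa}{\boundlead\kh\xa}$ is a telescoping Riemann sum that, by the integration lemma (Lemma~\ref{lem:integral_computations}), is bounded for each $\xa$ by
\[
\min\!\Bigl\{\sqrt{\Varxa\,\nbarK\xa\,\log(MT/\delta)}\,,\; \frac{\Varxa}{\gapcliph\xa}\log\!\Bigl(\tfrac{M}{\delta}\bigl(\tfrac{1}{\gapcliph\xa} \wedge T\bigr)\Bigr)\Bigr\}.
\]
For the logarithmic regret statement I would take the second branch of the minimum. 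Substituting $\gapcliph\xa \eqsim (\gap\xa \vee \gapmin)/(H(1 \vee \bsfa H))$ and then splitting the sum over $\xa \in \Zsub$ (where $\gap\xa \vee \gapmin = \gap\xa$) and $\xa \in \Zopt$ (where the clipping reduces to $\gapmin/H$, and we use the crude $\Varxa \le \Varmax$) yields exactly the two leading sums in the corollary. The benign-instance refinements are immediate: for contextual bandits, $\bsfa_{x,a,h} = 0$ (so the $1 \vee \bsfa H$ factor is $1$) and $\Varmax \lesssim 1$; for $\calG \lesssim 1$-bounded rewards, $\Varmax \lesssim 1$.

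The main technical obstacle will be ensuring that the clipping commutes cleanly with the expectation over future trajectories in the $\boundfut_k$ term---this is exactly what Lemma~\ref{lem:clipped_regret_with_future} is designed to handle, and without it one would pay an undesired $1/\gapmin$ blow-up on the $H^3$ piece instead of folding it into the pure burn-in. A secondary subtlety is the $\log_+$ cutoffs inside the integration lemma, which determine whether the logarithmic factor inside the sum reads as $\log(H/\gap\xa)$ or $\log T$; taking the minimum of the two choices (as in the last line of the burn-in term) is what produces the $\min\{\log(HM/\gapmin),\log(TM/\delta)\}$ factor in the final inequality.
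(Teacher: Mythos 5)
Your proposal follows essentially the same route as the paper's proof: clip via Theorem~\ref{thm:surplus_clipping}, substitute the surplus bound of Proposition~\ref{prop:surplus} and distribute the clipping with Lemma~\ref{lem:clipped_regret_with_future}, pass from $\nk\xa$ to $\nbark\xa$ on the sampling event, and then invoke the integration lemmas (Lemma~\ref{lem:integral_conversion} and Lemma~\ref{lem:integral_computations}) separately over $\Zsub$ and $\Zopt$, with the future-bonus term handled by part (b) of the integration lemma to yield the burn-in. The only minor imprecision is attributing the absorption of the $\boundfut$ term entirely to Lemma~\ref{lem:clipped_regret_with_future} (which only distributes the clip; the actual $H^4SA(S\vee H)\log(\cdot)\min\{\cdots\}$ bound comes from integrating $\ffut$), but this does not affect correctness.
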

	Moreover, our more precise analogue of Theorem~\ref{thm:main_regret_bound}, which interpolates between the $\log T$ and $\sqrt{T}$ regimes, is as follows:
	\begin{thm}[Main Regret Bound for $\strongeuler$]\label{thm:main_regret_bound_ht} Fix $\delta \in (0,1/2)$, and let $A = |\actions|$, $S= |\states|$, $M = (SAH)^2$. Let $\HeffT$ be as in Definition~\ref{def:eff_horizon}, and suppose that each tuple $(x,a,h)$ is $\bsfa$-transition suboptimal. Futher, define  $\Zsub(\epsilon):= \{(x,a) \in \Zsub : \gap\xa < \epsilon\}$. Then with probability at least $1-\delta$, $\strongeuler$ run with confidence parameter $\delta$ enjoys the following regret bound for all $K \ge 2$:
		\begin{align*}
		\regret_K &\lesssim \min_{\epsilon > 0} \left\{ \sqrt{ \HeffT\,|\Zsub(\epsilon)|  \, T \log \tfrac{M T}{\delta}} + \sum_{(x,a)\in \Zsub  \setminus \Zsub(\epsilon)} \frac{\max\{\bsfa H,1 \}\, \Varxa }{\gap\xa} (\tfrac{M}{\delta}(\tfrac{H}{\gap\xa} \wedge T))\right\}\\
		&\qquad+  \min\left\{\sqrt{\HeffT\,|\Zopt| \,T  \log  \tfrac{M T}{\delta}},\,\, |\Zopt| \, \frac{H\Varmax}{\gapmin} \log(\tfrac{M}{\delta}(\tfrac{H}{\gap\xa} \wedge T))\right\}\\
		&\qquad+  H^4 S A (S \vee H) \log\tfrac{MT}{\delta}\min \left\{  \log \frac{MH}{\gapmin},  \log \tfrac{MT}{\delta}\right\}\\
		&\lesssim \sqrt{HSAT \log(T) \log(\tfrac{MT}{\delta})} + H^4 S A (S \vee H) \log^2 \tfrac{TM}{\delta},
		\end{align*}
	where the second inequality follows from the first with $\max\{\max_\epsilon |\Zsub(\epsilon)|,|\Zopt|\} \leq SA$. In particular, if $\calM$ is an instance of contextual bandits, then $\Varmax$ can be replaced by $1$, $\HeffT$ can be replaced by $1$ and $\max\{\bsfa H , 1\} = 1$. If $\calM$ has $\calG \lesssim 1$ bounded rewards, then $\Varmax$ can be replaced by $1$ in the above bound, and $\HeffT$ replaced by $\min\{1,\frac{\log T}{H}\}$. 
	\end{thm}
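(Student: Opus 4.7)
The plan is to combine three ingredients already established in the excerpt: the clipped regret decomposition (Proposition~\ref{prop:surplus_clipping_simple}), the surplus bound for $\strongeuler$ (Proposition~\ref{prop:surplus_informal}), and an integration/concentration argument that converts empirical visit counts $\nk\xa$ into their predictable analogues $\nbark\xa := \sum_{j \le k}\wj\xa$. Specifically, on a high-probability event combining the surplus bound with the sampling event $\eventsample$ from Lemma~\ref{lem:sample}, I would begin from
\begin{align*}
\sum_{k=1}^K (\vst_0 - \vpik_0) \;\lesssim\; \sum_{k,h,x,a} \wkh\xa \, \threshop{\gapcliph\xa}{\Ekh\xa},
\end{align*}
and substitute the surplus bound $\Ekh\xa \lesssim \boundleadk\xa + \Exppik[\sum_{t \ge h}\boundfutk(x_t,a_t)\mid \xhah=\xa]$. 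The ``future surplus'' contribution, being $\widetilde{\calO}(H^3 S/\nk\xa)$, will ultimately be absorbed into the burn-in term $H^4 SA(S\vee H)\log^2(TM/\delta)$ after integrating over $k$.

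For the leading piece, after swapping $\nk\xa$ for $\nbark\xa$ (valid past the sampling threshold $\sampletime\xa$ on $\eventsample$), I would collapse the sum over $h$ into the expected visitation $\wk\xa := \sum_h \wkh\xa$, and bound the clipped surplus pointwise by $f_{\xa}(\nbark\xa)$ where $f_{\xa}(u) = \threshop{\gapclip\xa}{\sqrt{\Varxa \log(Mu/\delta)/u}}$ and $\gapclip\xa = \min_h \gapcliph\xa \gtrsim (\gap\xa \vee \gapmin)/(H\bsfa \vee 1)$. The sum $\sum_k \wk\xa f_{\xa}(\nbark\xa)$ is an integral against the density $\wk\xa$ with antiderivative along the cumulative count, handled by Lemma~\ref{lem:integration}/Lemma~\ref{lem:integral_computations}. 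This yields two complementary bounds for each $\xa$: (a) a clipped bound $\lesssim \Varxa \log(M\nbarK\xa/\delta)/\gapclip\xa$, and (b) an unclipped bound $\lesssim \sqrt{\Varxa \nbarK\xa \log(M\nbarK\xa/\delta)}$ obtained by dropping the indicator and applying Cauchy-Schwarz together with $\sum_{x,a} \nbarK\xa \le T$.

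To interpolate, I would for each threshold $\epsilon > 0$ partition $\Zsub$ into $\Zsub(\epsilon) := \{\xa : \gap\xa < \epsilon\}$ and its complement. On $\Zsub(\epsilon)$, I would use bound (b) and Cauchy--Schwarz to get $\sqrt{|\Zsub(\epsilon)|\HeffT T \log(MT/\delta)}$, substituting the refined bound $\Varxa \le \HeffT$ that holds either trivially ($\HeffT \le H^2$), or via the tighter $\calG$-bounded or contextual-bandit arguments in the remark following Definition~\ref{def:eff_horizon}. On $\Zsub \setminus \Zsub(\epsilon)$, bound (a) gives $\sum_{\xa}(\max\{\bsfa H,1\}\Varxa /\gap\xa)\log(\cdot)$. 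Taking the minimum over $\epsilon$ produces the first line of the theorem; the $\Zopt$ sum is handled identically with $\gapclip\xa = \gapmin/(2H)$ and clipped gap substituted into both bounds, giving the second line. The second (uniform) inequality then follows by taking $\epsilon \to \infty$ (resp.\ the $\sqrt{T}$ branch for $\Zopt$), using $|\Zsub(\infty)| + |\Zopt| \le SA$ and $\HeffT \le H$.

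The main obstacles will be two: first, keeping the $H$-dependence honest by using the refined clipping factor $1/(4(H\bsfa \vee 1))$ from Proposition~\ref{prop:surplus_clipping_simple}, which requires the \emph{strong} optimism of $\strongeuler$ together with the $\bsfa$-transition-suboptimality in Definition~\ref{def:transition_suboptimality}, and in particular removing the final factor of $H$ in the contextual-bandit case where $\bsfa = 0$. Second, bookkeeping the ``future surplus'' contributions $\Exppik[\sum_t \boundfutk]$: these must be shown to telescope and integrate to only an additive burn-in of $H^4SA(S\vee H)\log^2$, rather than contaminating the leading $\Varxa/\gap\xa$ term; this is the step most sensitive to the precise form of the $\strongeuler$ confidence intervals and where I expect the analysis to be most delicate. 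The benign-instance refinements ($\Varmax \lesssim 1$ for contextual bandits, $\HeffT \lesssim (\calG^2/H)\log T$ for $\calG$-bounded rewards) then drop in directly by substituting the sharper bound for $\Varxa$ and $\HeffT$ in the final expressions.
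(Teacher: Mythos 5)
Your overall architecture---clipped decomposition, surplus bound, swapping $\nk\xa$ for $\nbark\xa$ on the sampling event $\eventsample$, integrating against $\wk\xa$, then splitting $\Zsub$ into $\Zsub(\epsilon)$ and its complement and minimizing over $\epsilon$---is exactly the paper's route (Sections~\ref{sec:cor_proof} and~\ref{sec:proof_of_interpolated}), and your treatment of the gap-dependent terms, the $\Zopt$ terms, and the absorption of the future-surplus terms into the burn-in is faithful to it.

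There is, however, one genuine gap: how you obtain the factor $\HeffT$ in the $\sqrt{T}$ branches. You propose ``substituting the refined bound $\Varxa \le \HeffT$'' pointwise. This is fine for the $\Varmax$ component of $\HeffT$ and for contextual bandits, but it is false for the component $\HeffT = \Gterm^2(\log T)/H$: for a $\Gterm$-bounded instance the remark after Definition~\ref{def:eff_horizon} only gives $\Varxah \le 1+\Gterm^2$ pointwise, with no $1/H$. The $1/H$ saving is an aggregate phenomenon---each individual stage can have variance of order $\Gterm^2$, and only the law of total variance controls the \emph{sum} along a trajectory, $\Exp^{\pi}[\sum_{h=1}^H \Var^{\pi}_{h,x_h,a_h}] \le \Gterm^2$. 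This is not cosmetic: taking $\Gterm = H$ (valid for any MDP), your pointwise route yields $\sqrt{\Varmax\, SAT\log(\cdot)} \le H\sqrt{SAT\log(\cdot)}$, a factor $\sqrt{H}$ worse than the theorem's final minimax display $\sqrt{HSAT\log(T)\log(MT/\delta)}$, so the last inequality of the statement would not follow from your argument.

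The paper's fix (the ``Bound with $\HeffT$'' paragraph of Section~\ref{sec:proof_of_interpolated}) uses two ingredients you do not invoke: (i) the lead bound $\boundlead\kh\xa$ actually involves $\Varkxah = \min\{\Varxah,\Varpikxah\}$, i.e.\ also the variance under the \emph{played} policy $\pik$, not only $\Varxa$; and (ii) a Cauchy--Schwarz integration lemma (Lemma~\ref{lem:Cauchy_Schwartz_Integral}) that factors $\sum_{k,h,x,a}\wkh\xa\sqrt{f(\nk\xa)\Varpikxah}$ into $\bigl(\sum_{k}\Exp^{\pik}[\sum_h \Var^{\pik}_{h,x_h,a_h}]\bigr)^{1/2}$ times $\bigl(|\calZ|\int f\bigr)^{1/2}$, after which the first factor is bounded by $\sqrt{K\Gterm^2} = \sqrt{T\Gterm^2/H}$ via the law of total variance (Equation~\eqref{eq:G_bound}). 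You need this aggregation step wherever $\HeffT$ appears; with it inserted, the rest of your plan goes through.
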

	We observe that Theorem~\ref{thm:main_regret_bound}, and Corollary~\ref{cor:log_regret} are direct consequences of the above theorem. 
	\begin{rem}[Bounds on $|\Zopt|$]\label{rem:Z_opt} Note that $|\Zopt|\le \sum_{x,h}|\pisth(x)|$; in particular if for each $(x,h)$ there is exactly one optimal action, then $|\Zopt|\le H|\states|$. If in addition the same action is optimal at $x$ for each $h \in [H]$, then $|\Zopt| = |\states|$. For many environments $|\Zopt|\lesssim |\states|$; for instance, a race car doing many laps around a track may have $h$-dependent optimal actions in the first and last laps, but for the steady-state laps the optimal action will depend just on the current state. 
	\end{rem}
	\begin{rem}[Coupling Variances and Gaps] For state action pairs $\xa \in \Zsub$, Corollary~\ref{cor:precise_log_bound} Theorem~\ref{thm:main_regret_bound_ht} suffer for the term $\frac{(1\vee \bsfa H)\Varxa}{\gap\xa}$, where $\Varxa := \max_{h}\Varxah$ is the maximal variance over stages, and $\gap\xa = \min_{h}\gap_h\xa$ is the minimal gap. This quantity can be refined to defend on roughly $\max_h \frac{\Varxah}{\gaph\xa}$, coupling the variance and gap terms. To do so, one needs to bin the gaps into intervals of $[2^{j-1}H,2^j]$ or integers $j \in \N$, and apply numerous careful manipulations. In the interest of brevity, we defer the details to a later work. 
	\end{rem}

\subsection{Rigorous proof of upper bounds: Preliminaries \label{sec:proof_prelim}}

We now turn to a rigorous proof of the regret bounds for  $\strongeuler$: Corollary~\ref{cor:precise_log_bound} and Theorem~\ref{thm:main_regret_bound_ht} (and consequently Theorem~\ref{thm:main_regret_bound} and Corollary~\ref{cor:log_regret}).  

We first state our generalized surplus clipping bound in terms of the transition-suboptimality condition, which generalizes Proposition~\ref{prop:surplus_clipping_simple}: 
\begin{thm}\label{thm:surplus_clipping} Suppose that each tuple $(x,a,h)$ is $\bsfa_{x,a,h}$ transition-suboptimal, and set $\gapcliph(x,a) :=  \frac{\gapmin}{2H} \vee \frac{\gaph\xa}{4(H\bsfa_{x,a,h} \vee 1)}$. Then, if $\pik$ is induced by a strongly optimistic algorithm with surpluses $\Ekh\xa$,
\begin{align*}
\vst_0 - \vpik_0 &\le 2e \sum_{h=1}^H\sum_{x,a} \wkh(x,a)\threshop{\gapcliph\xa}{\Ekh\xa}.
\end{align*}
If the algorithm is optimistic but not strongly optimistic, then the above holds by replacing $\bsfa_{x,a,h}$ with $1$ in the definition of $\gapcliph\xa$.
\end{thm}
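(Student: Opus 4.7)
The plan is to combine a ``half-clipping'' reduction with a recursive ``push-to-the-future'' argument that absorbs small surpluses at $(x,a,h)$ into clipped surpluses at stage $h+1$. First I would invoke Lemma~\ref{lem:half_clip} to reduce to the half-clipped surpluses $\Ehclip\kh\xa := \clip{\gapmin/2H}{\Ekh\xa}$ at the cost of a multiplicative factor $2$; that is, $\vst_0-\vpik_0 \le 2(\vhclippik_0-\vpik_0)$. The justification is a stopping-time argument on the first stage along a rollout at which $\pik$ plays a sub-optimal action: whenever this event occurs, the resulting per-rollout shortfall is at least $\gapmin$, so subtracting $\gapmin/(2H)$ from each of the $H$ surpluses loses at most half of the regret. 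Combined with the standard Bellman-style per-episode decomposition (see e.g.\ Lemma E.15 of \cite{dann2017unifying}), this step produces $\vst_0-\vpik_0 \le 2\sum_{h,x,a}\wkh\xa\,\Ehclip\kh\xa$.

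Second, I would upgrade the clip threshold from $\gapmin/(2H)$ to the larger state-action-stage dependent threshold $\gapcliph\xa$, paying an additional multiplicative factor at most $e$. For $a\in\pisth(x)$ the two clip thresholds already agree, so the work is for sub-optimal $a$ at $(x,h)$. Here the optimism identity $\qupkh\xa - \qsth\xa = \Ekh\xa + p\xa^\top(\vupkhpl-\vsthpl)$ combined with $\qupkh(x,\pikh(x))\ge\qsth(x,\astar)$ forces $\Ekh\xa + p\xa^\top(\vupkhpl-\vsthpl)\ge\gaph\xa$. Consequently, if $\Ekh\xa < \gaph\xa/(4H\bsfa_{x,a,h})$ then $p\xa^\top(\vupkhpl-\vsthpl)$ must carry nearly all of $\gaph\xa$, and by $\bsfa_{x,a,h}$-transition suboptimality one may compare $p\xa^\top(\vupkhpl-\vsthpl)$ pointwise to $p(x,\astar)^\top(\vupkhpl-\vsthpl)$ at the cost of a multiplicative $1/(1-\bsfa_{x,a,h})$ slack; strong optimism enters here because it yields the pointwise bound $\vupkhpl\ge\vsthpl$ needed to make this inequality valid coordinate-by-coordinate. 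This lets me re-route the excess surplus at $(x,a,h)$ into the surpluses along $p(x,\astar)$ at stage $h+1$.

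Third, I would set up a recursion over $h$ on the ``excess over threshold'' quantity $\Ehclip\kh\xa-\clip{\gapcliph\xa}{\Ekh\xa}$, showing that at each stage this excess passes to stage $h+1$ with a multiplicative slack $1+O(1/H)$; telescoping across the $H$ stages gives amplification bounded by $(1+1/H)^H\le e$, which together with the factor $2$ from the half-clipping yields the claimed $2e$ constant. The main obstacle will be making this recursion precise when the roll-outs $\wkh$ are driven by $\pik$ rather than by $\pist$: one must track when $\pik$ visits a sub-optimal versus an optimal action at each stage, and show that the rerouting described above, although anchored at $(x,\astar)$, still controls the $\wkh$-weighted expectations (an inductively defined auxiliary upper-bound value function seems the cleanest bookkeeping device). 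A secondary technicality is the merely-optimistic variant: without strong optimism the pointwise $\vupkhpl\ge\vsthpl$ fails, so the transition-suboptimality sharpening collapses and $\bsfa_{x,a,h}$ must be replaced by $1$ throughout, recovering the bound as stated.
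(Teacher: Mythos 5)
Your overall architecture matches the paper's: half-clip first via Lemma~\ref{lem:half_clip} (factor $2$), then a stagewise recursion whose per-stage slack $(1+1/H)$ telescopes to $e$ (this is Lemma~\ref{lem:gap_clipping} in the paper). The gap is in the central step --- how the transition-suboptimality parameter $\bsfa_{x,a,h}$ enters. You propose to start from $\Ekh\xa + p\xa^\top(\vupkhpl-\vsthpl)\ge\gaph\xa$ and then compare $p\xa^\top(\vupkhpl-\vsthpl)$ to $p(x,\astar)^\top(\vupkhpl-\vsthpl)$, rerouting the excess into surpluses along $p(x,\astar)$. This fails on two counts. First, it does not produce the $\bsfa$-sharpening: in the contextual-bandit case $p(x,a)=p(x,\astar)$, so your comparison is vacuous, yet the theorem claims the clip threshold improves from $\gaph\xa/(4H)$ to $\gaph\xa/4$ exactly there. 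Second, the future term you reroute into lives along $p(x,\astar)$, which is not the transition the roll-out of $\pik$ actually follows; the recursion for the (half-clipped) suboptimality unfolds along $p(x,\pikh(x))$, so a term anchored at $p(x,\astar)$ cannot be absorbed into the $\wkh$-weighted sum. You flag this as ``the main obstacle'' but leave it unresolved --- and it is precisely the crux of the proof.

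The paper's Lemma~\ref{lem:fundamental_gap_bound} fixes both problems at once: one subtracts the surplus identity at $\astar$, giving $\gaph\xa \le \Ekh\xa - \Ekh(x,\astar) + (p(x,a)-p(x,\astar))^\top(\vupkhpl - \vsthpl)$; strong optimism is used precisely to drop $-\Ekh(x,\astar)\le 0$ (not, as you write, to obtain $\vupkhpl\ge\vsthpl$, which already follows from ordinary optimism); and transition-suboptimality in the form $p(x'|x,a)-p(x'|x,\astar)\le \bsfa\, p(x'|x,a)$, together with $\vupkhpl-\vsthpl\ge 0$, yields $\gaph\xa \le \Ekh\xa + \bsfa\, p(x,a)^\top(\vupkhpl-\vsthpl)$. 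The future term is now both scaled by $\bsfa$ (so the case split ``either $\Ehclip\kh\xa\ge \tfrac{c}{2}\gaph\xa$, or push to the future with multiplicative slack $1+\tfrac{c\bsfa}{1-c}$,'' with $c=\tfrac{1}{2}\min\{1,(\bsfa H)^{-1}\}$, delivers the claimed threshold) and anchored at $p(x,a)$, the transition actually taken, so the recursion closes along the roll-out of $\pik$. Without this lemma your argument only recovers the unsharpened threshold $\gaph\xa/(4H)$ and leaves the bookkeeping along $\pist$ unjustified.
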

The proof of the above theorem is given in Section~\ref{sec:clipping_proof}. We remark that the above theorem specializes to Proposition~\ref{prop:surplus_clipping_simple} by noting that each tuple $(x,a,h)$ is $0$-transition suboptimal for contextual bandits. For simplicitiy, we shall assume in the proof of Theorem~\ref{thm:main_regret_bound_ht} that each state is $\bsfa$-suboptimal for a common $\bsfa$; the bound can be straightforwardly refined to allow $\bsfa$ to vary across $\xah$.

Next, in order to ensure optimal $H$-dependence when interpolating with the $\BigOh{\sqrt{T}}$ regret bounds, we introduce policy-dependent variance quantities:
\begin{defn}  Define the variances 
\begin{align*}
\Varpixah :=  \Var[R\xa] + \Var_{x'\sim p\xa}[\vpihpl(x')].
\end{align*}
where we recall that $\Varxah := \Var^{\pi_*}_{h,x,a}$. Further, define  $\Varkxah = \min\left\{\Varxah, \Varpikxah\right\}$. 
\end{defn}

We are now ready to state the formal version of Proposition~\ref{prop:surplus_informal}, which upper bounds the surpluses of $\strongeuler$, and verifies that the algorithm satisfies strong optimism:
\begin{restatable}[Surplus Bound for $\strongeuler$]{prop}{surplusbound}\label{prop:surplus} There exists a universal constant $\errconst \ge 1$ and event $\goodconcentration$, with $\Pr[\goodconcentration] \ge 1 - \delta/2$, such that on $\goodconcentration$, for all $x \in \states$, $a \in \actions$, $h \in [H]$ and $k \ge 1$, 
\begin{align*}
	0 \le \frac{1}{\errconst}\Ekh(x,a) &\le \boundlead\kh\xa +\Exppik\left[\sum_{t=h}^H \boundfut_k(x_t,a_t) \mid (x_h,a_h) = \xa\right].
\end{align*}
where have defined the terms
\begin{align*}
&\boundlead\kh\xa := H \wedge \sqrt{\frac{\Varkxah \log\left(\frac{M\nk\xa}{\delta}\right)}{\nk\xa}}, \quad \text{ and }\\
&\qquad\boundfut_k\xa = H^3 \wedge H^3\left(\sqrt{\frac{S\log\left(\frac{M\nk\xa}{\delta}\right)}{\nk\xa}} + \frac{S\log\left(\frac{M\nk\xa}{\delta}\right)}{\nk\xa} \right)^2.
\end{align*}
\end{restatable}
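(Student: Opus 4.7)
The plan is to expand the surplus as
\[
\Ekh\xa \;=\; (\rhatk\xa - r\xa) \;+\; (\phatk\xa - p\xa)^\top \vupkhpl \;+\; \bonus\kh\xa,
\]
using the model-based form of $\qupkh$ from Section~\ref{sec:optimistic_algs}, and then to prove both nonnegativity (strong optimism) and the upper bound on a common high-probability event $\goodconcentration$. First I would define $\goodconcentration$ as the intersection, over all $\xah$ and $k$, of the following concentration events: (i) a Bernstein bound for $|\rhatk\xa - r\xa|$ in terms of $\Var[R\xa]$; (ii) a Bernstein bound on $(\phatk - p)\xa^\top \vsthpl$ in terms of $\Var_{x'\sim p\xa}[\vsthpl(x')]$; (iii) the analogous Bernstein bound for $(\phatk - p)\xa^\top \vpikhpl$; and (iv) an $\ell_1$ concentration $\|\phatk\xa - p\xa\|_1 \lesssim \sqrt{S\log(M\nk\xa/\delta)/\nk\xa}$. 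A standard peeling argument over $\nk\xa$ together with a union bound over $\xah$ absorbs the factor $M = SAH$ into the $\log(M\nk\xa/\delta)$ term and gives $\Pr[\goodconcentration] \ge 1 - \delta/2$.

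Next, strong optimism, i.e.\ $\Ekh\xa \ge 0$ at every stage rather than only in a telescoped sense, follows from the bonus design of $\strongeuler$ in Algorithm~\ref{alg:strong_euler}: the bonus is chosen so that on $\goodconcentration$ it pointwise dominates the sum of reward and transition deviations at every $(x,a,h,k)$. This is precisely why the bonus is built from an empirical Bernstein term involving $\vupkhpl$ at the \emph{current} stage rather than an accumulation of future-stage bonuses; ordinary optimism would only require the inequality to hold after summing across the horizon, whereas strong optimism needs it uniformly in $h$.

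For the upper bound, on $\goodconcentration$ one has $\Ekh\xa \lesssim \bonus\kh\xa$, and the bonus decomposes into an empirical-Bernstein leading term of the form $\sqrt{\Varphatkxa[\vupkhpl]\,\log(M\nk\xa/\delta)/\nk\xa}$ plus $\mathrm{poly}(H,S)/\nk\xa$ corrections. The crux is to convert $\Varphatkxa[\vupkhpl]$ into $\Varkxah = \min\{\Varxah, \Varpikxah\}$. The ``empirical-to-true'' variance swap uses the concentration events (i)--(iv) together with a Bernstein inequality for empirical variances, while the ``$\vupkhpl$-to-$\vsthpl$ or $\vpikhpl$'' swap uses the elementary bound $\Var[f] - \Var[g] \le 2\Var[f-g] + 2\sqrt{\Var[g]\Var[f-g]}$ combined with the pointwise sandwich $\vsthpl,\vpikhpl \le \vupkhpl$ coming from optimism; since either swap may be applied, the minimum is permissible and yields $\Varkxah$.

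The main obstacle is cleanly propagating the residual error $\vupkhpl - \min\{\vsthpl,\vpikhpl\}$ into a form compatible with $\boundfut_k$. By the optimistic-regret decomposition recalled in Section~\ref{sec:optimism}, this residual admits the representation $\vupkh(x) - \vpikh(x) = \Exppik[\sum_{t \ge h}\Ekt(x_t,a_t)\mid x_h = x]$, so the variance-swap error terms produce quantities of the form $H^2 \|\phatk\xa - p\xa\|_1^2$ along with matching cross terms. Substituting the $\ell_1$ bound from (iv) yields precisely the square $(\sqrt{S\log(M\nk\xa/\delta)/\nk\xa} + S\log(M\nk\xa/\delta)/\nk\xa)^2$ sitting inside $\boundfut_k\xa$; the additional factor of $H$ that promotes $H^2$ to $H^3$ is absorbed by pushing the pointwise residual into the $\Exppik[\sum_{t \ge h}\cdots]$ over the remaining trajectory. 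The outer $H$ and $H^3$ clips are immediate from the trivial bounds $\bonus\kh\xa \lesssim H$ and $\bonus\kh\xa^2 \lesssim H^2$. The one fiddly point is that $\Varphatkxa[\vupkhpl]$ itself sits inside the bonus used to define $\vupkhpl$, so the variance-swap argument implicitly involves a mild self-referential step, but as in the EULER analysis this contributes only lower-order terms already dominated by $\boundfut_k$.
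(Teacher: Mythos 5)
Your overall architecture matches the paper's: a union of Bernstein-type concentration events, strong optimism by showing the bonuses pointwise dominate the deviations, a variance swap to reach $\Varkxah$, and propagation of the residual into a trajectory expectation of $\boundfut_k$. But there is a genuine gap at the step you yourself flag as ``mildly self-referential,'' and it is not mild. The $\strongeuler$ bonuses $\bonusprob$ and $\bonusstrong$ each explicitly contain $\|\vupkhpl-\vlowkhpl\|_{2,\phatk\xa}$, so any upper bound on $\Ekh\xa$ must dispose of terms like $\|\vupkhpl-\vlowkhpl\|_{2,\phat}\sqrt{S\Lfactor(\nk\xa)/\nk\xa}$. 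Your proposal never mentions the lower confidence values $\vlowkh$ at all. Bounding the width crudely by $H$ leaves an $H\sqrt{S\Lfactor/\nk\xa}$ term that decays only as $\nk\xa^{-1/2}$ with the wrong (worst-case) variance, so it fits neither $\boundlead\kh$ nor $\boundfut_k$. The paper's resolution is the AM--GM split $a/b \le a^2 + b^{-2}$, which converts this term into $S\Lfactor/\nk\xa + \|\vupkhpl-\vlowkhpl\|_{2,p}^2$; the \emph{squared} width then satisfies its own self-bounding recursion (Lemma~\ref{lem:future_bound}), $\vupkh(x)-\vlowkh(x) \lesssim \Exppik[\sum_{t\ge h}\sqrt{\matZk(x_t,a_t)}]$, which is closed non-circularly because the crude $H$-bound is invoked only on the innermost width, where everything is already at the $\nk^{-1}$ rate; Cauchy--Schwarz then supplies the extra $H$ promoting $H^2$ to $H^3$ in $\boundfut_k$. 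Your alternative of expanding $\vupkh - \vpikh = \Exppik[\sum_{t\ge h}\Ekt(x_t,a_t)]$ could be made to work in the same way, but only after the same AM--GM decoupling and the same crude closure, neither of which appears in your plan; as written, the circularity is asserted away rather than resolved.

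A second, smaller issue: both strong optimism and your variance swap rely on the backward-induction sandwich $\vlowkhpl \le \vpikhpl \le \vsthpl \le \vupkhpl$. In particular, $\bonusstrong$ exists precisely to dominate $(\phatk\xa - p\xa)^\top(\vupkhpl - \vsthpl)$, and verifying this at stage $h$ requires the sandwich to already hold at stage $h+1$; likewise replacing $\|\vsthpl - \vpikhpl\|_{2,p}$ by $\|\vupkhpl - \vlowkhpl\|_{2,p}$ in the variance comparison needs the same monotone ordering. Your statement that the bonus ``pointwise dominates the deviations'' is the conclusion of that induction, not a substitute for it.
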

The above proposition is proven in Appendix~\ref{sec:prop_surplus_proof}. 
Here $\boundlead$ denotes a ``lead term'' in the analysis, which contributes to the dominate factors in our regret bounds. $\boundfut$ notates ``future'' bound terms under a rollout of $\pik$ starting at a given triple $(x,a,h)$; these terms are responsible for the lower $\BigOhTil{SAH^4(S \vee H)}$-term in the regret. 
\begin{rem}[Remarks on Proposition~\ref{prop:surplus}] First, the dominant term in the upper bound on $\Ekh$ is $\boundlead\kh\xa$, which decays as $\BigOhTil{\nk\xa^{-1/2}}$. The terms $\boundfut_k\xa$ decay more rapidly $\BigOhTil{\nk\xa^{-1}}$, and will thus be responsible for the (nearly gap-free) portion of the regret. Second, in order to analyze similar optimistic algorithms in the same vein (e.g.~\citep{azar2017minimax,dann2015sample,dann2017unifying}), one would instead prove the appropriate analogue to Proposition~\ref{prop:surplus} and follow the remaining steps of the present proof. Little would change, except one would be forced to replace $\Varxah$ with a more pessimistic, less problem-dependent quantity. Lastly, note that the lead term $\boundlead\kh\xa$ depends on the \emph{minimum} of the variance of the optimal value function, $\Varxah$ and of the variance of the value function for $\pik$, $\Varpikxah$. As in the aforementioned works, this dependence on $\Varpikxah$ is crucial for obtaining the correct minimax $\widetilde{O}(\sqrt{HSAT})$ regret.
\end{rem}

Next, let us combine Proposition~\ref{prop:surplus} with our main clipping theorem, Theorem~\ref{thm:surplus_clipping}. Since $\Ekh\xa \lesssim \boundlead\kh\xa + \Exppik\left[\sum (\dots) \mid (\dots)\right]$, combining the two results into a convenient form requires that we reason about how to distribute clipping operations across sums of terms. To this end, we invoke the following technical lemma:
\begin{lem}[Distributing the clipping operator]\label{lem:clipping_lemma} Let $m \ge 2$,  $a_1,\dots,a_m \ge 0$, and $\epsilon \ge 0$. $\threshop{\epsilon}{\sum_{i=1}^m a_i} \le 2\sum_{i=1}^m \threshop{\frac{\epsilon}{2m}}{a_i} $. 
\end{lem}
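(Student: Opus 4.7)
The plan is to do a straightforward case analysis on whether the full sum $S := \sum_{i=1}^m a_i$ exceeds the threshold $\epsilon$. If $S < \epsilon$, the left-hand side is identically zero and the inequality is trivial since the right-hand side is a sum of nonnegative terms, so I would dispense with that case in one line.

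For the substantive case $S \ge \epsilon$, the left-hand side equals $S$ itself, and the strategy is to split the index set $[m]$ into ``large'' terms $L := \{i : a_i \ge \epsilon/(2m)\}$ and ``small'' terms $L^c$. The small terms collectively contribute at most $m \cdot \epsilon/(2m) = \epsilon/2 \le S/2$, while the large terms contribute exactly $\sum_{i \in L} a_i = \sum_{i=1}^m \threshop{\epsilon/(2m)}{a_i}$ by definition of the clipping operator. Combining these two observations gives
\begin{equation*}
S \;=\; \sum_{i \in L} a_i + \sum_{i \in L^c} a_i \;\le\; \sum_{i=1}^m \threshop{\epsilon/(2m)}{a_i} + \tfrac{1}{2} S,
\end{equation*}
which rearranges to $S \le 2\sum_{i=1}^m \threshop{\epsilon/(2m)}{a_i}$, exactly the claim.

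I do not expect any real obstacle here: the only step requiring care is tracking that the ``small'' contribution $\epsilon/2$ can be absorbed into $S/2$ precisely because we are in the regime $S \ge \epsilon$, and it is this absorption that forces the factor of $2$ on both the threshold ($\epsilon \to \epsilon/(2m)$) and the outer constant on the right-hand side. The hypothesis $m \ge 2$ is not actually needed for the argument to go through (the same proof works for $m = 1$), so I would simply state the bound for all $m \ge 1$ in the actual write-up if convenient, or note that the displayed constants are not tight.
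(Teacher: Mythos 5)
Your proof is correct and is essentially the paper's argument: both split the indices into those with $a_i \ge \epsilon/(2m)$ and the rest, bound the small part by $\epsilon/2$, and absorb it using $\sum_i a_i \ge \epsilon$. The only cosmetic difference is that the paper first sorts the $a_i$ and works with a cutoff index $i^*$ rather than the set $L$ directly, which your version shows is unnecessary; your observation that $m\ge 2$ is not needed is also accurate.
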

\begin{proof}
Let us assume  without loss of generality $0 \le a_1 \le \dots \le a_m$, and that $\sum_{i=1}^m a_i \ge \epsilon$. Defining the index $i^* := \min\{i: a_i \ge \frac{\epsilon}{2m}\}$, we observe that $a_{i^*} \ge \frac{\epsilon}{2m}$, and since $(a_i)$ are non-decreasing by assumption, $\sum_{i=i^*}^m a_i = \sum_{i=i^*}^m \threshop{\frac{\epsilon}{2m}}{a_i} \le \sum_{i=1}^m \threshop{\frac{\epsilon}{2m}}{a_i} $. It therefore suffices to show that $\sum_{i=1}^{m}a_i \le 2\sum_{i=i^*}^m a_i$. To this end, we see that, since $a_i \le \frac{\epsilon}{2m} $ for $i < i^*$,  $\sum_{i=1}^{i^*-1}a_i \le \sum_{i=1}^{i^*-1}\frac{\epsilon}{2m} \le \frac{(i^*-1)\epsilon}{2m} \le \epsilon/2$. On the other hand, since $\sum_{i=1}^m a_i \ge \epsilon$, we must have that $\sum_{i=i^*}^m a_i \ge \frac{\epsilon}{2}$, and thus $\sum_{i=1}^{m}a_i \le 2\sum_{i=i^*}^m a_i$, as needed.
\end{proof}
Applying the above lemma careful, we arrive at the following useful regret decomposition:
\begin{lem}[Clipped Regret Decomposition Lead and Future Bounds]\label{lem:clipped_regret_with_future} Let $\gapclipmin = \min_{x,a,h} \gapcliph\xa$. Then on the event $\goodconcentration$ the regret of $\strongeuler$ is bounded by
 \begin{align*}
	\regret_K &\le 4e\ \sum_{k=1}^K\sum_{h=1}^H\sum_{x,a} \wkh(x,a)\threshop{\frac{\gapcliph\xa}{4}}{\errconst\boundlead\kh\xa}.\\
	 &\qquad +8 He \sum_{k=1}^K\sum_{h=1}^H\sum_{x,a}\wkh(x,a)\clip{\frac{\gapclipmin}{8SAH}}{\errconst\boundfut_k(x',a')},
	\end{align*} 
where $\errconst$ is a universal constant.
\end{lem}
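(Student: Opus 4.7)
The plan is to chain together the three ingredients already assembled: the regret decomposition from Theorem~\ref{thm:surplus_clipping}, the surplus bound from Proposition~\ref{prop:surplus}, and the clipping distributivity Lemma~\ref{lem:clipping_lemma}. Applying Theorem~\ref{thm:surplus_clipping} gives, on the good event $\goodconcentration$,
\begin{align*}
\regret_K \le 2e\sum_{k=1}^K\sum_{h,x,a}\wkh\xa\,\clip{\gapcliph\xa}{\Ekh\xa},
\end{align*}
and Proposition~\ref{prop:surplus} lets us substitute $\Ekh\xa \le \errconst\boundleadk\kh\xa + \errconst\Exppik[\sum_{t=h}^H \boundfut_k(x_t,a_t)\mid (x_h,a_h)=\xa]$ inside the clipping. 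First I would apply Lemma~\ref{lem:clipping_lemma} with $m=2$ to split the clip into a ``lead'' piece clipped at $\gapcliph\xa/4$ and a ``future'' piece clipped at the same threshold, paying a factor of $2$. The lead piece already matches the first line of the target bound (with constant $4e$).

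The work is in processing the future piece $\clip{\gapcliph\xa/4}{\errconst\Exppik[\sum_{t=h}^H\boundfut_k(x_t,a_t)\mid(x_h,a_h)=\xa]}$. I would first use the fact that $\clip{\epsilon}{z}$ is monotonically \emph{decreasing} in $\epsilon$, together with $\gapcliph\xa\ge\gapclipmin$, to replace the clipping level by the smaller $\gapclipmin/4$. Next, I expand the conditional expectation as $\sum_{t=h}^H\sum_{x',a'} q_{k,h,t}(x',a'\mid x,a)\,\boundfut_k(x',a')$ where $q_{k,h,t}(x',a'\mid x,a) := \Pr^{\pik}[(x_t,a_t)=(x',a')\mid(x_h,a_h)=\xa]$. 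This is a sum of at most $HSA$ non-negative terms, so a second application of Lemma~\ref{lem:clipping_lemma} (with $m=HSA$, paying another factor of $2$) pushes the clip inside and yields threshold $\gapclipmin/(8HSA)$. Then, using the easy inequality $\clip{\epsilon}{cz}\le c\,\clip{\epsilon}{z}$ valid for $c\in[0,1]$ (since $cz\ge\epsilon$ forces $z\ge\epsilon/c\ge\epsilon$), I can pull the transition probability $q_{k,h,t}$ out of the clipping, leaving $\clip{\gapclipmin/(8HSA)}{\errconst\boundfut_k(x',a')}$.

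At this stage the future contribution is bounded by
\begin{align*}
4\sum_{h,x,a}\wkh\xa\sum_{t=h}^H\sum_{x',a'} q_{k,h,t}(x',a'\mid x,a)\,\clip{\tfrac{\gapclipmin}{8HSA}}{\errconst\boundfut_k(x',a')}.
\end{align*}
The crucial collapsing step is the marginalization identity $\sum_{x,a}\wkh\xa\,q_{k,h,t}(x',a'\mid x,a) = \wk_t(x',a')$ valid for $t\ge h$, which converts the inner double sum into $\wk_t(x',a')$. Summing over $h\in\{1,\dots,t\}$ contributes a factor of $t\le H$, and relabeling $t\to h$, $(x',a')\to(x,a)$ produces exactly the second term $8eH\sum_{h,x,a}\wkh\xa\clip{\gapclipmin/(8HSA)}{\errconst\boundfut_k(x,a)}$ claimed in the lemma.

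The main obstacle, and the only step that demands care, is the sequence of clipping manipulations: one must decide \emph{where} to pay the factor-of-two overhead from Lemma~\ref{lem:clipping_lemma}, keep the threshold $\gapcliph\xa/4$ on the lead term (which is needed to retain the refined, pair-dependent gap scaling for the dominant regret term), and relax it to the coarser $\gapclipmin$ only on the future term (so that swapping $\clip{\cdot}{\errconst q\cdot\boundfut_k}\!\to\! q\cdot\clip{\cdot}{\errconst\boundfut_k}$ is legal via $c\le 1$). Beyond this bookkeeping the argument is a straightforward application of the identities and inequalities already proved.
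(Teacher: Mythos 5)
Your proposal is correct and follows essentially the same route as the paper's proof: Theorem~\ref{thm:surplus_clipping} plus Proposition~\ref{prop:surplus}, a first application of Lemma~\ref{lem:clipping_lemma} with $m=2$, a second with $m\le SAH$ after expanding the conditional expectation, the inequality $\clip{\epsilon}{cz}\le c\,\clip{\epsilon}{z}$ for $c\in[0,1]$ to extract the transition probabilities, and the marginalization $\sum_{x,a}\wkh\xa\Pr^{\pik}[(x_t,a_t)=(x',a')\mid(x_h,a_h)=\xa]=\weight_{k,t}(x',a')$ followed by the factor-$H$ bound on the double sum over $h\le t$. The only cosmetic difference is that you relax the threshold from $\gapcliph\xa$ to $\gapclipmin$ before distributing the clip rather than after, which is equally valid.
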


\subsection{Proof of Corollary~\ref{cor:precise_log_bound}: A proof via integration\label{sec:cor_proof}}
Note that Lemma~\ref{lem:clipped_regret_with_future} bounds $\regret_K$ by a sum of \emph{local} bounnds terms $\boundlead\kh\xa$ and $\boundfut_k$, which depend only on the number of samples $\nk\xa$ obtained from state action pair $\xa$. More precisesly, we can represent the bound terms by defining the functions
\newcommand{\glead}{g^{\mathrm{lead}}}
\newcommand{\gfut}{g^{\mathrm{fut}}}
\newcommand{\flead}{f^{\mathrm{lead}}}
\newcommand{\ffut}{f^{\mathrm{fut}}}
\newcommand{\epslead}{\epsilon^{\mathrm{lead}}}
\newcommand{\epsfut}{\epsilon^{\mathrm{fut}}}

\begin{align*}
\glead_{x,a}(u) :=\sqrt{\tfrac{\Varxa \log\left(\frac{Mu}{\delta}\right)}{u}}, \quad \text{ and } \quad\gfut(u) =  H^3\left(\sqrt{\tfrac{S\log\left(\tfrac{Mu}{\delta}\right)}{u}} + \tfrac{S\log\left(\tfrac{Mu}{\delta}\right)}{u} \right)^2.
\end{align*}
Further, define $\epslead_{x,a} := \min_h\frac{\gapcliph\xa}{4}$, $\epsfut := \frac{\gapclipmin}{8SAH}$, and lastly set
\begin{align*}
\flead_{x,a}(u) := H \wedge \clip{\epslead_{x,a}}{\errconst\glead_{x,a}(u)}, \quad \ffut(u) := H^3 \wedge \clip{\epsfut}{\errconst\gfut(u)}.
\end{align*}
Then, recalling the definitions of $\boundlead,\boundfut$, and the fact that $\boundlead\kh\xa \le H$ and $\boundfutk\xa \le H^3$, we can write
\begin{align}
\regret_K &\lesssim \sum_{k=1}^K\sum_{h=1}^H\sum_{x,a} \wkh(x,a)\flead_{x,a}(\nk\xa) + H\sum_{k=1}^K\sum_{h=1}^H\sum_{x,a}\wkh(x,a)\ffut(\nk\xa). \label{eq:function_reg_bound}
\end{align}
As described in Section~\ref{sec:optimism}, the crucial step now is to relate the empirical conunts $\nk\xa$ to the visitation probabililties. Precisely, let us aggregate
\begin{align*}
\wk\xa := \sum_{h=1}^H\wkh\xa, \quad \nbark\xa := \sum_{j=1}^k\weight_j\xa.
\end{align*}
Note that if $\{\calF_k\}$ denotes the filtration corresponding to the episodes $k$, then, $\Exp[\nk\xa \mid \calF_{k-1}] = n_{k-1}\xa + \weight_{k-1}\xa$. In other words, $\nbar_{k-1}\xa$ is precise the sum of the increments $\Exp[n_j\xa - n_{j-1}\xa\mid \calF_{j-1}]$ for $j = 1,\dots,k-1$.\footnote{Note that we induce $\nbark\xa$ to include a sum up to index $k$; this makes the following arguments more convenient, and will only accrue constant factors in the analysis}. Hence, by a now-standard martingale concentration argument, we find that  $\nk\xa$ will be be lower bounded by $\nbark\xa$, provided that the latter quantity is sufficiently large. More precisely:
\begin{lem}[Sampling Event]\label{lem:sample} Define the event
\begin{align*}
\eventsample(\Hsample)&:= \left\{\forall \xa, \,\forall k \ge \sampletime_{\Hsample}(x,a), \quad \nk\xa \ge \frac{1}{4}\nbark\xa \right\},\\
&\quad \text{where}\quad\sampletime_{n}\xa := \inf\left\{k: \nbark\xa  \ge n\right\}
\end{align*}
Then, for some $\Hsample \lesssim H \log \frac{M}{\delta}$, $\eventsample(\Hsample)$ holds with probability at least $1 - \delta/2$. 
\end{lem}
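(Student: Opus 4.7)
The plan is to apply a time-uniform martingale concentration bound to the cumulative visit counts. Fix a pair $\xa \in \states \times \actions$, and for each episode $j \ge 1$ let $X_j := \sum_{h=1}^H \I\{(x_h,a_h) = \xa\}$ be the number of times $\xa$ is visited in episode $j$. Since $\pi_j$ is chosen based on the history up to the end of episode $j-1$, the visitation weight $\weight_j\xa$ is $\calF_{j-1}$-measurable and equals $\Exp[X_j \mid \calF_{j-1}]$. Hence $D_j := X_j - \weight_j\xa$ is a martingale-difference sequence, with the almost-sure bound $|D_j| \le H$ and conditional second moment $\Exp[X_j^2 \mid \calF_{j-1}] \le H \cdot \Exp[X_j \mid \calF_{j-1}] = H\weight_j\xa$ (because $X_j \in [0,H]$). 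The summed conditional variance up to episode $k$ is therefore $\le H \nbark\xa$.

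Next I would apply a time-uniform Freedman/Bernstein inequality (e.g., the stitched tail bound of \citet[Lemma~6]{dann2018policy}, or equivalently a peeling argument over geometric scales of the predictable variance process $H\nbark\xa$) to conclude that, for any fixed $\xa$, with probability at least $1 - \delta/(2SA)$ the following holds simultaneously for \emph{all} $k \ge 1$:
\begin{align*}
\bigl|\nk\xa - \nbar_{k-1}\xa\bigr| \;\lesssim\; \sqrt{H\,\nbar_{k-1}\xa\,L_k} + H\,L_k,
\qquad L_k := \log\!\Bigl(\tfrac{SA\,\logplus(H\nbar_{k-1}\xa)}{\delta}\Bigr),
\end{align*}
where the $\logplus$ factor is the standard $\log\log$ penalty for uniform-in-$k$ control, and satisfies $L_k \lesssim \log(M/\delta)$ after absorbing iterated logarithms. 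Since $\nbark\xa - \nbar_{k-1}\xa = \weight_k\xa \le H$, this inequality implies that there is a universal constant $c$ for which $\nbark\xa \ge c\,H\log(M/\delta) =: \Hsample$ forces the additive deviation to be at most $\tfrac{3}{4}\nbark\xa$, yielding $\nk\xa \ge \tfrac{1}{4}\nbark\xa$.

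Finally, a union bound over the $SA$ pairs $\xa$ produces a total failure probability of at most $\delta/2$. By definition of $\sampletime_{\Hsample}\xa$, the threshold $\nbark\xa \ge \Hsample$ first occurs at $k = \sampletime_{\Hsample}\xa$ and persists thereafter (since $\nbark\xa$ is nondecreasing in $k$), so on this good event the desired inequality $\nk\xa \ge \tfrac{1}{4}\nbark\xa$ holds for every $k \ge \sampletime_{\Hsample}\xa$ and every $\xa$.

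The main obstacle is obtaining a concentration bound that is \emph{uniform in $k$}: a naive union bound over $k = 1, 2, \dots$ is not summable, and one needs either a stitched/peeling argument on geometrically-spaced scales of the predictable variance $H\nbark\xa$, or an application of Ville's inequality via an exponential supermartingale. Both approaches cost only a $\log\log$ factor that is absorbed into the $\log(M/\delta)$ defining $\Hsample$. The remaining work — bounding conditional variances, inverting the Bernstein-type inequality, and reconciling the shift between $\nbar_{k-1}\xa$ and $\nbark\xa$ — is routine algebraic manipulation.
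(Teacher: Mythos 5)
Your proposal is correct and follows essentially the same route as the paper: the paper simply invokes the time-uniform bound of Lemma~6 in \cite{dann2018policy} (in the additive form $\nk\xa \ge \tfrac{1}{2}\nbar_{k-1}\xa - H\log\tfrac{2HSA}{\delta}$, already union-bounded over $\xa$) and then performs the same threshold/inversion algebra you describe, absorbing the $\nbark\xa - \nbar_{k-1}\xa \le H$ shift and choosing $\Hsample \eqsim H\log\tfrac{M}{\delta}$ so that the deviation is at most $\tfrac{3}{4}\nbark\xa$. Your Freedman-type derivation with conditional variance $\le H\wk\xa$ is a valid from-first-principles substitute for that cited lemma, so there is no gap.
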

Lemma~\ref{lem:sample} is proved in Appendix~\ref{sec:lem_sample_proof} as a consequence of \iftoggle{nips}{Lemma 6 in~\cite{dann2018policy}}{\citet[Lemma 6]{dann2018policy}}. Together, the events $\goodconcentration$ and $\eventsample$ account for $1-\delta$ probability with which our regret bounds hold. For short, we will let $\eventsample$ denote $\eventsample(\Hsample)$ when clear from context, and $\sampletime := \sampletime_{\Hsample}$. 

After neglecting the first $\sampletime\xa$ samples in the sum~\eqref{eq:function_reg_bound}, we can approximately bound
\begin{align*}
\regret_K &\lessapprox \sum_{x,a}\sum_{k\ge \sampletime\xa}^K\sum_{h=1}^H \wkh(x,a)\flead_{x,a}(\nbark\xa/4) \\
&\quad+ \sum_{x,a} H\sum_{k=\sampletime\xa}^K\sum_{h=1}^H\wkh(x,a)\ffut(\nbark\xa/4),
\end{align*}
where $\lessapprox$ denotes an informal inequality. Now, $\wkh\xa$ and $\nbark\xa/4$ are directly related via $\nbark\xa/4 := \sum_{j=1}^k\sum_{h=1}^H\weight_{j,h}\xa$. Hence, we can view the above regret bounds as discrete integrals of the functions $\flead_{x,a}$ and $\ffut(\nbark\xa/4)$. This argument is made precise by the following lemma, which comprises the workhorse of out argument:
\begin{restatable}[Integral Conversion]{lem}{integralconversion}\label{lem:integral_conversion} Suppose that the event $\eventsample(\Hsample)$ holds. Then, for any collection of functions $f_{x,a}(\cdot)$ non-increasing functions from $\N \to \R$ bounded aboved by $f_{\max}$ and any $\epsilon_{x,a,h} \ge 0$ , we have that
\begin{align*}
\sum_{k=1}^K \sum_{h=1}^H\sum_{x,a} \wkh\xa f_{x,a}(\nk\xa) \le 2AS \Hsample f_{\max} + \sum_{x,a}\I(\nbarK \xa \ge H)\int_{H}^{\nbarK\xa}f(u/4)du
\end{align*}
\end{restatable}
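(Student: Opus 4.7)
The plan is to split the sum $\sum_k \sum_h \wkh\xa f_{x,a}(\nk\xa)$, for each fixed $(x,a)$, into a \emph{burn-in} piece over $k < \sampletime\xa$---where the sampling event $\eventsample$ does not yet guarantee $\nk\xa \ge \nbark\xa/4$---and a \emph{steady-state} piece over $k \ge \sampletime\xa$, on which we may replace the empirical count $\nk\xa$ by the idealized count $\nbark\xa/4$ using monotonicity of $f_{x,a}$. On the burn-in piece, the trivial bound $f_{x,a}(\nk\xa) \le \fmax$ combined with the defining inequality $\nbar_{\sampletime\xa - 1}\xa < \Hsample$ gives $\sum_{k < \sampletime\xa}\wk\xa f_{x,a}(\nk\xa) \le \Hsample \fmax$, which after summing over all $|\states||\actions|$ pairs contributes one copy of $AS\Hsample\fmax$ to the burn-in budget.

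For the steady-state piece I would invoke $\eventsample$ to write $f_{x,a}(\nk\xa) \le f_{x,a}(\nbark\xa/4)$, and then convert the resulting sum into a Riemann-type integral by exploiting $\nbark\xa - \nbar_{k-1}\xa = \wk\xa$ together with the fact that $u \mapsto f_{x,a}(u/4)$ is non-increasing. Concretely, since $f_{x,a}(\nbark\xa/4) \le f_{x,a}(u/4)$ for every $u \in [\nbar_{k-1}\xa,\nbark\xa]$, each summand is dominated by $\int_{\nbar_{k-1}\xa}^{\nbark\xa} f_{x,a}(u/4)\,\rmd u$, and telescoping across $k = \sampletime\xa,\ldots,K$ yields the bound $\int_{\nbar_{\sampletime\xa - 1}\xa}^{\nbarK\xa} f_{x,a}(u/4)\,\rmd u$ for the entire steady-state tail.

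To match the stated integral limit $H$, I would split this integral into $\int_{\nbar_{\sampletime\xa - 1}\xa}^{H}$ (when the upper endpoint exceeds $\nbar_{\sampletime\xa - 1}\xa$) plus $\int_H^{\nbarK\xa}$. Since $\nbar_{\sampletime\xa - 1}\xa < \Hsample$ and $\Hsample \gtrsim H$ by Lemma~\ref{lem:sample}, the first integral is bounded by $H\fmax \le \Hsample \fmax$ per $(x,a)$, which after summing contributes a \emph{second} copy of $AS\Hsample\fmax$---accounting for the factor of $2$ in the stated bound. The indicator $\I(\nbarK\xa \ge H)$ handles the edge case $\nbarK\xa < H$ cleanly: in that regime the entire steady-state sum is itself bounded by $H\fmax$ and absorbs trivially into the burn-in, while the integral term vanishes.

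The main obstacle I anticipate is purely bookkeeping rather than conceptual: tracking the off-by-one indexing around $\sampletime\xa$, and verifying that the sum-to-integral comparison remains valid at indices where $\wk\xa = 0$ (in which case the corresponding integral is vacuous and the inequality trivially holds). No additional probabilistic work is required---the concentration content is entirely delegated to the conditioning on $\eventsample$---so the proof reduces to a careful deterministic accounting built on monotonicity of $f_{x,a}$ and the identity $\nbark\xa - \nbar_{k-1}\xa = \wk\xa$.
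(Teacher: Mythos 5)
Your proposal is correct and follows essentially the same route as the paper: split each $(x,a)$-sum at $\sampletime\xa$, charge the burn-in with $\Hsample\fmax$ via $\sum_{k<\sampletime\xa}\wk\xa = \nbar_{\sampletime\xa-1}\xa \le \Hsample$, use $\eventsample$ plus monotonicity to pass to $f_{x,a}(\nbark\xa/4)$, and dominate the resulting sum by an integral through the increment identity $\nbark\xa-\nbar_{k-1}\xa=\wk\xa$, paying a second $H\fmax \le \Hsample\fmax$ to move the lower limit to $H$. The only difference is cosmetic: the paper isolates the sum-to-integral step as Lemma~\ref{lem:integration} and proves it via a step function and the fundamental theorem of calculus, whereas you perform the Riemann-sum domination directly.
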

Since $\Hsample \lesssim H\log (M/\delta)$, and the functions $\ffut_{x,a}$, $\flead_{x,a}$ are bounded by $H^3$, we see that, on $\eventsample \cap \goodconcentration$, it holds that
\begin{align}
\regret_K &\lesssim SAH^5\log (M/\delta) +  \sum_{x,a}\I(\nbarK \xa \ge H)\int_{H}^{\nbarK\xa}\flead_{x,a}(u/4)du \nonumber\\
&\qquad+ \sum_{x,a}\I(\nbarK \xa \ge H)\int_{H}^{T}\ffut(u/4)du, \label{eq:function_reg_bound}.
\end{align}
where for the term on the second line, we have bounded $\nbarK\xa \le T$ and used that $\ffut(\cdot) \ge 0$.


All that remains is to evaluate the above integrals. This is directly adressed by the following technical lemma, proved in Section~\ref{sec:integral_comp}:

\begin{restatable}[General Integration Computations]{lem}{integralcomputations}\label{lem:integral_computations}
Let $f(u) \le \min\left\{\fmax,\clip{\epsilon}{ g(u)}\right\}$ where $\epsilon \in [0,H]$ and $g(u)$ is a non-increasing function is specified in each of two cases that follow. Further, let $M \ge 1$, and $\delta \in (0,1/2)$ be problem dependent constants. Finally, let $\lesssim$ denote inequality up to a problem independent constant. Then, the following integral computations hold:
\begin{enumerate}
\item[(a)] Suppose that $C > 0$ is a problem depedendent constant satisfying $\log C \lesssim \log (2M)$, and that $g(u) \lesssim \sqrt{\frac{C \log(  M u/\delta )}{u}}$. Then,
\begin{align*}
\int_{H}^{N}f(u/4)du \lesssim \min\left\{\sqrt{CN \, \log\tfrac{M N}{\delta}},\, \frac{C}{\epsilon}\log\left( \frac{M}{\delta} \cdot \min\{T, \tfrac{H}{\epsilon}\}  \right)\right\} .
\end{align*}
\item[(b)]  Suppose that $C,C' > 0$ are a problem depedendent constant satisfying $\log (CC') \lesssim \log 2M$, and that $g(u) \lesssim C\left(\sqrt{\frac{C' \log(M u/\delta)}{u}} +\frac{C' \log(M u/\delta)}{u} \right)^2$. Then, 
\begin{align*}
\int_{H}^{N}f(u/4)du &\lesssim \left(1+C'\right)\fmax \log (\tfrac{M}{\delta}) \\
&\qquad+ CC'\log\left(\tfrac{MN}{\delta}\right)\min\left\{\log \tfrac{MN}{\delta},\log \left(\tfrac{MH}{\epsilon}\right) \right\}
\end{align*}
Note that the special case $g(u) \lesssim \frac{C}{\log (M u/\delta)}{u}$ can be obtained by setting $C' = 1$ in the above inequality. 
\end{enumerate}
Lastly, the above computations hold if $f(u/4)$ is replaced by $f(u/c)$ for any universal constant $c > 0$. Moreover, the above computations hold if $f(u) \lesssim \min\left\{\fmax, g(u)\right\}$ by taking $\epsilon = 0$ and setting $\frac{1}{\epsilon} = \infty$.
\end{restatable}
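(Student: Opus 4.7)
The plan is to exploit the two-sided nature of the clip operation $\clip{\epsilon}{g(u/4)}$: it equals $g(u/4)$ when $g(u/4) \ge \epsilon$ and is zero otherwise. For both parts, the two alternatives inside the $\min$ on the right-hand side will arise from two distinct ways of bounding the integral. The \emph{first} alternative (the $\sqrt{N}$-style bound in (a), the $\log^2(MN/\delta)$ bound in (b)) comes from dropping the clip entirely via $\clip{\epsilon}{g(u/4)} \le g(u/4)$ and integrating $g$ across all of $[H,N]$. The \emph{second} alternative comes from retaining the clip: since $g$ is non-increasing, define $u^*$ as the unique point satisfying $g(u^*/4) = \epsilon$, so that the clipped integrand vanishes for $u>u^*$ and the domain shrinks to $[H, \min\{N,u^*\}]$. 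The defining equation $g(u^*/4) = \epsilon$ is then used to eliminate $u^*$ in favor of $\epsilon$ at the end. The substitution $u \mapsto u/c$ only rescales constants, so handling the $f(u/4)$ versus $f(u/c)$ generalization is routine; the $\epsilon = 0$ special case amounts to sending $u^* \to \infty$ and keeping only the first alternative.

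\textbf{Part (a).} Using $g(u/4) \lesssim \sqrt{C \log(Mu/\delta)/(u/4)}$ and pulling the slowly-varying log outside the integral (replacing it by its value at the right endpoint), the core computation is $\int_H^{X} \sqrt{C \log(\cdot) / u}\, du \lesssim \sqrt{C X \log(\cdot)}$. Setting $X = N$ directly yields the first bound $\sqrt{CN \log(MN/\delta)}$. Setting $X = u^*$ and plugging in $\sqrt{C \log(Mu^*/\delta)/u^*} \eqsim \epsilon$ gives $\int \lesssim \sqrt{Cu^* \log(Mu^*/\delta)} \lesssim \frac{C}{\epsilon} \log(Mu^*/\delta)$. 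To convert $\log(Mu^*/\delta)$ into $\log((M/\delta) \min\{T, H/\epsilon\})$, I bound it two different ways: (i) trivially, $u^* \le N \le T$, giving $\log(MT/\delta)$; (ii) iterating the defining equation $u^* \eqsim (C/\epsilon^2) \log(Mu^*/\delta)$ once, and invoking the hypothesis $\log C \lesssim \log M$, yields $\log(Mu^*/\delta) \lesssim \log(M/(\delta\epsilon)) \lesssim \log(MH/(\delta\epsilon))$. Taking the minimum of (i) and (ii) produces the claimed $\min$.

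\textbf{Part (b).} The algebraic identity $(\sqrt{x}+x)^2 \lesssim x + x^2$ with $x = C'\log(Mu/\delta)/u$ decomposes $g(u)$ into a linear tail $CC'\log(Mu/\delta)/u$ and a quadratic tail $C(C'\log(Mu/\delta))^2/u^2$. Split the integration domain at $u_0 \eqsim C'\log(M/\delta)$, the boundary where $x \eqsim 1$. On $[H,u_0]$, the quadratic tail may dominate but is capped by $\fmax$; integrating the cap gives $\fmax u_0 \lesssim C'\fmax \log(M/\delta)$, with an extra $\fmax$ term absorbing the trivial case $u_0 \le H$, producing the $(1+C')\fmax \log(M/\delta)$ burn-in. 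On $[u_0, \cdot]$, the linear tail dominates, and the integral of $CC'\log(Mu/\delta)/u$ (again pulling the log outside) is $CC'\log(M\cdot/\delta) \cdot \log(\cdot/u_0)$. Without the clip, integrating to $N$ gives the first alternative $CC' \log(MN/\delta) \cdot \log(MN/\delta)$. With the clip, integrating only to $u^{**}$ where $u^{**} \eqsim CC' \log(Mu^{**}/\delta)/\epsilon$ and iterating as in part (a) gives $\log(u^{**}) \lesssim \log(MH/(\delta\epsilon))$, yielding the second alternative $CC' \log(MN/\delta) \cdot \log(MH/(\delta\epsilon))$.

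\textbf{Main obstacle.} The principal technical difficulty is the careful tracking of the implicit logarithms at the thresholds $u^*$ and $u^{**}$: both are defined by transcendental equations in which the unknown appears inside a logarithm, and a single iteration of the defining equation (together with the bound $\log C \lesssim \log M$) must produce exactly $\log(MH/(\delta\epsilon))$ rather than some larger quantity. A secondary bookkeeping issue is consistently merging the two case distinctions — clipped versus unclipped, and (in part (b)) quadratic-tail versus linear-tail regimes — into the clean $\min$ structure of the statement, without double-counting contributions when $u^* < u_0$ or when $N < u_0$.
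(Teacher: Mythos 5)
Your proposal matches the paper's proof in all essentials: truncating the integral at the clipping threshold (the paper's $\nend$, your $u^*$), inverting the transcendental defining equation once using $\log C \lesssim \log M$ to trade $\log(Mu^*/\delta)$ for $\log(MH/\epsilon)$, and in part (b) splitting additionally at $u_0 \eqsim C'\log(M/\delta)$ so the quadratic tail is absorbed into the $\fmax$ burn-in while the linear tail integrates to the $\log^2$ term. The only point to tighten is your step (i) in part (a): $u^*$ need not satisfy $u^* \le N$, so the $\log(MT/\delta)$ alternative requires the short case split on whether $u^* \le N$ (or the paper's equivalent device of bounding $\Lfactortil(u) \le \Lfactortil(N)$ before clipping), which is routine.
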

\begin{rem}[Integration without anytime bounds]\label{rem:anytime_log} If instead we consider functions $g(u)$ satisfying the looser bounds (a) $g(u) \lesssim \sqrt{\frac{C \log(  M T/\delta )}{u}}$ and (b) $g(u) \lesssim C\left(\sqrt{\frac{C' \log(M T/\delta)}{u}} +\frac{C' \log(M T/\delta)}{u} \right)^2$ for $T \ge N$, then we can recover the bounds 
\begin{align*}
\int_{H}^{N}f(u/4)du \lesssim \begin{cases}\min\left\{\sqrt{CN \, \log\tfrac{M T}{\delta}},\, \frac{C}{\epsilon}\log\tfrac{MT}{\delta} \right\} & \text{ case (a)}\\
\left(1+C'\right)f_{\max} \log \tfrac{MT}{\delta} + CC'\log\left(\tfrac{MT}{\delta}\right)\min\{\log\tfrac{MT}{\delta},\log \frac{\log(MT/\delta)}{\epsilon}\} & \text{ case (b)}
\end{cases}
\end{align*}
These sorts of bounds arise when the confidence intervals are derived via union bounds over all time $T$, rather than via anytime estimates. In particular, we see that using a naive union bounded over all time $T$ incurs a dependence on $\log T \cdot (\log \log T)$, and thus does not imply a strictly $\BigOh{\log T}$ regret.
\end{rem}
Let us conclude by applying the above lemma to the terms at hand. First, applying the Part (a) with $f = \flead_{x,a}$, $g = \glead_{x,a}$, $C = \Varxa$, and $H \ge \epsilon =\epslead_{x,a} := \min_h\frac{\gapcliph\xa}{4} \gtrsim \frac{\gap\xa}{(1 \vee \bsfa H)}$ for $(x,a) \in \Zsub$, and $H \ge \epsilon{x,a} \gtrsim \frac{\gapmin}{H}$ for $\xa \in \Zopt,$ we have that
\begin{align*}
&\sum_{x,a}\I(\nbarK \xa \ge H)\int_{H}^{\nbarK\xa}\flead_{x,a}(u/4)du \\
&\quad\lesssim \sum_{x,a} \frac{\Varxa}{\min_h\gapcliph\xa} \log( \tfrac{MH}{\delta \min_h\gapcliph\xa})\\
&\quad\lesssim \sum_{(x,a) \in \Zsub} \frac{\Varxa (1 \vee \bsfa H)}{\gap\xa} \log\tfrac{M H}{\delta \gap\xa} + \sum_{(x,a) \in \Zopt} \frac{\Varxa H}{\gapmin\xa} \log \tfrac{M H}{\delta \gapmin}\\
&\quad\le \sum_{(x,a) \in \Zsub} \frac{\Varxa (1 \vee \bsfa H)}{\gap\xa} \log\tfrac{M H}{\delta \gap\xa} +  |\Zopt|\frac{\Varmax H}{\gapmin\xa} \log \tfrac{M H}{\delta \gapmin}.
\end{align*}
Similarly, applying the Part (b) with $f = \ffut$, $g = \gfut$, $C' =S$ and $C = H^3$, and $\epsilon =\epsfut := \frac{\gapclip_\min}{8SAH} \gtrsim \frac{\gapmin}{H^2}$ (and also satisfying $\epsfut \le H$), we can bound
\begin{align*}
&\lesssim S^2 A H^3\log\left(\tfrac{MT}{\delta}\right)\min\left\{\log \tfrac{MT}{\delta},\log \left(\tfrac{MH}{\gapmin}\right) \right\},
\end{align*}
Plugging the above two displays into \eqref{eq:function_reg_bound} concludes the proof of Corollary~\ref{cor:precise_log_bound}. 

\subsection{Proof of Theorem~\ref{thm:main_regret_bound_ht}\label{sec:proof_of_interpolated}}
We conclude the section by proving the regret bound of Theorem~\ref{thm:main_regret_bound_ht}, which interpolates between the $\sqrt{T}$ and $\log T$ regimes. Let us recall the subset $\Zsub(\epsilon) := \{\xa : \gap(x,a) < \epsilon\}$, as well as $\HeffT  := \min\left\{\Varmax, \frac{\Gterm^2}{H}\log  T \right\}\,.
$. Retracing the proof of Corollary~\ref{cor:precise_log_bound}, it suffices to establish only two points:
\begin{align*}
&\sum_{(x,a) \in \Zsub(\epsilon)} \sum_{k=1}^K\sum_{h=1}^H\wkh(x,a)\threshop{\frac{\gapcliph\xa}{4}}{\errconst\boundlead\kh\xa} \\
&\qquad\lesssim \sqrt{ \HeffT\,|\Zsub(\epsilon)|  \, T \log \tfrac{M T}{\delta}} + SAH^2\log(M\delta/T)\\
&\sum_{(x,a) \in \Zopt} \sum_{k=1}^K\sum_{h=1}^H\wkh(x,a)\threshop{\frac{\gapcliph\xa}{4}}{\errconst\boundlead\kh\xa} \\
&\qquad\lesssim \sqrt{ \HeffT\,|\Zopt|  \, T \log \tfrac{M T}{\delta}} + SAH^2\log(M\delta/T)
\end{align*}
For both of these inequalities, we will discard the clipping, and thus the two bounds will be syntatically the same. Hence, let us simply prove the following bound:
\begin{align*}
\sum_{k=1}^K\sum_{h=1}^H\sum_{(x,a) \in \Zopt} \boundlead\kh\xa \lesssim \sqrt{ \HeffT\,|\Zopt|  \, T \log \tfrac{M T}{\delta}}.
\end{align*}
Since $\HeffT  := \min\left\{\Varmax, \frac{\Gterm^2}{H}\log  T \right\}$, it suffices to prove the above bound first with $\HeffT$ replaced by $\Varmax$, and then replaced by $\frac{\Gterm^2}{H}\log  T $. 

\paragraph{Bound with $\Varmax$: }
To obtain a bound involving $\Varmax$, we use the fact that $\boundlead\kh\xa \lesssim \glead(\nk\xa)$, for the function $\glead(u) = \sqrt{\Varmax\log(Mu/\delta)/u}$. Hence, following the integration arguments in the proof of Corollary~\ref{cor:precise_log_bound}, clipped at $\epsilon = 0$, we can bound
\begin{align*}
\sum_{k=1}^K\sum_{h=1}^H \wkh\xa\boundlead\kh\xa &\le H \log(M/\delta) + \sqrt{\Varmax\nbarK\xa \log(\nbarK\xa M/\delta)} \\
&\le H^2 \log(M/\delta) + \sqrt{\Varmax\nbarK\xa \log(T M/\delta)}.
\end{align*}
Hence, by Cauchy Schwartz, and the bound $\sum_{\xa\in \Zopt}\nbarK\xa \le \sum_{\xa}\nbarK\xa = T$, 
\begin{align*}
\sum_{\xa \in \Zopt}\sum_{k=1}^K\sum_{h=1}^H \wkh\xa\boundlead\kh\xa &\lesssim SAH\log(M/\delta) + \sum_{x,a \in \Zopt} \sqrt{\Varmax\nbarK\xa \log(T M/\delta)}\\
&\le SAH\log(M/\delta) +  \sqrt{\Varmax |\Zopt|\sum_{x,a \in \Zopt}\nbarK\xa \log(T M/\delta)}\\
&\le SAH\log(M/\delta) +  \sqrt{\Varmax |\Zopt|T \log(T M/\delta)},
\end{align*}
as needed.

\paragraph{Bound with $\Heff_T$: }
This bound requires a little more subtely. Define the function $f(u) = (1/\max\{u,1\})$. Then, using the definition of $\boundlead\kh\xa$ from Proposition~\ref{prop:surplus}, we have
\begin{align*}
\sum_{(x,a) \in \Zopt}\sum_{h=1}^H \boundlead\kh\xa &\lesssim \sum_{k=1}^K\sum_{h=1}^H\sum_{(x,a) \in \Zopt} \wkh\xa(H\wedge \sqrt{\log (M \nk \xa/\delta) f(\nk\xa)\Varxah})\\
&\le \sum_{k=1}^K\sum_{h=1}^H\sum_{(x,a) \in \Zopt} \wkh\xa(H\wedge\sqrt{\log (MT/\delta) f(\nk\xa)\Varxah}).
\end{align*}
Applying the recipe we used for Corollary~\ref{cor:precise_log_bound} will not quite carry over in this setting. Instead, we apply an argument based on Cauchy-Schwartz, defered to Section~\ref{sec:causch_integral}:
\begin{lem}[Cauchy-Schwartz Integration Lemma for $\Gterm$-bounds]\label{lem:Cauchy_Schwartz_Integral} Let $\{V_{x,a,k,h}\}$ be a sequence of numbers, and let $f(u)$ be a nonnegative, non-decreasing function, $\fmax > 0$, $\Lfactor$ a problem dependent parameter, and let $\calZ_0 \subset \states \times \actions$. Then, on $\eventsample$, 
\begin{multline*}
\sum_{(x,a) \in \calZ_0}\sum_{k=1}^K\sum_{h=1}^H\wkh\xa \fmax \wedge \sqrt{\Lfactor f(\nk\xa)V_{x,a,k,h}} \\
\le |\calZ_0|\Hsample \fmax + \sqrt{\Lfactor\sum_{k=1}^K \Exppik\left[\sum_{h=1}^H V_{x,a,k,h}\right] \cdot |\calZ| (Hf(H) + \int_{1}^T f(u)du)}.
\end{multline*}
\end{lem}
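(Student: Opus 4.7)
The strategy combines the sampling-event truncation of Lemma~\ref{lem:sample} with a weighted Cauchy--Schwartz and a Riemann-sum comparison. (The hypothesis ``non-decreasing'' on $f$ appears to be a slip for non-increasing, as required by the application $f(u)=1/\max\{u,1\}$ in the proof of Theorem~\ref{thm:main_regret_bound_ht}; I read it as such throughout.) The first move is to split each inner sum over $k$ at the threshold $\sampletime\xa:=\sampletime_{\Hsample}\xa$ of Lemma~\ref{lem:sample}. On $k<\sampletime\xa$, bound the minimum by $\fmax$; by definition of $\sampletime\xa$, $\sum_{k<\sampletime\xa}\weight_k\xa=\nbar_{\sampletime\xa-1}\xa<\Hsample$, so summing over $\xa\in\calZ_0$ produces the additive first term $|\calZ_0|\Hsample\fmax$. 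On $k\ge\sampletime\xa$, invoke $\eventsample$ to replace $f(\nk\xa)$ by $f(\nbark\xa/4)$ via monotonicity.

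Next, apply weighted Cauchy--Schwartz, $\sum_i w_i\sqrt{a_ib_i}\le\sqrt{(\sum_i w_ia_i)(\sum_i w_ib_i)}$, with $w_i=\wkh\xa$, $a_i=f(\nbark\xa/4)$, $b_i=\Lfactor V_{x,a,k,h}$. The tail sum is then bounded by $\sqrt{\Lfactor\cdot A_1\cdot A_2}$, where
\[
A_1:=\sum_{\xa\in\calZ_0,\,k,\,h}\wkh\xa\, f(\nbark\xa/4),\qquad A_2:=\sum_{\xa\in\calZ_0,\,k,\,h}\wkh\xa\, V_{x,a,k,h}.
\]
Since $\wkh\xa=\Prk[\xhah=\xa]$, the factor $A_2$ equals $\sum_k\Exppik\bigl[\sum_h V_{x_h,a_h,k,h}\I\{\xhah\in\calZ_0\}\bigr]\le\sum_k\Exppik[\sum_h V_{x,a,k,h}]$, matching the $V$-factor in the lemma.

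It remains to show $A_1\lesssim|\calZ_0|\,\bigl(Hf(H)+\int_1^T f(u)\,du\bigr)$. Fix $\xa$ and note $\nbark\xa-\nbar_{k-1}\xa=\weight_k\xa\in[0,H]$, since $\weight_k\xa$ is a sum of $H$ probabilities each in $[0,1]$. For $k\ge 2$, monotonicity of $f$ gives $\weight_k\xa f(\nbark\xa/4)\le 4\int_{\nbar_{k-1}\xa/4}^{\nbark\xa/4}f(v)\,dv$, and telescoping yields $\sum_{k\ge 2}\weight_k\xa f(\nbark\xa/4)\lesssim\int_1^T f(u)\,du$ after using $\nbarK\xa\le T$. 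The residual first-increment, together with the contribution from the initial chunk where $\nbark\xa\le 4H$, is absorbed into $Hf(H)$: its total weight $\sum_{k:\nbark\xa\le 4H}\weight_k\xa\lesssim H$ by $\weight_k\xa\le H$, and monotonicity of $f$ gives $f(\nbark\xa/4)\le f(H)$ once $\nbark\xa\ge 4H$. Summing over $\xa\in\calZ_0$ and inserting into the Cauchy--Schwartz bound yields the second term of the lemma. The main subtlety is precisely this initial-step Riemann comparison: $\weight_1\xa$ can be as large as $H$ while $\nbark\xa/4$ is still tiny, so a naive $\int_0^{\nbarK\xa/4}f$ bound is problematic whenever $f$ is singular near zero; the $Hf(H)$ term is the safety net engineered for this overshoot, and everything else is a routine integral comparison.
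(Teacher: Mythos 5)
Your proposal is correct and follows essentially the same route as the paper: truncate at $\sampletime\xa$ using $\eventsample$ to get the $|\calZ_0|\Hsample\fmax$ term, apply weighted Cauchy--Schwartz, identify the $V$-factor with $\sum_k\Exppik[\sum_h V_{x,a,k,h}]$, and control the $f$-factor by the integral comparison (you re-derive the content of Lemma~\ref{lem:integration} inline, where the paper simply cites it). Your reading of ``non-decreasing'' as a typo for non-increasing is also the correct one, as the application with $f(u)=1/\max\{u,1\}$ and the cited integration lemma both require.
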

We apply the above lemma with $V_{x,a,k,h} = \Var^{\pik}_{x,a,h}$, $\fmax = H$ and $f(u) = 1/\max\{u,1\}$, and $\Lfactor = \log(MT/\delta)$. It is easy to see that the term $|\calZ_0|\Hsample \fmax \lesssim SAH^2\log(M/\delta)$ will already absorbed into terms already present in the final bound. On the other hand, by a now-standard law of total variance argument, 
\begin{align*}
\sum_{k=1}^K \Exp^{\pik}\left[\sum_{h=1}^H\Var^{\pik}_{x_h,a_h,h}\right] &\le K \max_{\pi}\Exp^{\pi}\left[\sum_{h=1}^H\Var^{\pi}_{x_h,a_h,h}\right] \le T \Gterm^2/H, \numberthis\label{eq:G_bound}
\end{align*}
where the last inequality is from the proof of \iftoggle{nips}{\cite{zanette2019tighter}, Proposition 6}{\citet[Proposition 6]{zanette2019tighter}}. On the other hand, we can bound$ (Hf(H) + \int_{1}^T f(u)du) \le 1 + \log T$. This finally yields
\begin{align*}
\sum_{k=1}^K \Exp^{\pik}\left[\sum_{h=1}^H\Var^{\pik}_{x_h,a_h,h}\right] \lesssim SAH^2\log(M/\delta) + \sqrt{\log(MT/\delta) T \cdot \Gterm^2/H \log (T)},
\end{align*}
as needed.


\section{Proof of Technical Lemmas\label{sec:general_technical}}

\subsection{Proof of clipping with future bounds, Lemma~\ref{lem:clipped_regret_with_future}\label{sec:clipped_regret_with_future}}

	Since strong optimistm holds on $\goodconcentration$, Theorem~\ref{thm:surplus_clipping} yields
	\begin{align*}
	\vst_0 - \vpik_0 &\le 2e \sum_{h=1}^H\sum_{x,a} \wkh(x,a)\threshop{\gapcliph\xa}{\Ekh\xa}.
	\end{align*} 
	Applying Lemma~\ref{lem:clipping_lemma} with $m = 2$, $a_1 = \errconst\boundlead\kh\xa$, and 
	\begin{align*}
	a_2 = \Exppik\left[\sum_{t=h}^H \errconst\boundfut_k(x,a) \mid (x_h,a_h) = \xa\right],
	\end{align*}
	\begin{align*}
	\vst_0 - \vpik_0 &\le 4e\sum_{h=1}^H\sum_{x,a} \wkh(x,a)\threshop{\frac{\gapcliph\xa}{4}}{\errconst\boundlead\kh\xa}.\\
	 &\qquad 4e\, \sum_{h=1}^H\sum_{x,a} \wkh(x,a)\threshop{\frac{\gapcliph\xa}{4}}{\Exppik\left[\sum_{t=h}^H \errconst\boundfut_k(x_t,a_t) \mid (x_h,a_h) = \xa\right]}.
	\end{align*} 
	The term on the right hand side of the first line of the above display is exactly as needed. Let us turn our attention to the term on the second line. We have that
	\begin{align*}
	\Exppik\left[\sum_{t=h}^H \errconst\boundfut_k(x_t,a_t) \mid (x_h,a_h) = \xa\right] = \sum_{x',a'}\sum_{t=h}^H \errconst\boundfut_k(x',a')\Pr[(x_t,a_t) = (x',a') \mid (x_h,a_h) = \xa].
	\end{align*}
	Hence, applying Lemma~\ref{lem:clipping_lemma} with the terms $a_i$-terms corresponding to $\boundfut_k(x',a')\Pr[(x_t,a_t) = (x',a') \mid (x_h,a_h) = \xa]$ and the number of such terms $m$ bounded by $SAH$, we have
	\begin{align*}
	&\threshop{\frac{\gapcliph\xa}{4}}{\Exppik\left[\sum_{t=h}^H \boundfut_k(x_t,a_t) \mid (x_h,a_h) = \xa\right]} \\
	&\qquad \le 2\sum_{x',a'}\sum_{t=h}^H \clip{\frac{\gapcliph\xa}{8SAH}}{\errconst\boundfut_k(x',a')\Pr[(x_t,a_t) = (x',a') \mid (x_h,a_h) = \xa]}.
	\end{align*}
	Since $\clip{\epsilon}{\alpha x} \le \alpha \clip{\epsilon}{x}$ for $\alpha \le 1$, and since the probabilities $\Pr[(x_t,a_t) = (x',a') \mid (x_h,a_h) = \xa]$ are bounded by $1$, we can bound the above by
	\begin{align*}
	2\sum_{x',a'}\sum_{t=h}^H \Pr[(x_t,a_t) = (x',a') \mid (x_h,a_h) = \xa]\clip{\frac{\gapcliph\xa}{8SAH}}{\errconst\boundfut_k(x',a')}.
	\end{align*}
	Hence,
	\begin{align*}
	&4e\ \sum_{h=1}^H\sum_{x,a} \wkh(x,a)\threshop{\frac{\gapcliph\xa}{4}}{\Exppik\left[\sum_{t=h}^H \errconst\boundfut_k(x_t,a_t) \mid (x_h,a_h) = \xa\right]}\\
	&\qquad \le 8e \sum_{h=1}^H\sum_{x,a}\sum_{x',a'}\wkh(x,a)\sum_{t=h}^H \Pr[(x_t,a_t) = (x',a') \mid (x_h,a_h) = \xa]\clip{\frac{\gapcliph\xa}{8SAH}}{\errconst\boundfut_k(x',a')}\\
	&\qquad \le 8e \sum_{h=1}^H\sum_{x,a}\sum_{x',a'}\wkh(x,a)\sum_{t=h}^H \Pr[(x_t,a_t) = (x',a') \mid (x_h,a_h) = \xa]\clip{\frac{\gapclipmin}{8SAH}}{\errconst\boundfut_k(x',a')}\\
	&\qquad= 8e \sum_{h=1}^H\sum_{t=h}^H\sum_{x',a'}\wkh(x',a')\clip{\frac{\gapclipmin}{8SAH}}{\errconst\boundfut_k(x',a')}\\
	&\qquad\le 8 He \sum_{h=1}^H\sum_{x,a}\wkh(x,a)\clip{\frac{\gapclipmin}{8SAH}}{\errconst\boundfut_k(x',a')}.
	\end{align*}
	Altogether,
	\begin{align*}
	\vst_0 - \vpik_0 &\le 4e\ \sum_{h=1}^H\sum_{x,a} \wkh(x,a)\threshop{\frac{\gapcliph\xa}{4}}{\errconst\boundlead\kh\xa}.\\
	 &+\qquad 8 He \sum_{h=1}^H\sum_{x,a}\wkh(x,a)\clip{\frac{\gapclipmin}{8SAH}}{\errconst\boundfut_k(x',a')}.
	\end{align*} 
	Summing over $k = 1,\dots, K$ proves the inequality.

\subsection{Proof of sampling lemma (Lemma~\ref{lem:sample}) \label{sec:lem_sample_proof} }
	Recall $(\eventsample)' := \{\forall k,s,a : \nk\xa  \ge \frac{1}{2}\nbar_{k-1}\xa - H\log \frac{2HSA}{\delta}\}$;  Lemma 6 in~\cite{dann2018policy} in shows that this event occurs with probability at least $1-\delta/2$. We show that $ (\eventsample)' \subseteq \eventsample(\Hsample)$, for $\Hsample = 4H\log \frac{2HSA}{\delta} \lesssim H \log \frac{M}{\delta}$. 

	Noting that $\nbark \le \nbar_{k-1}+H$,  $(\eventsample)'$ implies that $\nk \ge \frac{1}{2}\nbar_{k}\xa - H\log \frac{2HSA}{\delta} -H = \frac{1}{2}\nbar_{k}\xa - H\log \frac{2eHSA}{\delta}$. Hence, for any $k \ge \sampletime\xa$, we have  $\nbark \ge 4H\log \frac{2HSA}{e\delta}$ and thus $\nk\xa \ge  \frac{\nbark}{4} + \frac{\nbark}{4} -  H\log \frac{2eHSA}{\delta} \ge \frac{\nbark}{4}$. Bounding $\log \frac{2HSA}{e\delta} \le \Lfactortil(1)$ concludes the proof.

\subsection{Proof of integral conversion, Lemma~\ref{lem:integral_conversion}}

Recall that $\sampletime\xa$ denote $\inf\{k:\nbark\xa \ge \Hsample\}$. Then,
\begin{align*}
\sum_{k=1}^K \sum_{x,a} \wk\xa f_{x,a}(\nk\xa) &= \sum_{x,a}\sum_{k = 1}^{\sampletime\xa - 1}\wk\xa f_{x,a}(\nk\xa) + \sum_{x,a}\sum_{k = \sampletime\xa}f_{x,a}(\nk)\\
&\le \sum_{x,a}\sum_{k = 1}^{\sampletime\xa - 1}\wk\xa f_{\max} + \sum_{x,a}\sum_{k = \sampletime\xa}f_{x,a}(\nk)\\
&\le SA \Hsample f_{\max} + \sum_{x,a}\sum_{k = \sampletime\xa}f_{x,a}(\nk\xa)\\
&\le SA \Hsample f_{\max} + \sum_{x,a}\sum_{k = \sampletime\xa}f_{x,a}(\nbark\xa/4),
\end{align*}
since $\sum_{k=1}^{\sampletime\xa-1}\wk\xa = \nbar_{\sampletime\xa - 1}\xa \le \Hsample$madn $f(\cdot) \le \fmax$.  We now appeal to the following integration lemma, which we prove momentarily.

\begin{lem}[Integration over $\wk\xa$]\label{lem:integration} Let $f: [H,\infty) \to \R_{> 0}$ be a non-increasing function. Then,
\begin{align}\label{eq:integral_expression}
&\sum_{k = \sampletime\xa}^{K} \wk\xa f(\nbark\xa) \le  H f(H) + \int_{H}^{\nbarK\xa} f(u)du.
\end{align}
\end{lem}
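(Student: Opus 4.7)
The plan is to view the sum on the left-hand side as a Riemann sum for $\int f(u)\,du$ with respect to the partition of $[0,\nbarK\xa]$ induced by the increments $\nbark\xa - \nbar_{k-1}\xa = \wk\xa$, and to exploit the fact that $f$ is non-increasing to lower-bound each integral piece by the corresponding discrete term. Let $k_0 := \sampletime\xa$ and recall that by definition $\nbar_{k_0 - 1}\xa < H \le \nbark_0\xa$, while for all $k > k_0$ we have $\nbar_{k-1}\xa \ge H$, so the integrand $f$ is well-defined on each interval $[\nbar_{k-1}\xa, \nbark\xa]$.

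First I would isolate the boundary term $k = k_0$, whose left endpoint $\nbar_{k_0-1}\xa$ sits below the lower limit $H$ of the target integral. Using the crude bound $\wk\xa = \sum_{h=1}^H \wkh\xa \le H$ together with monotonicity $f(\nbark_0\xa) \le f(H)$ (valid because $\nbark_0\xa \ge H$), this single term is bounded by $Hf(H)$, which is exactly the first term on the right-hand side of \eqref{eq:integral_expression}.

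Second, for every $k > k_0$, since $f$ is non-increasing on $[\nbar_{k-1}\xa, \nbark\xa] \subset [H,\infty)$, we have $f(\nbark\xa) \le f(u)$ for all $u$ in that interval, and thus
\begin{align*}
\wk\xa f(\nbark\xa) = (\nbark\xa - \nbar_{k-1}\xa)\, f(\nbark\xa) \le \int_{\nbar_{k-1}\xa}^{\nbark\xa} f(u)\,du.
\end{align*}
Summing this bound over $k = k_0 + 1, \dots, K$ telescopes the endpoints of the integrals into a single integral from $\nbark_0\xa$ to $\nbarK\xa$, which is further upper-bounded by $\int_{H}^{\nbarK\xa} f(u)\,du$ because $f > 0$ and $\nbark_0\xa \ge H$. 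Adding this to the boundary contribution $Hf(H)$ from the $k = k_0$ term yields precisely the claimed bound.

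I do not expect any real obstacle: the only subtlety is keeping careful track of the boundary index $k_0$, whose left endpoint of the increment interval straddles $H$ and therefore must be handled by the trivial bound $\wk\xa \le H$ rather than by the integral comparison. Everything else is a direct consequence of monotonicity of $f$ and a telescoping sum.
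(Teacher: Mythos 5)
Your proof is correct and is essentially the paper's argument: the paper phrases the comparison as a change of variables through a step function $G$ with $G(k)=\nbark\xa$, but after that substitution it reduces to exactly your interval-by-interval bound $(\nbark\xa-\nbar_{k-1}\xa)f(\nbark\xa)\le\int_{\nbar_{k-1}\xa}^{\nbark\xa}f(u)\,du$ followed by telescoping, and the boundary term $k=\sampletime\xa$ is handled identically via $\weight_{\sampletime\xa}\xa\le H$ and $f(\nbar_{\sampletime\xa}\xa)\le f(H)$. (One immaterial nitpick: the definition of $\sampletime\xa$ gives $\nbar_{k_0-1}\xa<\Hsample$ rather than $<H$, but you never actually use that endpoint.)
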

To conclude the proof of Lemma~\ref{lem:integral_conversion}, we apply the above for each $\xa$ with the functions $f(u) \leftarrow f_{x,a}(u/4)$, and note $H f_{x,a}(H/4) \le H \fmax \le \Hsample \fmax$. \qed

	\begin{proof}[Proof of Lemma~\ref{lem:integration}] The proof generalizes Lemma E.5 in~\cite{dann2017unifying}. For ease of notation, define $k_0 = \tau\xa$. We can define the step function $g: [ k_0, K] \to \R$ via  $g(t) = \sum_{k=k_0}^{K - 1} \weight_{k+1}\xa \I(t \in [k,k+1)]$. Then, letting $G(t) :=  \nbar_{k_0}\xa + \int_{0}^t g(u)du$, we see that  $G'(t) = g(t)$ almost everywhere, $G$ is non-decreasing, and $G(k) = \nbark\xa$ for all $k \in [k_0,K]$. We can therefore express
	\begin{align*}
	\sum_{k > \tau\xa}^K \wk\xa f(\nbark\xa) &=  \sum_{k = k_0+1}^{K} \wk\xa f(\nbark\xa ) ~= \sum_{k = k_0+1}^{K}\left(\int_{k-1}^{k} g(t) dt\right) f(G(k)) \cdot  \\
	&\overset{(i)}{\le} \sum_{k = k_0+1}^{K}\left(\int_{k-1}^{k} g(t) f(G(t)) dt\right)   ~= \int_{k = k_0}^{K}g(t) f(G(t))dt  \\
	&\overset{(ii)}{=} \int_{G(k_0)}^{G(K)}  f(u)du ~\overset{(iii)}{=} \int_{\nbar_{k_0}\xa}^{\nbar_{K}\xa}f(u)du,
	\end{align*}
	where $(i)$ uses the fact that $f\circ G$ is non-increasing,  $(ii)$ is the Fundamental Theorem of Calculus, with $G'(t) = g(t)$, and $(iii)$ is $G(k) = \nbark\xa$ for $k \in [k_0,K]$. Hence, we have the bound
	\begin{align*}
	\sum_{k \ge k_0}^K \wk\xa f(\nbark\xa) &\le \weight_{k_0}\xa f(\nbar\knot\xa) + \int_{\nbar_{k_0}\xa}^{\nbar_{K}\xa}f(u)du\\
	&\overset{(i)}{\le} H f(\nbar\knot\xa) + \int_{\nbar_{k_0}\xa}^{\nbarK\xa}f(u)du  \overset{(ii)}{\le} H f(H) + \int_{H}^{\nbarK\xa}f(u)du,
	\end{align*}
	where $(i)$ uses $\weight_{k_0} \le H$, and that $f(u) \ge 0$, and $(ii)$ uses the fact that $f$ is nonincreasing, and $\nbar\knot\xa \ge\Hsample \ge H$.
	\end{proof}

\subsection{Proof of interal conversion for $\Gterm$-bounds, Lemma~\ref{lem:Cauchy_Schwartz_Integral}\label{sec:causch_integral}}
Let $\sampletime\xa = \sampletime_{\Hsample}\xa$. Then, as in the proof of Lemma~\ref{lem:integral_conversion},
\begin{align*}
&\sum_{(x,a) \in \calZ}\sum_{k=1}^K\sum_{h=1}^H\wkh\xa (\fmax \wedge \sqrt{\Lfactortil f(\nk\xa)V_{x,a,k,h}}) \\
&\qquad\lesssim \sum_{(x,a) \in \calZ}\sum_{k=\sampletime\xa}^K\sum_{h=1}^H\wkh\xa \sqrt{\Lfactortil f(\nk\xa)V_{x,a,k,h}} + |\Zopt|\Hsample\fmax.
\end{align*}
By Cauchy-Schwartz
\begin{align*}
&\sum_{(x,a) \in \calZ}\sum_{k=\sampletime\xa}^K\sum_{h=1}^H\wkh\xa  &\lesssim \\
&\quad\sqrt{\sum_{(x,a) \in \calZ}\sum_{k=\sampletime\xa}^K\sum_{h=1}^H\wkh\xa V_{x,a,k,h} }
&\qquad \times\Lfactor^{1/2}\sqrt{\sum_{(x,a) \in \calZ}\sum_{k=\sampletime\xa}^K\sum_{h=1}^H\wkh\xa f(\nk\xa)}.
\end{align*}
The first term in the above product can be bounded as
\begin{align*}
\sum_{(x,a) \in \calZ}\sum_{k=\sampletime\xa}^K\sum_{h=1}^H\wkh\xa V_{x,a,k,h} \le \sum_{k=1}^K\sum_{h=1}^H\sum_{x,a}\wkh\xa V_{x,a,k,h} = \sum_{k=1}^K \Exppik[\sum_{h=1}^H V_{x,a,k,h}].
\end{align*}
Using Lemma~\ref{lem:integration}, the second term can be bounded as 
\begin{align*}
\sqrt{\sum_{(x,a) \in \calZ}\sum_{k=\sampletime\xa}^K\sum_{h=1}^H\wkh\xa f(\nk\xa)} \le |\calZ| (Hf(H) + \int_{1}^T f(u)du).
\end{align*}

\subsection{General Integral computations  (Lemma~\ref{lem:integral_computations})\label{sec:integral_comp}}
For convenience, let us restate the lemma we are about to prove.
\integralcomputations*

\begin{proof} \newcommand{\nend}{n_{\mathrm{end}}}

By inflating $C$ by a problem-independent constant if necessary, we may assume without loss of generality that $g(u) = \sqrt{C\log(M u/\delta)/u}$ in part (a) and $g(u) = C(\sqrt{C'\log(M u/\delta)/u} + \sqrt{C'\log(M u/\delta)/u})^2$, with equality rather than approximate inequality $\lesssim $.

Next, define
	\begin{align*}
	\nend := \begin{cases}\max\{u: g(u/4)  \ge \epsilon\} \epsilon > 0\\
	N & \epsilon = 0.
	\end{cases}
	\end{align*}
	Throughout, we shall assume the case $\epsilon > 0$, as the $\epsilon = 0$ can be derived by just taking $\nend = N$. Note then that $f(u/4)  = \clip{\epsilon}{g(u/4)} = 0$ for all $u > \nend$. Hence, it suffices to upper bound
	\begin{align*}
	\I(\nend \ge H)\int_{H}^{N \wedge \nend}\fmax \wedge g(u/4)du.
	\end{align*}
	Lastly, let us define $\Lfactortil(u) := \log(Mu/\delta)$ for $u \ge H$. We shall rquire the following inversion lemma, which is standard in the multi-arm bandits literature.
	\begin{lem}[Inversion Lemma]\label{lem:lfactor_inversion} There exists a universal constant $c > 0$ such that for all $b \ge 0$, $\Lfactortil(u)/u \le b$ as long as $u \ge   \Lfactortil(1+b^{-1})/ c b$. Moreover, for $u \lesssim \Lfactortil( b^{-1})/ c b $, it holds that  $	\Lfactortil(u) \lesssim  \Lfactortil(1 + b^{-1})$. 
	\end{lem}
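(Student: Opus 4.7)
The plan is as follows. Let me write $\Lfactortil(u)=\log(Mu/\delta)$, $L := \Lfactortil(1+b^{-1})$, and fix a small universal constant $c$ (I will track the constraints on $c$ and show $c=1/4$ works). Set $u_0 := L/(cb)$.

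For the first claim, my strategy is to reduce to verifying the bound at the single point $u_0$ and then to use monotonicity. Consider $f(u) := bu - \Lfactortil(u)$, so $f'(u) = b - 1/u \ge 0$ once $u \ge 1/b$. Since $L\ge \log(M/\delta) \ge \log 2$ (using $M=(SAH)^2\ge 1$ and $\delta\le 1/2$), we have $u_0 = L/(cb) \ge (\log 2)/(cb) \ge 1/b$ provided $c \le \log 2$, so $f$ is non-decreasing on $[u_0,\infty)$. Thus it suffices to show $f(u_0)\ge 0$, i.e. $\Lfactortil(u_0) \le bu_0 = L/c$. Substituting, $\Lfactortil(u_0) = \log(ML/(cb\delta))$, and using $M/(b\delta) \le M(1+b^{-1})/\delta = e^{L}$ (just from $M/(b\delta) \le M/\delta + M/(b\delta)$), I get $\Lfactortil(u_0) \le L + \log(L/c)$. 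Because $L \ge \log 2$, the elementary inequality $\log L + \log(1/c) \le (1/c - 1)L$ holds for any $c\le 1/4$, yielding $\Lfactortil(u_0) \le L/c$ as required.

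For the second claim, set $u^\star := \Lfactortil(b^{-1})/(cb)$ (up to the constant hidden in $\lesssim$). By the same substitution and the same bound $M/(b\delta) \le e^{L}$,
\begin{equation*}
\Lfactortil(u^\star) \;=\; \log\!\Bigl(\tfrac{M \Lfactortil(b^{-1})}{c b \delta}\Bigr) \;\le\; L + \log\!\Bigl(\tfrac{\Lfactortil(b^{-1})}{c}\Bigr).
\end{equation*}
Since $\log L \le L$ for $L\ge 1$ (and $L\ge \log 2$ uniformly), the right-hand side is bounded by a universal constant multiple of $L = \Lfactortil(1+b^{-1})$, which is exactly the claim. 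The same argument shows that replacing $u^\star$ by any $u \lesssim u^\star$ only changes the bound by an additive universal constant inside the logarithm, which is again absorbed.

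The main obstacle I anticipate is not any single step but rather the bookkeeping: choosing one universal $c$ that simultaneously (a) ensures $u_0\ge 1/b$ so the monotonicity direction on $f$ is correct, (b) makes the slack $1/c - 1$ large enough to dominate the $\log L$ term for \emph{every} admissible $L \ge \log 2$, and (c) is consistent with the implicit constant governing ``$\lesssim$'' in the second half. Once one verifies that $L \ge \log 2$ everywhere (which is the only place the assumption $\delta \le 1/2$ enters), a single choice such as $c = 1/4$ makes all three conditions compatible, so no case analysis on $b$ is needed.
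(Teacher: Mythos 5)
Your proof is correct and follows essentially the same route as the paper's: substitute the threshold point $u_0=\Lfactortil(1+b^{-1})/(cb)$, expand the logarithm using $M/(b\delta)\le e^{L}$, and absorb the resulting $\log L$ term for a sufficiently small universal $c$, with the second claim following from the same substitution. The only (welcome) addition is that you make explicit, via the monotone function $f(u)=bu-\Lfactortil(u)$, why checking the inequality at the single point $u_0$ suffices for all $u\ge u_0$ — a step the paper's proof leaves implicit.
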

	\begin{proof} Let $u = \Lfactortil( 1/b)/cb$ for a constant $c$ to be chosen shortly. Then,
	\begin{align*}
	\Lfactortil(u)/u  &= cb \frac{\Lfactortil(\frac{1}{cb} \Lfactortil( b^{-1})}{\Lfactortil( 1 + b^{-1})} = cb \frac{\log \frac{1}{c} + \log(M/b\delta ) + \log( \log \frac{M}{b\delta})}{\log(M/b\delta)}\\
	&\le \frac{cb \log \frac{1}{c}}{\log 2} + 2 cb,
	\end{align*}
	where we use $\log \log(x) \le x$ and $\Lfactortil( 1 + b^{-1}) \ge \Lfactortil(1) \ge \log 2$. It is easy to see that this quantity is less than $b$ for a constant $c$ sufficiently small that does not depend on $M,\delta,b$. The second statement follows from an analogous computation.  
	\end{proof}

	\paragraph{Proof of Part (a):} Suppose $g(u) = \sqrt{\frac{C \Lfactortil(u)}{u}}$. It is straightforward to bound
	\begin{align*}
	\I(\nend \ge H)\int_{H}^{N \wedge \nend}\fmax \wedge  g(u/4)du &\lesssim \I(\nend \ge H)\sqrt{C\Lfactortil(N \wedge \nend)}\int_{1}^{N \wedge \nend}\frac{du}{\sqrt{u}}\\
	&\lesssim \sqrt{C\Lfactortil(N \wedge \nend)}\cdot \sqrt{(N \wedge \nend)}\\
	&\lesssim \min\left\{\sqrt{CN\Lfactortil(N)},\, \sqrt{C\nend\Lfactortil(\nend) }\right\}\\
	&= \min\left\{\sqrt{CN\log \frac{MN}{\delta}},\, \sqrt{C\nend\Lfactortil(\nend) }\right\}.
	\end{align*}
	To conclude, let us find $\nend\xa$. By our inversion Lemma~\ref{lem:lfactor_inversion}, we can see that
	\begin{align*}
	&\nend \lesssim \frac{C}{\epsilon^2}\Lfactortil(1 + \frac{C}{\epsilon^2}) \lesssim \frac{C}{\epsilon^2}\Lfactortil(1 + \frac{C}{\epsilon})\\
	&\Lfactortil(\nend) \lesssim \Lfactortil(1 + \frac{C}{\epsilon}).
	\end{align*}
	Therefore,
	\begin{align*}
	 \sqrt{C\nend\Lfactortil(\nend) } \lesssim \frac{C}{\epsilon}\Lfactortil\left(1 + \frac{C}{\epsilon}\right).
	\end{align*}
	Moreover, for if $\log C \lesssim \log M$ and $\epsilon \le H$, we can bound $\Lfactortil\left(1 + \frac{C}{\epsilon}\right) \lesssim \log \frac{MH}{\epsilon}$. Hence, we haveshow
	\begin{align*}
	\int_{H}^{N}f(u/4)du \lesssim \min\left\{\sqrt{CN \, \log\tfrac{M N}{\delta}},\, \frac{C}{\epsilon}\log\left( \frac{M}{\delta} \cdot \frac{H}{\epsilon} \right)\right\} .
	\end{align*}
	To conclude, it remains to show that we can replace $\frac{H}{\epsilon}$ with $N$. For this, we use a simpler argument:
	\begin{align*}
	\int_{H}^{N}f(u/4)du &\lesssim \int_{H}^{N}\clip{\epsilon}{\frac{\sqrt{C\Lfactortil(u)}}{u}} \\
	&\le \int_{H}^{N}\clip{\epsilon}{\frac{\sqrt{C\Lfactortil(N)}}{u}} \\
	&= \sqrt{\Lfactortil(T)}\int_{H}^{N}\clip{\epsilon'}{\frac{1}{\sqrt{u}}} , \quad \text{where } \epsilon' = \sqrt{\Lfactortil(T)}.
	\end{align*}
	Using similar arguments to above, we can bound $\int_{H}^{N}\clip{\epsilon'}{\frac{1}{\sqrt{u}}}  \lesssim \frac{1}{\epsilon'}$, yielding the bound $\int_{H}^{N}f(u/4)du \lesssim \frac{\Lfactortil^{1/2}(T)}{\epsilon'} = \frac{\Lfactortil(T)}{\epsilon}$.


	\paragraph{Proof of Part (b): A first step}  This proof will require slightly more care than part (b). We shall first require the following lemma:
	\begin{claim}\label{lem:integral_special_case}In the setting of Lemma~\ref{lem:integral_computations}, if  $g(u) = \frac{C\log \tfrac{Mu}{\delta}}{u} = \frac{C \Lfactortil(u)}{u}$, then
	\begin{align*}
	\int_{H}^{N}f(u/4)du \lesssim \fmax \log M  + C\log\left(M T/\delta\right)\min\left\{\log(\frac{MT}{\delta}),\log \left(\frac{MH}{\epsilon}\right) \right\}
	\end{align*}
	\end{claim}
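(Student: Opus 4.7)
The plan is to adapt Part (a) with $g(u) = C\Lfactortil(u)/u$ (writing $\Lfactortil(u) := \log(Mu/\delta)$ as in the parent proof), introducing \emph{two} threshold scales rather than one, to reflect the composite truncation $\min\{f_{\max}, \mathrm{clip}[\epsilon\mid g(u/4)]\}$: the clipping scale $n_{\mathrm{end}}$ where $g(n_{\mathrm{end}}/4) \eqsim \epsilon$, and a saturation scale $u^\star$ where $g(u^\star/4) \eqsim f_{\max}$. The overall shape of the bound is then $f_{\max}\cdot u^\star$ from the region where the $f_{\max}$-cap is active, plus $C\Lfactortil(\cdot)\log(\cdot)$ from the region where $g$ itself is integrated.

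First I would truncate the integral at $n_{\mathrm{end}}$ exactly as in Part (a): since $g$ is non-increasing, $f(u/4) = 0$ for $u > n_{\mathrm{end}}$, so it suffices to bound $\int_{H}^{N \wedge n_{\mathrm{end}}}\min\{f_{\max}, g(u/4)\}\,du$. I would then split this at $u^\star$ into two contributions, $f_{\max}\cdot u^\star$ from $u \in [H, u^\star]$ and $\int_{u^\star \vee H}^{N \wedge n_{\mathrm{end}}} g(u/4)\,du$ from the remainder. For the first piece, Lemma~\ref{lem:lfactor_inversion} applied with $b = f_{\max}/(4C)$ gives $u^\star \lesssim (C/f_{\max})\Lfactortil(1+C/f_{\max})$, so (using $\log C \lesssim \log M$) $f_{\max}\cdot u^\star \lesssim C\log(M/\delta)$, which is dominated by the second term of the claimed bound.

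For the integral piece, I would exploit the $1/u$ shape by pulling out a logarithmic factor using monotonicity: $g(u/4) \le 4C\Lfactortil(N \wedge n_{\mathrm{end}})/u$, whence $\int g(u/4)\,du \lesssim C\Lfactortil(N \wedge n_{\mathrm{end}}) \cdot \log(N \wedge n_{\mathrm{end}})$. A second application of Lemma~\ref{lem:lfactor_inversion}, now with $b = \epsilon/(4C)$, gives $n_{\mathrm{end}} \lesssim (C/\epsilon)\Lfactortil(1+C/\epsilon)$ and $\Lfactortil(n_{\mathrm{end}}) \lesssim \Lfactortil(1+C/\epsilon) \lesssim \log(MH/\epsilon)$ under $\log C \lesssim \log M$ and $\epsilon \le H$. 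Combined with the trivial $\Lfactortil(N) \le \log(MT/\delta)$, this yields both $\Lfactortil(N \wedge n_{\mathrm{end}}) \lesssim \log(MT/\delta)$ and $\log(N \wedge n_{\mathrm{end}}) \lesssim \min\{\log(MT/\delta), \log(MH/\epsilon)\}$, producing the min-of-two-logs structure in the statement.

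The main obstacle is keeping constants and iterated logarithms under control: one must verify carefully that the inversion step $\Lfactortil(n_{\mathrm{end}}) \lesssim \Lfactortil(1 + C/\epsilon) \lesssim \log(MH/\epsilon)$ really consumes only one logarithmic factor under $\log C \lesssim \log M$, since a stray $\log\log$ would spoil the clean comparison with the $\log(MT/\delta)$ branch. The two-threshold decomposition (splitting at $u^\star$ and $n_{\mathrm{end}}$) is the genuine new ingredient relative to Part (a); there, the analogous $u^\star$ for $\sqrt{\cdot}$-type $g$ satisfies $f_{\max}\cdot u^\star \lesssim \sqrt{C\, n_{\mathrm{end}}\, \Lfactortil(n_{\mathrm{end}})}$ and folds automatically into the Part (a) bound, which is why no splitting was needed there.
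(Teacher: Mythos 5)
Your overall architecture matches the paper's: truncate at $n_{\mathrm{end}}$, peel off a short initial segment where the $\fmax$-cap is active, and integrate $C\Lfactortil(u)/u$ over the rest to get a product of two logarithms, with the $\min$ coming from bounding the upper limit by either $N\le T$ or by $n_{\mathrm{end}}$. The difference in where you split (at $u^\star$ with $g(u^\star/4)\eqsim\fmax$, versus the paper's fixed $n_0 = 2+\log(M/\delta)$) is harmless for the first piece. But the step you yourself flag as the main obstacle does fail as written: you assert $\Lfactortil(n_{\mathrm{end}})\lesssim\Lfactortil(1+C/\epsilon)\lesssim\log(MH/\epsilon)$, and correspondingly $\log(N\wedge n_{\mathrm{end}})\lesssim\log(MH/\epsilon)$. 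Since $\Lfactortil(1+C/\epsilon)=\log(M(1+C/\epsilon)/\delta)$ contains $\log(1/\delta)$, and $n_{\mathrm{end}}\lesssim\frac{C}{\epsilon}\Lfactortil(1+C/\epsilon)$ does too, one gets $\log n_{\mathrm{end}}\gtrsim\log\log(1/\delta)$, which is not controlled by $\log(MH/\epsilon)$ (take $M,H,C,\epsilon=\Theta(1)$ and $\delta$ doubly-exponentially small). Your bound would therefore read $C\log(MT/\delta)\min\{\log(MT/\delta),\,\log(MH/\epsilon)+\log\log(M/\delta)\}$, which is strictly weaker than the claim precisely in the regime where the second branch of the $\min$ is active --- the regime the claim is designed for, and the distinction the paper cares about (cf.\ Remark~\ref{rem:anytime_log}).

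The missing ingredient is the paper's choice of lower integration limit $n_0=2+\log(M/\delta)$: it is chosen not only to absorb the $\fmax$-capped region but so that the integral produces $\log\left(1+\frac{N\wedge n_{\mathrm{end}}}{n_0}\right)$ rather than $\log(N\wedge n_{\mathrm{end}})$. Dividing $n_{\mathrm{end}}\lesssim\frac{C}{\epsilon}\left(\log(1+\frac{C}{\epsilon})+\log\frac{M}{\delta}\right)$ by $n_0$ cancels the $\log\frac{M}{\delta}$ factor, leaving $\log\left(1+\frac{n_{\mathrm{end}}}{n_0}\right)\lesssim\log(1+\frac{C}{\epsilon})\lesssim\log\frac{MH}{\epsilon}$ with no residual $\delta$-dependence. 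Your $u^\star$ plays the first role but not the second, since $u^\star$ need not be $\gtrsim\log(M/\delta)$. The fix is to take the lower limit to be $\max\{u^\star,H,n_0\}$ and keep the ratio inside the logarithm; everything else in your argument then goes through.
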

	\begin{proof}[Proof of Claim~\ref{lem:integral_special_case}]
	   	Define $n_0 = 2 + \log(M/\delta)$. Then, we have
		\begin{align*}
		\I(\nend \ge H)\int_{H}^{N \wedge \nend}\fmax \wedge g(u/4)du &\le \fmax n_0 + \I(N \wedge \nend \ge n_0) \cdot \int_{n_0}^{N \wedge \nend}g(u/4)du.\\
		&\le \fmax n_0 +  \int_{n_0}^{n_0 + N \wedge \nend}g(u/4)du\\
		&\lesssim \fmax \log(M/\delta) +  \int_{n_0}^{n_0 + N \wedge \nend}g(u/4)du.
		\end{align*}

		Now take $g(u) = C\Lfactortil(u)/u$. Since $\Lfactortil(u) \lesssim \log(M/\delta) + \log (u)$ for $u \ge n_0 \ge 2$, it it is straightforward to bound
		\begin{align*}
		\int_{n_0}^{n_0 + N \wedge \nend }g(u/4)du &\lesssim C\log(M/\delta)\log(1 + \frac{N \wedge \nend }{n_0}) + C\log^2 (1 + \frac{N \wedge \nend}{n_0})\\
		&\lesssim C\log(M T/\delta)\log(1 + \frac{N \wedge \nend }{n_0}),
		\end{align*}
		where in the final inequality, we use $N \le T$, $M/\delta \ge 2$, and $n_0 \ge 1$. By the same token, we can crudely bound the above by $C\lesssim \log^2(M T/\delta)$. 

		Let us now develop a more refined bound by taking advantage of $\nend$. By our inversion lemma, we have
		\begin{align*}
		\nend \lesssim \frac{C}{\epsilon}\Lfactortil\left(1 + \frac{C}{\epsilon}\right) = \frac{C}{\epsilon}\left(\log 1 + \frac{C}{\epsilon}) + \log \frac{M}{\delta}\right).
		\end{align*}
		Since $n_0 = 1 + \log \frac{M}{\delta}$,
		\begin{align*}
		\frac{\nend}{n_0} \lesssim \frac{C}{\epsilon}\log (1+\frac{C}{\epsilon}) + \frac{C}{\epsilon}\frac{\log \frac{M}{\delta}}{\log \frac{M}{\delta}} \lesssim \frac{C}{\epsilon}\log \left(1+\frac{C}{\epsilon}\right).
		\end{align*}
		Hence, with some algebra we can bound
		\begin{align*}
		\log(1 + \frac{\nend}{n_0}) \lesssim \log(1 + \frac{C}{\epsilon}) 
		\end{align*}
		This leads to the more refined bound $\int_{n_0}^{n_0 + N \wedge \nend }g(u/4)du \lesssim C\log(1 + \frac{C}{\epsilon})  \log (\Mtil T)$. Again, since $\log C \lesssim \log M$ and $\epsilon \le H$, we bound again bound $\log(1 + \frac{C}{\epsilon}) \lesssim \log \frac{MH}{\epsilon}$. 
	\end{proof}

	\paragraph{Concluding the proof of Part (b)} Define
	\begin{align*}
	n_0 := \{\inf u: \sqrt{\frac{C' \Lfactortil(u)}{u}} \le 1\}.
	\end{align*} 
	Then, we have
	\begin{align*}
	\int_H^{N}f(u/4)du \le \fmax n_0 + \int_{n_0}^{N}\fmax \wedge \clip{\epsilon}{g(u/4)}du.
	\end{align*}
	Note that for $u \ge n_0$, $g(u/4) \lesssim h(u/4)$, where $h(u) \le \frac{CC'}{u}\Lfactortil(u)$. Hence, applying the bound from Lemma~\ref{lem:integral_special_case} with $C \leftarrow CC'$, we have
	\begin{align*}
	\int_H^{N}f(u/4)du \lesssim \fmax \log \Mtil  + CC'\log(\Mtil T)\min\left\{\log(\frac{MT}{\delta} ),\log (\frac{MH}{\epsilon}) \right\}
	\end{align*}
	On the otherhand, by our inversion lemma and using $C' \le \Mtil^{\BigOh{1}}$, we can bound
	\begin{align*}
	n_0 \le C'\Lfactortil(C') \lesssim C'\log(\frac{M}{\delta}).
	\end{align*}
	Combining these two pieces yields the bound.
	\end{proof}

\newpage


\section{Proof of `clipping' bound: Proposition~\ref {prop:surplus_clipping_simple} / Theorem~\ref{thm:surplus_clipping}\label{sec:clipping_proof}}

In this section, we prove Theorem~\ref{thm:surplus_clipping} (of which Proposition~\ref{prop:surplus_clipping_simple} \iftoggle{nips}{in the body}{} is a direct consequence), which allows us to clip the surpluses when they are below a certain value. The center of our analysis is the following lemma, which tells us that if $\gaph\xa > 0$ for a pair $(x,a,h)$, then either the surplus $\Ekh(x,a)$ is large, or expected difference in value functions at the next stage, $p(x,a)^\top (\vupkhpl - \vpikhpl)$, is large:
\begin{lem}[Fundamental Gap Bound] \label{lem:fundamental_gap_bound}
 Then suppose that $\Alg$ is strongly optimistic, and consider a pair $(x,a,h)$ with $\gaph\xa > 0$ which is is $\bsfa$-transition optimal. Then
\begin{align*}
 \gaph\xa \le \Ekh(x,a) + \bsfa \cdot p(x,a)^\top (\vupkhpl - \vpikhpl).
\end{align*}
If $\Alg$ is possibly not strongly optimistic, then the above holds still holds  $\bsfa = 1$.
\end{lem}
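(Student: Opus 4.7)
The strategy is to pick an $\astar \in \pisth(x)$ witnessing the $\bsfa$-transition suboptimality of the triple $(x,a,h)$, expand $\gaph\xa$ as a Bellman difference between $\astar$ and $a$, and control the two sides using (strong) optimism at $\astar$ together with the surplus identity at $a$.

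First I would expand
\[
\gaph\xa \;=\; \qsth(x,\astar) - \qsth(x,a) \;=\; [r(x,\astar) - r(x,a)] + [p(x,\astar) - p(x,a)]^\top \vsthpl.
\]
Strong optimism at $\astar$, i.e.\ $\Ekh(x,\astar) \ge 0$, combined with the argmax property $\qupkh(x,\astar) \le \vupkh(x)$, yields $r(x,\astar) \le \vupkh(x) - p(x,\astar)^\top \vupkhpl$, while the surplus identity at $a$ directly gives $r(x,a) = \qupkh(x,a) - p(x,a)^\top \vupkhpl - \Ekh(x,a)$. Substituting both and collecting terms produces
\[
\gaph\xa \;\le\; [\vupkh(x) - \qupkh(x,a)] + [p(x,a) - p(x,\astar)]^\top (\vupkhpl - \vsthpl) + \Ekh(x,a).
\]

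In the regime of interest for the clipping argument (namely $a = \pikh(x)$, which is the case in which the bound is subsequently invoked in Theorem~\ref{thm:surplus_clipping}), the bracket $\vupkh(x) - \qupkh(x,a)$ vanishes by the definition of $\pikh$ as the argmax of $\qupkh(x,\cdot)$. To handle the middle term I would invoke three pointwise facts: (i) $\vupkhpl \ge \vsthpl$ coordinatewise by optimism; (ii) the transition-suboptimality inequality $p(x'|x,a) - p(x'|x,\astar) \le \bsfa\, p(x'|x,a)$, where coordinates on which the difference is negative only decrease the inner product and can be discarded; and (iii) $\vsthpl \ge \vpikhpl$ coordinatewise by optimality of $\pist$. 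Chaining these gives
\[
[p(x,a) - p(x,\astar)]^\top (\vupkhpl - \vsthpl) \;\le\; \bsfa\, p(x,a)^\top (\vupkhpl - \vsthpl) \;\le\; \bsfa\, p(x,a)^\top (\vupkhpl - \vpikhpl),
\]
which delivers the strongly-optimistic bound. For the case without strong optimism, plain optimism still supplies $\qupkh(x,\astar) \ge \qsth(x,\astar) = \vsth(x)$, which is enough to run the same manipulation after replacing step~(ii) by the cruder pointwise estimate $p(x,a) - p(x,\astar) \le p(x,a)$ (using $p(x,\astar) \ge 0$), producing the bound with $\bsfa = 1$.

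The main obstacle is the bookkeeping in step (ii): the definition of transition suboptimality bounds $p(x'|x,a) - p(x'|x,\astar)$ only from above, not in absolute value, so one must partition $\states$ into coordinates on which the difference is positive versus nonpositive, discard the nonpositive contributions using nonnegativity of $\vupkhpl - \vsthpl$, and apply the $\bsfa$-bound only on the remaining coordinates. Ensuring that all three pointwise facts (i)--(iii) align in sign, and that the ``strong'' part of optimism (rather than plain optimism) is what lets the factor drop from $1$ to $\bsfa$, is the main subtlety.
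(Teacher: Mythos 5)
Your argument is correct and is essentially the paper's own proof reorganized: both routes reduce to $\gaph\xa \le \Ekh(x,a) + (p(x,a)-p(x,\astar))^\top(\vupkhpl - \vsthpl)$ by combining the surplus identities at $a$ and $\astar$ with strong optimism ($\Ekh(x,\astar)\ge 0$) and the argmax property at $a=\pikh(x)$, and both finish via the componentwise transition-suboptimality inequality together with $\vupkhpl \ge \vsthpl \ge \vpikhpl$. One small correction for the non-strongly-optimistic case: your displayed intermediate inequality itself used $\Ekh(x,\astar)\ge 0$, so you cannot just "replace step (ii)" there; instead, use plain optimism to write $r(x,\astar) \le \vupkh(x) - p(x,\astar)^\top\vsthpl$ (pairing $p(x,\astar)$ with $\vsthpl$ rather than $\vupkhpl$), after which the $p(x,\astar)$ terms cancel exactly and you land on $\gaph\xa \le \Ekh(x,a) + p(x,a)^\top(\vupkhpl-\vsthpl)$, i.e.\ the $\bsfa=1$ bound, with no crude pointwise estimate needed.
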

Lemma~\ref{lem:fundamental_gap_bound} is established in Section~\ref{sec:lem:fundamental_gap_bound}. Notice that as $\bsfa$ gets close to zero, the above bound implies that when $\Ekh(x,a)$ is much smaller than the $\gaph\xa$, the difference in value functions at the next stage, $p(x,a)^\top (\vupkhpl - \vpikhpl)$, must become even larger to compensate. The extreme case is $\bsfa = 0$, e.g. in contextual bandits, where the gap always lower bounds the surplus. 

Continuing with the proof of Theorem~\ref{thm:surplus_clipping}, we begin with the ``half-clipping'' which clips the surpluses at at most $\gapmin$:
\begin{defn}[Half Clipped Value Function]\label{defn:halfclip}  We define the half-clipped surplus $\Ehclip\kh(x,a) := \threshop{\epsclip}{\Ekh(x,a)}$, where $\epsclip := \gapmin/(2H)$. We  set $\vhclippik_{k,H+1}(x) = 0$ for all $x \in \states$, and recursively define 
\begin{align*}
\qhclippik_h\xa = r(x,a) + \Ehclip\kh\xa + p(x,a)^\top \vhclippik\khpl, \quad \vhclippik\kh(x) := \qhclippik_h(x,\pikh(x)), 
\end{align*}
denote the value and Q-functions of under $\pik$ associated with MDP whose transitions are transitions $p(\cdot,\cdot)$ and non-stationary rewards $r(x,a) + \Ehclip\kh(x,a)$ at stage $h$. 
\end{defn}
After the half-clipping has been introduced, it is no longer the case that $\pik$ is optimal for this half clipped MDP. As a result, it is not certain that the half-clipped Q-function for $\pik$ is \emph{optimistic} in the sense that $\qhclippik\kh(x,a) \ge \qsth(x,a)$. We shall instead show that if $\vhclip^{\pikh}$ is approximately optimistic, in the sense that its excess relative to $\vpik$, $\vhclip^{\pikh}_0 - \vpik_0$ is at least a constant factor of the regret $\vst_0 - \vpik_0$:

\begin{lem}[Lower Bound on Half-Clipped Surplus]\label{lem:half_clip} For $\epsclip = \gapmin/2H$, it holds that
\begin{align*}
\vhclippik_0 - \vpik_0 ~=~ \Exppik\left[\sum_{h=1}^H\Ehclip\kh(x_h,a_h)\right] ~\ge~ \frac{1}{2}(\vst_0 - \vpik_0),
\end{align*}
\end{lem}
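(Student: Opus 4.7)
The proof has two parts. For the equality $\vhclippik_0 - \vpik_0 = \Exppik\big[\sum_{h=1}^H \Ehclip\kh(x_h,a_h)\big]$, by Definition~\ref{defn:halfclip} $\vhclippik_h$ is exactly the value function of policy $\pik$ in the auxiliary MDP with transitions $p$ and stage-$h$ rewards $r(x,a) + \Ehclip\kh(x,a)$. Thus subtracting the Bellman recursions for $\vhclippik_h$ and $\vpik_h$, telescoping, and taking expectations under $\Pr^{\pik}$ produces the identity; this is the standard performance-difference computation and presents no difficulty.

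For the inequality I would begin from the crude bound. By optimism $\Exppik[\sum_h \Ekh(x_h,a_h)] = \vupk_0 - \vpik_0 \ge \vst_0 - \vpik_0$, and strong optimism gives $\Ekh(x,a) \ge 0$ pointwise, so $\Ehclip\kh = \clip{\epsclip}{\Ekh} \ge \Ekh - \epsclip$ everywhere. Summing and taking expectations,
\[
\vhclippik_0 - \vpik_0 \;\ge\; \Exppik\Big[\sum_{h=1}^H \Ekh(x_h,a_h)\Big] \;-\; H\epsclip \;\ge\; (\vst_0 - \vpik_0) - \gapmin/2,
\]
since $H\epsclip = \gapmin/2$. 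In the regime $\vst_0 - \vpik_0 \ge \gapmin$, this already yields $\vhclippik_0 - \vpik_0 \ge \tfrac{1}{2}(\vst_0 - \vpik_0)$, settling half of the proof.

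The real work is in the small-regret regime $\vst_0 - \vpik_0 < \gapmin$, corresponding to $\pik$ agreeing with an optimal policy on high-probability trajectories. Let $\sigma := \inf\{h: \pikh(x_h) \notin \pisth(x_h)\}$ (with $\sigma = H+1$ otherwise). On $\{\sigma \le H\}$, performance difference implies $\sum_h \gaph(x_h,a_h) \ge \gap_\sigma(x_\sigma,a_\sigma) \ge \gapmin$, so $\vst_0 - \vpik_0 \ge \gapmin \cdot \Pr^{\pik}[\sigma \le H]$. To close the gap I would refine the clipping-loss bound from the uniform $H\epsclip$ to something proportional to $\Pr^{\pik}[\sigma \le H]$ by invoking the Fundamental Gap Bound (Lemma~\ref{lem:fundamental_gap_bound}) at the first deviation step: since $\gap_\sigma \ge \gapmin \gg \epsclip$, the surplus $\mathbf{E}_{k,\sigma}(x_\sigma,a_\sigma)$ is comparable to $\gap_\sigma$ (up to a future-value correction) and therefore survives clipping, contributing $\Omega(\gapmin)$ on $\{\sigma \le H\}$. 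Combined with $\vhclippik_0 - \vpik_0 \ge 0$ on good trajectories, this dominates $\tfrac{1}{2}(\vst_0 - \vpik_0)$. The hard part is making this coupling between the clipping-loss event and $\{\sigma \le H\}$ quantitative: one must propagate the Fundamental Gap Bound through the Bellman recursion for $\vup_{k,h+1} - \vpik_{h+1}$ and balance constants so that no factor of $H$ is lost, since the residual future term in Lemma~\ref{lem:fundamental_gap_bound} with $\bsfa = 1$ can otherwise swamp the $\gapmin$-scale contribution from the first deviation.
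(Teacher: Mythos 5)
Your equality and your first case ($\vst_0 - \vpik_0 \ge \gapmin$, handled by $\Ehclip\kh \ge \Ekh - \epsclip$ pointwise together with optimism) are both correct, and the explicit case split is a reasonable simplification that the paper does not use. The genuine gap is in the second case. Your plan is to invoke the Fundamental Gap Bound at the first deviation time $\sigma$ and argue that $\surplus_{k,\sigma}(x_\sigma,a_\sigma)$ is ``comparable to $\gap_\sigma$\ldots and therefore survives clipping, contributing $\Omega(\gapmin)$.'' This is false in general: with $\bsfa = 1$, Lemma~\ref{lem:fundamental_gap_bound} only gives $\gap_\sigma \le \surplus_{k,\sigma}(x_\sigma,a_\sigma) + p(x_\sigma,a_\sigma)^\top(\vup_{k,\sigma+1} - \truev^{\pik}_{\sigma+1})$, and the surplus at the deviating pair can be arbitrarily small (even zero) with the entire gap carried by the future-value term, so it need not survive the clipping at all. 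As you concede in your last sentence, the residual future term cannot be controlled at the $\gapmin$ scale along this route, so the argument does not close; this is precisely the difficulty that the separate machinery of Lemma~\ref{lem:gap_clipping} (with its $(1+1/H)^H$ amplification) exists to handle, and it is not needed for the present lemma.

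The fix is to not look at the surplus at the deviation at all. Lemma~\ref{lem:til_bounds} gives $\vhclippik\kh(x) \ge \vupkh(x) - (H-h+1)\epsclip$, and optimism gives $\vupkh(x) \ge \qupkh(x,\astar) \ge \qsth(x,\astar) = \vsth(x)$ for $\astar \in \pisth(x)$. Hence at the first deviation one gets, with $a_h = \pikh(x_h)$,
\[
\vhclippik\kh(x_h) - \vpikh(x_h) \;\ge\; \gaph(x_h,a_h) - H\epsclip + \qsth(x_h,a_h) - \vpikh(x_h),
\]
while before the deviation both $\vhclippik\kh - \vpikh$ and $\vsth - \vpikh$ obey the same transition recursion (using only $\Ehclip\kh \ge 0$ on the clipped side and $\gaph = 0$ on the exact side). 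Summing over the disjoint events $\{\sigma = h\}$ and using $\gaph(x_h,a_h) \ge \gapmin = 2H\epsclip$ on $\{\sigma = h\}$ together with $\qsth(x_h,a_h) - \vpikh(x_h) \ge 0$ yields the factor $1/2$ directly, with no appeal to Lemma~\ref{lem:fundamental_gap_bound} and no balancing of future surpluses.
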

The above bound is established in Section~\ref{sec:lem:half_clip}. Hence, to establish the bound of Theorem~\ref{thm:surplus_clipping}, it suffices to bound the gap $\vhclip^{\pikh}_0 - \vpik_0$. For a given $h$, and an $x: \pikh(x) \notin \pisth(x)$, let us consider the difference
\begin{align*}
\vhclippik_h(x) - \vpik_h(x) = \Ehclip\kh(x,\pikh(x)) + p(x,\pikh(x))^\top \left(\vhclippik_{h+1} - \vpik_{h+1} \right).
\end{align*}
We now introduce the following lemma, proven Section~\ref{sec:lem:gap_clipping}, which allows us to further clip the bonus for suboptimal actions $a \notin \pisth(x)$, i.e. , actions with $\gaph(x,a) > 0$:
\begin{lem}[Gap Clipping]\label{lem:gap_clipping} Suppose either $\Alg$ is strongly optimistic and each tuple is $\bsfa_{x,a,h}$-transition suboptimal. Then the fully-clipped surpluses
\begin{align*}
\Ebarkh(x,a) := \threshop{ \epsclip \vee \frac{\gaph\xa)}{4(\bsfa_{x,a,h}H \vee 1)}}{\Ekh(x,a)}
\end{align*}
satisfy the bound
\begin{align*}
\vhclippik_h(x) - \vpik_h(x) &\le \Ebarkh\left(x,\pikh(x)\right) + \left(1 + \frac{1}{H}\right) p(x,\pikh(x))^\top \left(\vhclippik_{h+1} - \vpik_{h+1} \right)
\end{align*}
If $\Alg$ is just optimistic, then the above bound holds with $\bsfa_{x,a,h}  = 1$. 
\end{lem}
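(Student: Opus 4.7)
\textbf{Plan for the proof of Lemma~\ref{lem:gap_clipping}.} The plan is to fix $x$ and set $a := \pikh(x)$. By Definition~\ref{defn:halfclip},
\[
\vhclippik_h(x) - \vpik_h(x) = \Ehclip\kh(x,a) + p(x,a)^\top(\vhclippik_{h+1} - \vpik_{h+1}),
\]
and since $\vhclippik_{h+1} - \vpik_{h+1} \ge 0$ (a backward induction from $\Ehclip\kh \ge 0$), it will suffice to establish the local inequality $\Ehclip\kh(x,a) - \Ebarkh(x,a) \le (1/H)\, p(x,a)^\top(\vhclippik_{h+1} - \vpik_{h+1})$. Comparing the two thresholds, $\Ehclip\kh$ and $\Ebarkh$ differ only in the \emph{intermediate regime} $\epsclip \le \Ekh(x,a) < \gaph(x,a)/(4(\bsfa_{x,a,h} H \vee 1))$; outside this range they either vanish together or coincide with $\Ekh(x,a)$, trivializing the bound. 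The intermediate regime forces $\gaph(x,a) > 0$, so by the definition of $\gapmin$ I will have $\gaph(x,a) \ge \gapmin$ available --- a fact I will exploit crucially.

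The key new ingredient is an approximate-optimism inequality for the half-clipped value function,
\[
\vupkh(x) \le \vhclippik_h(x) + \tfrac{\gapmin}{2}, \qquad \text{for all } x,h.
\]
To establish it, I will observe that because $\pikh \in \argmax_a \qupkh(x,a)$, both $\vupkh$ and $\vhclippik_h$ are the $\pik$-values of modified MDPs with pseudo-rewards $r + \Ekh$ and $r + \Ehclip\kh$ respectively, so the standard value-difference identity gives
\[
\vupkh(x) - \vhclippik_h(x) = \Exppik\!\left[\sum_{t=h}^{H}(\Ekt - \Ehclip\kt)(x_t, \pikt(x_t)) \mid x_h = x\right].
\]
By the half-clip cutoff at level $\epsclip = \gapmin/(2H)$, each integrand is at most $\epsclip$, so the sum is bounded by $H\epsclip = \gapmin/2$.

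To close the argument, I will substitute the comparison above into Lemma~\ref{lem:fundamental_gap_bound} and use $\vpikhpl = \vpik_{h+1}$ to obtain the half-clipped fundamental gap bound
\[
\gaph(x,a) \le \Ekh(x,a) + \bsfa_{x,a,h} p(x,a)^\top(\vhclippik_{h+1} - \vpik_{h+1}) + \bsfa_{x,a,h}\tfrac{\gapmin}{2}.
\]
Since $\bsfa_{x,a,h}\gapmin/2 \le \gapmin/2 \le \gaph(x,a)/2$, this rearranges to $\gaph(x,a)/2 \le \Ekh(x,a) + \bsfa_{x,a,h} p(x,a)^\top(\vhclippik_{h+1} - \vpik_{h+1})$. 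In the intermediate regime, $\Ekh(x,a) < \gaph(x,a)/4$, so $\bsfa_{x,a,h} p(x,a)^\top(\vhclippik_{h+1} - \vpik_{h+1}) \ge \gaph(x,a)/4$; dividing by $\bsfa_{x,a,h} H$ and using $\bsfa_{x,a,h} H \le \bsfa_{x,a,h} H \vee 1$ then yields
\[
\tfrac{1}{H} p(x,a)^\top(\vhclippik_{h+1} - \vpik_{h+1}) \ge \tfrac{\gaph(x,a)}{4(\bsfa_{x,a,h} H \vee 1)} > \Ekh(x,a),
\]
which is exactly what is needed since $\Ebarkh(x,a) = 0$ throughout the intermediate regime. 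The edge case $\bsfa_{x,a,h} = 0$ is vacuous: the half-clipped fundamental gap bound then forces $\Ekh(x,a) \ge \gaph(x,a)/2$, incompatible with the defining condition $\Ekh(x,a) < \gaph(x,a)/4$. The non-strong-optimism variant is identical upon replacing $\bsfa_{x,a,h}$ by $1$, as granted by Lemma~\ref{lem:fundamental_gap_bound}.

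The main obstacle is the passage from the $\vupkhpl - \vpikhpl$ formulation of Lemma~\ref{lem:fundamental_gap_bound} to the $\vhclippik_{h+1} - \vpik_{h+1}$ formulation required here: naively $\vhclippik_{h+1} \le \vupkhpl$ goes in the wrong direction. The approximate-optimism comparison $\vupkh \le \vhclippik_h + \gapmin/2$ overcomes this at the cost of an additive $\bsfa_{x,a,h}\gapmin/2$ error, and the half-clip threshold $\epsclip = \gapmin/(2H)$ is calibrated precisely so that this slack is absorbed by the universal lower bound $\gaph \ge \gapmin$ available on the interesting range --- producing the clean factor of $1/2$ that survives the absorption step.
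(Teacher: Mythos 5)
Your proof is correct and follows essentially the same route as the paper's: both rest on the fundamental gap bound (Lemma~\ref{lem:fundamental_gap_bound}) combined with the half-clip comparison $\vupkh \le \vhclippik_h + H\epsclip$ (Lemma~\ref{lem:til_bounds}) to get $\tfrac12\gaph\xa \le \Ekh\xa + \bsfa\, p\xa^\top(\vhclippik_{h+1}-\vpik_{h+1})$, and then argue that below the clipping threshold the surplus is absorbed into a $(1+\tfrac1H)$ inflation of the future term. The only difference is presentational: the paper runs a case analysis parametrized by a constant $c$ chosen at the end as $c=\tfrac12\min\{1,(\bsfa H)^{-1}\}$, whereas you analyze the intermediate regime $\epsclip \le \Ekh\xa < \gaph\xa/(4(\bsfa H\vee 1))$ directly, which amounts to the same calculation.
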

Unfolding the above lemma, and noting that even when $\Alg$ is not strongly optimistic, the clipping ensures that $\Ebarkh\xa \ge 0$, so that we can bound
\begin{align*}
\vhclip^{\pik}_{k,0} - \vpik_0 &= \Exp^{\pik}[\vhclip^{\pik}_1(x_1) - \vpik_1(x_1)] \\
&\le \Exp^{\pik}[\Ebarkh\left(x_1,a_1\right) + \left(1 + \frac{1}{H}\right) p(x,\pikh(x))^\top \left(\vhclip^{\pik}_{2} - \vpik_{2} \right)] \\
&= \Exp^{\pik}[\Ebarkh\left(x_1,a_1\right) + \left(1 + \frac{1}{H}\right) \left(\vhclip^{\pikh}_{2}(x_2) - \vpik_{2}(x_2) \right)] \\
&\le \Exp^{\pik}\left[\sum_{h=1}^H \left( \prod_{h' =2}^h \left(1 + \frac{1}{H}\right)\right)  \Ebarkh(x_h,a_h)\right] ~\le \left(1 + \frac{1}{H}\right)^H\Exp^{\pik}\left[\sum_{h=1}^H  \Ebarkh(x_h,a_h)\right]\\
&\le e\Exp^{\pik}\left[\sum_{h=1}^H  \Ebarkh(x_h,a_h)\right] = e \sum_{x,a}\sum_{h=1}^H \wkh\xa\Ebarkh(x,a),
\end{align*}
where we recall $\wkh\xa = \Pr^{\pik}[(x_h,a_h) = (x,a)]$. Combining with our earlier bound $\vst_0 - \vpik_0 \le  2(\vhclip^{\pik}_{k,0}(x) - \vpik_0(x))$ from Lemma~\ref{lem:half_clip}, we find that $\vst_0 - \vpik_0 \le 2e\sum_{x,a}\sum_{h=1}^H \wkh\xa\Ebarkh(x,a)$, thereby demonstrating Theorem~\ref{thm:surplus_clipping}. 

\subsection{Proof of Lemma~\ref{lem:half_clip}\label{sec:lem:half_clip}}

We can with a crude comparison between the clipped and optimistic value functions.
\begin{lem}\label{lem:til_bounds} We have that $ \Ehclip\kh(x,\pikh(x)) \ge  \Ekh(x,\pikh(x)) - \epsclip$, which implies
\begin{align}\label{eq:bounds}
\vhclippik\kh(x) + (H-h+1) \epsclip \ge \vupkh(x)  \ge \vpikh(x).
\end{align}
\end{lem}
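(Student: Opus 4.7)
The plan is to verify the first (pointwise) inequality directly from the definition of the clipping operator, and then obtain the two-sided bound \eqref{eq:bounds} by a backward induction on $h$, plus the standard consequence of optimism on the right side.

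For the first assertion, recall that $\Ehclip\kh\xa := \threshop{\epsclip}{\Ekh\xa} = \Ekh\xa\cdot\I\{\Ekh\xa \ge \epsclip\}$. Two cases: if $\Ekh\xa \ge \epsclip$, then $\Ehclip\kh\xa = \Ekh\xa \ge \Ekh\xa - \epsclip$; otherwise $\Ehclip\kh\xa = 0 \ge \Ekh\xa - \epsclip$ since $\Ekh\xa < \epsclip$. (Note that we do not need strong optimism here, so the step is unconditional.)

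For the left inequality in \eqref{eq:bounds}, I would induct downward on $h$, with base case $h = H+1$, where $\vhclippik_{k,H+1}(x) = \vup_{k,H+1}(x) = 0$ by Definition~\ref{defn:halfclip} and Definition~\ref{def:optimistic_alg}. For the inductive step, using the Bellman-style identities
\[
\vupkh(x) = r(x,\pikh(x)) + \Ekh(x,\pikh(x)) + p(x,\pikh(x))^{\top}\vupkhpl,
\]
\[
\vhclippik\kh(x) = r(x,\pikh(x)) + \Ehclip\kh(x,\pikh(x)) + p(x,\pikh(x))^{\top}\vhclippik\khpl,
\]
(the former from $\vupkh(x) = \qupkh(x,\pikh(x))$ and the definition of $\Ekh$, the latter from Definition~\ref{defn:halfclip}), subtracting gives
\[
\vupkh(x) - \vhclippik\kh(x) \;=\; \Ekh(x,\pikh(x)) - \Ehclip\kh(x,\pikh(x)) + p(x,\pikh(x))^{\top}\bigl(\vupkhpl - \vhclippik\khpl\bigr).
\]
The first difference is at most $\epsclip$ by the first part, and the inductive hypothesis together with nonnegativity of $p(x,\pikh(x))$ bounds the expectation term by $(H-h)\epsclip$. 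Summing yields $\vupkh(x) - \vhclippik\kh(x) \le (H-h+1)\epsclip$, as desired.

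For the right inequality $\vupkh(x) \ge \vpikh(x)$, this is the standard consequence of optimism: by Definition~\ref{def:optimistic_alg}, $\qupkh(x,a) \ge \qsth(x,a)$ and $\pikh(x) \in \argmax_a \qupkh(x,a)$, so $\vupkh(x) = \max_a \qupkh(x,a) \ge \max_a \qsth(x,a) = \vsth(x) \ge \vpikh(x)$. There is no real obstacle here; the only mild subtlety is keeping straight that $\vhclippik$ is defined relative to $\pik$ (not the greedy policy for the clipped MDP), so the induction on $h$ follows $\pikh$ throughout, which matches the left-hand Bellman identity above.
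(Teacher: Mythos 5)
Your proposal is correct and follows essentially the same route as the paper: the pointwise bound $\Ehclip\kh \ge \Ekh - \epsclip$ by a two-case check on the clipping indicator, and then a telescoping of the Bellman recursions along $\pik$ to accumulate at most $(H-h+1)\epsclip$ of loss, with the right inequality coming from optimism. The only cosmetic difference is that you subtract the two recursions for $\vupkh$ and $\vhclippik\kh$ and induct directly, whereas the paper unrolls each value function relative to $\vpikh$ into a sum of (clipped) surpluses and compares the sums; these are the same argument.
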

\begin{proof}
The bound $\Ehclip\kh(x,\pikh(x)) \ge  \Ekh(x,\pikh(x)) - \epsclip$ follows directly from 
\begin{align*}
\Ehclip\kh(x,a) = \Ekh(x,a)\I(\Ekh(x,a) \ge \epsclip) \geq \Ekh(x,a) - \epsclip.
\end{align*} 
Hence, 
\begin{align*}
\vhclippik\kh(x) - \vpikh(x) &\overeq{i.a} \Exp^{\pik}\left[\sum_{t = h}^H \Ehclip_{k,t}(x_t,\pikh(x_t)) \mid x_h = x\right]\\
&\ge \Exp^{\pik}\left[\sum_{t = h}^H  \Ekt(x_t,\pikh(x_t))) - \epsclip \mid x_h = x \right] \\
&=\Exp^{\pik}\left[\sum_{t = h}^H  \Ekt(x_t,\pikh(x_t)))  \right] - (H - h + 1)\epsclip \\
&\overeq{i.b} \vupkh(x) - \vpikh(x)-   (H - h + 1)\epsclip,
\end{align*}
where $(i.a)$ and $(i.b)$  follow by recursively unfolding the identities $\vhclippik\kh(x) - \vpikh(x) = \Ehclip\kh\xa $ $+ p\xa^\top (\vhclippik\khpl(x) - \vpikh(x))$ and $\vup\kh(x) - \vpikh(x) = \Ekh\xa + p\xa^\top (\vup\khpl(x) - \vpikh(x)).$

\end{proof}
We now turn to proving Lemma~\ref{lem:half_clip}. 
\begin{proof}
The strategy is as follows. We shall introduce the events over $\Pr^{\pik}$, $\calE_h := \{\pikh(x_h) \notin \pisth(x_h)\}$, which is the event that the policy $\pikh$ does not prescribe an optimal action  $x_h$. We further define the events
\begin{align*}
\calA_h = \calE_h \cap \bigcap_{h' < h} \calE_{h'}^c,
\end{align*}
which is the event that the policy $\pik$ agrees with an optimal action on $x_{1},\dots,x_{h-1}$, and disagrees on $x_h$. Below, our goal will be to establish the following two formulae for the suboptimality gap $\vst_0 - \vpik_0$ and $\vhclip^{\pik}_0 - \vpik_0$:
\begin{align}\label{eq:vtil_decomp}
\vhclip^{\pik}_0 - \vpik_0
&\ge \sum_{h=1}^H \Exp^{\pik}[\I(\calA_h) \left\{\gap(x_h,\pikh(x_h)) - H \epsclip + \qst_h(x_h,\pikh(x_h)) - \vpikh(x_h)\right\}  ]
\end{align}
and 
\begin{align}\label{eq:vst_decomp}
\vst_0 - \vpik_0 &= \sum_{h=1}^H \Exp^{\pik}[\I(\calA_h) \ \left\{\gap(x_h,\pikh(x_h)) + \qst_h(x_h,\pikh(x_h)) - \vpikh(x_h)\right\}  ]
\end{align}
Note that on $\calA_h$, $\calE_h = \{\pikh(x_h) \notin \pisth(x_h)\}$ also occurs, and therefore $\gap(x_h,\pikh(x_h)) \ge \gapmin$.  In particular, displays~\eqref{eq:vtil_decomp} and~\eqref{eq:vst_decomp} both imply
\begin{align*}
\vhclip^{\pikh}_0 - \vpik_0 &\overset{(i)}{\ge} \sum_{h=1}^H \Exp^{\pik}[\I(\calA_h) \left\{\frac{1}{2}\gap(x_h,\pikh(x_h))  + \qst_h(x_h,\pikh(x_1)) - \vpikh(x)\right\}  ]\\
&\overset{(ii)}{\ge} \frac{1}{2}\sum_{h=1}^H \Exp^{\pik}[\I(\calA_h) \left\{\gap(x_h,\pikh(x_j)) + \qst_h(x_h,\pikh(x_h)) - \vpikh(x)\right\}  ] \\
&\overset{(iii)}{\ge} \frac{1}{2}( \vst_0 - \vpik_0),
\end{align*}
where $(i)$ uses $\epsclip = \frac{\gapmin}{2H}$ and display~\eqref{eq:vtil_decomp}, $(ii)$ uses that  $\qst_h(x_h,\pikh(x_h)) - \vpikh(x) \ge 0$, and $(iii)$ uses display~\eqref{eq:vst_decomp}.

Let us start with proving~\eqref{eq:vtil_decomp}. First, consider a stage $h$, state $x$, and suppose that  $\pikh(x) \notin \pisth(x)$.  Observe that by Lemma~\ref{lem:til_bounds}, optimism, and the definition of $\gaph\xa$, we have that for any $\astar \in \pisth(x)$,
\begin{align*}
H \epsclip + \vhclippik\kh(x) &\ge \vupkh(x) = \qupkh(x,\pikh(x)) \ge \qupkh(x,\astar)\\
&\ge \qst_h(x,\astar) = \gap(x,\pikh(x)) + \qst_h(x,\pikh(x)).
\end{align*}
Subtracting, we find that for $\pikh(x) \notin \pisth(x)$,
\begin{align*}
 \vhclippik\kh(x) - \vpikh(x) &\ge  \gap(x,\pikh(x)) - H \epsclip  + \qst_h(x,\pikh(x)) - \vpikh(x) .\numberthis\label{eq:non_optimal_case}
\end{align*}
Now, on the other hand, if $\pikh(x) \in \pisth(x)$, then, 
\begin{align*}
\vhclippik\kh(x) - \vpikh(x) &= \Ehclip\kh(x,\pikh(x)) + r(x,\pikh(x)) + p(x,\pikh(x))^\top \vhclip_{k,h+1}^\pikh  \\
&\qquad- r(x,\pikh(x)) -p(x,\pikh(x))^\top \vpikh\\
&= \Ehclip\kh(x,\pikh(x)) +  p(x,\pikh(x))^\top ( \vhclip_{k,h+1}^\pikh  - \truev^{\pik}_{h+1}) \numberthis\label{eq:vtil_identity}\\
&\overset{(i)}{=} \Ehclip\kh(x,\pikh(x)) +  p(x,\pikh(x))^\top \delvhclip_{h+1} \\
&\overset{(ii)}{\ge}  p(x,\pikh(x))^\top \delvhclip_{h+1},\numberthis\label{eq:optimal_case}
\end{align*}
where in $(i)$ we have defined the increment $\delvhclip_h :=\vhclip_{k,h}^\pikh  - \truev^{\pik}_{h}$ with $\delvhclip_{H+1} = 0$, and $(ii)$ holds since $\Ehclip\kh(x,\pikh(x)) = \Ekh(x,\pikh(x))\I(\Ekh(x,\pikh(x)) \ge \epsclip) \ge 0$.

Now, recalling that $\calE_h$ denotes the event that $\pikh(x) \notin \pisth(x)$, we have
\begin{align*}
\Exp^{\pik}[\delvhclip_1] &\ge \Exp^{\pik}\left[\I(\calE_1) \left\{\gap(x_1,\pi_{k,1}(x_1)) - H \epsclip + \qst_h(x_1,\pi_{k,1}(x_1)) - \vpikh(x)\right\}   \right] \tag*{(by Eq.~\eqref{eq:non_optimal_case})}\\
&+ \Exp^{\pik}\left[\I(\calE_1^c) p(x_1,\pi_{k,1}(x_1))^\top \delvhclip_2  \right].\tag*{(by Eq.~\eqref{eq:optimal_case})}
\end{align*}
We continue with
\begin{align*}
\hspace{.5in}&\hspace{-.5in}\Exp^{\pik}\left[\I(\calE_1^c) p(x_1,\pi_{k,1}(x_1))^\top \delvhclip_2 \right]\\
\ge& \Exp^{\pik}\left[\I(\calE_1^c)\I(\calE_2) \left\{\gap(x_2,\pi_{k,2}(x_2)) - H \epsclip + \qst_2(x_2,\pi_{k,2}(x_2)) - \vpik_2(x)\right\} \right]\\
&+ \Exp^{\pik}\left[\I(\calE_1^c)\I(\calE_2^c) p(x_2,\pikh(x_2))^\top \delvhclip_3 \right].
\end{align*}
Recalling the event $\calA_h = \calE_h \cap \bigcap_{h' < h} \calE_{h'}^c$, we can continue the above induction to find that,
\begin{align*}
\Exp^{\pik}[\delvhclip_1] &\ge \sum_{h=1}^H \Exp^{\pik}[\I(\calA_h) \left\{\gap(x_h,\pikh(x_h)) - H \epsclip + \qst_h(x_h,\pikh(x_h)) - \vpik_h(x)\right\}  ]\\
&\qquad+ \underbrace{\Exp^{\pik}[\I(\bigcap_{h=1}^H \calE_{h}^c) p(x_h,\pikh(x_h))^\top \delvhclip_{H+1}  ]}_{= 0}, 
\end{align*}
as needed.  Now let's prove~\eqref{eq:vst_decomp}.  We can always write
\begin{align*}
\vsth(x) - \vpikh(x) = \gaph(x,a) + \qst_h(x,\pikh(x)) - \vpikh(x),
\end{align*} 
where $\gaph(x,a) = 0$ when $\pisth(x) \in \pikh(x)$, that is, on $\calE^c$. Hence, the same line of reasoning used to prove Eq.~\eqref{eq:vtil_decomp} (omitting the subtracted $\epsclip H$), verifies Eq.~\eqref{eq:vst_decomp}.
\end{proof}

\subsection{Proof of Lemma~\ref{lem:fundamental_gap_bound}\label{sec:lem:fundamental_gap_bound}}
	\begin{proof}
	For simplicity, set $a = \pikh(x)$, and let $\astar \in \pisth(x)$ be an action which witnesses the $\bsfa$ transition-suboptimality condition. We then have
	\begin{align*}
	\vupkh(x) &\overset{(i)}{=} \qupkh(x,a)~\overset{(ii)}{\geq} \qupkh(x,\astar)\\
	&= \qsth(x,\astar) + \left(\qupkh(x,\astar) -  \qsth(x,\astar)\right)  \\
	&~\overset{(iii)}{=} \gaph(x,a) + \qsth(x,a) + \left(\qupkh(x,\astar) -  \qsth(x,\astar\right),
	\end{align*} 
	where $(i)$ is by definition of $\vupkh(x)$, $(ii)$ is since $a = \pikh(x) = \argmax_{a'} \qupkh(x,a')$, and $(iii)$ is the definition of $\gaph(x,a)$. Rearranging, we have 
	\begin{align}
	\gaph \le \vupkh(x)  - \qsth(x,a)- \left(\qupkh(x,\astar) -  \qst(x,\astar\right) \label{eq:gaph_intermediate}
	\end{align}
	If $\Alg$ is not necessarily strongly optimistic then we bound $\qupkh(x,\astar) -  \qst(x,\astar) \ge 0$ and $\qsth(x,a) \ge \vpikh(x)$, yielding 
	\begin{align*}
	\gaph(x,a) &\le  \vupkh(x) -\vpikh(x) \\
	&= \qupkh(x,a) -\vpikh(x) \\
	&= \Ekh(x,a) + r(x,a) + p(x,a)^\top \vupkhpl -\vpikh(x) \\
	&= \Ekh(x,a) + p(x,a)^\top(\vupkhpl -\vpikhpl )
	\end{align*}
	which corresponds to the desired bound for $\bsfa = 1$.

	When $\Alg$ is strongly optimistic, we handle~\eqref{eq:gaph_intermediate} more carefully. Specifically, we compute
	\begin{align*}
	\vupkh(x) - \qsth(x,a) &= \Ekh(x,a) + r(x,a) + p(x,a)^\top \vupkhpl - \left(r(x,a) + p(x,a)^\top \vsthpl\right) \\
	&= \Ekh(x,a) + p(x,a)^\top ( \vupkhpl - \vsthpl).
	\end{align*}
	Moreover, recalling that $a^* \in \pisth(x)$, we have
	\begin{align*}
	\qupkh(x,a^*) -  \qst(x,a^*) &= r(x,a^*) + \Ekh(x,a^*) +  p(x,a^*)^\top \vupkhpl - r(x,a^*) -  p(x,a^*)^\top \vsthpl \\
	&=  \Ekh(x,a^*) + p(x,a^*)^\top (\vupkhpl - \vsthpl). 
	\end{align*}
	where the last inequality uses strong optimism of $\Alg$. Hence, 
	\begin{align*}
	\gaph\xa &\le \vupkh(x) -  \qsth(x,a) - \left(\qupkh(x,\astar) -  \qst(x,\astar)\right) \\
	&= \Ekh(x,a) + p(x,a)^\top ( \vupkhpl - \vsthpl) - \left(\Ekh(x,a^*) + p(x,a^*)^\top (\vupkhpl - \vsthpl)\right) \\
	&= \Ekh(x,a) - \Ekh(x,a^*) + (p(x,a)-p(x,a^*))^\top (\vupkh - \vsthpl)\\
	&\le \Ekh(x,a) + (p(x,a)-p(x,a^*))^\top (\vupkh - \vsthpl) \tag*{(Strong Optimism)}\\
	&\le \Ekh(x,a) + \bsfa p(x,a)^\top (\vupkh - \vsthpl),
	\end{align*}
	where the lastar line uses the component-wise inequalityes $p(x,a)-p(x,\astar) \le \bsfa p(x,a)$ due to the fact that $\astar$ witnesses the $\bsfa$ transition-suboptimality, and $\vupkh - \vsthpl \ge 0$ due to optimism. 
	\end{proof}

\subsection{Proof of Lemma~\ref{lem:gap_clipping}\label{sec:lem:gap_clipping}}

	\begin{proof} For ease, we suppress the dependence of $\bsfa$ on $\xah$. By our fundamental gap bound (Lemma~\ref{lem:fundamental_gap_bound}) and then Lemma~\ref{lem:til_bounds}, we have that 
	\begin{align*}
		\gaph\xa &\le \Ekh(x,a) + \bsfa \cdot p(x,a)^\top (\vupkhpl - \vpikhpl) \\
		&\le \Ehclip\kh(x,a) + \bsfa \cdot p(x,a)^\top (\vhclippik\khpl - \vpikhpl)  + (H - h + 1) \bsfa \epsclip\\
		&\le \Ehclip\kh(x,a) + \bsfa \cdot p(x,a)^\top (\vhclippik\khpl - \vpikhpl)  + \gaph\xa/2,
	\end{align*}
	where the inequality bounds $\bsfa (H - h + 1)\epsclip \le \bsfa \gapmin/2 \le \bsfa \cdot \gaphxa/2 \le \gaphxa/2$. This yields
	\begin{align*}
	\tfrac{1}{2}\gaphxa \le \Ehclip\kh(x,a) + \bsfa \cdot p(x,a)^\top (\vhclippik\khpl - \vpikhpl). 
	\end{align*}
	Now, fix a constant $c \in (0,1]$ to be chosen later. Either we have that $\Ehclip\kh(x,a) \ge \tfrac{c}{2} \gaphxa$, or otherwise,
	\begin{align*}
	\bsfa \cdot p(x,a)^\top (\vhclippik\khpl - \vpikhpl) \ge (1-c)\frac{1}{2}\gaphxa \ge \frac{1 - c}{c}\Ehclip\kh(x,a), 
	\end{align*}
	which can be rearranged into 
	\begin{align*}
	\Ehclip\kh(x,a) \le \frac{c\bsfa}{1-c}p(x,a)^\top (\vhclippik\khpl - \vpikhpl).
	\end{align*}
	Hence, we have
	\begin{align*}
	\Ehclip\kh(x,a) &\le \Ehclip\kh(x,a) \I\left\{ \Ehclip\kh(x,a) \ge \tfrac{c}{2}\gaphxa\right\} \\
	&\qquad+ \frac{c\bsfa}{1-c} p(x,a)^\top (\vhclippik\khpl - \vpikhpl) \I\left\{ \Ehclip\kh(x,a) < \tfrac{c}{2}\gaphxa\right\}\\
	&\le \Ehclip\kh(x,a) \I\left\{ \Ehclip\kh(x,a) \ge \tfrac{c}{2} \gaphxa\right\} + \frac{c\bsfa}{1-c} p(x,a)^\top (\vhclippik\khpl - \vpikhpl), 
	\end{align*}
	and thus, 
	\begin{align*}
	\Ehclip\kh(x,a) + p(x,a)^\top (\vhclippik\khpl - \vpikhpl)  &\le \Ehclip\kh(x,a) \I\left\{ \Ehclip\kh(x,a) \ge \tfrac{c}{2} \gaphxa\right\}   \\
	&+ (1 + \frac{c\bsfa}{1-c}) p(x,a)^\top (\vhclippik\khpl - \vpikhpl).
	\end{align*}
	In particular, choosing $c = \frac{1}{2}\min\{1,(\bsfa H)^{-1})\}$, we have $(1 + \frac{c\bsfa}{1-c}) \le 1 + \frac{1}{H}$, and 
	\begin{align*}
	\tfrac{1}{2} = \frac{(1 \wedge (\bsfa H)^{-1})}{4} = \frac{1}{4(\bsfa H \vee 1)},
	\end{align*}
	so that $\Ehclip\kh(x,a) \I\left\{ \Ehclip\kh(x,a) \ge \frac{c}{2} \gaphxa\right\} = \Ebarkh(x,a)$. This concludes the proof. 
	\end{proof}

\newpage
\part{$\strongeuler$ and its surpluses\label{part:strongeuler}}


\section{The $\strongeuler$ Algorithm\label{sec:alg:strong_euler}}
Before continuing, let us define a logarithmic factor we shall use throughout:
\begin{align}
\Lfactor(u) := \sqrt{ 2\log(10 M^2\max\{u,1\}/\delta)},\label{eq:Lfactor}
\end{align}
where we recall that $M=SAH \ge 2$. This section formally presents $\strongeuler$, which makes two subtle modification of the $\euler$ algorithm of~\cite{zanette2019tighter}.

First, similar to \citep{dann2017unifying,dann2018policy}, $\strongeuler$ refines the log factors in the bonuses to depend on the number of samples $\nk\xa$ via $\Lfactor(\nk\xa) \propto \log(M \nk\xa/\delta)$, rather than the overall time $T = KH$ via $\Lfactor(\nk\xa) \propto \log(MT/\delta)$, which is necessary to ensure the optimal $\log T$ regret. Following~\citep{dann2017unifying,dann2018policy}, our confidence bounds can be slightly refined using law-of-iterated logarithm bounds, but for simplicity we do not pursue this direction here.

Second, $\strongeuler$ satisfies \emph{strong optimism}. We remind the reader that strong optimism is not necessary to achieve gap dependent bounds, but can achieve sharper bounds for settings with simple transition dynamics like contextual bandits. The $\euler$ algorithm, or its predecessors (e.g.~\cite{azar2017minimax}), would also achieve-gap dependent bounds due to our analysis. Moreover, running these algorithms with the refined $\log(M \nk\xa/\delta)$ log factors would also yield $\log T$- asymptotic regret, whereas implementing $\log(MT/\delta)$ confidence intervals may yield asymptotic regret that scales as $\log^2T$ (see Remark~\ref{rem:anytime_log}).

The $\euler$ algorithm proceeds by standard optimistic value iteration, with carefully chosen exploration bonuses, and keeps track of various variance-related quantities:
\begin{algorithm}[h]
\textbf{Input: }\\
\textbf{Initialized: } For each $a\in\actions$ $x,x' \in \states$,
 $n_{1}(x,a) = 0$, $n_{1}(x'\mid x,a) = 0$, $\rsum_1 = 0 $, $\rsumsq_{1} = 0$, $\phat_1(x,a) = 0$, $\Varhat_1[R(x,a)] = 0$\\
\For {$k = 1,2,\dots$}
{
	$\vup_{k,H+1} \leftarrow 0$\\
	\For{$h = H,H-1,\dots,1$}
	{

		\For{$x \in \states$}
		{
			\For{$a \in \actions$}
			{
				Call $\mathsf{ConstructBonuses}$.\\
				$\qupkh(x,a) \leftarrow \min\{H - h+1,\rhat\xa +  \phat\kh\xa^\top \vupkhpl + $\\
				\Indp \qquad\qquad\qquad $\bonusprob\kh\xa + \bonusrew\subk\xa + \bonusstrong\kh\xa\}$

			}
			$\pikh(x) := \argmax_{a} \qupkh(x,a)$, $\ahat \leftarrow \pikh(x)$\\
			$\vupkh(x) := \qupkh(x,\ahat) $\\
			$\vlowkh(x) = \max\{0,\rhat(x,\ahat) - \bonusrew\kh(x,\ahat) + \phat\kh(x,\ahat)^\top \vlowkhpl - \bonusprob\kh(x,\ahat) - \bonusstrong\kh(x,\ahat)\}$. 
		}
	}
	\textbf{Call} $\mathsf{RolloutAndUpdate}(k)$.

	}

\caption{$\strongeuler$}\label{alg:strong_euler}
\end{algorithm}

The $\mathsf{RolloutAndUpdate}$ function (Algorithm~\ref{alg:rollout_update} below) executes one trajectory according to the policy $\pik$, and records all count- and variance- data regarding the relevant rewards and transition probabilities. Finally, the bonuses are are defined in Algorithm~\ref{alg:bonuses}.

~\begin{algorithm}[h]
\textbf{Input: } Global current episode $k$, global counts and empirical probabilities.
Initialize $k+1$-th episode counts: $n_{k+1}(\cdot,\cdot) \leftarrow n_{k}(\cdot,\cdot)$, $n_{k+1}(\cdot\mid \cdot, \cdot) \leftarrow n_{k}(\cdot\mid\cdot,\cdot)$, $\rsum_{k+1}(\cdot,\cdot) \leftarrow \rsum_{k}(\cdot,\cdot) $, $\rsumsq_{k+1}(\cdot,\cdot) \leftarrow \rsumsq_{k}(\cdot,\cdot) $.\\
	\For{$h = 1,\dots,H$}
	{
		Observe state $x_h$, play $a_h = \pikh(x_h)$, recieve reward $R$ and view next state $x_{h+1}$.\\
		$\nk(x_{h},a_h) \pluseq 1$, $\nk(x_{h'}| x_{h},a_h) \pluseq 1$, $\rsum(x,a) \pluseq R$, $\rsum(x,a) \pluseq R^2$
	}
	\For{$a \in \actions,x \in \states$}
	{
		\For{$x' \in \states$}
		{
			$\phat_{k+1}(x'|x,a) = \frac{\nk(x_{h'}| x_{h},a_h)}{\nk(x_{h},a_h)}$
		}
		$\rbar_{k+1}(x,a) = \frac{\rsum_{k+1}}{\nk(x_{h},a_h)}$, $\Varhat_{k+1}[R(x,a)] = \frac{\rsumsq_{k+1}}{\nk(x_{h},a_h)}- \rbar_{k+1}(x,a)^2$.
	}

	, 
	\caption{$\mathsf{RolloutAndUpdate}(k)$\label{alg:rollout_update}}
\end{algorithm}

\begin{algorithm}
\textbf{Bonuses: }
\begin{align}
 \bonusrew\subk\xa &:= 1 \wedge \left(\sqrt{\frac{2\Varhat_{k}[ R(x,a)]\Lfactor(\nk\xa)}{\nk\xa}} + \frac{8\Lfactor(\nk\xa)}{3(\nk\xa - 1)}\right) \label{eq:bonusrew_def}\\
 \bonusprob\kh\xa &:= H \wedge \Bigg{(}\sqrt{\frac{2\Var_{\phatk\xa}[ \vupkhpl]\Lfactor(\nk\xa)}{\nk\xa}} + \frac{8H\Lfactor(\nk\xa)}{3(\nk\xa - 1)} \nonumber\\
&\qquad+ \sqrt{\frac{2\Lfactor(\nk\xa)\|\vlowkhpl - \vupkhpl\|_{2,\phatk\xa}^2}{\nk\xa}}\Bigg{)}.\label{eq:bonusprob_def} \\
\bonusstrong\kh\xa &:=  \|\vupkhpl -  \vlowkhpl\|_{2,\phatk\xa}\sqrt{\frac{S\Lfactor(\nk\xa)}{\nk(x,a)}}    + \frac{8}{3}\frac{S H\Lfactor(\nk\xa)}{\nk\xa} \label{eq:bonusstrong_def} 
\end{align}
\caption{$\mathsf{ConstructBonuses}$\label{alg:bonuses}}
\end{algorithm}

\pagebreak

\section{Analysis of $\strongeuler$: Proof of Proposition~\ref{prop:surplus}\label{sec:prop_surplus_proof}}
Proposition~\ref{prop:surplus} requires demonstrating a lower bound on the surplus, $0 \le \Ekh\xa$, thereby establishing strong optimism, as well as an upper bound on the surplus, which we shall use to analyze the same complexity. We address strong optimism first in the next subsection, and then the upper bound in the following subsection. Throughout, we will assume that a good event $\goodconcentration$ holds. To keep the proofs modular, the event $\goodconcentration$ will only appear as an assumption in the supporting lemmas used in Sections~\ref{sec:optimism_proof} and~\ref{sec:surplus_upper_bound_proof}. Then, in Section~\ref{sec:concentration_sec}, we formally define $\goodconcentration$ in terms of 6 constituent events, establish $\Pr[\goodconcentration] \ge 1 - \frac{\delta}{2}$, and conclude with proofs of the supporting lemmas which rely on $\goodconcentration$. We remark that many of the arguments in this section are similar to those from \cite{zanette2019tighter}, with the main differences being strong optimism and the additional care paid to log-factors, necessary for $\log T$ regret. Again, recall the definition $\Lfactor(u) := \sqrt{ 2\log(10 M^2\max\{u,1\}/\delta)}$. 

\subsection{Proof of Optimism\label{sec:optimism_proof}}
Here we establish the optimism of $\strongeuler$, and in particular, the bound $\Ekh\xa \ge 0$. 
\begin{prop}\label{prop:main_euler_properties} Under the good event $\goodconcentration$,
\begin{enumerate}
	\item[(a)] $\strongeuler$ is \emph{optimistic}: $\pikh(x) = \argmax_{a}\qupkh(x,a)$, where $\qupkh(x,a) \ge \qsth(x,a)$ for all $h,x,a$. In particular, $\vupkh(x) \ge \vsth(x)$ for $h \in [0:H]$.
	\item[(b)] $\strongeuler$ is \emph{strongly optimistic} $\Ekh(x,a) := \qupkh(x,a) - r(x,a) - p(x,a)^\top \vupkhpl(x) \ge 0 $. 
	\item[(c)] $\vlowkh \le \vpikh \le \vsth \le \vupkh$
\end{enumerate}
\end{prop}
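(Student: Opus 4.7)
The three claims are tightly coupled, so I would prove them jointly by backward induction on $h$, running from $h=H+1$ down to $h=1$. The base case $h=H+1$ is immediate since $\vup_{k,H+1} = \vlow_{k,H+1} = \vst_{H+1} = \vpik_{H+1} = 0$. For the inductive step, the inductive hypothesis gives $\vlow_{k,h+1} \le \vpik_{h+1} \le \vst_{h+1} \le \vup_{k,h+1}$; I would use this to establish strong optimism (b) first, then deduce optimism (a) and the lower-bound claim in (c) as easy corollaries.

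The heart of the argument is (b): showing that for every $(x,a)$,
\[
\rhat\xa + \phat_k\xa^\top \vup_{k,h+1} + \bonusrew\kh\xa + \bonusprob\kh\xa + \bonusstrong\kh\xa \;\ge\; r(x,a) + p(x,a)^\top \vup_{k,h+1},
\]
so that the $\min$ with $H-h+1$ in the definition of $\qupkh$ does not break optimism (one uses $\qsth\xa \le H-h+1$, which follows since rewards lie in $[0,1]$). For the reward part, a standard empirical Bernstein inequality, supplied by the concentration event $\goodconcentration$, gives $|r\xa - \rhat\xa| \le \bonusrew\kh\xa$. For the transition part, the key decomposition is
\[
(p\xa - \phat_k\xa)^\top \vup_{k,h+1} \;=\; (p\xa - \phat_k\xa)^\top \vst_{h+1} \;+\; (p\xa - \phat_k\xa)^\top (\vup_{k,h+1} - \vst_{h+1}).
\]
An empirical Bernstein bound, together with a variance-transfer inequality comparing $\Var_p[\vst_{h+1}]$ to $\Var_{\phat_k\xa}[\vup_{k,h+1}]$ (which is exactly what $\bonusprob\kh\xa$ is designed to dominate, via its first two summands and the third ``slack'' term $\sqrt{2\Lfactor \|\vup_{k,h+1}-\vlow_{k,h+1}\|_{2,\phat_k\xa}^2/\nk\xa}$), controls the first piece. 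The second piece, where $\vup_{k,h+1}$ is itself data-dependent, is the only genuinely delicate point; I would bound it by a matrix Bernstein / uniform $\ell_1$ deviation applied to $\phat_k\xa - p\xa$ times $\|\vup_{k,h+1} - \vst_{h+1}\|_\infty$, and use the inductive hypothesis to replace $\|\vup_{k,h+1} - \vst_{h+1}\|$ by $\|\vup_{k,h+1} - \vlow_{k,h+1}\|$, which is precisely what $\bonusstrong\kh\xa$ is calibrated to absorb (the $\sqrt{S\Lfactor/\nk\xa}$ factor comes from the uniform $\ell_1$ deviation over $S$ states).

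The main obstacle is exactly this last bound: the need to control $(\phat_k\xa - p\xa)^\top w$ for a \emph{random} vector $w = \vup_{k,h+1} - \vst_{h+1}$ that depends on the sample, which forces a uniform-in-$w$ concentration argument and is the reason $\bonusstrong$ has an $\sqrt{S}$ factor rather than a pure variance-style bound. This is where strong optimism — absent from~\cite{zanette2019tighter} — costs us: ordinary optimism would let us replace $\vup_{k,h+1}$ by $\vst_{h+1}$ in the definition of the surplus and avoid this step entirely, but to get $\Ekh\xa \ge 0$ for \emph{all} $(x,a)$ we must pay it. I will package these deviation bounds as lemmas that are consequences of $\goodconcentration$ (to be verified in Section~\ref{sec:concentration_sec}) and only invoke them abstractly here.

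Once (b) is in hand, (a) follows immediately: $\qupkh\xa \ge r\xa + p\xa^\top \vup_{k,h+1} \ge r\xa + p\xa^\top \vst_{h+1} = \qsth\xa$, using $\vup_{k,h+1} \ge \vst_{h+1}$ componentwise (inductive hypothesis) and $p\xa \ge 0$; then $\vupkh(x) = \max_a \qupkh\xa \ge \qsth(x,\pist_h(x)) = \vsth(x)$. The inequality $\vpikh \le \vsth$ is just optimality of $\pist$. Finally, $\vlowkh \le \vpikh$ is proved by the mirror-image argument: writing $\ahat = \pikh(x)$, I would show $\rhat(x,\ahat) - \bonusrew\kh(x,\ahat) + \phat_k(x,\ahat)^\top \vlow_{k,h+1} - \bonusprob\kh(x,\ahat) - \bonusstrong\kh(x,\ahat) \le r(x,\ahat) + p(x,\ahat)^\top \vlow_{k,h+1} \le r(x,\ahat) + p(x,\ahat)^\top \vpik_{h+1} = \vpikh(x)$, using the same concentration bounds and the inductive hypothesis $\vlow_{k,h+1} \le \vpik_{h+1}$; nonnegativity of $\vpikh$ handles the truncation at $0$.
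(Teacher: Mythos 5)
Your overall architecture is the same as the paper's: backward induction on $h$ with the chain $\vlowkhpl \le \vpikhpl \le \vsthpl \le \vupkhpl$ as the inductive hypothesis, strong optimism (b) proved first by splitting the un-truncated bonus expression into a reward piece, a piece $(\phat_k\xa - p\xa)^\top\vsthpl$ absorbed by $\bonusprob\kh\xa$, and a piece $(\phat_k\xa - p\xa)^\top(\vupkhpl - \vsthpl)$ absorbed by $\bonusstrong\kh\xa$; then (a) and the lower half of (c) obtained exactly as you describe. Your handling of the $\min\{H-h+1,\cdot\}$ truncation and the mirror argument for $\vlowkh \le \vpikh$ also match.

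The one step that would not close as written is your bound on the data-dependent piece. You propose Hölder: a uniform $\ell_1$ deviation on $\phat_k\xa - p\xa$ times $\|\vupkhpl - \vsthpl\|_\infty$. That yields a bound of order $H\sqrt{S\Lfactor(\nk\xa)/\nk\xa}$, whereas the bonus you must dominate is
\[
\bonusstrong\kh\xa = \|\vupkhpl - \vlowkhpl\|_{2,\phatk\xa}\sqrt{\tfrac{S\Lfactor(\nk\xa)}{\nk\xa}} + \tfrac{8}{3}\tfrac{SH\Lfactor(\nk\xa)}{\nk\xa},
\]
whose leading coefficient is $\|\vupkhpl - \vlowkhpl\|_{2,\phatk\xa}$, which can be far smaller than $H$ once the value function is well estimated. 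So the $\ell_1$--$\ell_\infty$ pairing proves a deviation bound that is \emph{not} below $\bonusstrong\kh\xa$, and strong optimism of this particular algorithm does not follow. The paper's Lemma~\ref{lem:strongbonus} instead applies a per-state Bernstein bound $|\phat(x'|x,a) - p(x'|x,a)| \lesssim \sqrt{p(x'|x,a)\Lfactor(\nk\xa)/\nk\xa} + \Lfactor(\nk\xa)/\nk\xa$, weights each by $|V_2(x') - V_1(x')|$, and then applies Cauchy--Schwarz over the $S$ states to produce exactly the factor $\sqrt{S}\,\|V_2 - V_1\|_{2,\phatk\xa}$; a monotonicity fact ($\vlowkhpl \le V_1 \le V_2 \le \vupkhpl$ implies $\|V_2 - V_1\|_{2,\phat} \le \|\vupkhpl - \vlowkhpl\|_{2,\phat}$) then lands precisely on the form of $\bonusstrong\kh\xa$. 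Replacing your Hölder step with this per-coordinate argument repairs the plan; everything else goes through.
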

\begin{proof}
The policy choice $\pikh(x) = \argmax_{a}\qupkh(x,a)$ holds by definition of the algorithm. We now give the remainder of the argument by inducting backwards on $h$. For $h = H+1$, $\vup_{k,H+1} = \vlow_{k,H+1} = \vst_{k,H+1} = \vpikhpl = 0$. Now, suppose as an inductive hypothesis that $\vupkhpl \ge \vsthpl \ge \vpikhpl \ge \vlowkhpl$, and $\Ekhpl(x,a) \ge 0$ for all $x,a$. 

First, we shall show that $\Ekh(x,a) \ge 0$ for all $x,a$. This will establish the induction for point $b$. It also establishes $(a)$, since then $\qupkh(x,a) \ge r(x,a) + p(x,a)^\top \vupkhpl(x) \ge r(x,a) + p(x,a)^\top \vsthpl = \qsth(x,a)$, proving optimism. To this end, note that
\begin{align*}
\Ekh(x,a)&:= \qupkh(x,a) - r(x,a) - p(x,a)^\top \vupkhpl(x)\\
&:=  \min\{H - h+1,\rhat\xa +  \phat\kh\xa^\top \vupkhpl + \bonusprob\kh\xa + \bonusrew\subk\xa + \bonusstrong\kh\xa\} \\
&\qquad- r(x,a) - p(x,a)^\top \vupkhpl(x).
\end{align*}
Since $ r(x,a) + p(x,a)^\top \vupkhpl(x) \le H-h+1$, it suffices to show that
\begin{align*}
\rhat\xa +  \phat\kh\xa^\top \vupkhpl + \bonusprob\kh\xa + \bonusrew\subk\xa + \bonusstrong\kh\xa  - r(x,a) - p(x,a)^\top \vupkhpl(x) \ge 0.
\end{align*}
Grouping the terms, it suffices to show that $\rhat\xa - r(x,a) + \bonusrew\subk\xa  \ge 0$, and that
\begin{align*}
0 &\le (\phat\kh\xa^\top - p(x,a))^\top \vupkhpl(x)+ \bonusprob\kh\xa  + \bonusstrong\kh\xa\\
&= \left\{(\phat\kh\xa^\top - p(x,a))^\top \vsthpl(x)+ \bonusprob\kh\xa \right\} \\
&\qquad+ \left\{\phat\kh\xa^\top - p(x,a))^\top (\vupkhpl(x)-\vsthpl(x))  + \bonusstrong\kh\xa\right\}.
\end{align*}
We lower bound $\rhat\xa - r(x,a) + \bonusrew\subk\xa $ and  $(\phat\kh\xa^\top - p(x,a))^\top \vsthpl(x)+ \bonusprob\kh\xa$ by zero with the following lemma:
\begin{lem}\label{lem:empirical_diff_bound} On the good concentration event $\goodconcentration$, it holds that
\begin{align*}
|\rhat\xa - r\xa| &\le \bonusrew\subk\xa, \\
|(\phat(x,a) - \p(x,a))^\top\vsthpl| &\le \bonusprob\kh\xa  \quad \text{if }   \vlowkhpl\le\vsthpl \le \vupkhpl
\end{align*}
\end{lem}
We conclude the proof of (b) with the following lemma, which lets us bound 
\begin{align*} (\phat\kh\xa^\top - p(x,a))^\top (\vupkhpl(x)-\vsthpl(x))  + \bonusstrong\kh\xa\ge 0
\end{align*}
Precisely we apply the following lemma with $V_2 = \vupkhpl$ and $V_1 = \vsthpl$:
\begin{lem}\label{lem:strongbonus} Suppose that $\goodprob \supset \goodconcentration$ holds, and suppose that $V_1,V_2:\states \to \R$ satisfies $\vlowkhpl \le V_1 \le V_2 \le \vupkhpl$. Then, 
\begin{align*}
\left| (\phat(x,a) - \p(x,a))^\top(V_1 - V_2)\right| \le \bonusstrong\kh\xa 
\end{align*}
\end{lem}

This finally establishes (b). We conclude by establishing (c). Here, we note that by definition $\vpikh \le \vsth $, and $\vsth \le \vupkh$ as show above. Hence, it suffices to show $\vlowkh \le \vpikh$. We begin with the inequality
\begin{align*}
  \vpikh(x) &= \p\xast^\top\vpikhpl + r\xast \\
  &=   \phat\xast^\top \vpikhpl + \rhat\xast +  (r\xast - \rhat \xast) +  (\p\xast^\top - \phat\xast^\top)\vpikhpl\\
   &= \phat\xast^\top \vpikhpl + \rhat\xast +  (r\xast - \rhat \xast) +  (\p\xast^\top - \phat\xast^\top)\vsthpl \\
   &\qquad+ (\p\xast - \phat\xast)^\top(\vpikhpl - \vsthpl)\\
   &\ge \phat\xast^\top \vpikhpl + \rhat\xast - \bonusrew\kh\xa - \bonusprob\kh\xa - \bonusstrong\kh\xa,
\end{align*}
where the last inequality uses the bounds $(r\xast - \rhat \xast)  \ge - \bonusrew\kh\xa$ and $(\p\xast^\top - \phat\xast^\top)\vsthpl \ge -  \bonusprob\kh\xa$ on $\goodconcentration$ due to Lemma~\ref{lem:empirical_diff_bound}, and bounds $(\p\xast - \phat\xast)^\top(\vpikhpl - \vsthpl) \ge - \bonusstrong\kh\xa$ by applying Lemma~\ref{lem:strongbonus} with $V_1 = \vpikhpl$ and $V_2 = \vsthpl$, which satisfy $\vlowkhpl \le V_1 \le V_2 \le \vupkhpl$ by our inductive hypothesis (namely, $\vupkhpl \ge \vsthpl \ge \vpikhpl \ge \vlowkhpl$). Since $ \vpikh(x) \ge 0$ as well, and since $\vpikhpl \ge \vlowkhpl$ by our inductive hypothesis, we therefore have 
\begin{align*}
  \vpikh(x) &\ge 0 \vee \phat\xast^\top \vlowkhpl + \rhat\xast - \bonusrew\kh\xa - \bonusprob\kh\xa - \bonusstrong\kh\xa = \vlowkh(x),
\end{align*}
This completes the induction. 
\end{proof}

\subsection{Proof of Surplus Bound Upper Bound\label{sec:surplus_upper_bound_proof}}

Throughout, we assume the round $k$ is fixed, and suppress the dependence of $\phat$, $\Varhat$, and $\rhat$ on $k$. We use the shorthand $p = p(x,a)$ and $\phat = \phat(x,a)$, where the pair $(x,a)$ are clear from context. 
\begin{align*}
\Ekh(x,a) &= \vupkh(x,a) - r(x,a) - p(x,a)^\top \vupkhpl \\
&\le \bonusrew_k\xa + \rhat\xa + \phat(x,a)^\top \vupkhpl  + \bonusprob\kh\xa - r(x,a)  - p(x,a)^\top \vupkhpl\\
&= (\rhat\xa - r(x,a) + (\phat(x,a) - p\xa)^\top\vsthpl \\  
&\qquad+ \bonusrew_k\xa + \bonusprob\kh\xa +  (\phat - \p)^\top (\vupktpl -  \vsthpl)\\
&\le 2\bonusrew_k\xa + 2\bonusprob\kh\xa +  \bonusstrong\kh\xa.
\end{align*}
where the last line is by Lemmas~\ref{lem:empirical_diff_bound} and~\ref{lem:strongbonus}. Next, we state a standard lemma that lets us swap out the empirical variance for the true variance in upper bounding $\bonusrew\subk\xa$:
\begin{lem}\label{lem:reward_empirical_variance_bound}
Under the event $\goodconcentration$, $ \bonusrew\subk\xa \lesssim\sqrt{\frac{\Var[R\xa]\Lfactor(\nk\xa)}{\nk\xa}}  + \frac{\Lfactor(\nk\xa)}{\nk\xa}$.
\end{lem}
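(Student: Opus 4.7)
} The goal is to upgrade the empirical-variance bonus $\sqrt{\Varhat_k[R\xa]\Lfactor(\nk\xa)/\nk\xa}$ appearing in \eqref{eq:bonusrew_def} to one involving the \emph{true} variance $\Var[R\xa]$, at the cost of an additive lower-order term of order $\Lfactor(\nk\xa)/\nk\xa$. The plan is to invoke an empirical Bernstein / Maurer--Pontil style concentration of $\Varhat_k[R\xa]$ around $\Var[R\xa]$, which I will build into the good event $\goodconcentration$ in Section~\ref{sec:concentration_sec}, and then convert this into the square-root form via two elementary inequalities.

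Concretely, on $\goodconcentration$ I will have at my disposal the bound
\begin{align*}
\left|\Varhat_k[R\xa] - \Var[R\xa]\right| \;\lesssim\; \sqrt{\tfrac{\Var[R\xa]\,\Lfactor(\nk\xa)}{\nk\xa}} \,+\, \tfrac{\Lfactor(\nk\xa)}{\nk\xa},
\end{align*}
which follows from the standard Maurer--Pontil inequality applied to the bounded rewards $R\xa \in [0,1]$, using that $\Lfactor(\nk\xa)^2$ absorbs the $\log(1/\delta)$ factor (together with the $\log M^2$ factor needed for the uniform union bound over $\xa$ and over $\nk\xa \le T$; this is analogous to the analyses in \citep{dann2017unifying,zanette2019tighter}). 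In particular, this yields $\Varhat_k[R\xa] \le \Var[R\xa] + O(\sqrt{\Var[R\xa]\Lfactor(\nk\xa)/\nk\xa} + \Lfactor(\nk\xa)/\nk\xa)$.

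Second, I will take square roots using $\sqrt{a+b} \le \sqrt{a} + \sqrt{b}$ and the AM--GM inequality $\sqrt{xy} \le (x+y)/2$. Specifically,
\begin{align*}
\sqrt{\tfrac{\Varhat_k[R\xa]\Lfactor(\nk\xa)}{\nk\xa}} &\le \sqrt{\tfrac{\Var[R\xa]\Lfactor(\nk\xa)}{\nk\xa}} + \sqrt{\tfrac{|\Varhat_k[R\xa]-\Var[R\xa]|\Lfactor(\nk\xa)}{\nk\xa}},
\end{align*}
and the second term on the right, using the previous display, is at most a constant times
$\sqrt{(\Var[R\xa]\Lfactor(\nk\xa)/\nk\xa)^{1/2}\cdot (\Lfactor(\nk\xa)/\nk\xa)} + \Lfactor(\nk\xa)/\nk\xa$, which by $\sqrt{xy} \le (x+y)/2$ applied to $x = \sqrt{\Var[R\xa]\Lfactor/\nk}$ and $y = \Lfactor/\nk$ is itself $\lesssim \sqrt{\Var[R\xa]\Lfactor(\nk\xa)/\nk\xa} + \Lfactor(\nk\xa)/\nk\xa$.

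Finally, plugging this estimate into the definition of $\bonusrew\subk\xa$ in \eqref{eq:bonusrew_def}, and absorbing $8\Lfactor(\nk\xa)/(3(\nk\xa - 1)) \lesssim \Lfactor(\nk\xa)/\nk\xa$ (which is valid because, whenever $\nk\xa \le 2$, the min with $1$ already dominates and the claimed bound holds trivially), gives the claimed inequality. The main (minor) obstacle is simply to ensure that the empirical-variance concentration I invoke from $\goodconcentration$ is stated with the correct anytime log factor $\Lfactor(\nk\xa)$ (rather than a worst-case $\Lfactor(T)$), which is essential for the $\log T$ regret bound; this will be handled by the standard stopping-time / layer-cake union bound in Section~\ref{sec:concentration_sec}.
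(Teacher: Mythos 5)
Your proposal is correct and follows essentially the same route as the paper: swap the empirical variance in the bonus for the true variance via a Maurer--Pontil concentration event built into $\goodconcentration$, and absorb the resulting error into the additive $\Lfactor(\nk\xa)/\nk\xa$ term, handling the $\nk\xa-1$ denominator via the $1\wedge$ cap. The only cosmetic difference is that the paper's event $\goodvarrew$ bounds $|\sqrt{\Varhat[R\xa]}-\sqrt{\Var[R\xa]}|\lesssim\sqrt{\Lfactor(\nk\xa)/(\nk\xa-1)}$ directly at the level of standard deviations (so the swap is a one-line triangle inequality), whereas you state the concentration at the variance level and recover the same bound with an extra $\sqrt{a+b}\le\sqrt{a}+\sqrt{b}$ plus AM--GM step.
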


 Next, we recall from the definition of $\bonusprob$,
\begin{align*}
 \bonusprob\kh\xa &\lesssim \sqrt{\frac{\Var_{p\xa}[ \vsthpl]\Lfactor(\nk\xa)}{\nk\xa}} + \frac{H\Lfactor(\nk\xa)}{\nk\xa} + \sqrt{\frac{\Lfactor(\nk\xa)\|\vlowkhpl - \vupkhpl\|_{2,\phat}^2}{\nk\xa}}.
\end{align*}
where we replaced $\nk\xa - 1$ by $\nk\xa$ in the deminator of one of the terms by taking advantage of the `$H\wedge$'.  Furthermore, we can bound
\begin{align*}
\sqrt{\frac{\Var_{p\xa}[ \vsthpl]\Lfactor(\nk\xa)}{\nk\xa}} &\le \sqrt{\frac{\min\{\Var_{p\xa}[ \vsthpl], \Var_{p\xa}[ \vpikhpl]\}  \Lfactor(\nk\xa)}{\nk\xa}} \\
&+ \left|\sqrt{|\Var_{p\xa}[\vsthpl]} - \sqrt{\Var_{p\xa}[ \vpikhpl]}\right| \sqrt{\frac{\Lfactor(\nk\xa)}{\nk\xa}}.
\end{align*}
We can control the difference $|\sqrt{|\Var_{p\xa}[\vsthpl]} - \sqrt{\Var_{p\xa}[ \vpikhpl]}|$ using the following lemma:
\begin{lem}\label{lem:variance_fundamental} Let $X,Y$ be two real valued random variables, and let $\|\cdot\|_{p,2} := \sqrt{\Exp[(\cdot)^2]}$. Then $|\sqrt{\Var[X]} - \sqrt{\Var[Y]}| \le \sqrt{\Var[X-Y]} \le \|X-Y\|_{2,p}$.
\end{lem}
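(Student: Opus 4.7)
The plan is to observe that the map $\sigma(Z) := \sqrt{\Var[Z]}$ is a seminorm on the space of square-integrable random variables, namely the composition of centering $Z \mapsto Z - \Exp[Z]$ with the $L^2$ norm. Once this is recognized, the first inequality $|\sqrt{\Var[X]} - \sqrt{\Var[Y]}| \le \sqrt{\Var[X-Y]}$ is an immediate application of the reverse triangle inequality for $\sigma$, and the second inequality $\sqrt{\Var[X-Y]} \le \|X-Y\|_{2,p}$ follows from the elementary fact $\Var[Z] = \Exp[Z^2] - (\Exp[Z])^2 \le \Exp[Z^2]$.

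More explicitly, I would first verify the triangle inequality $\sigma(Z_1 + Z_2) \le \sigma(Z_1) + \sigma(Z_2)$ by writing
\[
\sigma(Z_1 + Z_2) = \|(Z_1 - \Exp Z_1) + (Z_2 - \Exp Z_2)\|_{L^2} \le \|Z_1 - \Exp Z_1\|_{L^2} + \|Z_2 - \Exp Z_2\|_{L^2},
\]
where the inequality is Minkowski's inequality in $L^2$. Applying this decomposition to $X = Y + (X - Y)$ gives $\sigma(X) \le \sigma(Y) + \sigma(X-Y)$, and swapping the roles of $X$ and $Y$ yields $|\sigma(X) - \sigma(Y)| \le \sigma(X-Y)$, i.e., the first claimed inequality.

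For the second inequality, set $Z = X-Y$ and compute $\Var[Z] = \Exp[Z^2] - (\Exp[Z])^2 \le \Exp[Z^2] = \|Z\|_{2,p}^2$, and take square roots.

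There is no real obstacle here: the result is a standard fact about the standard deviation as a seminorm, and the entire argument fits in a few lines. The only subtlety to watch for is making sure the notation $\|\cdot\|_{2,p}$ in the statement is interpreted consistently as $\sqrt{\Exp[(\cdot)^2]}$ (i.e., the $L^2(p)$ norm under the ambient measure), which is exactly what the definition in the lemma provides.
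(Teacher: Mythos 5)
Your proof is correct and follows essentially the same route as the paper's: both reduce the first inequality to the triangle inequality for the $L^2$ norm after centering (the paper phrases this as ``assume WLOG that $X,Y$ are mean zero''), and both obtain the second inequality from $\Var[Z] \le \Exp[Z^2]$. No gaps.
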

\begin{proof} The inequality $\Var[X-Y] \le \Exp[(X-Y)^2] = \|X-Y\|_{2,p}^2$ follows since $\Var[Z] \le \Exp[Z^2]$ for any random variable $Z$. For the first inequality, we can assume WLOG that $X,Y$ are mean zero, in which case $\sqrt{\Var[X]} = \|X\|_{2,p}$, and similarly for $Y$ and $X-Y$. The result now follows from the fact that the norm $\|\cdot\|_{p,2}$ satisfies the triangle inequality.
\end{proof}
We shall also need the following simple fact:
\begin{fact}\label{fact:monotone_fact} If $V_1(x) \le V_2(x) \le V_3(x) \le V_4(x)$ for all $x \in \states$, then $\|V_2 - V_3\|_{2,p} \le \|V_1 - V_4\|_{2,p}$.
\end{fact}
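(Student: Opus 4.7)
The plan is to reduce the statement to a pointwise comparison of nonnegative functions and then integrate against $p$. First I would observe that since $V_2 \le V_3$ and $V_1 \le V_4$, the absolute values collapse: $|V_2(x) - V_3(x)| = V_3(x) - V_2(x)$ and $|V_1(x) - V_4(x)| = V_4(x) - V_1(x)$ for every $x \in \states$. So the goal becomes comparing two nonnegative functions in $L^2(p)$-norm.

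Next I would exploit the full chain $V_1 \le V_2 \le V_3 \le V_4$ by telescoping
\begin{align*}
V_4(x) - V_1(x) = (V_4(x) - V_3(x)) + (V_3(x) - V_2(x)) + (V_2(x) - V_1(x)),
\end{align*}
where each summand on the right-hand side is nonnegative. Dropping the outer two terms gives the pointwise inequality $0 \le V_3(x) - V_2(x) \le V_4(x) - V_1(x)$. Since $t \mapsto t^2$ is monotone on $\R_{\ge 0}$, this upgrades to $(V_2(x) - V_3(x))^2 \le (V_1(x) - V_4(x))^2$ for all $x$.

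Finally, I would take expectations with respect to $p$ (which is a probability measure, and in particular nonnegative) on both sides to preserve the inequality, yielding
\begin{align*}
\|V_2 - V_3\|_{2,p}^2 = \Exp_{x \sim p}\!\left[(V_2(x) - V_3(x))^2\right] \le \Exp_{x \sim p}\!\left[(V_1(x) - V_4(x))^2\right] = \|V_1 - V_4\|_{2,p}^2,
\end{align*}
and then take square roots. There is no substantive obstacle here; the statement is purely a monotonicity-of-norms observation, and the only subtle point worth flagging is that one must use all four inequalities in the hypothesis (not just $V_2 \le V_3$ and $V_1 \le V_4$) to get the domination $V_3 - V_2 \le V_4 - V_1$.
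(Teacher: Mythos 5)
Your proof is correct, and since the paper states this as a \emph{Fact} with no proof supplied, your argument is exactly the intended one: the pointwise domination $0 \le V_3 - V_2 \le V_4 - V_1$ (which, as you note, needs the full chain of inequalities, not just the outer and inner pairs separately), followed by monotonicity of $t \mapsto t^2$ on $\R_{\ge 0}$ and of the expectation under $p$. Nothing further is needed.
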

Since $\vlowkhpl \le \vpikhpl \le \vsthpl \le \vupkhpl$ by Proposition~\ref{prop:main_euler_properties}, Lemma~\ref{lem:variance_fundamental}  and Fact~\ref{fact:monotone_fact} above yield
\begin{align*}
|\sqrt{\Var_{p\xa}[\vsthpl]} - \sqrt{\Var_{p\xa}[ \vpikhpl]}| \le \|\vsthpl - \vpikhpl\|_{2,p} \le \|\vupkhpl - \vlowkhpl\|_{2,p}. 
\end{align*}
Together the with the elementary inequality, $\sqrt{a+b} \le \sqrt{a} + \sqrt{b} \lesssim \sqrt{a+b}$, this in turn yields
\begin{align*}
 &\bonusprob\kh\xa +   \bonusrew\kh\xa \\
 &\lesssim  \bonusrew\kh\xa +  \sqrt{\frac{\min\{\Var_{p\xa}[ \vsthpl], \Var_{p\xa}[\vpikhpl]\}\Lfactor(\nk\xa)}{\nk\xa}} + \frac{H\Lfactor(\nk\xa)}{\nk\xa} \\
 &\qquad+ \sqrt{\frac{\Lfactor(\nk\xa)\|\vlowkhpl - \vupkhpl\|_{2,\phat + p}^2}{\nk\xa}}\\
 &\lesssim  \sqrt{\frac{\Var[R\xa] + \min\{\Var_{p\xa}[ \vsthpl], \Var_{p\xa}[\vpikhpl]\}\Lfactor(\nk\xa)}{\nk\xa}} + \frac{H\Lfactor(\nk\xa)}{\nk\xa} \\
 &\qquad+ \sqrt{\frac{\Lfactor(\nk\xa)\|\vlowkhpl - \vupkhpl\|_{2,\phat + p}^2}{\nk\xa}},\\
 &\lesssim  \sqrt{\frac{\Varkxah\Lfactor(\nk\xa)}{\nk\xa}} + \frac{H\Lfactor(\nk\xa)}{\nk\xa} + \sqrt{\frac{\Lfactor(\nk\xa)\|\vlowkhpl - \vupkhpl\|_{2,\phat + p}^2}{\nk\xa}},
\end{align*}
where we use the shorthand $\|V\|_{2,\phat + p} = \sqrt{\|V\|^2_{2,p} + \|V\|_{2,\phat}^2}$, and where in the last, we recall that $\Varkxah = \min\{\Varpikxah,\Varxah\}  = \Var[R\xa] + \min\{\Var_{p\xa}[ \vsthpl], \Var_{p\xa}[\vpikhpl]\}$. 

Next, substituing in $\bonusstrong\kh\xa :=  \|\vupkhpl -  \vlowkhpl\|_{2,\phat\xa}\sqrt{\frac{S\Lfactor(\nk\xa)}{\nk(x,a)}}    + \frac{8}{3}\frac{S H\Lfactor(\nk\xa)}{\nk\xa}$, we obtain
\begin{align*}
&\bonusrew_k\xa + \bonusprob\kh\xa + \bonusstrong\kh\xa \\
&\lesssim \sqrt{\frac{\Varkxah \Lfactor(\nk\xa)}{\nk\xa}} + \frac{H\Lfactor(\nk\xa)}{\nk\xa} \\
&\qquad+ \sqrt{\frac{\Lfactor(\nk\xa)\|\vlowkhpl - \vupkhpl\|_{2,\phat+p}^2}{\nk\xa}} + \sqrt{\frac{S\|\vupkhpl -  \vlowkhpl\|_{2,\phat}^2\Lfactor(\nk\xa)}{\nk(x,a)}}   + \frac{S H\Lfactor(\nk\xa)}{\nk\xa}\\
&\lesssim \sqrt{\frac{\Varkxah \Lfactor(\nk\xa)}{\nk\xa}} + \frac{SH\Lfactor(\nk\xa)}{\nk\xa} + \sqrt{\frac{S\|\vupkhpl -  \vlowkhpl\|_{2,p+\phat}^2\Lfactor(\nk\xa)}{\nk(x,a)}}   \numberthis\label{eq:intermediate_bound}\\
&\overset{(i)}{\le} \sqrt{\frac{\Varkxah \Lfactor(\nk\xa)}{\nk\xa}} + \frac{SH\Lfactor(\nk\xa)}{\nk\xa} + \frac{S\Lfactor(\nk\xa)}{\nk\xa} + \|\vupkhpl -  \vlowkhpl\|_{2,\phat + p}^2\\
&\overset{(ii)}{=} \sqrt{\frac{\Varkxah \Lfactor(\nk\xa)}{\nk\xa}} + \frac{SH\Lfactor(\nk\xa)}{\nk\xa}  + 2\|\vupkhpl -  \vlowkhpl\|_{2,p}^2 + (\phat - p)(\vupkhpl - \vlowkhpl)^2\\
&\overset{(iii)}{\le} \sqrt{\frac{\Varkxah \Lfactor(\nk\xa)}{\nk\xa}} + \frac{SH\Lfactor(\nk\xa)}{\nk\xa}  + 2\|\vupkhpl -  \vlowkhpl\|_{2,p}^2 + H(p - \phat)^\top (\vupkhpl - \vlowkhpl),
\end{align*}
where $(i)$ uses the inequality $a/b \le a^2 + \frac{1}{b^2}$, and $(ii)$ uses the facts that $\|V\|_{2,\phat+p}^2 = \|V\|_{2,p}^2 + \|V\|_{2,\phat}^2$  and $\|V\|_{2,\phat}^2 = \langle \phat, V^2 \rangle = \langle \phat, V^2 \rangle + \langle \phat - p , V^2 \rangle =  \|V\|_{2,\phat}^2 + \langle \phat - p , V^2 \rangle $. Lastly, inequality $(iii)$ uses $0 \le \vlowkhpl \le \vupkhpl \le H$.

We continue bounding $H(p - \phat)^\top (\vupkhpl - \vlowkhpl)$ in much the same way that we bounded the term in Lemma~\ref{lem:strongbonus} in terms of $\bonusstrong$, with the exception that we seek a term which depends on the true transition probability $p(x,a)$, and not the empirical $\phat(x,a)$:
\begin{lem}\label{lem:strongbonus2} Under $\goodconcentration$,
\begin{align*}
\left| (\phat(x,a) - \p(x,a))^\top(\vupkhpl - \vlowkhpl)\right| \lesssim \|\vupkhpl -  \vlowkhpl\|_{2,p(x,a)}\sqrt{\frac{S\Lfactor(\nk\xa)}{\nk(x,a)}}    + \frac{S H\Lfactor(\nk\xa)}{\nk\xa}.
\end{align*}
\end{lem}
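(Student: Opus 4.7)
\textbf{Proof proposal for Lemma~\ref{lem:strongbonus2}.}

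The plan is to mirror the proof of Lemma~\ref{lem:strongbonus}, but instead of starting from the Cauchy--Schwarz-type bound that produces the empirical norm $\|\cdot\|_{2,\phat\xa}$, I will split the inner product $(\phat\xa - p\xa)^\top V$, where $V := \vupkhpl - \vlowkhpl$, into a coordinate-wise sum and apply a Bernstein-type concentration bound that is naturally phrased in terms of $p\xa$ itself. Concretely, I will assume (as part of the definition of $\goodconcentration$ in Section~\ref{sec:concentration_sec}) that for every $(x,a)\in\states\times\actions$, every $x'\in\states$ and every $k$,
\[
|\phat(x'|x,a) - p(x'|x,a)| \;\lesssim\; \sqrt{\frac{p(x'|x,a)\,\Lfactor(\nk\xa)}{\nk\xa}} + \frac{\Lfactor(\nk\xa)}{\nk\xa}\,,
\]
which is a standard empirical-Bernstein bound for a Bernoulli with $\nk\xa$ samples, applied entry-wise over $\states$ and absorbed into the universal constant via a union bound over the $SA$ possible pairs $(x,a)$ and $S$ target states (this is cheap because $\Lfactor$ already encodes $\log M^2$).

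Given this coordinate-wise bound, I apply the triangle inequality and then bound the two resulting sums separately. By $0\le V(x')\le H$ and the above,
\[
|(\phat\xa - p\xa)^\top V| \;\le\; \sum_{x'\in\states}|\phat(x'|x,a)-p(x'|x,a)|\cdot|V(x')|
\;\lesssim\; T_1 + T_2,
\]
where $T_1 := \sqrt{\Lfactor(\nk\xa)/\nk\xa}\sum_{x'} \sqrt{p(x'|x,a)}\,|V(x')|$ and $T_2 := (\Lfactor(\nk\xa)/\nk\xa)\sum_{x'}|V(x')|$. For $T_1$, Cauchy--Schwarz in the form $\sum_{x'}1\cdot(\sqrt{p(x'|x,a)}|V(x')|) \le \sqrt{S}\,\big(\sum_{x'}p(x'|x,a)V(x')^2\big)^{1/2} = \sqrt{S}\,\|V\|_{2,p\xa}$ yields the first term of the claimed bound. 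For $T_2$, using $|V(x')|\le H$ and summing $S$ states gives $SH\Lfactor(\nk\xa)/\nk\xa$, matching the second term.

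The main obstacle is really just bookkeeping: ensuring that the per-coordinate Bernstein concentration really is available under $\goodconcentration$, with $\Lfactor(\nk\xa)$ (and not $\Lfactor(T)$) as the logarithmic factor, so that the union bound over $(x,a,x',k)$ fits inside the existing budget for $\goodconcentration$; this is already implicit in the choice $\Lfactor(u) = \sqrt{2\log(10 M^2 u^2/\delta)}$ with $M = SAH$. Everything else (triangle inequality, Cauchy--Schwarz, and the crude $|V|\le H$) is standard. Note that the proof is strictly easier than Lemma~\ref{lem:strongbonus} because we do not need to convert between the empirical and true second moments: the second moment $\|V\|_{2,p\xa}^2$ arises directly from the Bernstein variance proxy $p(x'|x,a)$.
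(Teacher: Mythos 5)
Your proposal is correct and matches the paper's intended argument: the paper omits the proof, describing it only as "a simplified version of the proof of Lemma~\ref{lem:strongbonus} where we need not pass through an empirical variance," which is exactly what you do by keeping the Bernstein variance proxy $p(x'|x,a)$ and applying Cauchy--Schwarz directly against $\|\cdot\|_{2,p\xa}$ instead of converting to $\phat\xa$ via $\goodvarprob$. The per-coordinate Bernstein bound you posit is not a new assumption: it is precisely the event $\goodprob$ in Proposition~\ref{prop:conc_event} after bounding $p(1-p)\le p$, so your argument goes through verbatim on $\goodconcentration$.
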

The proof of the above lemma is ommitted for the sake of brevity, and follows from a simplified version of the proof of Lemma~\ref{lem:strongbonus} where we need not pass through an empirical variance.  Applying the bound in Lemma~\ref{lem:strongbonus2}, we have
\begin{align*}
 H(p - \phat)^\top (\vupkhpl - \vlowkhpl) &\lesssim  H\|\vupkhpl -  \vlowkhpl\|_{2,p}\sqrt{\frac{S\Lfactor(\nk\xa)}{\nk(x,a)}}  + \frac{S H^2\Lfactor(\nk\xa)}{\nk\xa}\\
 &\lesssim  \|\vupkhpl -  \vlowkhpl\|_{2,p}^2 + \frac{H^2S\Lfactor(\nk\xa)}{\nk(x,a)}  + \frac{S H^2\Lfactor(\nk\xa)}{\nk\xa},
\end{align*}
where the last line uses the inequality $ab \le (a^2 + b^2)/2$. Finally, combining the above with our previous bound, we arrive at 
\begin{align*}
\Ekh(x,a) &\lesssim \bonusrew_k\xa + \bonusprob\kh\xa + \bonusstrong\kh\xa   \\
&\lesssim\sqrt{\frac{\Varkxah \Lfactor(\nk\xa)}{\nk\xa}} + \frac{SH^2\Lfactor(\nk\xa)}{\nk\xa}  + \|\vupkhpl -  \vlowkhpl\|_{2,p\xa}^2.
\end{align*}
From first principles, it is straightforward to show that $\Ekh(x,a) \lesssim H$, which implies that
\begin{align}\label{eq:min_bonus_with_h}
\Ekh(x,a) &\lesssim H \wedge \sqrt{\frac{\Varkxah \Lfactor(\nk\xa)}{\nk\xa}}\\
&\qquad + \left(H \wedge \frac{SH^2\Lfactor(\nk\xa)}{\nk\xa}\right) +  \|\vupkhpl -  \vlowkhpl\|_{2,p\xa}^2.
\end{align}

To conclude the proof, it remains to unravel the term $\|\vupkhpl -  \vlowkhpl\|_{2,p\xa}^2$.
\begin{lem}\label{lem:future_bound} Define the term
\begin{align*}
\matZk\xa &=  H^2 \wedge H^2\left(\sqrt{\frac{S\Lfactor(\nk \xa)}{\nk\xa}} + \frac{S\Lfactor(\nk\xa)}{\nk\xa} \right)^2,
\end{align*}
Then, we have the bound
\begin{align}
\vupkh(x) - \vlowkh(x) \lesssim \Exppik\left[\sum_{t=h}^H \sqrt{ \matZ(x_t,a_t)} \mid x_h = x\right].
\end{align}
\end{lem}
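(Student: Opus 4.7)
\textbf{Proof Proposal for Lemma~\ref{lem:future_bound}.} The plan is to establish a one-step recursion of the form $\vupkh(x) - \vlowkh(x) \lesssim \sqrt{\matZk(x,\pikh(x))} + p(x,\pikh(x))^\top(\vupkhpl - \vlowkhpl)$ and then unroll it along trajectories of $\pik$. To get the one-step bound, unfold the definitions of $\vupkh(x) = \qupkh(x,\pikh(x))$ and $\vlowkh(x)$ (ignoring the $\min\{H-h+1,\cdot\}$ and $\max\{0,\cdot\}$ truncations for the moment, since we will handle the $H$-cap separately). Both quantities share the $\rhat\xa$ term and differ in the sign of the three bonuses $\bonusrew,\bonusprob,\bonusstrong$ and in using $\vupkhpl$ vs.\ $\vlowkhpl$ inside $\phat\xa^\top(\cdot)$. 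Thus
\[
\vupkh(x)-\vlowkh(x)\;\le\;2\bigl(\bonusrew\kh\xa+\bonusprob\kh\xa+\bonusstrong\kh\xa\bigr)+\phat(x,a)^\top(\vupkhpl-\vlowkhpl),
\]
with $a=\pikh(x)$.

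Next, I will bound each bonus crudely using the a-priori facts $\Var[R]\le1$, $\vupkhpl\in[0,H]$, and $\|\vupkhpl-\vlowkhpl\|_{2,\phat}\le H$. These give $\bonusrew\lesssim\sqrt{\Lfactor(\nk\xa)/\nk\xa}$, $\bonusprob\lesssim H\sqrt{\Lfactor(\nk\xa)/\nk\xa}+H\Lfactor(\nk\xa)/\nk\xa$, and $\bonusstrong\lesssim H\sqrt{S\Lfactor(\nk\xa)/\nk\xa}+SH\Lfactor(\nk\xa)/\nk\xa$, so their sum is $\lesssim H\sqrt{S\Lfactor(\nk\xa)/\nk\xa}+SH\Lfactor(\nk\xa)/\nk\xa\lesssim\sqrt{\matZk\xa}$. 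To replace the empirical transition $\phat$ by the true $p$, I invoke Lemma~\ref{lem:strongbonus2} on $V=\vupkhpl-\vlowkhpl$ (combined with the crude bound $\|V\|_{2,p}\le H$), obtaining $(\phat-p)^\top(\vupkhpl-\vlowkhpl)\lesssim H\sqrt{S\Lfactor(\nk\xa)/\nk\xa}+SH\Lfactor(\nk\xa)/\nk\xa\lesssim\sqrt{\matZk\xa}$. Combining, and using the trivial cap $\vupkh(x)-\vlowkh(x)\le H-h+1\le H$ to absorb $\sqrt{\matZk\xa}\le H$, yields the recursion
\[
\vupkh(x)-\vlowkh(x)\;\lesssim\;\sqrt{\matZk(x,\pikh(x))}+p(x,\pikh(x))^\top\bigl(\vupkhpl-\vlowkhpl\bigr).
\]

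Finally, unroll: since $\pikh$ is deterministic, iterating the displayed inequality from $h$ to $H+1$ (where $\vup_{k,H+1}=\vlow_{k,H+1}=0$) along rollouts of $\pik$ starting from $x_h=x$ gives
\[
\vupkh(x)-\vlowkh(x)\;\lesssim\;\Exppik\!\left[\sum_{t=h}^{H}\sqrt{\matZk(x_t,a_t)}\;\Big|\;x_h=x\right],
\]
which is the claim. The only subtlety I anticipate is bookkeeping: one must verify that the $H$-truncation on the LHS is preserved through the recursion (handled by taking the minimum of the recursive bound with $H$, which is already baked into $\sqrt{\matZk\xa}$), and that the conversion $\phat\to p$ in the bonus terms themselves (not just in the $\phat^\top(\vupkhpl-\vlowkhpl)$) only inflates everything by constants—this follows from the same type of variance-sandwich arguments used in the preceding subsection and does not change the order.
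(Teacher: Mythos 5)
Your proposal is correct and follows essentially the same route as the paper: expand the definitions of $\vupkh$ and $\vlowkh$, obtain the one-step recursion with the bonus sum plus $p(x,a)^\top(\vupkhpl-\vlowkhpl)$, bound the ($H$-capped) bonus sum by $\sqrt{\matZk\xa}$ via the crude a-priori bounds $\Var[R]\le 1$ and $0\le\vlowkhpl\le\vupkhpl\le H$, and unroll along rollouts of $\pik$. The only cosmetic difference is that you convert $\phat\to p$ via Lemma~\ref{lem:strongbonus2} while the paper absorbs $(\phat-p)^\top(\vupkhpl-\vlowkhpl)$ into a second copy of $\bonusstrong$ via Lemma~\ref{lem:strongbonus} (and uses $p\xa^\top(\vupkhpl-\vlowkhpl)\ge 0$ to pull that term out of the $H\wedge$); both yield the same order.
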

As a consequence, we can compute
\begin{align*}
(\vupkh(x) - \vlowkh(x) )^2 \lesssim \Exppik\left[\left(\sum_{t=h}^H \sqrt{ \matZ_k(x_t,a_t)}\right)^2 \mid x_h = x\right] \le H\Exppik\left[\sum_{t=h}^H \matZ_k(x_t,a_t) \mid x_h = x\right].
\end{align*}
Hence, we have
\begin{align*}
 \|\vupkhpl -  \vlowkhpl\|_{2,p}^2 &= \Exp_{x'\sim p\xa}(\vupkhpl(x) - \vlowkhpl(x) )^2 \\
 &\lesssim H\Exppik\left[\sum_{t=h+1}^H \matZ_k(x_t,a_t) \mid \xhah = \xa\right].
\end{align*}
Since $H\matZk(x,a) \ge H \wedge \frac{SH^2\Lfactor(\nk\xa)}{\nk\xa}$. Hence, we can bound via \eqref{eq:min_bonus_with_h}
\begin{align*}
\Ekh(x,a) \lesssim \underbrace{H \wedge \sqrt{\frac{\Varkxah \Lfactor(\nk\xa)}{\nk\xa}}}_{:=\boundlead\kh\xa} + \Exppik\left[\sum_{t=h}^H\underbrace{ H\matZ_k(x_t,a_t) }_{:=\boundfutk(x_t,a_t)}\mid \xhah = \xa\right]
\end{align*}
where we note that the summation in the expectation now begins at $t = h$ to account for the term $ H \wedge \frac{SH^2\Lfactor(\nk\xa)}{\nk\xa}$ in \eqref{eq:min_bonus_with_h}, and recal that 
\begin{align*}
\boundfut_k(x,a) := H^3 \wedge H^3\left(\sqrt{\frac{S\Lfactor(\nk \xa)}{\nk\xa}} + \frac{S\Lfactor(\nk\xa)}{\nk\xa} \right)^2 = H\matZk\xa
\end{align*}
To conclude, we recall the definitions,
\begin{align*}
\Lfactor(u) := \sqrt{ 2\log(10 M^2\max\{u,1\}/\delta)}.
\end{align*}
so that, for $u \ge 1$, $\Lfactor(u) \lesssim \log \frac{Mu}{\delta}$. 

\subsection{Definition of $\goodconcentration$, and proofs of supporting lemmas \label{sec:concentration_sec}}

Before proving the lemmas above, we formally express the good event $\goodconcentration$ as a list of constituent concentration events, and verify that it occurs with probability at least $1-\delta/2$:
\begin{prop}\label{prop:conc_event} The event $\goodconcentration := \goodreward \cap \goodprob  \cap \goodval \cap \goodvarval \cap \goodvarrew$ occurs with probability $1 - \delta/2$, where each of the  constituent events occurs with probability at least $1-\delta/12$:
\begin{align*}
\goodreward &:= \left\{\forall k,x,a,h: |\rhat_k\xa - r\xa| \le \sqrt{\Var[R(x,a)]\frac{2\Lfactor(\nk\xa)}{\nk\xa}} + \frac{2\Lfactor(\nk\xa)}{3\nk\xa}\right\} \\
\goodprob &:= \left\{\forall k,x,x',a,h: |\phat(x'\mid x,a) - \p(x'\mid x,a)| \le \sqrt{\p(x'\mid x,a)(1-\p(x'\mid x,a))\frac{2\Lfactor(\nk\xa)}{\nk\xa}} + \frac{2\Lfactor(\nk\xa)}{3\nk\xa}\right\} \\
\goodval &:= \left\{\forall k,x,a,h: |(\phat(x,a) - \p(x,a))^\top\vsthpl| \le \sqrt{\Varpxa[\vsthpl]\frac{2\Lfactor(\nk\xa)}{\nk\xa}} + \frac{2H\Lfactor(\nk\xa)}{3\nk\xa}\right\} \\
\goodvarprob &:= \left\{\forall k,h,x,a: |\phat(x'\mid x,a) - \p(x'\mid x,a)| \le \sqrt{\frac{2\Lfactor(\nk\xa)}{\nk\xa}}\right\}.\\
\goodvarval &:= \left\{\forall k,h,x,a: \left|\|\vst_h\|_{2,\phat(x,a)} - \|\vst_h\|_{2,p\xa}\right|  \le H\sqrt{\frac{2\Lfactor\nk\xa)}{\nk\xa - 1}}\right\}\\
\goodvarrew &:= \left\{\forall k,h,x,a: \left|\sqrt{\Varhat(R(x,a))} - \sqrt{\Var(R(x,a))}\right|  \le \sqrt{\frac{2\Lfactor(\nk\xa)}{\nk\xa -1 }}\right\}
\end{align*}
\end{prop}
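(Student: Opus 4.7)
The plan is to prove each of the six constituent events individually holds with probability at least $1-\delta/12$, then apply a union bound to obtain $\Pr[\goodconcentration]\ge 1-\delta/2$. Each event is essentially a concentration-of-measure bound with a log factor $\Lfactor(\nk\xa)\eqsim \sqrt{\log(M^2\nk\xa/\delta)}$ that grows with the number of samples, so the key technical issue throughout is getting an \emph{anytime} bound (simultaneously over all episodes $k$ and all triples $(x,a,h)$). The standard route is a ``peeling'' argument: fix a dyadic grid $n\in\{2^j\}_{j\ge 0}$, apply a fixed-$n$ Bernstein/Hoeffding-type inequality at each level with failure probability $\delta_j=\delta/(12SAH\cdot 2j^2)$ say, and verify that the resulting bound is at most a constant multiple of the claimed expression whenever $\nk\xa\in[2^j,2^{j+1})$. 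The constant inside $\Lfactor(\cdot)$ (namely the factor $10 M^2$) and the $\max\{u,1\}^2$ in its definition are chosen exactly so that the peeling union bound absorbs into $\Lfactor(\nk\xa)$ cleanly.

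For the individual events, I would argue as follows. $\goodreward$ and $\goodvarrew$ are standard empirical Bernstein / empirical variance bounds for i.i.d.\ bounded rewards $R(x,a)\in[0,1]$; they follow from applying Maurer--Pontil and Bernstein's inequality with the above peeling. $\goodprob$ and $\goodvarprob$ are Bernstein/Hoeffding for the Bernoulli indicators $\mathbbm{1}\{x_{h+1}=x'\}$ under $p(\cdot|x,a)$, again with peeling; for $\goodvarprob$ I only need the Hoeffding-type slack $\sqrt{2\Lfactor/\nk\xa}$, which is weaker and therefore immediate. $\goodval$ is the key ``variance-aware'' value bound: the samples $\vsthpl(x'_j)$ for $x'_j\sim p(x,a)$ are i.i.d.\ bounded in $[0,H]$ with variance $\Varpxa[\vsthpl]$, so Bernstein's inequality gives exactly the stated form; crucially $\vsthpl$ is \emph{deterministic} (it does not depend on observed data), so there is no bias from coupling between the samples and the summand. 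Finally, $\goodvarval$ controls $\bigl|\|\vst_h\|_{2,\phat\xa}-\|\vst_h\|_{2,p\xa}\bigr|$; by the reverse-triangle-type bound of Lemma~\ref{lem:variance_fundamental} applied with $X=\vst_h(x'_j)$ (under $\phat$) and $Y=\vst_h(x'_j)$ (under $p$), this difference is bounded by the $L_2$ deviation, which is controlled by Maurer--Pontil with $H^2$ in place of $1$ to account for the range $[0,H]$.

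The main obstacle will be getting the anytime peeling constants right so that everything lives inside a single $\Lfactor(\nk\xa)$ with the numerical factors $10 M^2$ and squared argument written into its definition. Concretely, I would verify that $\sum_{j\ge 0}\delta/(12SAH\cdot(j+1)^2)\le \delta/(12SAH)\cdot C$ for an explicit constant $C$, then check that replacing $\log(12SAH(j+1)^2/\delta)$ at dyadic level $2^j\le \nk\xa< 2^{j+1}$ by $\log(10M^2\nk\xa^2/\delta)$ loses at most an absolute factor (using $j+1\lesssim\log_2\nk\xa$ and $M^2=(SAH)^2\ge SAH$). Once this bookkeeping is done, the six individual failure probabilities each sum to at most $\delta/12$, the total union bound gives $\delta/2$, and the proposition follows.
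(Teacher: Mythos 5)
Your overall architecture is exactly the paper's: prove each constituent event separately via Bernstein/Hoeffding (for $\goodreward,\goodprob,\goodval,\goodvarprob$) and Maurer--Pontil (for $\goodvarval,\goodvarrew$), make the bounds anytime in $k$ so that the log factor is $\Lfactor(\nk\xa)$ rather than $\Lfactor(T)$, and finish with a union bound over the six events and over $(x,a)$ (and over $x'$ for $\goodval$). Your observations that $\vsthpl$ is deterministic and that the successor-state samples can be treated along the visitation sequence are the right key points; strictly speaking the samples are adaptively stopped rather than i.i.d., which is why the paper invokes a \emph{martingale} Bernstein inequality with respect to the subfiltration $\calG_i = \calF_{\tau_i}$ indexed by visits to $(x,a)$, but this does not change the argument.

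The one step that would fail as literally written is your anytime device. You propose to apply a \emph{fixed-$n$} Bernstein/Hoeffding inequality at each dyadic level $n=2^j$ and then ``verify the bound whenever $\nk\xa\in[2^j,2^{j+1})$.'' A fixed-sample-size inequality at $2^j$ samples says nothing about the empirical mean at an intermediate count $n\in(2^j,2^{j+1})$: the partial sums at the two counts differ, and the deviation at $n$ is not controlled by the deviation at $2^j$. To make peeling work you would need a maximal inequality over each block (e.g., Doob applied to the exponential martingale). The paper avoids this entirely by union bounding over every individual visit count $i$ with failure weight proportional to $\eta/i^2$, using that $\rhat_k\xa$ and $\nk\xa$ are constant between consecutive visit times $\tau_i$ and $\tau_{i+1}$, and that $i\le\nk\xa$ so the resulting $\log(i^2/\eta)$ is absorbed into $\Lfactor(\nk\xa)$. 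That route gives the stated constants directly, whereas your ``constant multiple of the claimed expression'' relaxation would not literally yield the displayed inequalities (though the constants are absorbed downstream). With the peeling step replaced by either of these devices, your proof goes through.
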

\begin{proof}
The proof of these the first four events follows from standard applications of Bernstein's and Hoeffding's inequality, and the last two from \iftoggle{nips}{Theorem 10 in~\cite{maurer2009empirical}}{\citet[Theorem 10]{maurer2009empirical}}. Similar proofs can be found in~\citep{zanette2019tighter,azar2017minimax,dann2017unifying}. As in those works, the only subtlety is to use the appropriate concentration inequality with respect to an appropriate filtration to attain bounds that depend on $\Lfactor (\nk\xa)$, rather than on $\Lfactor (T)$.

Let's prove $\goodreward$ as an example. We it suffices to only consider rounds for which $\nk\xa \ge 1$, for otherwise the bound is vacuous. Fix an action $\xa$, and let $\tau_i \in \{1,2,\dots\} \cup \{\infty\}$ denote the round $k+1$ immediately after the $i$-th round $k$ at which a pair $\xa$ is observed at least once during the rollout, and define a sub-filtration $\{\calG_i\}$ via $\calG_i = \calF_{\tau_i}$. Then, for any given $i$, a martingale analogue of Bernstein's inequality yields 
\begin{align*}
\Pr\left[ |\rhat_{\tau_i}\xa - r\xa|  \I(\tau_n < \infty) \ge\sqrt{ \frac{2\Var[R(x,a)]\log(2 /\eta)}{n_{\tau_i\xa}}} + \frac{2\log(2 /\eta)}{3n_{\tau_i\xa}}\right] \le \eta. 
\end{align*}
Now fix an $i \ge 1$. Since $\rhat_k\xa$ and $\nk\xa$ are constant for  $k \in \{\tau_i,\dots,\tau_{i+1} - 1\}$, we have 
\begin{align*}
\forall n, ~\Pr\left[ \exists k \in \{\tau_i,\dots,\tau_{i+1} - 1\}  : |\rhat_{k}\xa - r\xa|  \ge \sqrt{ \frac{2\Var[R(x,a)]\log(2 /\eta)}{\nk\xa}} + \frac{2\log(2/\eta)}{3\nk\xa}\right] \le \eta,
\end{align*}
Applying the above with $\eta \leftarrow 2\eta/i^2$ and union bounding over $n$, we have
\begin{align*}
\Pr\left[ \exists i,k: k \in \{\tau_i,\dots,\tau_{i+1} - 1\} ,  |\rhat_{k}\xa - r\xa|  \ge \sqrt{ \frac{2\Var[R(x,a)]\log(4i^2 /\eta)}{\nk\xa}} + \frac{2\log(4i^2/\eta)}{3\nk\xa}\right] \le \eta.
\end{align*}
Since $\nk\xa$ increments by at least one for each $\tau_i$, we have $i \le \nk\xa$ for $k \in \{\tau_i,\dots,\tau_{i+1} - 1\}$. Thus,
\begin{align*}
\Pr\left[ \exists i,k: k \in \{\tau_i,\dots,\tau_{i+1} - 1\} ,  |\rhat_{k}\xa - r\xa|  \ge \sqrt{ \frac{2\Var[R(x,a)]\log(4\nk\xa^2/\eta)}{\nk\xa}} + \frac{2\log(4\nk\xa^2/\eta)}{3\nk\xa}\right] \le \eta.
\end{align*}
Lastly, since for any $k$, there always exist some $i$ for which $k \in \{\tau_i,\dots,\tau_{i+1} - 1\}$, we have 
\begin{align*}
\Pr\left[ k:  |\rhat_{k}\xa - r\xa|  \ge \sqrt{ \frac{2\Var[R(x,a)]\log(4\nk\xa^2 /\eta)}{\nk\xa}} + \frac{2\log(4\nk\xa^2/\eta)}{3\nk\xa}\right] \le \eta.
\end{align*}
We then conclude by union bounding over $SA$, and letting $\eta = \delta/12SA$, yielding the following log factor: $\log(48SA\nk\xa^2/\delta) \le \Lfactor(\nk\xa)$, where we recall $\Lfactor(u) = \sqrt{ 2\log(10 M^2\max\{u,1\}/\delta)}$ for $M = SAH$. The proof for $\goodprob$ is analogous, the proof for $\goodval$ requires union bounding over states $x'$, incuring a log factor $\log(4S^2A\nk\xa^2/\delta) \le \Lfactor(\nk\xa)$.
\end{proof}

\begin{proof}[Proof of Lemma~\ref{lem:empirical_diff_bound}]
We prove the bound $|(\phat(x,a) - \p(x,a))^\top\vsthpl| \le \bonusprob\kh\xa $; the analogous bounds for rewards is similar. Note that since $\phat(x,a)^\top \vsthpl \in [0,H]$ and $p(x,a)^\top \vsthpl \in [0,H]$, $|(\phat(x,a) - \p(x,a))^\top\vsthpl| \in [0,H]$. This takes care of the first '$H\wedge$' in $\bonusprob\kh\xa$. Next, on $\goodval$ and $\goodvarval$,
\begin{align*}
&|(\phat(x,a) - \p(x,a))^\top\vsthpl| \\
&\le \sqrt{\frac{2\Var_{\p\xa}[ \vsthpl]\Lfactor(\nk\xa)}{\nk\xa}} + \frac{2H\Lfactor(\nk\xa)}{3\nk\xa} \tag*{(on $\goodval$)}\\
&\le \sqrt{\frac{2\Varphatxa[\vsthpl]\Lfactor(\nk\xa)}{\nk\xa}} + \frac{8H\Lfactor(\nk\xa)}{3(\nk\xa - 1)} \tag*{(on $\goodvarval$)}\\
&= \sqrt{\frac{2\Varphatxa[\vupkhpl]\Lfactor(\nk\xa)}{\nk\xa}} + \frac{8H\Lfactor(\nk\xa)}{3(\nk\xa - 1)}\\
&+  (\sqrt{\Varphatxa[\vsthpl]} - \sqrt{\Varphatxa[\phat\xa^\top \vupkhpl]}) \sqrt{\frac{2\Lfactor(\nk\xa)}{\nk\xa}}.
\end{align*}
Lastly, by Lemma~\ref{lem:variance_fundamental}, we have the bound
\begin{align*}
\left|\sqrt{\Varphatxa[ \vsthpl]} - \sqrt{\Varphatxa[ \vupkhpl]}\right| \le  \|\vlowkhpl - \vupkhpl\|_{2,\phat\xa}.
\end{align*}
\end{proof}

\begin{proof}[Proof of Lemma~\ref{lem:strongbonus}]
Summing up the condition of event $\goodprob$ over states $x' \in \states$, and then applying event $\goodvarprob$ to control $|p(x'|x,a) - \phat(x'|x,a)|$: 
\begin{align*}
(\phat(x,a) - \p(x,a))^\top(\vupkhpl - \vsthpl) &\le \sum_{x'} \sqrt{ 2\frac{ \Lfactor(\nk\xa)p(x'|x,a)(1-p(x'|x,a))}{\nk\xa} }|V_2(x') -  V_1(x')| \\
&\quad + \frac{2}{3}\sum_{x'}\frac{\Lfactor(\nk\xa)}{\nk\xa} |V_2(x') - V_1(x')|,\\
&\overset{(i)}{\le} \sum_{x'} \sqrt{ 2\frac{ \Lfactor(\nk\xa)p(x'|x,a)}{\nk\xa} }|V_2(x') - V_1(x')| \\
&\quad + \frac{2}{3}\sum_{x'}\frac{\Lfactor(\nk\xa)}{\nk\xa} |V_2(x') - V_1(x')|,\\
&\overset{(ii)}{\le} \sum_{x'} \sqrt{ 2\frac{ \Lfactor(\nk\xa)\phat(x'|x,a)}{\nk\xa} }|V_2(x') - V_1(x')| \\
&\quad + \underbracer{\le \frac{8}{3}\frac{HS\Lfactor(\nk\xa)}{\nk\xa}}{\frac{8}{3}\sum_{x'}\frac{\Lfactor(\nk\xa)}{\nk\xa} |V_2(x') - V_1(x')|},
\end{align*}
where $(i)$ uses $p(x'|x,a)(1-p(x'|x,a)) \le p(x'|x,a)$,  $(ii)$ uses event $\goodvarprob$, and where bound in the bracket is because there $|\vupktpl(x') -  \vsthpl(x')| \le H$ by Proposition~\ref{prop:main_euler_properties} part (b), and there are at most $S$ terms in the summation. To bound the first term, we have
\begin{align*}
&\sum_{x'}  \sqrt{ 2\Lfactor(\nk\xa)\frac{\phat(x'|x,a)}{\nk(x,a)} }|V_2(x') - V_1(x')| \\
&=\sqrt{2\frac{\Lfactor(\nk\xa)}{\nk(x,a)}} \sum_{x'} \sqrt{ \phat(x'|x,a) } |V_2(x') - V_1(x')|  \\
&\overset{(i)}{\le} \sqrt{2\frac{\Lfactor(\nk\xa)}{\nk(x,a)}}  \sqrt{ S\|V_2 -  V_1\|_{2,\phat}^2} \\
&\overset{(ii)}{\le}\|\vupkhpl -  \vlowkhpl\|_{2,\phat}\sqrt{2\frac{S\Lfactor(\nk\xa)}{\nk(x,a)}}  
\end{align*}
 $(i)$ bounds uses Cauchy-Schwartz, and $(ii)$ uses Proposition~\ref{prop:main_euler_properties} part (c) to bound $\|V_2 - V_1\|_{2,p} \le \|\vupkhpl -  \vlowkhpl\|_{2,\phat}$ for $\vlowkhpl \le V_1 \le V_2 \le \vupkhpl$, in light of Fact~\ref{fact:monotone_fact}.

\end{proof}

\begin{proof}[Proof of Lemma~\ref{lem:reward_empirical_variance_bound}]
Under the event $\goodconcentration$ we have
\begin{align*}
& \bonusrew\subk\xa \\
&\lesssim 1 \wedge \left(\sqrt{\frac{\Varhat[R\xa]\Lfactor(\nk\xa)}{\nk\xa}} + \frac{\Lfactor(\nk\xa)}{\nk\xa - 1} \tag*{(definition)}\right)\\
 &\le 1 \wedge \left(\sqrt{\frac{\Var[R\xa]\Lfactor(\nk\xa)}{\nk\xa}} + \left|\sqrt{\Var[R\xa]} - \sqrt{\Varhat[R\xa]} \right| \sqrt{\frac{\Lfactor(\nk\xa)}{\nk\xa}} + \frac{\Lfactor(\nk\xa)}{\nk\xa - 1}\right)\\
 &\lesssim 1 \wedge \left(\sqrt{\frac{\Var[R\xa]\Lfactor(\nk\xa)}{\nk\xa}}  + \frac{\Lfactor(\nk\xa)}{\nk\xa -1}\right)~\lesssim\sqrt{\frac{\Var[R\xa]\Lfactor(\nk\xa)}{\nk\xa}}  + \frac{\Lfactor(\nk\xa)}{\nk\xa},
 \end{align*}
where in the second-to-last inequality, we used the event $\goodvarrew$ to control $\left|\sqrt{\Var[R\xa]} - \sqrt{\Varhat[R\xa]} \right|  \lesssim \sqrt{ \frac{\Lfactor(\nk\xa)}{\nk\xa - 1 }}$. 
 \end{proof}

 \begin{proof}[Proof of Lemma~\ref{lem:future_bound}]
Let $a = \pikh(x)$. Then, by definition of $\vupkh(x), \vlowkh(x)$
\begin{align*}
\vupkh(x) - \vlowkh(x) &= (H - h + 1) \wedge (\phat\kh\xa^\top \vupkhpl + \bonusrew\subk(x,a) + \bonusprob\kh\xa + \bonusstrong\kh\xa) \\
&\qquad - 0 \vee ( \phat\kh\xa^\top \vlowkhpl - \bonusrew\subk(x,a) - \bonusprob\kh\xa)\\
&\le H \wedge \left\{\phat\kh\xa^\top (\vupkhpl - \vlowkhpl) + 2\bonusrew\subk(x,a) + 2\bonusprob\kh\xa + \bonusstrong\kh\xa \right\}\\
&= H \wedge \Bigg\{ p\xa^\top (\vupkhpl - \vlowkhpl) + (\phat\xa - p\xa)^\top (\vupkhpl -  \vlowkhpl)  \\
&\qquad+ 2\bonusrew\subk(x,a) + 2\bonusprob\kh\xa + \bonusstrong\kh\xa\Bigg\} \\
&\le H \wedge \left\{p\xa^\top (\vupkhpl - \vlowkhpl) + 2\bonusrew\subk(x,a) + 2\bonusprob\kh\xa + 2\bonusstrong\kh\xa\right\} \tag*{(Lemma~\ref{lem:strongbonus})}, \\
&\le  p\xa^\top (\vupkhpl - \vlowkhpl) + H \wedge  \left\{2\bonusrew\subk(x,a) + 2\bonusprob\kh\xa + 2\bonusstrong\kh\xa\right\},
\end{align*}
where the last line uses the fact that $p\xa^\top (\vupkhpl - \vlowkhpl) \ge 0$ on $\goodconcentration$ (Proposition~\ref{prop:main_euler_properties}, part (c)). Unfolding the above expression inductively, we then find that
\begin{align*}
\vupkh(x) - \vlowkh(x) \le \Exppik\left[\sum_{t=h}^H  H \wedge  \left\{2\bonusrew\subk(x_t,a_t) + 2\bonusprob\kh(x_t,a_t) + 2\bonusstrong\kh(x_t,a_t)\right\} \mid x_t = x\right].
\end{align*}
To conclude, it suffices to check that $H \wedge  \left\{2\bonusrew\subk(x,a) + 2\bonusprob\kh(x,a) + 2\bonusstrong\kh(x,a)\right\} \lesssim \sqrt{\matZk(x,a)}$, for any triple $x,a,h$. To check that this bound holds, we have from~\eqref{eq:intermediate_bound} that
\begin{align*}
&2\bonusrew\subk\xtat + 2\bonusprob\kt\xtat + 2\bonusstrong\kt\xtat \\
&\lesssim \sqrt{\frac{\Varxat \Lfactor(\nk\xtat)}{\nk\xtat}} + \frac{SH\Lfactor(\nk\xtat)}{\nk\xtat} + \sqrt{\frac{S\|\vupktpl -  \vlowktpl\|_{2,(\phat_k +p)\xtat}^2\Lfactor(\nk \xtat)}{\nk\xtat}}\\
&\lesssim \sqrt{\frac{H \Lfactor(\nk\xtat)}{\nk\xtat}} + \frac{SH\Lfactor(\nk\xtat)}{\nk\xtat} + H\sqrt{\frac{S\Lfactor(\nk \xtat)}{\nk\xtat}},
\end{align*}
where we recall the notation $\|V\|_{2,\phat+p} = \sqrt{\|V\|_{2,p}^2+\|V\|_{2,\phat}^2}$, and thus the final bound holds since $\Varxat \le H$ implying that $\|\vupktpl -  \vlowktpl\|_{2,(\phat_k + p)\xtat}^2 \le 4H$ for $0 \le \vlowktpl \le \vupktpl \le H$.  Consolidating the terms, we have $2\bonusrew\subk\xtat + 2\bonusprob\kt\xtat + 2\bonusstrong\kt\xtat$ is at most $ \lesssim \left(H\sqrt{\frac{S\Lfactor(\nk \xa)}{\nk\xa}} + \frac{SH\Lfactor(\nk\xa)}{\nk\xa} \right)$, and thus $H \wedge 2\bonusrew\subk\xtat + 2\bonusprob\kt\xtat + 2\bonusstrong\kt\xtat$ is $\lesssim H \wedge \left(H\sqrt{\frac{S\Lfactor(\nk \xa)}{\nk\xa}} + \frac{SH\Lfactor(\nk\xa)}{\nk\xa} \right) 
:= \sqrt{\matZk\xa}.$
\end{proof}

\newpage
\part{Lower Bounds\label{part:lb}}
\newcommand{\cbonus}{c_{\mathrm{bon}}}
\newcommand{\cbonusup}{\overline{c}_{\mathrm{bon}}}
\newcommand{\bonusrewk}{\bonusrew_k}

\newcommand{\Sminus}{\calS_-}
\newcommand{\Splus}{\calS_+}
\newcommand{\Eagg}{\calE^{\mathrm{agg}}}
\newcommand{\Eover}{\calE^{\Delta}}

\newcommand{\ntauj}{n_{\tau_j}}
\newcommand{\Eopt}{\mathcal{E}^{\mathrm{opt}}}

\section{Min-Gap Lower Bound for Optimistic Algorithms (Theorem~\ref{thm:lower_bound_informal})\label{sec:min-gap-lower-bound}}

\subsection{Formal Statement}
We begin a formal version of the lower bound, Theorem~\ref{thm:lower_bound_informal}.
\begin{thm}\label{thm:lower_mingap}
Let $c_1,c_2,c_3$ be absolute constants that may depend on the constants defined in Section~\ref{sec:alg_clas}.
Let $\Alg$ denote an algorithm in the class described in Section~\ref{sec:alg_clas} run with confidence parameter $\delta \in (0,1/8)$. 
For any $S\geq 1$ and $\epsilon \le 1/\ceil{c_1 S \log(S/ \delta))}$, fix any MDP in the class described in Section~\ref{sec:lower_bound_mingap_instance} so that $|\states| = 2S+1$, $|\actions|=2$, $H=2$, and exactly one state has a sub-optimality gap of $\gapmin = \epsilon$ and all other states have a minimum sub-optimality gap of at least $1/2$.
Then $\sum_{h,x,a:\gaph\xa > 0}\frac{1}{\gaphxa} \lesssim S + \frac{1}{\gapmin}$ but $\Alg$ for all sufficiently large $K$ suffers a regret \begin{align*}
\regret_K \geq  \frac{c_2 S}{\gapmin}\log (1/\delta) \gtrsim \sum_{h,x,a:\gaph\xa > 0}\frac{1}{\gaphxa} + \frac{S}{\gapmin}
\end{align*}
with probability at least $1 -  c_2 S \epsilon^{-2} \log(1/\delta) e^{-c_3 S} -3\delta$.
\end{thm}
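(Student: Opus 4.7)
The construction detailed in Section~\ref{sec:lower_bound_mingap_instance} is centered on a ``pivot'' state $x_0$ at stage $h=1$ whose suboptimal action $a_2$ transitions uniformly to $S$ ``reward'' states $\{y_1,\dots,y_S\}$ at stage $h=2$, while the optimal action $a_1$ has a deterministic value exceeding the mean of $\{\vst_2(y_i)\}_{i=1}^S$ by exactly $\epsilon$. The remaining $S$ auxiliary states serve only to ensure that every non-pivot gap exceeds $1/2$, so that $\gapmin=\epsilon$ is realized uniquely at $(x_0,a_2)$. Information-theoretically only $O(\log(1/\delta)/\epsilon^2)$ \emph{total} samples from $\{y_1,\dots,y_S\}$ would suffice to estimate the mean reward to accuracy $\epsilon$ and identify $a_2$ as suboptimal; the lower bound shows that an optimistic algorithm, maintaining per-state confidence bounds, must instead draw $\Omega(\log(1/\delta)/\epsilon^2)$ samples at \emph{every} $y_i$.

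The core argument proceeds as follows. Let $n_k(y_i)$ denote the visit count to $y_i$ before episode $k$. Given the concrete form of bonus used by the admissible algorithms (Section~\ref{sec:alg_clas}), I first establish, on a good event of probability at least $1-2\delta$, that each per-state optimistic bonus $b_k(y_i)$ is bounded \emph{below} by $c\sqrt{\log(1/\delta)/n_k(y_i)}$: any algorithm producing smaller bonuses would fail to be $\delta$-correct on the companion instance obtained by perturbing the reward mean at $y_i$. Propagating this lower bound through the Bellman backup at $x_0$ yields
\begin{equation*}
\qup_{k,1}(x_0,a_2) - \qst_1(x_0,a_2) \;\ge\; \frac{c}{S}\sum_{i=1}^S \sqrt{\frac{\log(1/\delta)}{n_k(y_i)}},
\end{equation*}
while the optimistic excess on the $a_1$ branch shrinks to $o(\epsilon)$ quickly, since that branch is effectively a single-state bandit. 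A short convexity argument then shows that for the right-hand side to drop below $\epsilon$, \emph{every} $n_k(y_i)$ must reach $\Omega(\log(1/\delta)/\epsilon^2)$: if one $n_k(y_j)$ lagged, its term alone would keep the bound above $\epsilon$. Summing over $i$, the algorithm plays $a_2$ at least $\sum_i n_K(y_i) \gtrsim S\log(1/\delta)/\epsilon^2$ times, each episode contributing regret $\epsilon$, giving $\regret_K \gtrsim (S/\gapmin)\log(1/\delta)$.

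The failure probability $3\delta$ decomposes into $\delta$ for the algorithm's own correctness guarantee, $\delta$ for the bonus-lower-bound event, and $\delta$ for a martingale concentration of the visit counts about their conditional expectations; the additional $O(S\epsilon^{-2}\log(1/\delta)e^{-c_3 S})$ term comes from a union bound over the $\tilde O(S/\epsilon^2)$ relevant rounds against the collective bad event in which the empirical rewards at all $y_i$ simultaneously conspire to look favorable at a single round --- a $S$-fold miscalibration whose probability is exponentially small in $S$ by Hoeffding.

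The principal obstacle is the second step: turning abstract membership in the ``optimistic class'' into a concrete \emph{lower} bound on the bonuses. Standard optimistic analyses prove only \emph{upper} bounds, and arbitrarily small bonuses remain consistent with upper-bound optimism; the lower bound must instead be extracted from $\delta$-correctness on a companion MDP obtained by flipping the reward mean at a single $y_i$. Engineering this family of companion MDPs, and ensuring that the optimistic decision at $x_0$ is driven by the \emph{per-state} bonuses across $\{y_1,\dots,y_S\}$ rather than by any aggregate quantity the algorithm could estimate more cheaply, is where the delicate work lies.
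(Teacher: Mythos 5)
Your overall strategy matches the paper's: lower-bound the optimistic Q-value at the pivot pair $(0,-1)$ by an average of per-state optimistic excesses over the $S$ successor states, use the uniformity of the transition (plus Jensen's inequality on $t\mapsto t^{-1/2}$) to conclude that this average stays above $\epsilon$ until $\sum_i \nk(y_i)\gtrsim S\log(1/\delta)/\epsilon^2$, and convert this into regret via the consistency of the optimistic estimate on the $a_1$ branch. Two remarks on the periphery first: the paper does not derive the bonus lower bound from $\delta$-correctness on companion instances as you propose --- it simply \emph{assumes} $\bonusrew_k\xa \ge \cbonus\sqrt{\Var[R\xa]\log(1/\delta)/\nk\xa}$ as part of the algorithm class in Section~\ref{sec:alg_clas}, so that step of your argument is both unnecessary and hard to make rigorous (correctness only constrains the bonuses probabilistically, not pointwise at every round). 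Also, your ``if one $n_k(y_j)$ lagged'' intuition for the convexity step is off (a single lagging count only contributes $\frac{c}{S}\sqrt{\log(1/\delta)/m}$, which exceeds $\epsilon$ only for $m\lesssim \log(1/\delta)/(S\epsilon)^2$); the correct and sufficient statement is Jensen applied to the full sum, which your final conclusion does use.

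The genuine gap is in your central display. What must be lower bounded is the optimistic \emph{excess} $\rhat_k(y_i,1)+\bonusrew_k(y_i,1)-r(y_i,1)$, not the bonus alone, and this excess is \emph{not} $\gtrsim\sqrt{\log(1/\delta)/\nk(y_i)}$ on any event of probability $1-O(\delta)$: when the confidence interval is nearly tight, the empirical mean sits below the true mean by an amount comparable to the bonus with constant probability, and the excess is then near zero or negative. Consequently your displayed inequality, with the sum taken over \emph{all} $i$ and claimed to hold with probability $1-2\delta$, is false as stated. The paper's resolution (Claim~\ref{claim:reward_overestimate}) is to prove only a \emph{constant-probability} anti-concentration bound per state --- with probability at least $1/4$, conditionally on the visit counts, $\vup_{k,2}(x)-\tfrac12\ge C_2\sqrt{\log(1/\delta)/n_k(x)}$, using the symmetry of the reward distribution $\calD$ plus a decoupling argument to handle the fact that the empirical mean at each $y_i$ is computed from an adaptively chosen number of samples --- then to establish that these events are mutually independent given the counts, and finally to apply a Chernoff bound showing a constant fraction of the $S$ states contribute simultaneously, except on an event of probability $e^{-cS}$ per round. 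Your failure-probability accounting does include an $e^{-c_3S}$ term with roughly the right interpretation, but your argument as structured never produces it: the constant-probability-per-state anti-concentration, the conditional independence across $\Sminus$, and the handling of adaptive sampling within each two-arm sub-game are the actual technical content of the proof, and they are missing from the proposal.
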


In particular, for any $\epsilon \in (0,c)$ for some constant $c$, if $\log(\epsilon^{-1}/\delta) \lesssim S \lesssim \epsilon^{-1} / \log(\epsilon^{-1}/\delta)$ then the above regret lower bound  holds with probability $1-O(\delta)$. 

\subsection{Algorithm Class \label{sec:alg_clas}} 
\textbf{Optimistic Q-functions:} We consider algorithms where the optimistic Q-function is constructed as follows: given a reward bonus function $\bonusrew_k\xa \ge 0$ and an additional nonnegative stage-dependent bonus $\bonus_{k,h}(x,a)$, and empirical estimates $\rhatk\xa$ of the reward and $\phatk\xa = (\phat(x'|x,a))$ of the transition probabilities. We set the Q-function at stage $H$ as $\qup_{k,H}\xa = \rhat_k\xa + \bonusrew_k\xa$, where $\rhat_k\xa$, and for $h \in \{1,\dots,H-1\}$,
\begin{align}
\vup\khpl(x)&:= \max_{a} \qup_{k,h+1}(x',a)\nonumber\\
\qup_{k,h}(x,a) &:= \rhat_k\xa + \bonusrew_k\xa + \phat_k(x,a)^\top \vup\khpl + \bonus_{k,h}(x,a)\label{eq:model_based_class}
\end{align}
Lastly, suppose that $\bonusrew_k\xa$ depends only on rewards collected when the state $\xa$ is visited. 

Note that this template subsumes the model-based approaches of \cite{azar2017minimax,zanette2019tighter,dann2018policy}, and if $\bonusrew(x,a)$ is made to be time dependent, captures the approach of \cite{dann2017unifying} as well. For the specific lower bound instance we consider, each stage $x \in \states$ can only be visited at a single stages $h \in [2]$, so $\bonusrew$ may be chosen to be time dependent without loss of generality. In order to capture the ``model-free'' methods based on Q-learning due to \cite{jin2018q}, we can instead mandate that 
\begin{align*}
\qup_{k,h}(x,a) &:= \rhat_k\xa + \bonusrew_k\xa + \widehat{\left(p(x,a)^\top \vsthpl\right)} + \bonus_{k,h}(x,a),
\end{align*}
where $\widehat{\left(p(x,a)^\top \vsthpl\right)}$ is a generalized estimate of $p(x,a)^\top \vsthpl$, and such that $\widehat{\left(p(x,a)^\top \vsthpl\right)}$ is nonnegative. In  Lemma 4.2 in \cite{jin2018q}, one can see that we can take
\begin{align*}
\widehat{\left(p(x,a)^\top \vsthpl\right)} = \sum_{s=1}^{\nk\xa} \alpha_{s}\widehat{\Pr}_{k_s,h}(x,a)^\top \vup_{k_s,h+1}(x,a),
\end{align*}
where $k_s$ is the round at which $(x,a)$ was selected for the $s$-th time, $\alpha_s$ is an appropriate weight, $\widehat{\Pr}_{k_s,h}(x,a)$ is the empirical probability estimate $\widehat{\Pr}_{k_s}(x,a)[x'] = \I(x' = x_{k_s,h+1})$ equal to indicator at the state $x_{k_s,h+1}$ visited after playing $a$ at $x$ at round $k_s$,  and where $\vup_{k_s,h+1}$ is an optimistic estimate of $\vsthpl$ at round $k_s$. 

For simplicity, we shall work with the model based formulation~\eqref{eq:model_based_class}, though the lower bound can be extended to this more general class. 

\textbf{Confidence Interval Assumptions: } Our class of algorithms takes in a confidence parameter $\delta \in (0,1/8)$. We shall also assume that there exists consants $\cbonus,\cbonusup$ such that, when the algorithm is run with parameter $\delta$, the bonuses $\bonusrew$ and $\bonusrew_k$ satisfy\footnote{The quantity $\Var[R(x,a)]$ below can also be replaced with an empirical variance, but we choose the true variance for simplicity.}
\begin{align*}
\bonus_{k,h}\xa \ge \frac{\cbonus}{1 \vee \nk \xa}, ~ \cbonus\sqrt{\frac{\Var[R(x,a)] \log(1/\delta)}{1 \vee \nk\xa}} \le \bonusrew_k\xa  \le \cbonusup \sqrt{\frac{\log(M(1 \vee \nk(x,a))/\delta)}{1 \vee \nk\xa}}
\end{align*}
We further assume that $\bonusrew\xa$ is $\delta$-correct, in the sense that, 
\begin{align*}
\Pr[\forall x,a,k: \bonusrew_k\xa + \rhat_k\xa \ge r\xa] \ge 1 - \delta.
\end{align*}
Lastly, we shall assume that the optimistic overestimate is \emph{consistent} in the sense that for any MDP $\calM$ with optimal value $\mathbf{V}_0^{*,\calM}$, for any $\epsilon, \delta > 0$ there exists a function $f_{\calM}$ such that
\begin{align*}
\Pr[ \forall k \ge f_{\calM}(\epsilon,\delta),~\vup_{k,0} - \mathbf{V}^{*,\calM}_0 \le \epsilon] \ge 1 -\delta.
\end{align*}
Intuitively, this condition states that with high probability, the optimistic over-estimate of the value estimate approaches the expected reward under the optimal policy. 
Note that this does \emph{not} assume uniform convergence of the entire value function itself, just the expected reward with respect to the initial state distribution $p_0$ on the optimal policy.

\begin{rem} Note that we do not require that our algorithm's confidence  intervals are ``inflated'', in the sense that, with high probability, $\rhatk\xa + \bonusrewk\xa - r(x,a) \ge c \bonusrewk\xa$, for a universal constant $c$. With this stronger assumption, we note that the proof of the lower bound can be simplified, and some restrictions on $S,\epsilon$ removed. In the interest of generality, we refrain from making this assumption. 
\end{rem}

\subsection{Formal Lower Bound Instance }\label{sec:lower_bound_mingap_instance}

Consider the following simple game with $H = 2$, $\actions = \{-1,+1\}$ and $\calS = \{-S,\dots,-1,0,1,\dots,S\} = \Sminus \cup \{0\} \cup \Splus$, where $\calS_- = -[S]$ and $\Splus = [S]$ (note $|\calS|= 2S+1)$. The game always begins at state $x_1 = 0$ with two available actions, $a \in \{-1,+1\}$. Then, $x_2 | (x_1 = 0,a_1 = +1) \unifsim \Splus$, and $x_2 | (x_1  = 0,a_1 = -1) \unifsim \Sminus$. Lastly, let $\calD$ denote \emph{any} symmetric distribution on $[-1,1]$ with $\Omega(1)$ variance. For $\epsilon \in (0,1/8)$, we formally define the reward distributions
\begin{align*}
R(x,a) \sim \begin{cases} 0 & x = 0 \text{ or } a = -1\\
\frac{1}{2} +\epsilon + \frac{1}{4}\calD & (x,a) = (s,1), s \in [S]\\
\frac{1}{2}  +  \frac{1}{4}\calD & (x,a) = (-s,1), s \in [S]
\end{cases}~.
\end{align*}
It is straightforward to verify the following fact
\begin{fact} The optimal action is always $a = 1$. Moreover, $\gap_1(0,-1) = \gapmin = \epsilon$, whereas $\gap_2(x,-1) \ge \frac{1}{2}$ for $x \ne 0$.  
\end{fact}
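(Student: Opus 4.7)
The plan is to verify the two claims of the fact by explicit backward induction on the value function, starting at stage $h=2$ and working back to $h=1$. Since $H=2$ and the transitions are completely described (deterministic start at $x_1=0$, uniform over $\Splus$ or $\Sminus$ depending on $a_1$), this is essentially a direct computation.

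First I would handle stage $h=2$: since $R(x,-1) \equiv 0$ while $\Exp[R(x,+1)] \ge 1/2$ for every $x \ne 0$, action $+1$ is optimal at every non-zero state at stage $2$. Explicitly, for $x \in \Splus$ we get $\vst_2(x) = 1/2 + \epsilon$, and for $x \in \Sminus$ we get $\vst_2(x) = 1/2$; at $x=0$ both actions yield $0$ since the second stage is terminal. This already gives $\gap_2(x,-1) = \vst_2(x) - 0 \ge 1/2$ for every $x \ne 0$, which is the third bullet of the fact.

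Next I would compute the stage-$1$ Q-values at the unique reachable state $x_1 = 0$. Using the uniform transition kernels, $\qst_1(0,+1) = 0 + \tfrac{1}{S}\sum_{s \in [S]} \vst_2(s) = 1/2 + \epsilon$ and $\qst_1(0,-1) = 0 + \tfrac{1}{S}\sum_{s \in [S]} \vst_2(-s) = 1/2$. Therefore $\vst_1(0) = 1/2 + \epsilon$ and the optimal first-stage action is $+1$, establishing (together with the stage-$2$ analysis) that $a=+1$ is optimal throughout. Subtracting yields $\gap_1(0,-1) = \epsilon$.

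Finally, to conclude that $\gapmin = \epsilon$, I would observe that all other positive gaps in the MDP are at least $1/2 > \epsilon$: the only non-zero gaps at stage $2$ are $\gap_2(x,-1)$ for $x \ne 0$, each bounded below by $1/2$ by the stage-$2$ analysis, while at stage $1$ the only reachable state is $0$ and its only suboptimal action has gap exactly $\epsilon$. Since $\epsilon < 1/8 < 1/2$, the minimum is attained uniquely at $(h,x,a) = (1,0,-1)$ and equals $\epsilon$. No real obstacle arises; the only thing to be careful about is that the fact refers to $\gap_2$ only at $x \ne 0$, so the degenerate zero gaps at $x=0, h=2$ do not enter the $\gapmin$ bookkeeping.
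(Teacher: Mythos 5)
Your proof is correct, and it is exactly the verification the paper has in mind: the paper simply declares the fact ``straightforward to verify'' and gives no argument, and your backward induction ($\vst_2(x)=\tfrac12+\epsilon$ on $\Splus$, $\tfrac12$ on $\Sminus$, hence $\qst_1(0,+1)-\qst_1(0,-1)=\epsilon$) is the intended computation. The one point you rightly flag --- that $\gapmin$ ranges only over positive gaps, so the zero gaps at $(x,h)=(0,2)$ and the unreachable stage-$1$ states do not interfere --- is handled correctly.
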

In other words, all the gaps for suboptimal arms are  $\Omega(1)$, except for the gap at state $x = 0$, which means for this instance with $H=2$ and $A=2$ we have $\sum_{x,a,h} \frac{1}{\gaph(x,a)} \eqsim S + \frac{1}{\epsilon}$. Nevertheless, we shall show that any algorithm in the class above suffers regret
\begin{align*}
\gtrsim \frac{S}{\epsilon}\log(1/\delta) = \frac{S}{\gapmin}\log(1/\delta).
\end{align*}

\subsection{The Lower Bound: }

\textbf{The Lower Bound:} We first show that the optimistic Q-function relative to the optimal value at $(0,1)$ decays at a rate of at least $\sqrt{S\log(1/\delta)/\nk(0,1)}$. This will ultimately lead to incurring a regret of $\frac{S\log(1/\delta)}{\epsilon}$, despite the fact that all but one of the Q-function gaps are $\Omega(1)$.
\begin{prop}\label{prop:q_lower_bound} Let $\Alg$ denote an algorithm in the class described in Section~\ref{sec:alg_clas} run with confidence parameter $\delta \in (0,1/8)$. Then there exists constants $c_1,c_2,c_3$, depending only on the constants described in Section~\ref{sec:alg_clas}, such that the following holds. For any $\epsilon \le 1/\ceil{c_1 S \log(S/ \delta))}$ and for $N = \floor{c_2 S \log(1/\delta) /\epsilon^2}$,
\begin{align*}
\Pr\left[ \forall k: \nk(0,-1) \le N, ~\qup_{k,1}(0,-1) - \vst_1(0) \ge \epsilon\right] \ge 1 - Ne^{-c_3 S} -2\delta
\end{align*}
\end{prop}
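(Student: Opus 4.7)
The goal is to show $\qup_{k,1}(0,-1) \ge \vst_1(0) + \epsilon = \tfrac{1}{2} + 2\epsilon$ whenever $\nk(0,-1) \le N$. Since $R(0,-1) \equiv 0$, we have $\rhatk(0,-1) = 0$, and nonnegativity of both the reward bonus $\bonusrew_k(0,-1)$ and the stage bonus $\bonus_{k,1}(0,-1)$ collapses the target to
\begin{equation*}
\phat_k(0,-1)^\top \vup_{k,2} \;\ge\; \tfrac{1}{2} + 2\epsilon.
\end{equation*}
Writing $m := \nk(0,-1)$ and $m_x := n_k(0,-1,x)$, so that $\phat_k(0,-1)[x] = m_x/m$ on $\Sminus$, the inequality becomes an average over visited states of lower bounds on $\vup_{k,2}(x) - \tfrac{1}{2}$.

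The first real step is a per-state lower bound. Using $\vup_{k,2}(x) \ge \qup_{k,2}(x,+1) = \rhatk(x,+1) + \bonusrew_k(x,+1)$, the deterministic bonus lower bound $\bonusrew_k(x,+1) \ge \cbonus\sqrt{c\log(1/\delta)/(1\vee \nk(x,+1))}$ with $c := \Var[R(x,+1)] = \Omega(1)$, and a uniform Hoeffding event of probability $\ge 1-\delta$ that ensures $|\rhatk(x,+1) - \tfrac{1}{2}| \le C\sqrt{\log(SN/\delta)/\nk(x,+1)}$ for every $x \in \Sminus$ and every $\nk(x,+1) \ge 1$, I get
\begin{equation*}
\vup_{k,2}(x) - \tfrac{1}{2} \;\ge\; c_4\sqrt{\log(1/\delta)/(1 \vee \nk(x,+1))},
\end{equation*}
provided $\cbonus^2 c \log(1/\delta) \ge 4C^2\log(SN/\delta)$, which is exactly what the hypothesis $\epsilon \le 1/\ceil{c_1 S\log(S/\delta)}$ buys us for appropriate $c_1$ (since it controls $\log N$). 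Possible ``wasted'' pulls at $(x,-1)$ are harmless: $\qup_{k,2}(x,-1) = \bonusrew_k(x,-1)$ is bounded by the Hoeffding-style upper bound $\cbonusup\sqrt{\log(M\nk(x,-1)/\delta)/\nk(x,-1)}$ and drops below $\tfrac{1}{2}$ after $O(\log(M/\delta))$ pulls, so either $\nk(x,+1) \ge m_x/2$, or $m_x$ is a bounded constant and the $\nk(x,+1)=0$ bound $\bonusrew_k(x,+1) \ge \cbonus\sqrt{c\log(1/\delta)}$ alone drives $\vup_{k,2}(x) - \tfrac{1}{2}$ well above $2\epsilon$.

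Combining the per-state bound and averaging gives
\begin{equation*}
\phat_k(0,-1)^\top \vup_{k,2} - \tfrac{1}{2} \;\ge\; \frac{c_5\sqrt{\log(1/\delta)}}{m}\sum_{x \in \Sminus}\sqrt{m_x}.
\end{equation*}
For $m \le S$, Cauchy--Schwarz gives $\sum_x\sqrt{m_x} \ge m/\sqrt{\max_x m_x} \ge \sqrt{m}$ deterministically, so the right-hand side is $\ge c_5\sqrt{\log(1/\delta)/S}$, which exceeds $2\epsilon$ by the hypothesis $\epsilon \le 1/\ceil{c_1 S\log(S/\delta)}$ for appropriate $c_1$. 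For $m \ge S$, $(m_x)_{x \in \Sminus}$ is the occupancy profile of $m$ i.i.d.\ uniform balls into $S$ bins, so $\mathbb{E}[\sum_x\sqrt{m_x}] \gtrsim \sqrt{mS}$ by standard balls-into-bins estimates, and since swapping the destination of any one ball moves $\sum_x\sqrt{m_x}$ by at most $1$, McDiarmid's inequality yields $\sum_x\sqrt{m_x} \ge c_6\sqrt{mS}$ with probability $\ge 1 - e^{-c_3 S}$. Plugging in and using $m \le N = \floor{c_2 S\log(1/\delta)/\epsilon^2}$ yields $\phat_k(0,-1)^\top \vup_{k,2} - \tfrac{1}{2} \ge c_7\sqrt{\log(1/\delta)\,S/m} \ge 2\epsilon$ for $c_2$ chosen small enough in terms of $c_7$. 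A union bound over failure events concludes: $\delta$ for $\delta$-correctness of $\bonusrew$, $\delta$ for the uniform Hoeffding event over at most $SN$ state--count pairs, and $Ne^{-c_3 S}$ for the balls-into-bins event across the at most $N$ distinct values $\nk(0,-1) \in [1,N]$, which matches the claimed $1 - Ne^{-c_3 S} - 2\delta$.

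The main obstacle is the per-state lower bound of the second step: making the deterministic bonus lower bound $\cbonus\sqrt{c\log(1/\delta)/\nk(x,+1)}$ dominate the random reward deviation $\lesssim \sqrt{\log(SN/\delta)/\nk(x,+1)}$ uniformly in $x$ and the sample count forces a careful choice of $c_1$ that bounds $N$ appropriately relative to $1/\delta$. A secondary delicate point is the balls-into-bins concentration, whose McDiarmid exponent $\min(m,S)$ only gives the target $e^{-c_3 S}$ once $m \ge S$; the hypothesis on $\epsilon$ ensures $N \gg S$ so that this regime is precisely the one of interest, while the complementary regime $m < S$ is handled deterministically via Cauchy--Schwarz without invoking concentration.
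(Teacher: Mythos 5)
There is a genuine gap at the heart of your argument: the ``per-state lower bound''
\begin{align*}
\vup_{k,2}(x) - \tfrac{1}{2} \;\ge\; c_4\sqrt{\log(1/\delta)/(1\vee \nk(x,1))}
\quad\text{simultaneously for all } x\in\Sminus
\end{align*}
on a probability-$(1-\delta)$ event does not follow from the stated assumptions on the algorithm class. You derive it by asking the deterministic bonus lower bound $\cbonus\sqrt{\Var[R(x,1)]\log(1/\delta)/n}$ to dominate the uniform Hoeffding deviation $C\sqrt{\log(SN/\delta)/n}$ of $\rhatk(x,1)$ below $\tfrac12$. But $\cbonus$ is an arbitrary (possibly tiny) constant of the algorithm class, while $C$ is fixed by the concentration inequality, and $\log(SN/\delta)\ge\log(1/\delta)$ always; no choice of $c_1$ (which only shrinks $\epsilon$ and hence \emph{enlarges} $N$) can force $\cbonus^2\Var[R]\log(1/\delta)\ge 4C^2\log(SN/\delta)$. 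When the algorithm's confidence interval is nearly tight, $\rhatk(x,1)+\bonusrew_k(x,1)-\tfrac12$ can be essentially zero for many states on any high-probability event, and your averaged bound collapses. This is exactly the obstruction the paper flags in the remark at the end of Section~\ref{sec:alg_clas} (the proof simplifies only under an extra ``inflated confidence interval'' assumption, which is deliberately not made) and in the proof of Claim~\ref{claim:reward_overestimate}: ``the confidence interval might be nearly tight, so that we cannot show that with high probability $\rhat_k(1)+\bonusrew_k(1)\gtrsim\bonusrew_k(1)$.''

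The paper's route around this is probabilistic rather than uniform: using the symmetry of $\calD$ (plus a maximal-inequality argument to decouple the adaptive choice of $\nk(x,1)$ from the i.i.d.\ reward stream), Claim~\ref{claim:reward_overestimate} shows that \emph{each} state individually satisfies $\vup_{k,2}(x)-\tfrac12\ge C_2\sqrt{\log(1/\delta)/\nk(x)}$ only with conditional probability $\ge 1/4$, but these events are independent across $x\in\Sminus$ given the visit counts; a Chernoff bound then guarantees that a constant fraction ($\ge S/8$) of states are overestimated except with probability $e^{-cS}$, which is enough to recover the aggregate $\gtrsim\sqrt{S\log(1/\delta)/k}$ and is the true source of the $Ne^{-c_3S}$ term. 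Your balls-into-bins/McDiarmid control of $\sum_x\sqrt{m_x}$ and the Cauchy--Schwarz treatment of the $m\le S$ regime are fine as a substitute for the paper's sampling event, but they address the occupancy profile, not the real difficulty. To repair your proof you would need to replace the uniform per-state bound with a per-state constant-probability bound plus conditional independence, i.e.\ essentially reprove Claim~\ref{claim:reward_overestimate}.
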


We now use Proposition~\ref{prop:q_lower_bound} to prove Theorem~\ref{thm:lower_mingap}.
Note that $\vst_1(0) = \vst_0$. 
By assumption, with probability $1 - \delta$, $\vbar_0 \le \vst_0 + \eta $ after $f(\eta,\delta)$ rounds.
Fix an appropriate $\epsilon$ and $N$ in Proposition~\ref{prop:q_lower_bound} and let $K \geq f_{\calM}(\epsilon/2, \delta) + N$.
If $n_{K}(0,-1) > N$ times, then we have 
\begin{align*}
\regret_K > \epsilon N \gtrsim \frac{S \log(1/\delta)}{\epsilon}
\end{align*}
and the theorem is proved. 
Thus, suppose not so that $n_{K}(0,-1) \leq N$. 
Then by Proposition~\ref{prop:q_lower_bound} we have with high probability that 
\begin{align*}
\vbar_0 - \vst_1(0) = \max_{a \in \{-1,1\}} \qup_{k,1}(0,a) - \vst_1(0) \geq \qup_{k,1}(0,-1) - \vst_1(0) \geq \epsilon
\end{align*}  
However, by assumption $K \geq f_{\calM}(\epsilon/2, \delta)$ which means that on an event that holds with probability at least $1-\delta$, we have $\vbar_0 - \vst_1(0) = \max_{a \in \{-1,1\}} \qup_{k,1}(0,a) - \vst_1(0) \le \epsilon/2$, a contradiction.
 
\subsubsection{Proof of Proposition~\ref{prop:q_lower_bound}}

Throughout, we will use upper case $C_1,C_2,\dots$ to do denote possibly changing numerical constants that depend on the the constants in the definition of $\Alg$, as set in Section~\ref{sec:alg_clas}. The lower cast constants $c_1,c_2$ will be coincide with those in Proposition~\ref{prop:q_lower_bound}.

Since $\qst_{1}(0,1) = \frac{1}{2} + \epsilon$, it suffices to show that
\begin{align*}
\Pr\left[ \forall k: \nk(0,-1) \le N, ~\qup_{k,1}(0,-1) - \frac{1}{2} \ge 2\epsilon\right] \ge 1 - Ne^{-C_3 S} - \delta
\end{align*}

Fix an $n_0 =  \ceil{c_1S/\log(S/\delta)}$ for a constant $c_1$ be  specified later, and let
\begin{align*}
\Eopt := \left\{\forall k \ge 1, x \in \Sminus,~ \rhat_k(x,1) + \bonusrew_k\xa \ge r(x,1) = \frac{1}{2}\right\}.
\end{align*}
By the optimism assumption, $\Eopt$ holds with probability at least $1-\delta$.  First we verify that $\qup_{k,1}(0,-1) - \frac{1}{2} \ge 2\epsilon$ for $0 \le \nk(0,-1) \le n_0$, provided that $\epsilon$ is sufficiently small:
\begin{claim} Suppose that $\epsilon \le \frac{\cbonus}{2 n_0}$. Then, with probability $1-\delta$, $\qup_{k,1}(0,-1) - \frac{1}{2} \ge 2\epsilon$ whenever $0 \le \nk(0,-1) \le n_0$:
\end{claim}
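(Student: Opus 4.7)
My plan is to condition on the optimism event $\Eopt$, which holds with probability at least $1-\delta$ by hypothesis, and then unpack the optimistic Bellman recursion at $(0,-1)$ and lower-bound each term individually. Recall that the reward $R(0,-1)$ is the deterministic constant $0$, so $\rhat_k(0,-1) = 0$ and $\Var[R(0,-1)] = 0$, which only forces $\bonusrew_k(0,-1) \ge 0$; nevertheless this is enough because the slack will come from the stage bonus and from the values of successor states.

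For $k$ with $\nk(0,-1) \ge 1$, I would write
\begin{align*}
\qup_{k,1}(0,-1) \;=\; \rhat_k(0,-1) \;+\; \bonusrew_k(0,-1) \;+\; \phatk(0,-1)^\top \vup_{k,2} \;+\; \bonus_{k,1}(0,-1),
\end{align*}
and bound each summand. The first and second are $\ge 0$. The fourth is $\ge \cbonus/(1 \vee \nk(0,-1)) \ge \cbonus/n_0 \ge 2\epsilon$ by the hypothesis $\epsilon \le \cbonus/(2n_0)$. For the third term, note that $p(\cdot\mid 0,-1)$ is supported on $\Sminus$, so every visited successor lies in $\Sminus$; consequently $\phatk(0,-1)$ is supported on $\Sminus$. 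Because $H=2$, each $x \in \Sminus$ is a terminal-stage state and $\qup_{k,2}(x,1) = \rhat_k(x,1) + \bonusrew_k(x,1)$; on $\Eopt$ this is at least $r(x,1) = \tfrac{1}{2}$. Hence $\vup_{k,2}(x) \ge \qup_{k,2}(x,1) \ge \tfrac{1}{2}$ for every $x \in \Sminus$, which gives $\phatk(0,-1)^\top \vup_{k,2} \ge \tfrac{1}{2}$. Summing these four bounds yields $\qup_{k,1}(0,-1) \ge \tfrac{1}{2} + 2\epsilon$ on $\Eopt$, as required.

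The remaining case $\nk(0,-1) = 0$ is the only delicate one and I would treat it separately by invoking the standard convention for optimistic algorithms that untried state-action pairs are initialized to the maximal value $H$; since $H = 2 \ge \tfrac{1}{2} + 2\epsilon$ for $\epsilon \le 3/4$ (which is implied by the hypothesis $\epsilon \le \cbonus/(2n_0)$ once one notes $\cbonus$ is a fixed constant and $n_0 \ge 1$), the desired inequality holds trivially here. I do not anticipate any real obstacle: the whole argument is a routine unpacking, and the only point requiring a bit of care is checking that when $\phatk(0,-1)$ has been formed from at least one observed transition, its support really is contained in $\Sminus$ (which is immediate from the deterministic statement that $p(\cdot \mid 0,-1)$ is supported on $\Sminus$). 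The overall failure probability is just $\delta$, inherited from $\Eopt$.
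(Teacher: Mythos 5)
Your argument is correct and follows essentially the same route as the paper's proof: decompose $\qup_{k,1}(0,-1)$ via the optimistic Bellman recursion, use $\Eopt$ to lower bound $\vup_{k,2}(x')\ge \tfrac12$ for $x'\in\Sminus$ so the transition term contributes at least $\tfrac12$, and use $\bonus_{k,1}(0,-1)\ge \cbonus/(1\vee \nk(0,-1))\ge \cbonus/n_0\ge 2\epsilon$. Your explicit treatment of the $\nk(0,-1)=0$ case is, if anything, more careful than the paper's, which silently assumes $\phatk(\cdot\mid 0,-1)$ sums to one.
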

\begin{proof}
We have that
\begin{align*}
\qup_{k,1}(0,-1) = \bonus_{k,1}\xa + \sum_{x' \in \calS}\phatk(x'|0,-1)\vup_{k,2}(x').
\end{align*}
Since $p(x|0,-1) = 0$ for $x \notin \Sminus$, the empirical probability $\phat(x|0,-1)$ is also $0$, and thus 
\begin{align}
\qup_{k,1}(0,-1) - \frac{1}{2} &= \bonus_{k,1}\xa + \sum_{x' \in \Sminus}\phatk(x'|0,-1)(\vup_{k,2}(x') - \frac{1}{2}) \label{value_decomp}\\
&\ge \bonus_{k,1}\xa + \min_{x' \in \Sminus}(\vup_{k,2}(x')-\frac{1}{2}) \ge \bonus_{k,1}\xa \nonumber,
\end{align}
where the first equality and first inequality use $\sum_{x' \in \Sminus}\phat(x|0,-1) = 1$, and the second uses the optimistic event $\Eopt$ to show that $\vup_{k,2}(x') \ge  \rhatk(x',1) +\bonusrewk(x',1) \ge r(x',1) = \frac{1}{2}$ for $x' \in \Sminus$. Using the  assumption that $\bonus_{k,1}\xa \ge  \frac{\cbonus}{1 \vee \nk\xa}$, we see that if $\nk\xa \le n_0$ and $\epsilon \le \frac{\cbonus}{2 n_0}$, then $\bonus_{k,1}\xa \ge  \frac{\cbonus}{n_0} \ge 2\epsilon$, as needed.
\end{proof}

Now, we turn to the case where $\nk\xa \in \{n_0,\dots,N\}$ for some $N = \floor{c_2 S\log(1/\delta)/\epsilon^2}$. It light of~\eqref{value_decomp}, it suffices to show that for $\nk \le N$, 
\begin{align}
 \sum_{x' \in \Sminus}\phatk(x'|0,-1)(\vup_{k,2}(x') - \frac{1}{2}) \ge 2\epsilon.
\end{align}
By the definition of our algorithm class, the optimistic Q-function at stage $h = 2$ and pair $\xa$ depend only at rewards collected at $\xa$, and the construction of our MDP, pairs $\xa$ for $x \in \Sminus$ are only accessible by playing $(0,-1)$. 
Hence, to analyze $\qup_{k,1}(0,-1)$, for $n_0 \le \nk(0,-1) \le N$, it suffices to prove our described lower bound on $\qup_{k,1}(0,-1)$ in the simplified game, where at each round $k = 1,2,\dots$, the algorithm \emph{always} selects $(0,-1)$, and show that for this algorithm
\begin{align*}
\Delta_0(k) :=  \sum_{x \in \Sminus}\phatk(x|0,-1)(\vup_{k,2}(x) - \frac{1}{2}) \ge 2\epsilon, \forall k \in\{n_0,\dots,N\}.
\end{align*}
Turning our attention to this simplified game,for $x \in \Sminus$ let $\nk(x)$ denote the number of times $x$ has been visited up to round $k$, and recall $\nk(x,a)$ is the number of times action $a$ is played at stage $s$. Further, set
\begin{align*}
\Delta(x,k) :=  \vup_{k,2}(x) - \frac{1}{2}
\end{align*}
We now make a couple of observations
\begin{enumerate}
\item[(a)] The vector $(\nk(x))_{x \in \Sminus}$ is a uniform multinomial on the states in $\Sminus$. 
\item[(b)] Conditioned on $(\nk(x))_{x \in \Sminus}$, we can see that the values of $\vup_{k,2}(x)$ are independent, because for each $x \in \Sminus$, the game decouples into $\nk(x)$ rounds of a two arm bandit game on actions $a \in \{-1,1\}$.
\end{enumerate}
Using these observations, we prove the following claim:
\begin{claim}\label{claim:reward_overestimate} There exists constants $C_1,C_2$ such that for any $x \in \Sminus$, if $\delta \leq 1/8$ and $\nk(x) \ge C_1 \log(M/\delta)$, then conditioned on the history $(n_j(x'))_{x' \in \Sminus, j \ge 1}$, the following event holds with probability at least $1/4$:
\begin{align*}
\Eover_{k}(x) := \left\{\Delta(k,x) := \vup_{k,2}(x) - \frac{1}{2} \ge C_2\sqrt{\log(1/\delta)/n_{k}(x)}\right\},
\end{align*}
and the events $\{\Eover_{j}(x): x \in \Sminus\}$ are mutually independent (again, given $(n_j(x'))_{x' \in \Sminus, j \ge 1}$).
\end{claim}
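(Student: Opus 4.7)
I would prove the claim in two parts: the mutual independence of the events $\Eover_{k}(x)$, and the $1/4$ lower bound on the conditional probability of each one.

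\emph{Independence.} In the stripped-down game that always plays $(0,-1)$ at $h=1$, the next-state $x_2$ is uniform on $\Sminus$ and the subsequent reward is drawn from $R(x_2,a)$, whose distribution depends only on $(x_2,a)$ and is independent across distinct states. Because the algorithms of Section~\ref{sec:alg_clas} build $\qup_{k,2}(x,\cdot)$ solely from $\rhat_k(x,\cdot)$ and $\nk(x,\cdot)$, and the action chosen at state $x$ at stage~2 depends only on those local quantities, conditioning on the visit counts $(n_j(x'))_{x' \in \Sminus, j\ge 1}$ decouples the reward histories at distinct $x \in \Sminus$. Hence $\{\vup_{k,2}(x)\}_{x \in \Sminus}$ are conditionally mutually independent, and so are the events $\Eover_{k}(x)$.

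\emph{Conditional lower bound.} Fix $x \in \Sminus$ and write $n:=\nk(x)$, $m:=\nk(x,+1)$. The sub-game at $x$ is a two-arm bandit in which arm $-1$ returns $0$ deterministically and arm $+1$ returns i.i.d.\ rewards from $\tfrac{1}{2} + \tfrac{1}{4}\calD$, a symmetric distribution around $1/2$ with variance $\sigma^2 = \Theta(1)$. The plan is to combine three ingredients:
\begin{enumerate}
\item[(i)] \emph{Most pulls go to arm $+1$.} Since the gap of arm $-1$ at $x$ is $1/2$, and the bonus satisfies the upper bound of Section~\ref{sec:alg_clas}, a standard UCB-style regret analysis on this two-arm sub-game yields $n-m \le C'\log(M/\delta)$ with probability $\ge 1-\delta$. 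Choosing $C_1 \ge 2C'$ then forces $m \ge n/2$ on this event.
\item[(ii)] \emph{Bonus lower bound.} On $\{m \ge n/2\}$, the hypothesis $\bonusrewk(x,+1) \ge \cbonus\sqrt{\sigma^2\log(1/\delta)/m}$ gives $\bonusrewk(x,+1) \ge c_3\sqrt{\log(1/\delta)/n}$ for some constant $c_3 > 0$.
\item[(iii)] \emph{Anti-concentration of $\rhat_k(x,+1)$.} Because each reward is symmetric around $1/2$, a martingale/Berry-Esseen argument, restricted to the narrow window $m \in \{n - C'\log(M/\delta),\ldots,n\}$ enforced by (i), yields $\Pr[\rhat_k(x,+1) \ge 1/2 \mid m \ge n/2,\,\text{visit counts}] \ge 1/2 - o(1)$.
\end{enumerate}
Intersecting these events and using $\vup_{k,2}(x) \ge \rhat_k(x,+1) + \bonusrewk(x,+1)$ gives $\vup_{k,2}(x) - 1/2 \ge c_3\sqrt{\log(1/\delta)/n}$ with probability at least $(1-\delta)(1/2-o(1)) \ge 1/4$ when $\delta\le 1/8$, establishing the claim with $C_2=c_3$.

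\emph{Main obstacle.} Step~(iii) is the core technical difficulty: because $m$ is an adaptive stopping time, the empirical mean $\rhat_k(x,+1)$ need not be symmetric around $1/2$ even though each individual reward is. My plan is to address this by using step~(i) to confine $m$ to a window of width $O(\log(M/\delta))$ around $n$, and then appealing to a uniform Berry-Esseen bound over that window to transfer the per-sample-size symmetry of $\frac{1}{j}\sum_{i=1}^{j}R_i$ into a near-symmetric statement for $\rhat_k(x,+1)$. The remaining ingredients, namely the bandit regret bound in (i) and the bonus lower bound in (ii), follow by routine manipulations of the assumptions on the algorithm class in Section~\ref{sec:alg_clas}.
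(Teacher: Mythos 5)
Your skeleton matches the paper's: conditional independence across states given the visit counts, a bound showing arm $-1$ at each $x\in\Sminus$ is pulled at most $k_0 \asymp \log(M/\delta)$ times (via optimism plus the upper bound on the bonus), and a lower bound $\bonusrew_k(x,1) \gtrsim \sqrt{\log(1/\delta)/\nk(x)}$ from the variance assumption. You also correctly identify the adaptive stopping of $\nk(x,1)$ as the crux. The problem is your proposed resolution of that crux.

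Step (iii) as stated --- establishing $\Pr[\rhat_k(x,1) \ge 1/2] \ge 1/2 - o(1)$ by a Berry--Esseen argument over the window --- does not go through in the regime where the claim must hold. The adaptive rule pulls arm $-1$ precisely when arm $+1$'s index is low, so conditioning on the (random) number of arm-$+1$ pulls biases $\rhat_k(x,1)$ \emph{downward}. Quantitatively, writing $n = \nk(x)$, the downward shift of the empirical mean can be as large as $\Theta(\sqrt{k_0\log(1/\delta)}/n)$, while one standard deviation of $\rhat_k(x,1)$ is $\Theta(1/\sqrt{n})$; their ratio is $\Theta(\sqrt{k_0\log(1/\delta)/n})$, which is \emph{not} $o(1)$ when $n \asymp C_1\log(M/\delta)$ --- exactly the threshold at which the claim is invoked. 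So the median of $\rhat_k(x,1)$ can sit a constant number of standard deviations below $1/2$, and no uniform CLT over the window will recover $\Pr[\rhat \ge 1/2]\approx 1/2$; a union bound over the window destroys the constant instead. The paper's proof never asserts $\rhat_k(x,1)\ge 1/2$. It pre-draws the reward stream $R^{(1)}(1),R^{(2)}(1),\dots$, decomposes $n_k(1)(\rhat_k(1)-\tfrac12) = S_n - (S_n - S_m)$ where $S_j$ is the $j$-th partial sum of centered rewards and $m=n_k(1)$, applies symmetry only to the \emph{fixed-index} sum $S_n$ (probability $1/2$ of being nonnegative), and controls the discarded suffix of length at most $k_0$ uniformly by a maximal inequality, yielding only
\begin{align*}
\rhat_k(x,1) - \tfrac12 \;\ge\; -\,\const\,\sqrt{\tfrac{\log(1/\delta)}{n}\cdot\tfrac{k_0}{n}} .
\end{align*}
The punchline is that this deficit is a $\sqrt{k_0/n}$-fraction of the bonus lower bound $\const_3\sqrt{\log(1/\delta)/n}$ --- the $\log(1/\delta)$ factors cancel in the ratio --- so taking $C_1$ large enough that $k_0/n \le (\const_3/\const_2)^2$ lets the bonus absorb the deficit and still leave $\tfrac{\const_3}{2}\sqrt{\log(1/\delta)/n}$. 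Replacing your (iii) with this ``tolerate a small deficit and charge it to the bonus'' step closes the gap; the rest of your argument is sound.
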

Therefore, on the optimistic event $\Eopt$, where $\{ \Delta(k,x) \ge 0\}$, we can lower bound (again,  in the simplified game where we always select action $(0,-1)$),
\begin{align*}
\Delta_0(k) &\ge  \sum_{x \in \Sminus}\phat(x'|0,-1) \Delta(x,k)\\
&\ge \sum_{x \in \Sminus}\phat(x'|0,-1)\I(\Eover_{k}(x))C_2\sqrt{\frac{\log(1/\delta)}{\nk(x)}}\\
&\overset{(i)}{=} \sum_{x \in \Sminus}\frac{\nk(x)}{k} \I(\Eover_{k}(x))C_2\sqrt{\frac{\log(1/\delta)}{\nk(x)}}\\
&=  \frac{C_2\sqrt{\log(1/\delta)}}{k}\sum_{x \in \Sminus}\I(\Eover_{k}(x))\sqrt{\nk(x)}
\end{align*}
where $(i)$ uses the fact that for $x\in \Sminus$ is only accessible through $(0,-1)$, and that $(0,-1)$ is always selected in the simplified game. Next, observe that in the simplified game, $\nbark(x) = k/S$, so that if $n_0/S \ge C_3\log(1/\delta)$ for some constant $C_3$, it holds by an argument similar to Lemma~\ref{lem:sample} that with probability $1 - \delta$, the event $\calE_1 := \{\forall x \in \Sminus, \forall k \ge n_0, ~ \nk(x) \ge \nbark(x)/4 = k/4S\}$ holds, yielding 
\begin{align*}
\Delta_0(k) &\ge \frac{C_2\sqrt{\log(1/\delta)}}{2k}\sqrt{k/S}\cdot\sum_{x \in \Sminus}\I(\Eover_{k}) \\
&= \frac{C_2}{2}\sqrt{{S\log(1/\delta)}/{k}} \cdot \left(\frac{1}{S}\sum_{x \in \Sminus}\I(\Eover_{k})\right).
\end{align*}
Finally, if in addition $n_0/4S \ge C_1 \log(1/\delta)$, where $C_1$ is the constant from claim~\ref{claim:reward_overestimate}, then on $\calE_1$, it holds that for $k \ge n_0$, $\nk(x) \ge C_1 \log(1/\delta)$. We then set the constant $c_1$ so that $ n_0/S \ge C_3\log(1/\delta)$  and  $n_0/4S \ge C_1 \log(1/\delta)$ hold.

Lastly, since (a) $\calE_1$ is measurable with respect to the counts $(n_j(x'))_{x' \in \Sminus, j \ge 1}$, (b) since $\Eover_k(x)$ are independent given these counts, and (c) $\Exp[\I(\Eover_{k})] \ge 1/4$, a Chernoff bound shows that for $k \ge n_0$,  the event $\calE_2(k) :=\{\left(\frac{1}{S}\sum_{x \in \Sminus}\I(\Eover_{k})\right) \ge 1/8\}$ holds with probability at least $e^{-C_5 S}$ conditioned on $\calE_1$. Hence, on $\Eopt \cap \calE_1 \cap \bigcup_{k=n_0}^N\calE_2(k)$, we have 
\begin{align*}
\Delta_0(k) &\ge  \frac{C_2}{16}\sqrt{{S\log(1/\delta)}/{k}} \ge \frac{C_2}{16}\sqrt{{S\log(1/\delta)}/{N}}, \quad\forall k \in \{n_0,\dots,N\}.
\end{align*}
Hence, if $N \le c_2 \frac{S\log(1/\delta)}{\epsilon^2}$ for some constant $c_2$, we see that $\Delta_0(k) \ge 2\epsilon$ for all $k \in \{n_0,\dots,N\}$. Lastly, we see that 
\begin{align*}
\Pr[(\Eopt \cap \calE_1 \cap \bigcup_{k=n_0}^N\calE_2(k))^c] &\le \Pr[(\Eopt)^c] + \Pr[\calE_1^c] +  \Pr[(\bigcup_{k=n_0}^N\calE_2(k))^c\wedge  \calE_1]\\
&\le \Pr[(\Eopt)^c] + \Pr[\calE_1^c] +  N\max_{k \ge n_0}\Pr[\calE_2(k)^c \mid  \calE_1] \le 2\delta + Ne^{-C_4 S}.
\end{align*}
Translating to the non-simplified game, we have therefore established that
\begin{align*}
\Pr[\forall k: n_0 \le \nk(0,-1) \le N, \qup_{k,1}(-1,1) - \frac{1}{2} \ge 2\epsilon] \ge 1 - 2\delta + Ne^{-C_4 S}.
\end{align*}
Combining with the additional probability of error $\delta$ for the case $\nk(0,-1) \le n_0$ concludes the proof.

\subsection{Proof of Claim~\ref{claim:reward_overestimate}}

	We observe that conditioned on the vector $(n_{j}(x'))_{x' \in \Sminus, j \ge 1}$, the games at states $x$ and round $k$ are equivalent to $S$ independent two-arm bandit games with  $\nk(x)$ rounds.  Note moreover that $\Delta(x,k) =  \vup_{k,2}(x) - \frac{1}{2} \ge \bonusrew_k(x,1) + \rhat_k(x,1) - \frac{1}{2}$. Hence, restricting to a single state $x$ (and dropping the dependence on $x$ for simplicity), it suffices to show that for $k$ rounds of an appropriate two-arm bandit game with $a \in \{-1,1\}$ with empirical rewards $\rhat_k(a)$ and bonuses $\bonusrew_k(a)$, $R(-1) = 0$ and $R(1) \sim \frac{1}{2} + \frac{1}{4}\calD$, that 
	\begin{align*}
	\forall k \ge \const_1\log(S/\delta), \quad \Pr[\bonusrew_k(1) + \rhat_k(1) - \frac{1}{2} \ge \sqrt{\log(1/\delta)/k}] \ge \frac{1}{4}
	\end{align*}
	 where we have dropped the dependence on $x$ for simplicity. Throughout, we will also use the notation $\const_1,\const_2,\const_3$ to denote constants specific to the proof of Claim~\ref{claim:reward_overestimate}, and reserve $C_1,C_2$ for the constants in the claim statement.

	If $\delta \le 1/8$, then a standard argument shows that for some constant $\const_1$ (depending on $\cbonusup$), $n_k(-1) \le \const_1 \log(S/\delta)$. Indeed, define the event $\calE_0 := \{\forall k \ge 1: \bonusrew_k(1) + \rhat_k(1) \ge r(1) = \frac{1}{2}\}$; by assumption on our confidence intervals, complement of this event occurs with probability at most $\delta \le 1/8$. Note also that on $\calE_0$, since $R(-1) = 0$ with probability $1$, it holds that for any $j \le k$ with $n_j(-1) \ge \const_1 \log(S/\delta)$
	\begin{align*}
	\rhat_j(-1) + \bonusrew_j\xa  = \bonusrew_j\xa &\overset{(i)}{\le} \cbonusup\sqrt{\frac{\log(S n_j(-1)/\delta)}{n_j(-1)}} \\
	&\overset{(ii)}{\le}\frac{1}{2} = r(1) \le  \rhat_j(1) + \bonusrew_j(1),
	\end{align*}
	where in $(i)$ we used the definition of the confidence interval with $M \lesssim S$, and in $(ii)$ we used $n_j(-1) \ge \const_1 \log(S/\delta)$ for an appropriately tuned constant $\const_1$. Since $a_j := \argmax_{a} \rhat_j(a) + \bonusrew_j(a)$, we have $a_j = 1$. This implies that $n_k(-1)  \le \max_{j \ge 1} n_j(-1)\le \const_1 \log(M/\delta)$.

	Next, set $k_0 =  \const_1 \log(M/\delta)$. We wish to show that for $k \ge k_0$,
	\begin{align*}
	\rhat_{k}(1) + \bonusrew_k(1)\gtrsim \frac{\log(1/\delta)}{k}
	\end{align*}
	There are two technical challenges: first, the confidence interval $\bonusrew_k(1)$ might be nearly tight, so that we cannot show that with high probability, $\rhat_{k}(1) + \bonusrew_k(1) \gtrsim \bonusrew_k(1) $. Second, because the algorithm adaptively chooses to sample actions $a \in \{-1,1\}$, $\rhat_{k}(1)$ \emph{does not} have the distribution of $n_{k}(1)$ i.i.d. samples from $R(1)$. 

	We can get around this as follows. We can imagine all rewards sampled from action $1$ as being drawn at the start of the game, and constituting a sequence $R^{(1)}(1),R^{(2)}(1),\dots$ and so on. Then, $\rhat_k(1)$ is the average of the samples $1,\dots,\nk(1)$, where $\nk(1) \le k$. Therefore	
	\begin{align*}
	n_k(1)(\rhat_{k}(1) - \frac{1}{2}) = \sum_{i=1}^{n_k(1)}(R^{(i)}(1) - \frac{1}{2}) &= \sum_{i=1}^k (R^{(i)}(1) -\frac{1}{2}) -  \sum_{i= n_k(1)+1}^{k} (R^{(i)}(1) -\frac{1}{2}).\\
	&= \sum_{i=1}^k (R^{(i)}(1) -\frac{1}{2}) -  \sum_{i= k - n_k(-1) + 1}^{k} (R^{(i)}(1) -\frac{1}{2}),
	\end{align*}
	where the last line uses $n_k(1)  + n_k(-1) = k$. 

	Now consider the event $\calE_1(\delta) := \{n_k(-1) \le k_0\}$, where we recall $k_0 =  \const_1 \log(M/\delta)$ was our $1-\delta$-probability upper bound on $n_k(-1)$. On $\calE_1(\delta)$, $n_k(-1) = j$ for some $j \in \{0,1,\dots,k_0\}$, and we can lower bound the above expression by
	\begin{align*}
	\ge \sum_{i=1}^k (R^{(i)}(1) -\frac{1}{2}) -  \max_{j = 0,\dots,k_0}\sum_{i= k - j + 1}^{k} (R^{(i)}(1) -\frac{1}{2}).
	\end{align*}
	Observe now that we have lower bounded $n_k(1)(\rhat_{k}(1) - \frac{1}{2})$ in terms of quantities depending only on the i.i.d. reward sequence $(R^{(i)}(1))$, and \emph{not} on the quantities $\nk(-1),\nk(1)$. 

	Moreover, a standard maximal inequality implies that the following event $\calE_2(\delta)$ holds for an appropriate constant $\const_2$ with probability $1-\delta$:
	\begin{align}
	\calE_2(\delta) := \left\{\max_{j = 0,\dots,k_0}\sum_{i= k - j + 1}^{k} (R^{(i)}(1) -\frac{1}{2}) \le \const_2\sqrt{k_0 \log(1/\delta) }\right\}
	\end{align}
	Lastly,  since $R^{(i)}$ is symmetric, we have that the following event $\calE_3$ holds with probability $1/2$:
	\begin{align*}
	\calE_3 := \left\{\sum_{i=1}^k (R^{(i)}(1) - \frac{1}{2}) \ge 0\right\}.
	\end{align*}
	Hence, on $\calE_1(\delta) \cap \calE_2(\delta) \cap \calE_3$, 
	\begin{align*}
	n_k(1)(\rhat_{k}(1) - \frac{1}{2}) &\ge \underbracer{\ge 0}{\sum_{i=1}^k (R^{(i)}(1) -\frac{1}{2})} -  \max_{j = 0,\dots,k_0}\sum_{i= k - j + 1}^{k} (R^{(i)}(1) -\frac{1}{2})\\
	&\ge - \const_2\sqrt{\log(1/\delta) k_0}.
	\end{align*}
	If we further assume that $k \ge 2\const_1\log(M/\delta)$, then $n_k(-1) \le k_0 \le k/2$, so that $\calE_1(\delta)$ implies $n_k(1) \ge k/2$. Dividing both sides of the above by $k$ and bringing $1/k$ into the square root yields (again on $\calE_1(\delta) \cap \calE_2(\delta) \cap \calE_3$)
	\begin{align*}
	(\rhat_{k}(1) - \frac{1}{2}) &\ge  - \frac{\const_2}{2}\sqrt{\frac{\log(1/\eta)}{k} \cdot \frac{k_0}{k}}. \numberthis\label{eq:rhat_lb}
	\end{align*}

	Moreover, by the lower bound assumption on $\bonusrew$ and the fact that $R(1)$ has $\Omega(1)$ variance, there exists some constant $\const_3$ such that
	\begin{align*}
	\bonusrew(\nk(1)) \ge \cbonus\sqrt{\frac{\Var[R(x,a)] \log(1/\delta)}{n_k(1)}} \ge \const_3 \sqrt{\frac{\log(1/\delta)}{k}},
	\end{align*}
	where again we use $\nk(1) \le k$. Combining with~\eqref{eq:rhat_lb}, we have on $\calE_1(\delta) \cap \calE_2(\delta) \cap \calE_3$ that
	\begin{align*}
	(\rhat_{k}(1) - \frac{1}{2}) + \bonusrew(\nk(1)) &\ge  \const_3\sqrt{\frac{\log(1/\delta)}{k}} - \frac{\const_2}{2}\sqrt{\frac{\log(1/\delta)}{k} \cdot \frac{k_0}{k}}.
	\end{align*}
	Hence, if $k_0/k \le (\const_3/\const_2)^2$, or equivalently if $k \ge \const_1(\const_3/\const_2)^{-2} \log (M/\delta)$, then 
	\begin{align*}
	(\rhat_{k}(1) - \frac{1}{2}) + \bonusrew(\nk(1)) \ge \frac{\const_3}{2}\sqrt{\frac{\log(1/\delta)}{k}} \ge \frac{\const_3}{2}\sqrt{\frac{\log(1/\delta)}{2 \nk(1)}}
	\end{align*}
	on the event $\calE_1(\delta) \cap \calE_2(\delta) \cap \calE_3$. Lastly, for $\delta \le 1/8$, we note $\Pr[\calE_1(\delta) \cap \calE_2(\delta) \cap \calE_3] \ge \frac{1}{2} - 2\delta \ge 1/4$. Recalling our earlier condition $k \ge 2\const_1\log(M/\delta)$, the claim now holds with by setting the constant $C_1$ in the claim statement to be $\const_1\max\left\{2,(\const_3/\const_2)^{-2}\right\}$, and $C_2$ to be  $\frac{\const_3}{2\sqrt{2}}$.


\newpage
\section{Information Theoretic Lower Bound (Proposition~\ref{prop:info_th_lower_bound})\label{sec:inf_th_lb}}
In this section we construct give a proof of the information theoretic lower bound Proposition~\ref{prop:info_th_lower_bound}, as well as a non-asymptotic bound that holds even for non-uniformly good algorithms. 
\subsection{Construction of the hard instance} 
Our construction mirrors the lower bounds due to~\cite{dann2015sample}, but with specific and non-uniform gaps. We define $\calM$ as an MDP on state space $\states = [S+2]$, with actions $\actions = [A]$, and horizon $[H]$. We will first state the construction for $H \ge 2$, and then remark on the modification for $H = 1$ at the end of the section. For $a \in [A], x\in [S]$, we set
\begin{align*}
p(x' = S+1 | x,a) = \frac{3}{4} - \frac{2}{H - 1}\Delta_{x,a}, \quad p(x' = S+2 | x,a)= 1 - p(x' = S+1 | x,a).
\end{align*}
Furthermore, we set the initial state to have the distribution $x_1 \unifsim [S]$, and set
\begin{align*}
p(x' = S+1| x = S+1,a) = 1,~~p(x' = S+2| x = S+2,a) = 1 \forall a \in [A].
\end{align*}
Finally, the rewards are set deterministically as 
\begin{align*}
R(x,a) := \begin{cases} 0 & x \in [S]\\
0 & x \in \{S+1,S+2\}, a > 1 \\
1 & (x,a) = (S+1,1)  \\
\frac{1}{2} & (x,a) = (S+2,1)  \\
\end{cases}
\end{align*}
We may then verify that $\vst_{h}(S+1) = (H - h+1)$ and $\vst_h(S+1) = (h-H+1)/2$, which implies that that for $x \in [S]$,
\begin{align*}
\gaph(x,a) &= \left(\max_{a'}\sum_{x'}p(x'|x,a')\vst_{h+1}(x')\right) \\
&=  \max_{a'} (p(S+1|x,a') - p(S+1|x,a))(H- h) + (p(S+2|x,a') - p(S+2|x,a))\frac{(H- h)}{2}\\
&=  \max_{a'} (p(S+1|x,a') - p(S+1|x,a))(H- h) - (p(S+1|x,a') - p(S+1|x,a))\frac{(H- h)}{2} \\
&= \max_{a'} (p(S+1|x,a') - p(S+1|x,a))\frac{H- h}{2}\\
&= \frac{2\Delta_{x,a}}{(H-1)} \cdot \frac{H- h}{2},
\end{align*}
and in particular that $\gap_1\xa = \Delta_{x,a}$. For $H = 1$, the construction is modified so that $\states = [S]$, and 
\begin{align*}
R(x,a) \sim \Bern(\frac{3}{4} - \Delta_{x,a}), \text{ and }  x_1 \unifsim [S].
\end{align*}
Then, we see that $\gap_1\xa = \Delta_{x,a}$. In what follows, we will adress the $H \ge 2$ case; the case $H = 1$ will follow from similar, but simpler arguments. 

\subsection{Regret Lower Bound Decomposition} We can now lower bound the expected regret as 
\begin{align*}
\Exp^\calM[\regret_K] &:= \Exp^\calM[\sum_{k=1}^K \vst_0 - \vpik_0]\\
&:= \Exp[\sum_{k=1}^K \sum_{x}p(x_1 = x) \{\vst_1(x)-\vpik_1(x)\}]\\
&\overset{(i)}{\ge} \Exp^\calM\left[\sum_{k=1}^K \sum_{x}p(x_1 = x) \{\vst_1(x) - \qst_1(x,\pi_{k,1}(x))\}\right]\\
&=\Exp^\calM\left[\sum_{k=1}^K \sum_{x}p(x_1 = x) \gap_{1}(x,\pi_{k,1}(x))\right]\\
&=\Exp^\calM\left[\sum_{k=1}^K \sum_{x,a}p(x_1 = x) \I(\pi_{k,1}(x) = a) \gap_{1}\xa\right]\\
&= \sum_{x,a}\Exp^\calM[\nbarK(x,a)]\gap_{1}\xa\numberthis\label{eq:regret-lb-decomp},
\end{align*}
where inequality $(i)$ follows since $\vpik_1(x)= \trueq^{\pik}_1(x,\pi_{k,1}(x)) \le \qst_1(x,\pi_{k,1}(x))$.  We now show that for all sufficiently large $K \ge K_0(\calM)$, any uniformly correct algorithm must have 
\begin{multline}\label{eq:nbarK_lb}
\forall \xa: x \in [S], ~\gap_1\xa > 0, \forall K \ge K_0(\calM) \\
\quad \Exp^\calM[\nbarK((x,a)] \gtrsim (\frac{2}{H - 1}\Delta_{x,a})^{-2} \log K \gtrsim \frac{H^2}{\Delta_{x,a}^{-2}}\log K = \frac{H^2}{\gap_1\xa^2} \log K,
\end{multline}
 which concludes the proof since
\begin{align*}
\Exp^\calM[\regret_K] \gtrsim \sum_{x,a: \gap_1\xa > 0}\frac{H^2}{\gap_1\xa^2} \log K, \cdot \gap_{1}\xa = \sum_{x,a: \gap_1\xa > 0}\frac{H^2}{\gap_1\xa} \log K.
\end{align*}

We further note that this argument can also show that, for all $K$ sufficiently large and all $h \in [H-1]$
\begin{align}
\Exp^\calM[\regret_K] \gtrsim \sum_{x,a: \gap_h\xa > 0}\frac{(H-h)^2}{\gap_h\xa} \log K \label{eq:general_stage_gap}. 
\end{align}
as well. 

\subsection{Proof of Equation~\eqref{eq:nbarK_lb}}
\newcommand{\Deltil}{\bm{\Delta}}

Throughout, we fix a state $x \in [S]$, and an action $a: \gap_1\xa > 0$. We shall further introduce the shorhand
\begin{align}
\Deltil_{x,a}:= \frac{2\Delta_{x,a}}{H-1} \in (0,1/2),
\end{align} 
where the bound on $\Deltil_{x,a}$ follows from $\Delta_{x,a} \in (0,H/8)$.

To lower bound Equation~\eqref{eq:nbarK_lb}, we follow steps analogues to standard information theoretic lower bounds. Our exposition will follow~\cite{garivier2018explore}. First, we state a lemma which is the MDP analogue of 
\iftoggle{nips}{Equation (6) in~\cite{garivier2018explore}}{\citet[Equation (6)]{garivier2018explore}}. Its proof is analogous, and omitted for the sake of brevity:
\begin{lem} Let $\calM= (\states,\actions, H,r,p^{\calM},p_0,R^{\calM})$ and $\calM'= (\states,\actions, H,r,p^{\calM'},p_0,R^{\calM'})$ denote two episodic MDPs with the same state space $\states$, action space $\calA$ and horizon $h$, and initial state distribution $p_0$.
For any $\xa \in \states \times \actions$, let $\nu^{\calM}\xa$ denote the law of the joint distribution of $(X',R)$ where $X' \sim p^{\calM}(\cdot|x,a)$  and $R \sim R^{\calM}(x,a)$; define the law $\nu^{\calM}\xa$ analogously. Finally, fix a horizon $K \ge 1$, and let $\calF_K$ denote the filtration generated by all rollouts up to episode $K$. Then, for any $\calF_K$-measurable random variable $Z \in [0,1]$,
\begin{align*}
\kl(\Exp^{\calM}[Z],\Exp^{\calM'}[Z]) \le \sum_{x,a}\Exp^{\calM}[\nbarK(x,a)]\KL(\nu^{\calM}\xa,\nu^{\calM'}\xa),
\end{align*}
where $\kl(x,y) = x\log\frac{x}{y} + (1-x) \log \frac{1-x}{1-y}$ denotes the binary KL-divergence, and $\KL(\cdot,\cdot)$ denotes the KL-divergence between two probability laws. 
\end{lem}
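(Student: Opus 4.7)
The statement is the standard divergence-decomposition (or ``transportation'') lemma extended from multi-armed bandits to episodic MDPs, so my plan follows the classical recipe of data-processing followed by a chain-rule decomposition along the trajectory.

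My first step is to reduce the left-hand side to a KL divergence between the full laws on $\calF_K$. For $Z \in [0,1]$, one interprets $Z$ as a Bernoulli kernel with mean $\Exp[Z]$ and applies the data-processing (or contraction) property of KL to conclude
\begin{align*}
\kl\bigl(\Exp^{\calM}[Z],\Exp^{\calM'}[Z]\bigr) \;\le\; \KL\bigl(\Pr^{\calM}|_{\calF_K},\,\Pr^{\calM'}|_{\calF_K}\bigr).
\end{align*}
Thus it suffices to bound the KL on the right by $\sum_{x,a}\Exp^{\calM}[\nbarK(x,a)]\,\KL(\nu^{\calM}\xa,\nu^{\calM'}\xa)$.

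Next I will unfold $\Pr^{\calM}$ and $\Pr^{\calM'}$ as products along the trajectory. Each of $\calF_K$ is generated by the sequence $(x_h^{(k)}, a_h^{(k)}, R_h^{(k)})$ for $k \in [K]$, $h \in [H]$. Crucially, the algorithm's action-selection rule $\pi_k$ depends only on past observations and possibly external randomness, and is thus the \emph{same} kernel under both $\calM$ and $\calM'$, so the conditional distribution of $a_h^{(k)}$ given the history agrees under the two measures. The initial distribution $p_0$ also agrees. Hence, by the KL chain rule, all ``policy'' and ``initial'' terms vanish and only the transition/reward terms survive:
\begin{align*}
\KL\bigl(\Pr^{\calM}|_{\calF_K},\,\Pr^{\calM'}|_{\calF_K}\bigr) \;=\; \sum_{k=1}^K\sum_{h=1}^H \Exp^{\calM}\!\Bigl[\KL\bigl(\nu^{\calM}(x_h^{(k)},a_h^{(k)}),\,\nu^{\calM'}(x_h^{(k)},a_h^{(k)})\bigr)\Bigr].
\end{align*}

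Finally, I interchange the expectation with the sum over $(x,a)$: grouping occurrences of each pair $\xa$ and using $\sum_{k,h} \I\{(x_h^{(k)},a_h^{(k)}) = \xa\} = \nbarK\xa$ (up to the convention that $\nbarK$ counts expected visits through episode $K$), the right-hand side becomes $\sum_{x,a}\Exp^{\calM}[\nbarK(x,a)]\,\KL(\nu^{\calM}\xa,\nu^{\calM'}\xa)$, as desired. The only conceptual subtlety is the first step: one must justify the contraction inequality for a $[0,1]$-valued (not necessarily Bernoulli) $Z$, which follows either by convexity of $\kl$ in both arguments and Jensen, or by noting that $Z \mapsto \Bern(Z)$ composed with expectation is a channel and invoking data processing; the chain-rule step, while notationally heavy, is routine once one observes that $\pi_k$ is a deterministic (or at worst, algorithm-randomness-driven) function of the history and therefore induces identical conditional laws under $\calM$ and $\calM'$.
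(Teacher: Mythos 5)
Your proof is correct and is exactly the argument the paper has in mind: the paper omits the proof, stating only that it is the MDP analogue of Equation (6) of Garivier et al., which is established by precisely your combination of the data-processing inequality for $[0,1]$-valued $\calF_K$-measurable variables, the KL chain rule along the trajectory (with the policy and initial-state kernels cancelling), and regrouping the surviving transition-reward terms by state-action pair into $\Exp^{\calM}[\nbarK\xa]$. No gaps.
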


We apply the above lemma as follows. For our fixed pair $\xa$, define an alternate $\calM'$ to be the MDP which coincides with $\calM$ except that 
\begin{align*}
p(x'|x,a) = \frac{3}{4} + \eta,~\eta = \min\{7/8,\frac{3}{4} + \Deltil_{x,a}\}
\end{align*}
By construction, $\calM$ and $\calM'$ differ only at their law at $\xa$. Thus, 
\begin{align*}
 \kl(\Exp^{\calM}[Z],\Exp^{\calM'}[Z]) \le \Exp^{\calM}[\nbarK(x,a)]\KL(\nu^{\calM}\xa,\nu^{\calM'}\xa).
\end{align*}
 We the following lower bound controls the $\KL$ divergence between the laws $\nu^{\calM}\xa,\nu^{\calM'}\xa$:
 \begin{claim} There exists a universal constant $c$ such that
\begin{align*}
\KL(\nu^{\calM}\xa,\nu^{\calM'}\xa) \lesssim c\Deltil_{x,a}^2.
\end{align*}
\end{claim}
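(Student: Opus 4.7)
The plan is to reduce the claim to a one-dimensional KL computation between two Bernoulli distributions whose parameters sit in a common interval bounded away from $0$ and $1$, and then apply the standard quadratic upper bound on the KL divergence in that regime.

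First I would unpack the definition of $\nu^{\calM}\xa$ versus $\nu^{\calM'}\xa$. Since the fixed pair $(x,a)$ has $x \in [S]$, the reward distribution $R^{\calM}(x,a) = R^{\calM'}(x,a) = \delta_0$ is deterministic and identical under both models, so $(X',R)$ factorizes as (a deterministic reward) times $X'$. Thus
\[
\KL(\nu^{\calM}\xa, \nu^{\calM'}\xa) \;=\; \KL\bigl(p^{\calM}(\cdot\mid x,a),\, p^{\calM'}(\cdot\mid x,a)\bigr).
\]
Next I would note that both transition distributions are supported on $\{S+1,S+2\}$, so each is a Bernoulli with parameter $p^{\calM}(S{+}1\mid x,a) = 3/4 - \Deltil_{x,a}$ (respectively $p^{\calM'}(S{+}1\mid x,a) = 3/4 + \eta$ by construction of $\calM'$). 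Using $\Deltil_{x,a} \in (0,1/2)$ and the definition $\eta = \min\{7/8,\, 3/4+\Deltil_{x,a}\}$ truncated so the alternative remains a valid Bernoulli parameter, both probabilities lie in a fixed subinterval $[q_{\min},q_{\max}] \subset (0,1)$ with $q_{\min},1-q_{\max} \ge c_0$ for some universal $c_0 > 0$.

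Given this, I would invoke the standard elementary inequality
\[
\kl(p,q) \;\le\; \frac{(p-q)^2}{q(1-q)},
\]
which is easily derived from $\log t \le t-1$ applied to each of the two terms defining $\kl$. Applying it with $p = 3/4 - \Deltil_{x,a}$ and $q = 3/4 + \eta$, the denominator satisfies $q(1-q) \ge c_0(1-c_0)$ uniformly, and the numerator $(p - q)^2$ is at most $(2\Deltil_{x,a})^2 = 4\Deltil_{x,a}^2$ (since $|\eta| \le \Deltil_{x,a}$ after the truncation, or simply $\le 1$ in the degenerate boundary case, but bounded by $\Deltil_{x,a}$ up to a constant in the regime of interest $\Deltil_{x,a} < 1/8$). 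Combining yields $\KL(\nu^{\calM}\xa,\nu^{\calM'}\xa) \lesssim \Deltil_{x,a}^2$, which is the claim.

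The main obstacle is really just bookkeeping around the truncation defining $\eta$: one must verify that even in the edge case where $\Deltil_{x,a}$ is close to the upper limit $1/2$ and $\eta$ gets clamped to $7/8$, the two Bernoulli parameters remain separated from $\{0,1\}$ by a universal constant, so that the $q(1-q)$ factor in the denominator does not degenerate. Since $\Deltil_{x,a} \in (0,1/2)$ the pre-truncated value $3/4 + \Deltil_{x,a}$ is already at most $5/4$, so the active clamp only occurs in a range where $|p - q|$ is itself $\Theta(\Deltil_{x,a})$, and the quadratic bound still goes through. No deep argument is needed beyond this case check and the elementary $\kl$ inequality.
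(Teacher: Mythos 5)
Your proposal is correct and follows essentially the same route as the paper: reduce to the KL between two Bernoulli transition laws on $\{S+1,S+2\}$ with parameters $\tfrac{3}{4}-\Deltil_{x,a}$ and $\tfrac34+\eta$, note both are bounded away from $\{0,1\}$ and differ by at most $2\Deltil_{x,a}$, and apply a quadratic upper bound on the binary KL (the paper cites a Taylor-expansion lemma where you use the explicit $\kl(p,q)\le (p-q)^2/(q(1-q))$ bound, an immaterial difference). No gap.
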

\begin{proof}
At $\xa$, $R(x,a) = 0$ with probability under both $\calM,\calM'$. Moreover, recall that under $\calM$, $\xa$ transition to state $S+1$ with probability $\frac{3}{4} - \Deltil_{x,a}$, and to $S+2$ with probability $1 - (\frac{3}{4} - \Deltil_{x,a},)$. On the other hand, $\calM'$ transtion to $S+1$ with probability $\frac{3}{4} +\eta$, and $S+2$ with probability $1 - (\frac{3}{4}+\eta)$. Consequently both laws are equivalent to Bernoulli distributions with parameters $\frac{3}{4} - \Deltil_{x,a}$ and $\frac{3}{4} +\eta$, respectively. Since $\kl(x,y)$ is precisely $\KL(\Bern(x),\Bern(y))$ for $x,y \in (0,1)$,
\begin{align*}
\KL(\nu^{\calM}\xa,\nu^{\calM'}\xa) = \kl\left(\frac{3}{4} - \Deltil_{x,a}, \frac{3}{4} + \eta\right). 
\end{align*}

Lastly, set $x = \frac{3}{4} - \Deltil_{x,a}$ and $y =  \frac{3}{4} + \min\{\frac{7}{8},\Deltil_{x,a}\}$.We  $y-x \le 2\Deltil_{x,a}$, and by assumption on $\Deltil_{x,a} \le 1/2$, Thus, $1/4 \le x \le y \le 7/8$. Hence, a standard Taylor expansion (e.g. \citet[Lemma E.1]{simchowitz2016best}) shows that there exists a universal constant $c$ such that $\kl(x,y) \le \frac{c}{(x-y)^2} \le \frac{4c}{\Deltil_{x,a}^2}$, as needed.
\end{proof}
As a consequence, we see that for any $\calF_K$-measurable $Z \in [0,1]$, we find
\begin{align*}
\Exp^{\calM}[\nbarK(x,a)]\gtrsim \Deltil_{x,a}^{-2}\kl(\Exp^{\calM}[Z],\Exp^{\calM'}[Z]) \gtrsim \frac{H^2}{\Delta_{x,a}^2}\kl(\Exp^{\calM}[Z],\Exp^{\calM'}[Z]),
\end{align*}
where the last inequality uses that $\Deltil_{x,a} \lesssim \Delta_{x,a}/H$. 

To conclude, it suffices to exhibit a random variable $Z_K$ such that, for $K$ sufficiently large,
\begin{align*}
\kl(\Exp^{\calM}[Z],\Exp^{\calM'}[Z])  \gtrsim (1-\alpha) \log K.
\end{align*}
To this end, consider $Z_K = \frac{S\nbarK\xa}{K}$. Note that since $x$ is only visited with probability at most $1/S$ at stage $h=1$, and with probability $0$ for stages $h \ge 2$, we have  
\begin{align*}
\nbarK\xa =\sum_{k=1}^K \Exp^{\calM}[\Pr(x_1 = x)\I(\pi_{k,1}(x_1) = 1)] = \frac{1}{S}\sum_{k=1}^K \Exp^{\calM}[\I(\pi_{k,1}(x_1) = 1)] \le K/S,
\end{align*} 
which implies that,  $Z_K \in [0,1]$ with probability one. Moreover, note that by an argument similar to that of~\eqref{eq:regret-lb-decomp}, that under the MDP $\calM'$, 
\begin{align*}
\Exp^{\calM'}[\regret_K] \ge \eta \Exp^{\calM'}[\sum_{a' \ne a} \nbarK(x,a')] =  \eta (\frac{K}{S} - \Exp^{\calM'}\nbarK(x,a')]) = \frac{\eta K}{S}(1 - \Exp^{\calM'}[Z_K]).
\end{align*}
Hence, if $\Alg$ is $\alpha$-uniformly good, then there existsa  constant $C_{\calM'}$ such that
\begin{align*}
1 - \Exp^{\calM'}[Z_K] \le \frac{C_{\calM'}\eta}{S} K^{\alpha - 1}. 
\end{align*}
By the same token, there exists a constant $C_{\calM}$ such that
\begin{align*}
C_{\calM} K^{\alpha} \ge \Exp^{\calM}[\regret_K] \ge \gap_1\xa \Exp^{\calM}[\nbarK(x,a)] = \frac{K\gap_1\xa}{S} \Exp^{\calM}[Z_K]\gap_1\xa/S.
\end{align*}
which implies that $\Exp^{\calM}[Z_K] \le \frac{S C_{\calM} K^{\alpha - 1}}{\gap_1\xa}$. Furthermore, by \iftoggle{nips}{Inequality (11) in~\cite{garivier2018explore}}{\citet[Inequality (11)]{garivier2018explore}}, it holds that
\begin{align*}
\kl(x,y) \ge (1- x)\log \frac{1}{1-y} - \log 2
\end{align*}
which implies that for $K$ sufficiently large, 
\begin{align*}
\kl(\Exp^{\calM}[Z_K],\Exp^{\calM'}[Z_K]) &\ge (1 -\frac{S C_{\calM} K^{\alpha - 1}}{\gap_1\xa}) \left\{(1-\alpha)\log K - \log \frac{C_{\calM'}\eta}{2S}\right\} \gtrsim (1-\alpha)\log K.
\end{align*}

\end{document}